\documentclass[11pt]{article}
\usepackage{authblk}
\usepackage{amsmath}
\usepackage{floatpag}
\usepackage{amssymb}
\usepackage{xcolor}
\usepackage{graphicx}
\usepackage{grffile}
\usepackage{caption}
\usepackage{subcaption}
\usepackage{enumerate}
\usepackage{setspace}
\usepackage{amsthm}
\usepackage{url} % not crucial - just used below for the URL 
\usepackage{mathtools}
\usepackage{bbm}
\usepackage[normalem]{ulem}
\usepackage{threeparttable}
\usepackage{footnote}
\usepackage[left=2.5cm, right=2.5cm, top=2cm, bottom=2.5cm]{geometry}
\usepackage{tablefootnote}
\usepackage{adjustbox}
\usepackage{makecell}
\usepackage{booktabs}
\usepackage{multirow}
\usepackage{tocloft}
\usepackage{dsfont}
\usepackage{amsfonts}
\usepackage[ruled,linesnumbered]{algorithm2e}
\SetKwInput{KwInput}{Input}       % Input keyword
\SetKwInput{KwOutput}{Output}     % Output keyword
\SetKw{KwRet}{Return}

\usepackage{tikz}
\usepackage{pgfplots}
\pgfplotsset{compat=1.18} % or latest available

\usepackage{bm}
\usepackage{tocloft}
\usepackage[title]{appendix}
\usepackage{textcomp}
\usepackage{eufrak}
\usepackage[mathscr]{eucal}
\usepackage{hyperref,cleveref}

\usepackage{natbib}
\mathtoolsset{showonlyrefs}

\hypersetup{colorlinks=true,linkcolor=blue, citecolor=blue, linktocpage}
\allowdisplaybreaks

\newtheorem{example}{Example} 
\newtheorem{theorem}{Theorem}
\newtheorem{lemma}{Lemma} 
\newtheorem{proposition}{Proposition} 
\newtheorem{remark}{Remark}
\newtheorem{corollary}{Corollary}
\newtheorem{definition}{Definition}

\newtheorem{assumption}{Assumption}

\makeatletter
\newcommand*{\Rom}[1]{\expandafter\@slowromancap\romannumeral #1@}
\makeatother

\newcommand*{\rom}[1]{\romannumeral #1}
\DeclareMathOperator*{\argmax}{arg\,max}
\DeclareMathOperator*{\argmin}{arg\,min}
\newcommand{\lambdamin}{\lambda_{\min}}
\newcommand{\lambdamax}{\lambda_{\max}}
\newcommand{\tP}{\mathbb{P}}
\newcommand{\tQ}{\mathbb{Q}}
\newcommand{\tE}{\mathbb{E}}

\newcommand{\cov}{\textup{Cov}}

\newcommand{\btheta}{\bm{\theta}}
\newcommand{\bthetas}{\bm{\theta}^*}
\newcommand{\bthetaks}[1]{\bm{\theta}^{(#1)*}}
\newcommand{\thetaks}[1]{\theta^{(#1)*}}
\newcommand{\thetak}[1]{\theta^{(#1)}}
\newcommand{\bthetak}[1]{\bm{\theta}^{(#1)}}
\newcommand{\hthetak}[1]{\hat{\bm{\theta}}^{(#1)}}

\newcommand{\twonorma}[1]{\left\|#1\right\|_{2}}

\newcommand{\twonorm}[1]{\|#1\|_{2}}

\newcommand{\fnorm}[1]{\|#1\|_{F}}

\newcommand{\bx}{\bm{x}}
\newcommand{\bg}{\bm{g}}

\newcommand{\bmu}{\bm{\mu}}
\newcommand{\bmuk}[1]{\bm{\mu}^{(#1)}}
\newcommand{\bz}{\bm{z}}
\newcommand{\bv}{\bm{v}}
\newcommand{\bxk}[1]{\bm{x}^{(#1)}}
\newcommand{\yk}[1]{y^{(#1)}}
\newcommand{\zk}[1]{z^{(#1)}}

\newcommand{\by}{\bm{y}}
\newcommand{\bX}{\bm{X}}
\newcommand{\bSigma}{\bm{\Sigma}}
\newcommand{\bSigmak}[1]{\bm{\Sigma}^{(#1)}}
\newcommand{\hSigma}{\widehat{\bm{\Sigma}}}
\newcommand{\hSigmak}[1]{\widehat{\bm{\Sigma}}^{(#1)}}
\newcommand{\barx}{\bar{\bm{x}}}
\newcommand{\barxk}[1]{\bar{\bm{x}}^{(#1)}}
\newcommand{\bu}{\bm{u}}

\newcommand{\htheta}{\hat{\bm{\theta}}}
\newcommand{\otheta}{\overline{\bm{\theta}}}
\newcommand{\hotheta}{\hat{\overline{\bm{\theta}}}}
\newcommand{\prox}{\textup{prox}}
\newcommand{\proxpl}{\prox_{p}}
\newcommand{\<}{\langle}
\renewcommand{\>}{\rangle}
\newcommand{\bTheta}{\bm{\Theta}}
\newcommand{\hTheta}{\widehat{\bm{\Theta}}}
\newcommand{\mL}{\mathcal{L}}
\newcommand{\mLk}[1]{\mathcal{L}^{(#1)}}
\newcommand{\hmL}{\hat{\mathcal{L}}}
\newcommand{\hmLk}[1]{\hat{\mathcal{L}}^{(#1)}}
\newcommand{\hk}[1]{h^{(#1)}}
\newcommand{\hS}{\widehat{S}}
\newcommand{\hmax}{h_{\max}}
\newcommand{\tildeOmega}{\tilde{\Omega}}
\newcommand{\tildeO}{\tilde{O}}

\begin{document}

\title{\bf Contaminated Multi-task Learning with Heterogeneity: Fundamental Limits and Optimal Algorithms}
\date{}
\author[1]{Ye Tian}
\author[2]{Mengchu Li}
\author[3]{Marco Avella Medina}
\affil[1]{Department of Statistics, Pennsylvania State University}
\affil[2]{School of Mathematics, University of Birmingham}
\affil[3]{Department of Statistics, Columbia University}
\maketitle

\vspace{-1cm}

\begin{center}
    % Current version: May 23, 2026
    Current version: July 2, 2026
\end{center}

\vspace{-0.2cm}

\begin{abstract}
    Integrating information across related tasks can substantially improve estimation and prediction in transfer, multi-task, and federated learning. However, contamination and heterogeneity make this problem fundamentally challenging. We study a contaminated multi-task empirical risk minimization (ERM) framework in which an $\epsilon$ fraction of $K$ tasks, each with sample size $n$, may be arbitrarily contaminated, while the uncontaminated tasks remain heterogeneous. Unlike much of the existing literature, we consider a general problem in which both the global minimizer of the average risk and the local minimizer for each task are of interest, with the goal of achieving robustness to contamination and personalization under heterogeneity. In the Gaussian mean model, we show that several popular paradigms, including adaptive and robust regularization around a shared center, global matrix regularization, decomposition-based regularization, and score-based outlier-task detection, all suffer from a worst-case contamination error of order $\epsilon\sqrt{d/n}$, which is suboptimal compared to the lower bound $\epsilon/\sqrt{n}$. This reveals a fundamental dimension-dependent barrier for these approaches. We then establish comprehensive minimax lower bounds for a general ERM heterogeneous setting and propose a computationally efficient robust multi-task gradient descent method based on filtering. Under local strong convexity, smoothness, and sub-Gaussian gradient assumptions, we prove high-probability upper bounds that match the minimax lower bounds up to logarithmic factors over a broad regime. These bounds remove the extra $\sqrt{d}$ contamination dependence that characterizes many existing regularization-based methods and score-based outlier detection, while achieving personalization to local tasks under strong heterogeneity. Simulations and a real-data analysis demonstrate strong robustness and personalization relative to a broad range of benchmark methods.
\end{abstract}

\textbf{Keywords:} Multi-task learning, federated learning, robustness, data contamination, heterogeneity, minimax optimality.

\addtocontents{toc}{\protect\setcounter{tocdepth}{0}}

% \setstretch{1.8}

% main text

\section{Introduction}\label{sec: introduction}
Integrating data from multiple related sources is an important theme in modern statistics and machine learning. When the underlying tasks are related, borrowing information across them can substantially improve the estimation and prediction performance compared to learning each task individually. This idea underlies a broad range of methods in transfer learning, multi-task learning, and federated learning \citep[e.g.][]{pan2009survey,weiss2016survey,zhang2021survey,mcmahan2017communication,smith2017federated,li2020federatedsurvey}. In many applications, however, these data sources are neither identical nor fully trustworthy. Different hospitals may serve different patient populations, different devices may generate systematically different usage patterns, and different studies may be conducted under different protocols. As a result, the clean tasks can be  heterogeneous, while a fraction of tasks may at the same time be corrupted, unreliable, or even adversarial.

Data contamination and heterogeneity make the data integration challenging. If one pools all tasks too aggressively, heterogeneity introduces bias. If one learns each task separately, one loses the statistical gains from data integration. If one borrows information through a non-robust method, a few contaminated tasks can destroy the entire procedure. Therefore, developing methods that are simultaneously robust to contamination and adaptive to heterogeneity is a fundamental problem with broad applications.

% -----------------------------------------------------
\subsection{Problem setup}\label{subsec: problem setup}
We consider a contaminated multi-task learning setup. Suppose we have $K$ tasks, each with its own dataset $\{\zk{k}_i\}_{i=1}^n \overset{\textup{i.i.d.}}{\sim} \tP^{(k)}, k \in [K]=\{1,\dotsc,K\}$. We write $\zk{k} \sim \tP^{(k)}$ as a generic notation for one observation from the $k$-th task that takes value in some space $\mathcal{Z}$.
Consider a loss function $\ell: \mathcal{Z}\times \mathbb{R}^d \rightarrow \mathbb{R}_+$ for all tasks. Denote the population-level task-specific risk function $\mLk{k}(\btheta) = \tE[\ell(\zk{k}, \btheta)]$ and the average risk $\mL(\btheta) = \frac{1}{K}\sum_{k=1}^K \mLk{k}(\btheta)$. We define our parameter of interest as $\bthetaks{k} = \argmin_{\btheta \in \Theta}\mLk{k}(\btheta)$ and $\bthetas = \argmin_{\btheta \in \Theta} \mL(\btheta)$, where $\Theta$ is the parameter space. This is a standard empirical risk minimization (ERM) setup for multi-task learning. It can either be a supervised learning problem where $\zk{k}_i$ contains both features and response, or an unsupervised learning problem where $\zk{k}_i$ only contains features.

We assume that there exists a contamination mechanism that can first pick a $S^c \subseteq [K]$ with $|S^c| \leq K\epsilon$ and $\epsilon\in(0,1/2)$. Then, it chooses a contamination function $M$ that maps the contaminated data to \textit{arbitrary} values while keeping the uncontaminated data unchanged. To better describe this procedure, we can define a collection of sets $\mathcal{S} = \{S \subseteq [K]: |S| \geq K(1-\epsilon)\}$, and a collection of contamination functions $\mathcal{M}_S = \{M: [K]\times [n]\times  \mathcal{Z} \to \mathcal{Z} \textup{ such that } M(k, i, \zk{k}_i) = \zk{k}_i, k \in S, i \in [n]\}$. When the context is clear, for fixed $S$ and $M \in \mathcal{M}_S$, we write $\tilde{z}_i^{(k)} \coloneqq M(k,i,\zk{k}_i)$ for the observed, possibly contaminated, data point.

Our goal here is to construct estimators $(\htheta, \{\hthetak{k}\}_{k=1}^K)$ based on contaminated data $\{M(k,\,i,\, \zk{k}_i)\}_{i\in[n], k\in[K]}$, that achieve a small estimation error $\twonorm{\htheta - \bthetas}$ for the global minimizer $\bthetas$ and, for each clean task $k \in S$, a small estimation error $\twonorm{\hthetak{k} - \bthetaks{k}}$ for the corresponding task-specific minimizer $\bthetaks{k}$.

As discussed later, some works in the literature study contaminated multi-round federated learning, where contamination occurs in the messages passed between tasks and the contamination mechanism may change across rounds. This is commonly known as the Byzantine attack model in the literature. Our setup is slightly different in that the contamination is directly on the data. Nevertheless, our algorithms proposed in Section \ref{sec: method and theory} operate through robust aggregation of task-level gradients and they can also be applied to such multi-round federated learning settings with Byzantine clients.

% ----------------------------------------------------
\subsection{A motivating dimension gap}
A wide range of existing works adopt regularization to obtain personalized estimators in multi-task learning settings \cite[e.g.][]{evgeniou2004regularized,argyriou2006multi,lounici2011oracle,gong2012robust,jalali2013dirty}.  In particular, \cite{duan2022adaptive} recently proposed an adaptive and robust multi-task learning framework based on regularization. Consider the Gaussian mean model $\zk{k}_i = \bx_i^{(k)} \sim N(\bthetaks{k}, \bm{I}_d)$.  \cite{duan2022adaptive} assumes that the clean tasks are similar to each other, in the sense that $\min_{\otheta}\max_{k \in [K]}\twonorm{\bthetaks{k}-\otheta} \leq \hmax$ and  studies the estimator
\begin{equation}\label{eq: ARMUL}
    (\{\hthetak{k}\}_{k=1}^K,\hotheta) \in \argmin_{\{\bthetak{k}\}_{k=1}^K,\otheta} \sum_{k=1}^K\left\{ \frac{1}{2n}\sum_{i=1}^n\twonorm{\bx_i^{(k)}-\bthetak{k}}^2 +\lambda\twonorm{\bthetak{k}-\otheta} \right\}.
\end{equation}
This type of regularization is appealing for several reasons. First, it shrinks similar tasks toward a common center which borrows the information across tasks while adapting to each task automatically. Second, it is shown to be robust against contamination, where its robustness is connected to the well-known relationship between penalization and robust M-estimation \cite{sardy:tseng:bruce2001,she2011outlier,witten2013penalized,donoho2016high,wang2025robust}. In Section \ref{subsec: examples regularizer supp} of the appendix, we also provide an equivalent explicit robust M-estimation formulation for this type of estimator, which corresponds to the estimator proposed in \cite{mathieu2022concentration} for the classical single-task robust statistics setting.

In terms of the worst-case per-task estimation error $\max_{k \in S}\twonorm{\hthetak{k} - \bthetaks{k}}$, by considering the worst-case parameter collection $\{\bthetaks{k}\}_{k=1}^K$ satisfying the heterogeneity condition, the worst contamination mechanism, and the best estimator $\{\hthetak{k}\}_{k=1}^K$,
\cite{duan2022adaptive} proves a minimax lower bound of order $\tilde{\Omega}\Big(\sqrt{\frac{d}{nK}} + \min\Big\{\hmax ,\sqrt{\frac{d}{n}}\Big\} + \frac{\epsilon}{\sqrt{n}}\Big)$ \footnote{In the final published version \cite{duan2023adaptive}, they consider the estimation error across all tasks, $(\frac{1}{K}\sum_{k=1}^K\twonorm{\hthetak{k} - \bthetaks{k}}^2)^{1/2}$, including the contaminated ones, and derive a lower bound containing $\sqrt{\frac{\epsilon d}{n}}$. This metric does not seem to be particularly meaningful to us, since it includes the contaminated tasks. We therefore focus on the more meaningful result presented in the earlier version of their paper \cite{duan2022adaptive}.}. However, the regularized estimator above satisfies a high-probability upper bound of order $\tilde{O}\Big(\sqrt{\frac{d}{nK}} + \min\Big\{\hmax ,\sqrt{\frac{d}{n}}\Big\} + \epsilon\sqrt{\frac{d}{n}}\Big)$. Thus, the contamination term in the upper bound suffers a dimension-dependent cost of order $\sqrt{d}$. 
% In low dimension this gap may be mild, but in high dimension it becomes the dominant obstacle. 

Similar dimension-dependent gaps also appear in subsequent penalization-based approaches for contaminated transfer and multi-task learning problems \cite{tian2022unsupervised,tian2023learning,tian2024towards, kim2026multi}. Related works \cite{chen2017distributed} and \cite{yin2018byzantine} use robust gradient descent methods for federated learning with geometric median and coordinate-wise median estimating the gradients, and their analysis reveals a similar $\sqrt{d}$ dependence in the contamination-related terms in their upper bounds. Note that as $\lambda \rightarrow 0+$, the regularized estimator $\hotheta$ in \eqref{eq: ARMUL} can be linked to the geometric median of the local empirical means \cite{ronchetti2009robust,maronna2019robust}.

The gap between the minimax lower bound and the upper bound raises a natural question: 
\begin{center}
    \textit{``Is this gap merely an artifact of a particular regularizer, or does it reflect a fundamental limitation of regularization-based robust multi-task learning?"}
\end{center}
The negative results in Section \ref{sec: negative results} show that the latter is closer to the truth. In the Gaussian mean setting, we prove that broad classes of regularization schemes, including richer penalization families than the one above, still suffer a dimension-dependent contamination barrier. In this sense, it is hard to hope that simply changing the regularizer within these paradigms will simultaneously deliver optimal robustness to contamination and adaptivity to heterogeneity. This motivates us to explore alternative approaches that can break this barrier, and we propose a filtering-based robust multi-task gradient descent method that achieves nearly-minimax optimal guarantees in a general contaminated multi-task ERM setup.

% ----------------------------------------------------
\subsection{Our contributions}

Our main contributions are as follows.

\begin{enumerate}[(i)]
    \item \textbf{Negative results for some common robust MTL paradigms.} In Section \ref{sec: negative results}, we show in the Gaussian mean model that several popular frameworks, including adaptive and robust regularization around a shared center, global regularization on the whole parameter matrix, decomposition-based regularization, and score-based outlier-task detection, all exhibit a dimension-dependent contamination error of order $\epsilon\sqrt{d/n}$ in the worst case. These results show that the gap observed in \cite{duan2022adaptive,tian2022unsupervised,tian2023learning,kim2026multi} persists across a much broader class of methods, and can help better understand the performance of many existing robust MTL algorithms.

    \item \textbf{Positive results via a filtering-based robust MTL algorithm and theory under contamination and heterogeneity.} In Section \ref{sec: method and theory}, we move beyond the motivating Gaussian mean example and formulate a general ERM-based MTL problem, under task heterogeneity and adversarial contamination. We first establish comprehensive minimax lower bounds for estimating both the global parameter $\bthetas$ and the clean local parameters $\bthetaks{k}$.
    We then propose a computationally efficient robust multi-task gradient descent method that combines joint robust gradient estimation with a filtering procedure and a robust covariance estimator built from single-task empirical covariances. Under local strong convexity, smoothness, and sub-Gaussian gradient assumptions, we prove high-probability upper bounds for both global and local estimation. These bounds match the minimax lower bounds up to logarithmic factors in a broad regime. In particular, our method avoids the $\epsilon\sqrt{d/n}$ contamination dependence that characterizes the regularization-based methods in Section \ref{sec: negative results}. 
\end{enumerate}

As we will argue in the next subsection, although there are many works studying MTL under heterogeneity and contamination, there is no clean minimax lower bound for parameter estimation errors in our generic ERM setting, and none of the existing algorithms match the existing lower bound even under simple statistical models such as the Gaussian location model and generalized linear models. Our work fills this gap in the literature.

% ----------------------------------------------------
\subsection{Related works}

\paragraph{Transfer learning, multi-task learning, and federated learning.} Borrowing information across related tasks is the core idea and has a long history in transfer learning and multi-task learning \citep[e.g.][]{pan2009survey,weiss2016survey,zhang2021survey}. Common strategies for parametric problems assume some shared structure that is exploited through regularization, encouraging either sparse or low-rank decompositions \cite[e.g.][]{evgeniou2004regularized,lounici2011oracle,jalali2013dirty,gong2012robust,bastani2021predicting,li2022transfer,tian2023transfer,duan2023adaptive, hector2024turning}. Another popular structure assumes a common latent representation \cite[e.g.][]{du2021few,tripuraneni2021provable}. Related ideas also appear in federated learning, where one aims to exploit cross-task similarity while respecting communication or privacy constraints \cite[e.g.][]{mcmahan2017communication,smith2017federated,li2020federatedsurvey,t2020personalized,li2024federated}. Our negative results show that, once adversarial task contamination is introduced, broad regularization families can be fundamentally suboptimal. 

\paragraph{Robust estimation in centralized settings.}
Classical robust statistics studies the estimation of model parameters in the presence of outliers or model misspecification, with the goal of limiting their effect on statistical procedures. Such effects are commonly quantified through notions such as the breakdown point and the influence function \citep{ronchetti2009robust}. More recently, increasing attention has been devoted to establishing optimal non-asymptotic guarantees under various contamination models, ranging from the Huber $\epsilon$-contamination model \citep{huber1964robust} to the $\epsilon$-replacement model of \citep{donoho:huber1983} and related variants. Recent work in algorithmic robust statistics, however, often considers the strong contamination model \citep[e.g.][]{diakonikolas2023algorithmic}, where the corrupted samples are allowed to be chosen in a more adaptive manner. The contamination setup described in \Cref{subsec: problem setup} belongs to the strong contamination model, where the contaminated data are not necessarily independent, and they can further depend on the realized uncontaminated data.

\citep{chen2018robust} establish minimax rates for Gaussian mean and covariance estimation under the Huber contamination model, and show that depth-based estimators, such as the Tukey median, can be statistically optimal despite their computational intractability. Filtering-based methods have subsequently played a particularly important role in modern robust estimation \citep[e.g.][]{diakonikolas2019robust,diakonikolas2023algorithmic}, providing computationally efficient and near-optimal guarantees for a wide range of statistical tasks, including stochastic optimization \citep{prasad2020robust,diakonikolas2019sever}, sparse mean and covariance estimation \citep{balakrishnan2017computationally,diakonikolas2022robust}, network analysis \citep{acharya2022robust}, and high-dimensional regression \citep{pensia2020robust,liu2020high}, among others. Structural or geometric constraints, such as symmetry, have also been exploited to obtain sharper robust estimation guarantees \citep{pensia:jog:loh2022,prasadan2026information,novikov2023robust}. 

More recently, \citep{pittas2025optimal} study a two-layer contamination model for robust mean estimation, in which one layer allows arbitrary contamination of a fraction of the samples, while the other permits a mean shift among the remaining uncontaminated samples. Closer to our ERM setting, \cite{prasad2020robust} develop a robust gradient descent method that aggregates sample gradients using a robust mean estimator, making the approach broadly applicable to $M$-estimation problems. See also \cite{loh2025} for a recent review of modern robust statistics.

\paragraph{Robustness in distributed settings.}
In parallel, the distributed and federated learning literature has studied task-level contamination, where some datasets or communicated messages from different tasks may be corrupted. This scenario is often formulated as Byzantine robustness in federated learning. Many proposed methods combine gradient descent with classical robust aggregation rules \citep[e.g.][]{chen2017distributed,blanchard2017machine,guerraoui2018hidden,yin2018byzantine,zhu2023byzantine}, or use outlier-detection strategies to identify corrupted tasks \citep[e.g.][]{konstantinov2019robust,konstantinov2020sample,tian2023transfer,li2024federated}. Beyond these generic approaches, \cite{zhang2026byzantine} develops a model-specific Byzantine-robust aggregation method for distributed finite mixture learning based on distance-filtered mixture reduction.

A related line of work studies robust learning from corrupted or heterogeneous batches, a setting closely connected to task-level contamination since each batch contains multiple observations. \cite{qiao2018learning} initiated the untrusted-batch model for discrete distribution learning, where an $\epsilon$ fraction of batches may be arbitrary while the remaining batches are drawn from distributions close to a target distribution. Subsequent works developed computationally efficient and near-optimal algorithms for discrete and structured distribution classes \citep[e.g.][]{chen2020efficiently,jain2020optimal}. 
More recent work considers heterogeneous batched data in linear regression without contamination \citep{jain2024linear}, and mean estimation with contamination both at the batch level and within each batch \citep{aliakbarpour2026highdimensional}. Overall, these works demonstrate that the batch structure can substantially mitigate the effect of corrupted sources, but they primarily focus on distributional learning or specific models, rather than the multi-task robust estimation setting considered here.

Recent studies have also explored the interaction between task heterogeneity and contamination. For example, \cite{karimireddy2022byzantine,allouah2023fixing,allouah2023robustdistributed} consider a gradient heterogeneity condition and focus on deterministic global optimization problems without data randomness or task-specific personalization. \cite{allouah2024finetuning} studies personalization in the presence of adversarial clients under simple settings, including mean estimation and binary classification with bounded loss functions. For the general ERM setting we consider, to the best of our knowledge, there are no complete minimax lower bounds and matching upper bounds for parameter estimation error under both task heterogeneity and adversarial contamination.

% ----------------------------------------------------
\subsection{Notation and organization}

Throughout the paper, $[K] = \{1,\ldots,K\}$, boldface letters denote vectors or matrices, and $\twonorm{\cdot}$ denotes the Euclidean norm for vectors and the spectral norm for matrices. For a finite set $S$, we denote its cardinality by $|S|$ or $\# S$. For two sequences $\{a_n\}_{n=1}^{\infty}$ and $\{b_n\}_{n=1}^{\infty}$, we write $a_n \lesssim b_n$ or $b_n \gtrsim a_n$ if there exists a universal constant $C > 0$ such that $a_n \leq Cb_n$. For many results presented in the main text, we use $\tilde{O}(\cdot)$ and $\tilde{\Omega}(\cdot)$ to suppress logarithmic factors. The detailed corresponding results with more explicit logarithmic dependence are provided in the appendix. For two quantities $a$ and $b$, $a \vee b$ and $a \wedge b$ represent $\max(a, b)$ and $\min(a, b)$, respectively. Constants denoted by  $C$, $\{C_i\}_{i=1}^{\infty}$, and $\{c_i\}_{i=1}^{\infty}$ may vary from line to line. 

Section \ref{sec: negative results} focuses on the Gaussian mean model and proves negative results for broad regularization and outlier-detection frameworks. Section \ref{sec: method and theory} formulates the general contaminated multi-task ERM problem, establishes minimax lower bounds, and presents our filtering-based robust multi-task gradient descent algorithm together with its theoretical guarantees. Section \ref{sec: experiment} reports simulation and real-data experiments. Section \ref{sec: discussion} concludes with a brief summary and discussion of future directions. All proofs, along with additional theoretical and numerical details, are deferred to the Appendix.

\section{Negative results on regularization and outlier detection}\label{sec: negative results}

In this section, we will present negative results for some common multi-task learning frameworks. More specifically, we show in the Gaussian mean model that several popular frameworks, including adaptive and robust regularization around a shared center, global regularization on the whole parameter matrix, decomposition-based regularization, and score-based outlier-task detection%{\color{red}[M: can you add references for them here as well?]} \yt{Is it necessary? There are too many references}
, all exhibit a dimension-dependent contamination error of order $\epsilon\sqrt{d/n}$ in the worst case. As a result, they fail to achieve minimax optimality in the presence of contamination.

For simplicity, throughout this section we consider the Gaussian location model $\zk{k}_i = \bx_i^{(k)} \sim N(\bthetaks{k}, \bm{I}_d)$ with  observed contaminated data $\tilde{z}^{(k)}_i = \tilde{\bx}_i^{(k)}$.

% ---------------------------------------------------
\subsection{Regularization frameworks}
We will review several popular MTL regularization frameworks including \eqref{eq: ARMUL} and show that all of them can suffer from a dimensionality gap which prevents them from achieving the minimax optimality.

% ----------------------------
\subsubsection{Adaptive and robust regularization}\label{subsubsec: adaptive regularization}
% Since we will focus on the contamination impact, we consider the distributed learning setting where $\bthetak{k} \equiv \bthetas$.

We consider a regularized MTL estimator that generalizes \eqref{eq: ARMUL} as follows:
\begin{equation}\label{eq: penalization form}
	\{\hthetak{k}\}_{k=1}^K, \hotheta \in \argmin_{\{\bthetak{k}\}_{k=1}^K, \otheta} \bigg\{\sum_{k=1}^K\bigg(\frac{1}{2n}\sum_{i=1}^n\twonorm{\bxk{k}_i - \bthetak{k}}^2 +  p(\twonorm{\bthetak{k} - \otheta})\bigg)\bigg\}, 
\end{equation}
where $p: [0, \infty) \rightarrow [0, \infty)$ is the regularizer. \cite{duan2022adaptive} considers $p(x) = \lambda x$ where $\lambda \geq 0$ is a tuning parameter.
By writing $\barx^{(k)} = n^{-1}\sum_{i=1}^n\tilde{\bx}_i^{(k)}$, we see that \eqref{eq: penalization form} is equivalent to
\begin{equation}\label{eq: penalization form single version}
	\{\hthetak{k}\}_{k=1}^K, \hotheta \in \argmin_{\{\bthetak{k}\}_{k=1}^K, \otheta} \bigg\{\sum_{k=1}^K\bigg(\frac{1}{2}\twonorm{\barx^{(k)} - \bthetak{k}}^2 + p(\twonorm{\bthetak{k} - \otheta})\bigg)\bigg\}.
\end{equation}
Equation \eqref{eq: penalization form single version} shows a connection of MTL to the classical single-task learning, where we may view the summary statistics $\barx^{(k)}$ as a sample. When $d = 1$, \cite{sardy:tseng:bruce2001} proved that \eqref{eq: penalization form single version} is equivalent to M-estimation with Huber loss function when $p(x) = x$ and \cite{she2011outlier, donoho2016high} generalized the conclusion to other regularizers. More specifically, \eqref{eq: penalization form} and \eqref{eq: penalization form single version} are equivalent to
\begin{align}
    \hotheta &\in \argmin_{\btheta}\bigg\{\sum_{k=1}^K\min_{\bm{\Delta}}\Big(\frac{1}{2}\twonorm{\barx^{(k)} - \btheta -\Delta}^2 + p(\twonorm{\Delta})\Big)\bigg\} = \argmin_{\btheta}\bigg\{\sum_{k=1}^K\rho(\barx^{(k)} - \btheta)\bigg\}, \label{eq: m-est form} \\
    \hthetak{k} &\in \argmin_{\btheta}\bigg\{\frac{1}{2}\twonorm{\barx^{(k)} - \btheta}^2 + p(\twonorm{\btheta - \hotheta})\bigg\}. \label{eq: m-est form individual}
\end{align}
where $\rho(\bx) \coloneqq \min_{\bz}\big[\frac{1}{2}\twonorm{\bz-\bx}^2 + p(\twonorm{\bz})\big]$.  Note that 
\[
\text{prox}_{p}(\bx) \coloneqq \argmin_{\bz}\Big[\frac{1}{2}\twonorm{\bz-\bx}^2 + p(\twonorm{\bz})\Big]
\]
is often called the proximal projection of $\bx$, and $\rho(\bx)$ is often called the Moreau envelope \cite{parikh2014proximal, polson2015proximal}. By choosing some common regularizers, the derived loss function $\rho$ becomes some robust loss functions in robust statistics, which is part of the motivation for adopting the regularization-based methods in MTL \cite[e.g.][]{duan2022adaptive,tian2022unsupervised,tian2024towards}.

To better analyze the behavior of \eqref{eq: m-est form} and \eqref{eq: m-est form individual}, we impose the following regularity assumptions on the regularizer $p(\cdot)$, which we will see later are satisfied for most commonly used regularizers.

\begin{assumption}\label{asmp: penalty}
	Define $L \coloneqq \inf_{x>0}\big\{\frac{1}{2}x + \frac{p(x)}{x}\big\}$ and $L_{\infty} \coloneqq \sup_{\twonorm{\bx} > L}p'(\twonorm{\proxpl(\bx)})$. The regularizer $p(\cdot): [0, +\infty) \rightarrow [0, +\infty)$ satisfies the following conditions:
	\begin{enumerate}[(i)]
		\item $L \neq 0$ or $L_{\infty} \neq 0$;
		\item $p(0) = 0$, $p$ is non-decreasing and continuous on $[0, +\infty)$;
		\item $p'$ exists and is continuous on $(0, +\infty)$, and $p''$ exists on $(0, +\infty)$ almost everywhere \footnote{``almost everywhere" means that it holds up to a zero-measure set w.r.t. the Lebesgue measure.};
		\item $\frac{p'(x)}{x}$ is non-increasing on $(0, +\infty)$;
		\item There exists a constant $\tau \in [0, 1)$ such that for all $\bm{x}$ with $\twonorm{\bx} > L$, $p''(\twonorm{\proxpl(\bx)}) \geq -\tau$, if $p''(\twonorm{\proxpl(\bx)})$ exists;
		\item There exist constants $c_0 > 0$ and $c_1 \geq 1$ such that $p''(t)t\geq -c_0p'(t)$ when $t \geq c_1(L\vee L_{\infty})$. \footnote{When $L \vee L_{\infty} = +\infty$, this condition is not needed.}
	\end{enumerate}
\end{assumption}

\begin{remark}
    Condition (\rom{1}) is imposed to rule out degenerate regularizers such as constant functions, which would make $\hotheta$ not well-defined. Conditions (\rom{2}) and (\rom{3}) are standard and ensure that $p$ is twice differentiable on $\mathbb{R}_+$ almost everywhere. Conditions (\rom{4})-(\rom{6}) jointly ensure the smoothness of the corresponding loss function $\rho$, in the sense that $|\lambdamax(\nabla^2 \rho(\bx))|$ and $|\lambdamin(\nabla^2 \rho(\bx))|$ remains bounded for all $\bx$. The quantity $L$ characterizes the radius of the shrinkage basin of $\proxpl(\bx)$, in the sense that $\proxpl(\bx) = 0$ if $\|\bx\| < L$, and $\proxpl(\bx) \neq 0$ if $\|\bx\| > L$.  It can be shown to be no smaller than the quantity $\tilde{L}=\inf_{x > 0}\{x + p'(x)\}$ defined in \cite{fan2001variable} which proved that when $\|\bx\| \leq \tilde{L}$, $\proxpl(\bx) = 0$.
\end{remark}

Next, we provide some examples of commonly used regularizers $p(\cdot)$ which satisfy Assumption \ref{asmp: penalty}. More regularizer examples (e.g., SCAD, MC+, and hard-thresholding) can be found in Example \ref{exp: penalties supp} in Appendix \ref{sec: negative results supp}, where we also verify that they satisfy Assumption \ref{asmp: penalty}.

\begin{example}\label{exp: penalties main text}
	\begin{enumerate}[(i)]
		\item (Lasso) \cite{duan2022adaptive, tian2022unsupervised, tian2024towards} $p(x) = \lambda x$, $L = L_{\infty} = \lambda$, $\tau = 0$, $\proxpl(\bx) = \begin{cases}
			\bm{0}, \quad &\textup{ if } \twonorm{\bx} \leq \lambda; \\
			(1-\frac{\lambda}{\twonorm{\bx}})\bx, \quad &\textup{ if } \twonorm{\bx} > \lambda.
		\end{cases}$, $\rho(\bx) = \begin{cases}
		    \frac{1}{2}\twonorm{\bx}^2, \quad &\textup{ if } \twonorm{\bx} \leq \lambda; \\
			\lambda\twonorm{\bx} - \frac{1}{2}\lambda^2, \quad &\textup{ if } \twonorm{\bx} > \lambda.
		\end{cases}$
		\item (Ridge) \cite{evgeniou2004regularized, chen2015data, t2020personalized} $p(x) = \lambda x^2$, $L = 0$,  $L_{\infty} = +\infty$, $\tau = 0$, $\proxpl(\bx) = \frac{1}{2\lambda + 1}\bx$, $\rho(\bx) = \frac{\lambda}{2\lambda+1}\twonorm{\bx}^2$.
		\item (Bridge, $0< q < 1$, $\lambda>0$) $p(x) = \lambda x^q$, $L = [2\lambda(1-q)]^{\frac{1}{2-q}}\cdot \frac{1}{2}\cdot (1+\frac{1}{1-q})$, $L_{\infty} = \lambda q r_L^{q-1}$, $\tau = 1-\frac{q}{2}$, where $r_L$ is the larger root $r$ of equation $r + \lambda q r^{q-1} = L$. $\proxpl(\bx) = \begin{cases}
		    \bm{0}, &\quad \textup{if } \twonorm{\bx} \leq L, \\
                r\frac{\bx}{\twonorm{\bx}}, &\quad \textup{if } \twonorm{\bx} > L,
		\end{cases}$, $\rho(\bx) = \begin{cases}
		    \frac{1}{2}\twonorm{\bx}^2, &\quad \textup{if } \twonorm{\bx} \leq L, \\
                \frac{1}{2}(\twonorm{\bx}-r)^2 + \lambda r^q, &\quad \textup{if } \twonorm{\bx} > L
		\end{cases}$, where $r$ is the solution of $r + q\lambda r^{q-1} = \twonorm{\bx}$.
	\end{enumerate}
\end{example}

The following theorem shows that for regularizers satisfying Assumption \ref{asmp: penalty}, the robust MTL estimator \eqref{eq: penalization form} always suffers from a $\epsilon\sqrt{\frac{d}{n}}$ lower bound.

\begin{theorem}\label{thm: finite sample lower bound}
	Consider the distributed learning case where $\bthetaks{k} = \bthetas$ for all $k \in [K]$. Let $\{C_i\}_{i=1}^6$ and $\{c_i\}_{i=1}^3$ be some positive absolute constants. Suppose the regularizer satisfies Assumption \ref{asmp: penalty}, the contamination proportion $\epsilon \leq 1/4$, $d \geq C_1$, and $C_2e^{C_3d} \geq K \geq C_4d/\epsilon^2$. The following conclusions hold for any $\btheta^* \in \mathbb{R}^d$. 
	There exist a subset $S^c \subseteq [K]$ with $|S^c|/K \leq \epsilon$ and a contamination mechanism $M \in \mathcal{M}_S$ such that:
    \begin{enumerate}[(i)]
	   \item with probability at least $3/16$, for all $(\hotheta, \{\hthetak{k}\}_{k=1}^K)$ in \eqref{eq: penalization form}, $\twonorm{\hotheta - \btheta^*} \geq C_5\sqrt{\frac{d}{n}}\epsilon$;
        \item with probability at least $1/16$, for all $(\hotheta, \{\hthetak{k}\}_{k=1}^K)$ in \eqref{eq: penalization form}, $\max\limits_{k \in S}\twonorm{\hthetak{k} - \btheta^*} \geq C_6\sqrt{\frac{d}{n}}\epsilon$.
    \end{enumerate}
\end{theorem}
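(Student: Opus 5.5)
By translation invariance I take $\bthetas=\bm{0}$ and work with the summary-statistic form \eqref{eq: m-est form}. Here $\hotheta$ is an M-estimator with loss $\rho$ applied to the points $\barx^{(k)}$. For clean tasks, $\barx^{(k)}\sim N(\bm 0,\bm I_d/n)$, so $\twonorm{\barx^{(k)}}\approx\sqrt{d/n}$ with high probability; the condition $K\le C_2e^{C_3d}$ lets me make this uniform over $k$. The key idea is that $\rho$ is rotation invariant, $\rho(\bx)=\tilde\rho(\twonorm{\bx})$. Consequently the clean score $\nabla\rho(\barx^{(k)})=\psi(\twonorm{\barx^{(k)}})\barx^{(k)}/\twonorm{\barx^{(k)}}$ is determined by the radius $\twonorm{\barx^{(k)}}\approx\sqrt{d/n}$, not by how far the point is displaced along any fixed direction. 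An adversary can therefore place the $\epsilon K$ contaminated means at $\bu=a\sqrt{d/n}\,\bm e_1$, where $a$ is a suitable constant; by Assumption~\ref{asmp: penalty} the envelope is smooth and increasing, so $\psi(a\sqrt{d/n})$ is comparable to the typical clean score magnitude $\bar\psi\coloneqq\tE\,\psi(\twonorm{\barx^{(k)}})$. The $\epsilon K$ outliers then pull with total force $\asymp\epsilon K\bar\psi$ along $\bm e_1$.

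The clean points resist a shift $\btheta=t\bm e_1$ only through the derivative of the mean score along $\bm e_1$. That derivative is $\tE\,\nabla^2\rho(\barx^{(k)}-\btheta)$, which is of order $\bar\psi/\sqrt{d/n}$. In the typical regime the average of the eigenvalue $\psi(r)/r$ dominates, and Assumption~\ref{asmp: penalty}(iv)--(vi) bounds the Hessian from both sides by multiples of $\psi(r)/r$ on the relevant range. Balancing the two forces, $\epsilon\bar\psi\asymp t\,\bar\psi/\sqrt{d/n}$, gives $t\asymp\epsilon\sqrt{d/n}$. This is exactly where the $\sqrt d$ enters: the curvature of a radial loss at radius $\sqrt{d/n}$ is smaller by a factor of $\sqrt{d/n}$. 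I will handle the regimes of $L$ and $L_\infty$ separately.
\begin{itemize}
\item If $L\gg\sqrt{d/n}$, all clean points lie in the quadratic basin, so $\hotheta$ is essentially the mean. In that case I place the outliers at distance $\asymp L$, giving a shift $\gtrsim\epsilon L$, which is even larger.
\item If $L\lesssim\sqrt{d/n}$, the radial analysis above applies. Ridge ($L_\infty=\infty$) is again mean-like, and the same argument works.
\end{itemize}

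To make this rigorous I will show that, on the good event, every stationary point lies far from the origin. The first-order condition is $\sum_{k\in S}\nabla\rho(\barx^{(k)}-\hotheta)+\epsilon K\nabla\rho(\bu-\hotheta)=\bm 0$; since $\rho$ may be nonconvex (bridge), I use all stationary points, not just minimizers. I project onto $\bm e_1$ and argue by contradiction: suppose $\twonorm{\hotheta}\le c\,\epsilon\sqrt{d/n}$. Then the outlier term is at least $c'\epsilon K\bar\psi$. For the clean term, I Taylor expand around $\bm 0$, using the Hessian bound of order $\bar\psi\sqrt{n/d}$. I control the zero-order fluctuation $\sum_{k\in S}\nabla\rho(\barx^{(k)})$ by symmetry and concentration; it is $O(\sqrt K\,\bar\psi)$, which is $\le\epsilon K\bar\psi/C$ because $K\ge C_4d/\epsilon^2$. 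The bound must hold uniformly over $\hotheta$ in a ball, so I need a uniform-in-$\btheta$ bound on the empirical Hessian over a ball of radius $\epsilon\sqrt{d/n}$. I will get it through Lipschitzness of $\nabla\rho$ together with a net argument, again using $K\gtrsim d/\epsilon^2$. I expect this step to be the main obstacle: getting a two-sided Hessian control from Assumption~\ref{asmp: penalty}, uniform over the possibly nonconvex $\rho$, is where conditions (v)--(vi) must be used carefully. The constant-probability statement $3/16$ comes from requiring the clean-norm concentration event and a sign event for $\sum_k\nabla\rho(\barx^{(k)})^\top\bm e_1\ge 0$ to hold simultaneously; the sign event has probability about $1/2$ by symmetry.

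For part (ii), $\hthetak{k}=\hotheta+\proxpl(\barx^{(k)}-\hotheta)$. Because $\proxpl$ is radial, the coordinate $\bm e_1^\top\proxpl(\barx^{(k)}-\hotheta)$ is, on average over clean $k$, approximately $-\gamma\,\bm e_1^\top\hotheta$ with $\gamma<1$ bounded away from $1$, or it equals $0$ inside the basin. Hence on average over clean tasks $\bm e_1^\top\hthetak{k}\gtrsim\epsilon\sqrt{d/n}$. Combining this with part (i) via an anti-concentration argument gives a single clean $k$ achieving the bound, at the reduced probability $1/16$.
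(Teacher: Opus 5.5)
Your part (i) has the paper's skeleton (Taylor-expand the estimating equation, bound the averaged Hessian two-sidedly by $\psi(r)/r$ via (iv)--(vi), put all outliers at one point, control the clean score by concentration using $K\gtrsim d/\epsilon^2$). The gap is the step ``the envelope is smooth and increasing, so $\psi(a\sqrt{d/n})$ is comparable to $\bar\psi$.'' Assumption~\ref{asmp: penalty} allows redescending penalties whose $\psi$ vanishes beyond a finite radius (SCAD for $r>a\lambda$, MC+, hard-thresholding). If that radius is $\ll\sqrt{d/n}$, then $\bar\psi=0$ and outliers at radius $a\sqrt{d/n}$ exert no pull, so your balance degenerates to $0\asymp 0$.

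When $L\vee L_\infty\gtrsim\epsilon\sqrt{d/n}$ this is repairable: the paper places the outliers where $\twonorm{\nabla\rho}$ attains $L\vee L_\infty$. But take SCAD with $a\lambda\ll\epsilon\sqrt{d/n}$. Every $\btheta$ with $\twonorm{\btheta-\bthetas}\le\frac14\sqrt{d/n}$ that lies farther than $a\lambda$ from the outlier location is then a stationary point. So for any outlier location there is a stationary point within $3a\lambda\ll\epsilon\sqrt{d/n}$ of $\bthetas$ (namely $\bthetas$ itself if the outliers lie farther than $a\lambda$ from it). Your goal ``every stationary point lies far from the origin'' is therefore false in this regime.

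The missing idea is global minimality. In Case II of the proof of Proposition~\ref{prop: finite sample lower bound small lambda}, the paper uses no contamination at all. On the event that the clean means have norms and pairwise distances $\asymp\sqrt{d/n}$, the profiled objective equals the constant $K\bar\rho$ on a ball of radius $\asymp\sqrt{d/n}$ about $\bthetas$, but equals $(K-1)\bar\rho$ at $\btheta=\barxk{1}$, so no minimizer lies in that ball. This is why the paper splits regimes by $L\vee L_\infty$, not $L$, against $\epsilon\sqrt{d/n}$ and $\sqrt{d/n}$. It claims the bound for all stationary points only in the large regime. In the small regime with $\psi\not\equiv 0$, it places the outliers at a maximizer of $p'(\twonorm{\proxpl(\bx)})$ over $\twonorm{\bx}\gtrsim\epsilon\sqrt{d/n}$ and compares radii via the doubling bound $\psi(2t)\gtrsim\psi(t)$, which (v)--(vi) give only for $t\ge(c_1+1)(L\vee L_\infty)$.

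You also omit the kink of $\rho$ on the sphere of radius $L$ (hard-thresholding, MC+ with $b\le 1$, bridge with $q<1$). When $L\asymp\sqrt{d/n}$, many clean means straddle it, and the integrated-Hessian expansion needs the jump terms treated in Case II of the proof of Proposition~\ref{prop: finite sample lower bound large lambda}. Conversely, the uniform net argument you flag as the main obstacle is unnecessary. Once every clean $\twonorm{\barxk{k}-\bthetas}\asymp\sqrt{d/n}$ and $\twonorm{\btheta-\bthetas}\le c\,\epsilon\sqrt{d/n}$, the Hessian bound holds deterministically.

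Part (ii) has a separate gap. Your claim that the $\bm{e}_1$-coordinate of $\proxpl(\barxk{k}-\hotheta)$ averages to $-\gamma\,\bm{e}_1^\top\hotheta$ with $\gamma$ bounded away from $1$ means the weight $a_k=\psi(r_k)/r_k$ in $\hthetak{k}=(1-a_k)\barxk{k}+a_k\hotheta$ is bounded below. This fails when $L\vee L_\infty\ll\sqrt{d/n}$. For the lasso, $a_k\approx\lambda\sqrt{n/d}\ll 1$, so the clean average of $\bm{e}_1^\top\hthetak{k}$ is only about $a\,\bm{e}_1^\top\hotheta\ll\epsilon\sqrt{d/n}$, and no anti-concentration step recovers the claim. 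In that regime the bound comes from the opposite mechanism: $\hthetak{k}$ stays within $L_\infty$ of $\barxk{k}$, and $\twonorm{\barxk{k}-\bthetas}\asymp\sqrt{d/n}$.

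The paper's argument in the proof of Theorem~\ref{thm: finite sample lower bound supp} is uniform in the weight. Write $\hthetak{k}-\bthetas=(1-a)(\barxk{k}-\bthetas)+a(\hotheta-\bthetas)$ with $a\in[0,1]$. Lemma~\ref{lem: event A} shows that with probability at least $1-2^{-(|S|-1)}$, for every $\btheta\in\mathbb{R}^d$ some clean $k$ satisfies $\btheta^\top(\barxk{k}-\bthetas)>-\tfrac{\sqrt{2}}{2}\twonorm{\btheta}\twonorm{\barxk{k}-\bthetas}$. Applying this with $\btheta=\hotheta-\bthetas$ gives, for that $k$ and whatever $a$ is, $\twonorm{\hthetak{k}-\bthetas}^2\ge\tfrac12(1-\tfrac{\sqrt2}{2})\min\{\twonorm{\barxk{k}-\bthetas}^2,\twonorm{\hotheta-\bthetas}^2\}$. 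Combining this with part (i) and $\min_{k\in S}\twonorm{\barxk{k}-\bthetas}\gtrsim\sqrt{d/n}$ yields probability $3/16-1/16-1/16=1/16$.
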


% \begin{remark}
% 	We can verify Assumption \ref{asmp: penalty} for all regularizers in Example \ref{exp: penalties main text}. See Section \ref{sec: negative results supp} in the supplementary materials for details. 
% \end{remark}

Some works in the literature apply a similar regularizer with group structures \cite[e.g.][]{chen2015data, gross2016data, ollier2017regression}:
\begin{equation}\label{eq: group penalization}
	\{\hthetak{k}\}_{k=1}^K, \hotheta \in \argmin_{\{\bthetak{k}\}_{k=1}^K, \otheta} \bigg\{\sum_{k=1}^K\bigg(\frac{1}{2n}\sum_{i=1}^n\twonorm{\tilde{\bx}_i^{(k)} - \bthetak{k}}^2 +  \sum_{j=1}^G p_j(\twonorm{\bthetak{k}_{\mathcal{G}_j} - \otheta_{\mathcal{G}_j}})\bigg)\bigg\},
\end{equation}
where the groups $\mathcal{G}_j$'s are disjoint and $\cup_{j=1}^G\mathcal{G}_j = [d]$ and $p_j: [0, \infty) \rightarrow [0, \infty)$ is the regularizer for group $\mathcal{G}_j$. Since the square loss is also decomposable, the same proof arguments used in Theorem \ref{thm: finite sample lower bound} can also be used to show the same lower bound for estimators in \eqref{eq: group penalization} by proving for each $\mathcal{G}_j$ separately. 

\cite{chen2017distributed} and \cite{yin2018byzantine} use robust gradient descents for federated learning with geometric median and coordinate-wise median estimating the gradients. When $p(x)=\lambda x$ in \eqref{eq: penalization form}, the profiled loss function for $\hotheta$ can be viewed as a multivariate Huber loss on the local empirical means $\{\barxk{k}\}_{k=1}^K$. Thus, up to a positive rescaling that does not change the minimizer, the limit $\lambda \rightarrow 0$ connects \eqref{eq: penalization form} to geometric median. Similarly, when the groups in \eqref{eq: group penalization} are singletons and $p_j(x)=\lambda x$, the corresponding limit $\lambda \rightarrow 0$ connects to coordinate-wise median \cite{ronchetti2009robust,maronna2019robust}. Therefore, our results implicitly indicate that the estimation error using geometric median and coordinate-wise median would suffer from the same sub-optimal dependence on the dimension.

% -------------------------------------
\subsubsection{Global regularization}\label{subsubsec: global regularization}
In addition to the regularization in Section \ref{subsubsec: adaptive regularization}, some multi-task learning literature also considers the following global regularization form:
\begin{equation}\label{eq: global regularization}
    \{\hthetak{k}\}_{k=1}^K \in \argmin\limits_{\Theta=\{\bthetak{k}\}_{k=1}^K}\bigg\{\frac{1}{2n}\sum_{k=1}^K\sum_{i=1}^n\twonorm{\tilde{\bx}_i^{(k)} - \bthetak{k}}^2 + p(\bTheta)\bigg\},
\end{equation}
where $\bTheta$ is the parameter matrix with the $k$-th column being $\bthetak{k}$ and $p(\cdot): \mathbb{R}^{d \times K} \rightarrow [0, \infty)$ is a non-decreasing function of $|\theta^{(k)}_j|$ for all $k\in[K]$ and $j\in[d]$, when the remaining entries are fixed. We present some examples next. %{\color{red}[M: not all these references are presented in a MTL setting? I guess that is OK?]} \yt{I guess you are referring to Group Lasso. I think it is fine. Originally it's not for MTL, but two later theoretical papers by Van de Geer et al. highlight its application to MTL.}

\begin{example}
    \begin{enumerate}[(i)]
        \item \cite{argyriou2006multi}: $p(\bTheta) = \|\bTheta\|_{2,1}^2 = \Big(\sum_{j=1}^d \sqrt{\sum_{k=1}^K (\thetak{k}_j)^2}\Big)^2$
        \item Group Lasso \cite{obozinski2006multi, yuan2006model, lounici2011oracle}:  $p(\bTheta) = \|\bTheta\|_{2,1} = \sum_{j=1}^d \sqrt{\sum_{k=1}^K (\thetak{k}_j)^2}$
        \item \cite{liu2009blockwise}: $p(\bTheta) = \|\bTheta\|_{\infty,1} = \sum_{j=1}^d \max\limits_{k\in [K]} |\thetak{k}_j|$
        \item \cite{negahban2011simultaneous}: $p(\bTheta) = \|\bTheta\|_{q,1} = \sum_{j=1}^d \big(\sum_{k=1}^K|\thetak{k}_j|^q\big)^{1/q}$, where $1\leq q \leq \infty$
        \item \cite{zhou2010exclusive}: $p(\bTheta) = \|\bTheta\|_{1,2}^2 = \sum_{j=1}^d (\sum_{k=1}^K |\thetak{k}_j|)^2$
    \end{enumerate}
\end{example}

Intuitively, such a regularizer encourages sparsity of $\bTheta$ but does not lead to any robustness against contamination, and can lead to large bias if such sparsity does not hold in practice. This can be confirmed by the lower bound $\sqrt{\frac{d}{n}}$ of the estimation error presented in Theorem \ref{thm: lower bdd global regularization}.

\begin{theorem}\label{thm: lower bdd global regularization}
    Suppose the sub-gradient of $p$ exists everywhere on $\bTheta$ \footnote{This can be relaxed to Clarke sub-differential, but for simplicity we will work with the current ordinary sub-differential definition.} and $p(\bTheta)$ is a non-decreasing function of $|\theta^{(k)}_{j}|$ when $\{\theta^{(k')}_{j'}\}_{k'\neq k, j' \neq j}$ is fixed, for all $j \in [d]$ and $k \in [K]$. Then for any $C_0 \in (0, 1]$ and any $\{\bthetaks{k}\}_{k=1}^K$ with $\#\{(k,j) \in [K] \times [d]: |\thetaks{k}_j| \geq \frac{1}{\sqrt{2\pi n}}\} \geq C_0^2 d K$, without any contamination (i.e. $S = [K]$), with probability at least $1/4$, all minimizers $\{\hthetak{k}\}_{k=1}^K$ in \eqref{eq: global regularization} satisfy $\max\limits_{k \in [K]}\twonorm{\hthetak{k} - \bthetaks{k}} \geq \frac{1}{40}C_0 \sqrt{\frac{d}{n}}$.
\end{theorem}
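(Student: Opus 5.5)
The argument is coordinatewise, through the first-order (KKT) condition of \eqref{eq: global regularization}. The objective separates over entries except through $p$, so any minimizer $\hTheta$ must have $\bm{0}$ in the subdifferential:
\[
\hthetak{k}_j - \barx^{(k)}_j + g^{(k)}_j = 0 \quad \text{for all } (k,j), \qquad \textup{where } \bm{G} \in \partial p(\hTheta).
\]
The factor $1/n$ cancels after dividing, up to rescaling $p$ by $n$, which is harmless. The key structural claim is that monotonicity of $p$ in $|\theta^{(k)}_j|$ forces $\mathrm{sign}(g^{(k)}_j) = \mathrm{sign}(\hthetak{k}_j)$ whenever $\hthetak{k}_j \neq 0$, or at least $g^{(k)}_j \hthetak{k}_j \geq 0$. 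To see this, fix the other entries; $p$ is then a non-decreasing function of $|\theta^{(k)}_j|$. The subgradient inequality $p(\Theta') \ge p(\hTheta) + g^{(k)}_j(\theta' - \hthetak{k}_j)$, applied with $\theta' = 0$, gives $g^{(k)}_j\hthetak{k}_j \ge p(\hTheta) - p(\hTheta_{(k,j)\to 0}) \ge 0$.

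Consequently $\hthetak{k}_j = \barx^{(k)}_j - g^{(k)}_j$ always lies between $0$ and $\barx^{(k)}_j$. Indeed, $\hthetak{k}_j$ and $g^{(k)}_j$ have the same sign (or one vanishes), so $|\hthetak{k}_j| \le |\barx^{(k)}_j|$ and $\hthetak{k}_j\barx^{(k)}_j \ge 0$. In words, the estimator is a coordinatewise shrinkage of $\barx^{(k)}$ toward $0$.

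Next comes the probabilistic step. Write $\barx^{(k)}_j = \thetaks{k}_j + \xi^{(k)}_j$ with $\xi^{(k)}_j \sim N(0,1/n)$ independent. Call an entry "bad" when $|\thetaks{k}_j| \ge 1/\sqrt{2\pi n}$ and the noise pushes $\barx^{(k)}_j$ toward zero, i.e.\ $\mathrm{sign}(\xi^{(k)}_j) = -\mathrm{sign}(\thetaks{k}_j)$. This event has probability $1/2$. On a bad entry, every point between $0$ and $\barx^{(k)}_j$ is at distance at least $\min\{|\thetaks{k}_j|, |\xi^{(k)}_j|\}$ from $\thetaks{k}_j$:
\begin{itemize}
\item if $\barx^{(k)}_j$ has the same sign as $\thetaks{k}_j$, the interval lies inside $[0,\thetaks{k}_j - \xi]$ and the distance is at least $|\xi^{(k)}_j|$;
\item if the sign has flipped, the distance is at least $|\thetaks{k}_j|$.
\end{itemize}
This gives $|\hthetak{k}_j - \thetaks{k}_j| \ge \min\{|\thetaks{k}_j|, |\xi^{(k)}_j|\}$ on bad entries. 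With constant probability, $|\xi^{(k)}_j| \ge c/\sqrt n$ as well; for instance $P(|\xi|\sqrt n \ge 1/\sqrt{2\pi})$ is a fixed constant, roughly $0.69$. So each qualifying entry contributes squared error at least $c^2/n$ with probability $q \approx 0.34$, independently across entries.

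Finally, aggregate, using the average as a lower bound for the maximum:
\[
\max_k \twonorm{\hthetak{k}-\bthetaks{k}}^2 \ge \frac{1}{K}\sum_{k,j} |\hthetak{k}_j - \thetaks{k}_j|^2 \ge \frac{1}{K}\cdot\frac{1}{2\pi n}\, N,
\]
where $N$ is the number of qualifying bad entries, $N \sim \mathrm{Bin}(m, q)$ with $m \ge C_0^2 dK$. Taking the threshold $c = 1/\sqrt{2\pi}$ in both places, Paley--Zygmund or Chebyshev gives $N \ge qm/2$ with probability at least $1/4$; Chebyshev suffices once $qm$ is moderately large, and small cases can be handled by a direct bound on $P(N \ge 1)$. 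This yields $\max_k \twonorm{\hthetak{k}-\bthetaks{k}} \ge \sqrt{q/(4\pi)}\,C_0\sqrt{d/n} \ge \frac{1}{40} C_0\sqrt{d/n}$.

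I expect the main obstacle to be the subgradient sign step. The argument must hold for all minimizers simultaneously and with a general, possibly nonconvex $p$ for which only ordinary subgradients are assumed to exist; the inequality $g\,\hat\theta \ge 0$ has to be derived purely from the subgradient inequality together with coordinatewise monotonicity, as sketched above. The constant bookkeeping in the last step, needed to reach $1/40$ with probability $1/4$, is routine but fiddly.
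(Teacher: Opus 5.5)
Your proposal is correct and follows essentially the paper's route. The shared steps are: the first-order condition $\hTheta = \bX - \bm{G}$ with $\bm{G}\in\partial p(\hTheta)$; the sign constraint $g^{(k)}_j\hthetak{k}_j \ge 0$ from coordinatewise monotonicity; a count of the independent qualifying entries whose noise pushes $\barx^{(k)}_j$ toward zero by at least $1/\sqrt{2\pi n}$ (probability about $0.345\ge 1/3$); and the bound $\max_k\twonorm{\hthetak{k}-\bthetaks{k}}\ge K^{-1/2}\fnorm{\hTheta-\bTheta^*}$.

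Your observation that every minimizer lies coordinatewise between $0$ and $\barx^{(k)}_j$, giving error at least $\min\{|\thetaks{k}_j|,|\xi^{(k)}_j|\}$ on each bad entry, simply collapses the paper's two-case split into entries with $\widehat{\Theta}_{ji}\le 0$ versus $\widehat{\Theta}_{ji}>0$. Your handling of small counts through $\tP(N\ge 1)\ge q$ is actually preferable to the paper's appeal to Lemma~\ref{lem: minimax lower bound mean est}: that lemma only guarantees the bound at some unfavorable parameter, not at every configuration the statement covers.
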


% -------------------------------------
\subsubsection{Decomposition-based regularization}\label{subsubsec: decomp regularization}
There is another popular regularization framework based on decomposition, where the parameter matrix $\bTheta$ is decomposed into two components, $\bm{G}$ and $\bm{Q}$, in the sense that $\bthetak{k} = \bm{g}^{(k)}+\bm{q}^{(k)}$, for $k\in [K]$. This decomposition allows different regularizers to be used:
\begin{equation}
    \{\hat{\bm{g}}^{(k)}\}_{k=1}^K, \{\hat{\bm{q}}^{(k)}\}_{k=1}^K \in \argmin\limits_{\bm{G} = \{\bm{g}^{(k)}\}_{k=1}^K, \bm{Q} = \{\bm{q}^{(k)}\}_{k=1}^K}\bigg\{\frac{1}{2n}\sum_{k=1}^K\sum_{i=1}^n\twonorm{\tilde{\bx}_i^{(k)} - \bm{g}^{(k)}-\bm{q}^{(k)}}^2 + p_1(\bm{G}) + p_2(\bm{Q})\bigg\}, 
\end{equation}
where $\bm{G}$ and $\bm{Q}$ are $d \times K$ matrices, whose $k$-th columns are $\bm{g}^{(k)}$ and $\bm{q}^{(k)}$, respectively, and $p_1(\bm{G})$ and $p_2(\bm{Q})$ are non-decreasing functions in the magnitude of each entry of $\bm{G}$ and $\bm{Q}$ when the remaining entries are fixed. The final estimator for task $k$ is 
\begin{equation}\label{eq: decomp regularization}
	\hthetak{k} = \hat{\bm{g}}^{(k)} + \hat{\bm{q}}^{(k)}.
\end{equation}

Some examples are presented as follows.

\begin{example}
    \begin{enumerate}[(i)]
        \item Dirty model \cite{jalali2010dirty, jalali2013dirty, yang2013dirty, yang2017sparse+}: $p_1(\bm{G}) = \|\bm{G}\|_{1,1}$, $p_2(\bm{Q}) = \|\bm{Q}\|_{1,\infty}$
        \item Robust feature learning \cite{gong2012robust}: $p_1(\bm{G}) = \|\bm{G}\|_{2,1}$, $p_2(\bm{Q}) = \|\bm{Q}^\top\|_{2,1}$
    \end{enumerate}
\end{example}

This regularizer intuitively induces some robustness against contamination or outlier tasks, and there have been several discussions on this \cite[e.g.][]{chen2011integrating, zhou2011malsar, chen2012learning, gong2012robust}. %{\color{red}[M: is the second reference legit? it looks weird to me]} \yt{It is more like a paper introducing the software they developed for MTL, which used to be a famous MATLAB package.} 
However, as the following theorem shows, it may not effectively aggregate information and leads to a sub-optimal lower bound $\sqrt{\frac{d}{n}}$, even without contamination.

\begin{theorem}\label{thm: lower bdd decomp regularization}
    Suppose the sub-gradient exists for $p_1(\bm{G})$ and $p_2(\bm{Q})$ for all $\bm{G}$ and $\bm{Q}$, respectively. Also assume that $p_1(\bm{G})$ and $p_2(\bm{Q})$ are non-decreasing functions of $|g^{(k)}_{j}|$ and $|q^{(k)}_{j}|$ when $\{g^{(k')}_{j'}\}_{k'\neq k, j' \neq j}$ and $\{q^{(k')}_{j'}\}_{k'\neq k, j' \neq j}$ are fixed, respectively, for all $j \in [d]$ and $k \in [K]$. Then for any $C_0 \in (0, 1]$ and any $\{\bthetaks{k}\}_{k=1}^K$ with $\#\{(k,j) \in [K] \times [d]: |\thetaks{k}_j| \geq \frac{1}{\sqrt{2\pi n}}\} \geq C_0^2 d K$, without contamination (i.e. $S = [K]$), with probability at least $1/4$, the estimators $\{\hthetak{k}\}_{k=1}^K$ obtained in \eqref{eq: decomp regularization} satisfy $\max\limits_{k = 1:K}\twonorm{\hthetak{k} - \bthetaks{k}} \geq \frac{1}{40}C_0\sqrt{\frac{d}{n}}$.
\end{theorem}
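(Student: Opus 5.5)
The plan is to reduce Theorem \ref{thm: lower bdd decomp regularization} to the same coordinatewise mechanism that drives Theorem \ref{thm: lower bdd global regularization}. The key property is that every coordinate of the estimator is shrunk toward zero, never beyond the empirical mean. Concretely, I will show that any minimizer satisfies
\[
\hat\theta^{(k)}_j = \hat g^{(k)}_j + \hat q^{(k)}_j \in [0 \wedge \bar x^{(k)}_j,\ 0 \vee \bar x^{(k)}_j] \quad \text{for all } (k,j),
\]
where $\bar x^{(k)}_j$ is the $j$-th coordinate of the empirical mean $\bar{\bx}^{(k)} = n^{-1}\sum_{i=1}^n \tilde{\bx}^{(k)}_i$ (here $\tilde{\bx}^{(k)}_i = \bx^{(k)}_i$, since there is no contamination). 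Once this holds, the probabilistic part of the argument is identical to the one for the global-regularization case, so I can largely reuse it.

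\textbf{Step 1 (structural shrinkage).} As in \eqref{eq: penalization form single version}, the squared loss reduces to $\frac12\sum_{k}\twonorm{\bar{\bx}^{(k)}-\bm g^{(k)}-\bm q^{(k)}}^2$. I fix all entries except the pair $(g,q)=(g^{(k)}_j,q^{(k)}_j)$ and write $t=g+q$ and $x=\bar x^{(k)}_j$. The coordinate part of the objective is then $\frac12(x-t)^2$ plus penalties that are nondecreasing in $|g|$ and in $|q|$. I consider the scaled pair $(sg,sq)$ for $s\in[0,1]$; this never increases either penalty. There are two ways the interval property can fail.
\begin{itemize}
\item If $tx<0$ (with $t\neq 0$), taking $s=0$ strictly decreases the fit term, since $x^2<(x-t)^2$.
\item If $tx\ge 0$ and $|t|>|x|$, taking $s=x/t\in[0,1)$ makes the fit term $0<\frac12(x-t)^2$.
\end{itemize}
Either case contradicts minimality. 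Hence every minimizer has $t$ between $0$ and $x$. I expect the main subtlety to be here: the strict improvement must come from the fit term alone, because the penalties are only weakly monotone, and the subgradient assumption need only guarantee that minimizers exist.

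\textbf{Step 2 (per-coordinate error).} Write $\bar x^{(k)}_j=\theta^{(k)*}_j+\xi^{(k)}_j/\sqrt n$ with $\xi^{(k)}_j$ i.i.d.\ $N(0,1)$. Fix a coordinate with $|\theta^{(k)*}_j|\ge 1/\sqrt{2\pi n}$. On the event that $\xi^{(k)}_j$ has sign opposite to $\theta^{(k)*}_j$, which has probability $1/2$, the interval $[0\wedge \bar x^{(k)}_j, 0\vee\bar x^{(k)}_j]$ excludes $\theta^{(k)*}_j$. Its distance from $\theta^{(k)*}_j$ is then at least $\min\{|\xi^{(k)}_j|/\sqrt n,\ |\theta^{(k)*}_j|\}$. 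Therefore
\[
|\hat\theta^{(k)}_j-\theta^{(k)*}_j|^2 \ge Z^{(k)}_j \coloneqq \min\{(\xi^{(k)}_j)^2/n,\ 1/(2\pi n)\}\,\mathbbm{1}\{\xi^{(k)}_j\theta^{(k)*}_j<0\}.
\]
The variables $Z^{(k)}_j$ are independent, satisfy $0\le Z^{(k)}_j\le 1/(2\pi n)$, and have mean $\ge c/n$ for an absolute constant $c$.

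\textbf{Step 3 (aggregation).} Let $\mathcal{A}$ be the set of at least $C_0^2dK$ indices with $|\theta^{(k)*}_j|\ge 1/\sqrt{2\pi n}$. Then
\[
\max_k\twonorm{\hthetak{k}-\bthetaks{k}}^2\ \ge\ \frac1K\sum_{(k,j)\in\mathcal A} Z^{(k)}_j.
\]
The right-hand side is a sum of bounded independent terms with mean $\ge cC_0^2 d/n$. A Paley--Zygmund bound (or Chebyshev, using the variance bound $\lesssim |\mathcal A|/n^2$ that follows from boundedness) shows it exceeds $\frac{1}{1600}C_0^2 d/n$ with probability at least $1/4$. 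Taking square roots gives the stated bound $\frac{1}{40}C_0\sqrt{d/n}$. Constant-tracking should mirror the proof of Theorem \ref{thm: lower bdd global regularization}, which I will reuse verbatim after Step 1.
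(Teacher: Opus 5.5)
Your argument is correct. It shares the paper's skeleton: coordinatewise shrinkage toward $0$ that never passes $\bar x^{(k)}_j$, wrong-sign Gaussian noise on the $\gtrsim C_0^2dK$ large coordinates, and $\max_k\ge$ the average over tasks. You carry out both key steps differently, though.

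For the structural step, the paper first rewrites a minimizer so that $\hat g^{(k)}_j$ and $\hat q^{(k)}_j$ share a sign. It does this by collapsing an opposite-sign pair into one component, which keeps $\hat\theta^{(k)}_j$ and weakly lowers $p_1,p_2$. It then reuses the subgradient first-order condition from the proof of Theorem~\ref{thm: lower bdd global regularization} to get $\hat\theta^{(k)}_j\le \bar x^{(k)}_j$ when $\hat\theta^{(k)}_j>0$, and treats $\hat\theta^{(k)}_j\le 0$ as a separate case. Your scaling of $(g,q)$ gives the stronger two-sided interval property for every minimizer, using only the fit term and weak monotonicity. So the subgradient hypothesis is never needed, which is cleaner given that subgradients of a nonconvex, merely coordinatewise-monotone penalty are delicate. (The hypothesis also does not guarantee that minimizers exist, as you suggest, but nothing in your argument hinges on that.)

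For the probabilistic step, the paper counts coordinates with noise below $-1/\sqrt{2\pi n}$ via a bounded-difference inequality and splits on the sign of $\hat\theta^{(k)}_j$. For small $C_0$ it falls back on Lemma~\ref{lem: minimax lower bound mean est}. That lemma is an ``exists $\bthetas$'' statement, so on its own it does not cover an arbitrary fixed configuration $\{\bthetaks{k}\}$. Your truncated variables $Z^{(k)}_j$ merge the two cases, and Paley--Zygmund covers every $|\mathcal A|\ge 1$ uniformly. Numerically, $\tE Z^{(k)}_j\approx 0.063/n$ and $\tE (Z^{(k)}_j)^2\approx 0.0094/n^2$, and the target is about $1\%$ of the mean. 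The resulting probability bound is roughly $0.41$ even at $|\mathcal A|=1$. Two-sided Chebyshev bounds the failure probability only by about $1.4/|\mathcal A|$, so it fails at $|\mathcal A|=1$; use Paley--Zygmund or Cantelli there. Track constants for your own argument rather than ``reusing Theorem~\ref{thm: lower bdd global regularization} verbatim'', since that proof's probabilistic part is different from yours. In exchange, the paper's counting route gives an exponentially small failure probability when the number of large coordinates is big, which the theorem does not require.
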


% --------------------------------------------------
\subsection{Outlier detection framework}\label{subsec: detection}
In addition to the popular regularization frameworks, some works in the literature also consider the outlier task detection framework, which is motivated by the outlier or anomaly detection in classical robust statistics \cite{hawkins1980identification}. The framework first detects the outlier tasks and then aggregates the data from the remaining tasks for estimation. Denote $D^{(k)} = \{\tilde{\bx}_i^{(k)}\}_{i=1}^n$ as the observed $k$-th dataset, $k\in [K]$. The following definition defines a class of score-based outlier-task detection algorithms for the Gaussian mean estimation problem.

\begin{definition}\label{def: outlier detection alg}
    We call a method an outlier task detection algorithm if it takes the data as the input, selects task index set $\widehat{S}$, then outputs $\frac{1}{|\widehat{S}|}\sum_{k \in \widehat{S}}\barxk{k}$. We call an outlier task detection algorithm as a score-based algorithm, if the index set $\widehat{S}$ is selected via a score function $f: D^{(k)} \mapsto f(D^{(k)})\in \mathbb{R}$ which satisfies that if $f(D^{(j)}) \leq f(D^{(k)})$ and $k \in \widehat{S}$, then $j \in \widehat{S}$.
\end{definition}

Some examples are presented as follows.

\begin{example} 
	Suppose that we are also given a clean dataset $\{\bxk{0}_i\}_{i=1}^n \overset{\textup{i.i.d.}}{\sim} \mathcal{N}(\bthetas, \sigma^2\bm{I}_d)$ which will not be contaminated, in addition to the $K$ datasets subject to contamination. 
	\begin{enumerate}[(i)]
		\item \cite{tian2023transfer} $f(D^{(k)}) = \frac{1}{n}\sum_{i=1}^n\twonorm{\bxk{0}_i - \barx^{(k)}}^2 - \frac{1}{n}\sum_{i=1}^n\twonorm{\bxk{0}_i - \barx^{(0)}}^2$, $\widehat{S} = \{k:  f(D^{(k)}) \leq \lambda\}$, then output $\frac{1}{|\widehat{S}|}\sum_{k \in \widehat{S}}\barx^{(k)}$ if $\widehat{S} \neq \emptyset$ and output $\bm{0}$ otherwise.
    		\item \cite{li2024federated} $f(D^{(k)}) = \twonorm{\barx^{(k)} - \barx^{(0)}}$, $\widehat{S} = \{k: f(D^{(k)}) \leq \lambda\}$, then output $\frac{1}{|\widehat{S}|}\sum_{k \in \widehat{S}}\barx^{(k)}$ if $\widehat{S} \neq \emptyset$ and output $\bm{0}$ otherwise.
    		\item \cite{konstantinov2020sample} $f(D^{(k)}) = \#\{j \in [K]: \sup\limits_{\btheta \in \Theta}|\frac{1}{n}\sum_{i=1}^n\twonorm{\tilde{\bx}_i^{(k)} - \btheta}^2 - \frac{1}{n}\sum_{i=1}^n\twonorm{\tilde{\bx}_i^{(j)} - \btheta}^2| > \lambda \}$, $\widehat{S} = \Big\{k \in [K]: f(D^{(k)}) \leq \lfloor K/2\rfloor\Big\}$, where $\Theta$ is a user-defined subset in $\mathbb{R}^d$.
    		\item \cite{zhang2022fldetector} $k$-means clustering on $\{f(D^{(k)})\}_{k=1}^K$ with 2 clusters, and $\widehat{S}$ is set to the cluster with the smaller average score. \footnote{This example is not based on thresholding the score function, but it can be shown to satisfy Definition \ref{def: outlier detection alg}.}
	\end{enumerate}
\end{example}

The following theorem shows that for any score-based outlier task detection algorithm, there exists a contamination mechanism such that the estimation error suffers from a $\sqrt{\frac{d}{n}}\epsilon$ sub-optimal lower bound. For simplicity, we consider the distributed learning case where $\bthetaks{k} = \bthetas$ for all $k \in [K]$. 

\begin{theorem}\label{thm: lower bdd detection}
	Suppose $d \geq K$ and $2(2/e)^K + 2Ke^{-d/64} \leq 3/4$. For any $S \subseteq [K]$ with $|S^c|/K = \epsilon$ and $\btheta^*$ with $\twonorm{\bthetas} \geq \frac{1}{8\sqrt{5}}\epsilon\sqrt{\frac{d}{n}}$, for any score-based outlier task detection algorithm, there exists a contamination mechanism $M \in \mathcal{M}_S$ such that the estimator $\htheta$ satisfies
	\begin{equation}
		\twonorm{\htheta - \btheta^*} \geq \frac{1}{8\sqrt{5}}\epsilon\sqrt{\frac{d}{n}},
	\end{equation}
	with probability at least $1/4$.
\end{theorem}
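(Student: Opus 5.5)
The plan is a construction in which the adversary makes the contaminated tasks statistically indistinguishable, at the level of the score, from clean ones, while shifting all of them coherently in one direction. Since a score-based algorithm only ranks tasks through $f(D^{(k)})$, the adversary will copy clean datasets, shifted along a direction orthogonal to the clean sample means.

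Concretely, fix a clean index $k_0\in S$ (chosen after seeing the data, as the strong contamination model allows). Let $\bu$ be a unit vector orthogonal to $\barxk{k}-\bthetas$ for all $k\in S$ and to $\bthetas$; such a $\bu$ exists because $d\ge K$ (if needed, use a slightly larger span and $d\ge K+1$, or absorb this into constants). For each $k\in S^c$, set the contaminated data $\tilde{\bx}^{(k)}_i$ to be a reflected copy of a clean dataset. The point of the reflection is that the norm $\twonorm{\barxk{k}-\bthetas}$ is preserved, while the mean becomes $\bthetas + R(\barxk{j}-\bthetas)$ for an orthogonal map $R$ chosen to push the deviation toward $\bu$. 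Since $\twonorm{\barxk{j}-\bthetas}^2\approx d/n$ concentrates at the rate given by the $e^{-d/64}$ term in the hypothesis, each contaminated mean sits at distance about $\sqrt{d/n}$ from $\bthetas$ in the common direction $\bu$.

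The main obstacle is that $f$ is arbitrary, so it need not be rotation invariant. To get around this, I will not make the contaminated datasets transformed copies. Instead, I will use a two-world indistinguishability argument. Let world A be the clean configuration, and let world B be the one where the clean tasks are i.i.d. $N(\bthetas+\bm{\delta},\bm{I}_d)$ with $\twonorm{\bm{\delta}}\asymp\epsilon\sqrt{d/n}$. In world B, the adversary replaces some tasks by genuine $N(\bthetas,\bm{I}_d)$ datasets (equivalently, $\bthetas$ is the truth in one world and the adversary fakes the other world).

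An alternative is more directly tied to the statement. The adversary sets each contaminated dataset equal to a clean dataset $D^{(j)}$, $j\in S$, shifted by $\sqrt{d/n}\,\bu$. The ordering by $f$ then determines whether these are selected, and the adversary picks the shift in the direction $\bu$ or $-\bu$, or picks which $j$ to copy. The aim is that every contaminated task's score ties or beats some clean score, which holds when copying the task with the smallest score, by the monotone selection rule. The difficulty is that the shifted copy has a different score. I will handle this by choosing which clean dataset to shift: the shifted data itself comes from a fresh Gaussian with mean $\bthetas+\sqrt{d/n}\bu$. Among the $K$ tasks, if $\widehat S$ contains the clean task achieving the minimal score among a symmetric pool, probability arguments via the $2(2/e)^K$ term show that, with constant probability, the selected set includes enough shifted tasks.

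Finally, I will compute the output. If $\widehat S$ contains $m\ge 1$ of the $\epsilon K$ shifted tasks and the remainder clean, the average deviates by about $\frac{m}{|\widehat S|}\sqrt{d/n}$ along $\bu$. The clean noise is orthogonal and of order $\sqrt{d/(nK)}$, which cannot cancel this deviation. If instead the algorithm excludes everything, or outputs $\bm 0$, the error is $\twonorm{\bthetas}\ge\frac{1}{8\sqrt5}\epsilon\sqrt{d/n}$, which is exactly why the hypothesis on $\twonorm{\bthetas}$ appears. Combining the two cases and the failure probabilities $2(2/e)^K+2Ke^{-d/64}\le 3/4$ gives the stated probability $1/4$. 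The step I expect to be hardest is guaranteeing a constant fraction of the contaminated tasks survive selection for an arbitrary score $f$. I would first try exact symmetry: make contaminated and clean datasets exchangeable in distribution, so that selection cannot favor either.
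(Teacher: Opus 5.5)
Your proposal has a genuine gap: you never settle on a construction that works for an \emph{arbitrary} score $f$, and the routes you develop cannot. The two-world indistinguishability argument is a dead end in principle. If two configurations induce the same law of the observed data, the resulting bound holds for \emph{every} estimator. Concretely, a mean shift of size $\epsilon\sqrt{d/n}$ in task means with covariance $\bm{I}_d/n$ has total variation of order $\min\{1,\epsilon\sqrt{d}\}\gg\epsilon$, which an $\epsilon$-fraction of tasks cannot mask. Consistently with this, a Tukey median of the task means attains $\sqrt{d/(nK)}+\epsilon/\sqrt{n}$.

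The shifted-copy construction fails because the shift is visible to the score. A shifted task has $n\twonorm{\bar{\bx}-\bthetas}^2\approx 2d$, versus $\approx d$ for a clean task. Measured against a clean reference mean, as in the paper's examples, it is $\approx 3d$ versus $\approx 2d$. A thresholded score therefore keeps all clean tasks and discards every shifted one with high probability, leaving error $O(\sqrt{d/(nK)})$. Invoking the $2(2/e)^K$ term to make \emph{enough} shifted tasks survive is not an argument. With only $m\geq 1$ survivors, the bias is $\sqrt{d/n}/|\hS|$ rather than $\epsilon\sqrt{d/n}$.

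The missing idea is the one you state and then abandon: do not shift at all. Let $k_0$ be the clean task with the smallest score, and make every contaminated dataset an exact copy of $D^{(k_0)}$. This is the right form of the symmetry you were after: pointwise score ties, not distributional exchangeability. By the monotone selection rule, either $\hS=\emptyset$ or $\hS\supseteq\{k_0\}\cup S^c$. In the first case the output is $\bm{0}$ with error $\twonorm{\bthetas}$, which is where that hypothesis enters. In the second case all $\epsilon K$ copies enter together, so the survival problem disappears.

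The bias comes from the clean noise itself. On the event that $|n\twonorm{\barxk{k}-\bthetas}^2-d|\leq d/2$ for all $k\in S$ (this is the $2Ke^{-d/64}$ term), we have $\twonorm{\barxk{k_0}-\bthetas}\geq\sqrt{d/(2n)}$, and
\[
\twonorm{\htheta-\bthetas}\geq\frac{\epsilon K\twonorm{\barxk{k_0}-\bthetas}-\twonorm{\sum_{k\in \hS\cap S}(\barxk{k}-\bthetas)}}{|\hS\cap S|+\epsilon K}.
\]
No orthogonal direction is needed to rule out cancellation. Since $\hS\cap S$ is data dependent, you bound $\twonorm{\sum_{k\in S'}(\barxk{k}-\bthetas)}^2\leq |S'|(2d+3K)/n$ simultaneously over all $2^K$ subsets $S'\subseteq S$. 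This uses a $\chi^2$ tail bound with deviation parameter $K$, and this union bound is precisely what the $2(2/e)^K$ term pays for.

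With $d\geq K$ and $\epsilon\geq 2\sqrt{5/K}$, the replicated term dominates and gives $(\tfrac{\sqrt{2}}{2}-\tfrac{1}{2})\epsilon\sqrt{d/n}$. For $\epsilon<2\sqrt{5/K}$, the target is below the baseline $\sqrt{d/(nK)}$ error of averaging clean means, which the paper uses to close that regime.
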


% Algorithm \ref{algo: detection} is a special case of the trusted source detection algorithm in \cite{konstantinov2020sample} for the mean estimation problem, which can also be seen as a MTL version of the transferable source detection algorithm used by \cite{tian2023transfer, li2024federated} in the context of TL.

% \begin{algorithm}[!h]
% \caption{MTL based on the detection of trusted sources}
% \label{algo: detection}
% \KwIn{Sample means $\{\barx^{(k)}\}_{k=1}^K$, threshold $\xi \geq 0$}
% Initialize $\widehat{S} = \emptyset$\\
% \For{$k = 1:K$}{
% 	\If{$\twonorm{\barxk{k} - \barxk{k'}} \leq \xi$ for at least $\lfloor K/2 \rfloor$ values of $k' \neq k$}{
% 		$\widehat{S} = \widehat{S} \cup \{k\}$
% 	}
% }
% \KwOut{Estimator $\htheta = |\widehat{S}|^{-1}\sum_{k \in \widehat{S}}\barx^{(k)}$ if $\widehat{S} \neq \emptyset$, otherwise output $\htheta = \bm{0}$}
% \end{algorithm}

% \begin{theorem}
% 	Suppose $d \geq 8K$ and $(K^2+2K)e^{-d/128} + Ke^{-d/8} \leq 3/4$. For any $S \subseteq [K]$ with $|S^c|/K \leq \epsilon$, any threshold $\xi \geq 0$, there exists $\btheta^* \in \mathbb{R}^d$ and $\tQ$ a distribution of $\{\bxk{k}_i\}_{i \in [n], k \in S^c}$ such that the output $\htheta$ from Algorithm \ref{algo: detection} satisfies
% 	\begin{equation}
% 		\twonorm{\htheta - \btheta^*} \geq \frac{\sqrt{29}-5}{64}\sigma\epsilon\sqrt{\frac{d}{n}},
% 	\end{equation}
% 	with probability at least $1/4$.
% \end{theorem}

\section{Robust multi-task learning through filtering}\label{sec: method and theory}

In this section, we consider the general problem setup introduced in Section \ref{subsec: problem setup}, which covers the mean estimation problem discussed in Section \ref{sec: negative results} as a special case. We will present a minimax lower bound and a robust multi-task learning algorithm that can achieve the optimal error rate under contamination and data heterogeneity over a broad regime. In particular, this algorithm eliminates the additional $\sqrt{d}$ factor that arises for many existing methods as shown in Section \ref{sec: negative results}.

Recall that in Section \ref{subsec: problem setup}, we introduced a general contaminated multi-task ERM setting with $K$ related tasks, where an $\epsilon$ fraction of tasks may be adversarially contaminated. Recall also that $\mLk{k}(\btheta) = \tE[\ell(\zk{k}, \btheta)]$ and $\mL(\btheta) = \frac{1}{K}\sum_{k=1}^K \mLk{k}(\btheta)$ are the population task-specific and average risk functions, respectively. 

Our goal is to estimate both the global minimizer of the average risk and the clean task-specific minimizers. We first introduce some necessary conditions for the problem.

\begin{assumption}[Local strong convexity and smoothness]\label{asmp: risk function}
    There exist constants $L \geq 1$, $R_0 > 0$, such that for all $\btheta$, $\btheta' \in \mathbb{R}^d$ with $\twonorm{\btheta - \bthetaks{k}}, \twonorm{\btheta' - \bthetaks{k}} \leq R_0$:
    \begin{equation}
        \frac{1}{2L}\twonorm{\btheta - \btheta'}^2 \leq \mLk{k}(\btheta) - \mLk{k}(\btheta') - \nabla \mLk{k}(\btheta')^\top (\btheta - \btheta') \leq \frac{L}{2}\twonorm{\btheta - \btheta'}^2, \quad \forall k \in [K].
    \end{equation}
\end{assumption}

\begin{remark}
    Assumption \ref{asmp: risk function} essentially requires $\mLk{k}$ to be $L$-smooth and $L^{-1}$-strongly convex on the ball $B(\bthetaks{k}; R_0)$. The smoothness of $\mLk{k}$ implies that $\twonorm{\nabla \mLk{k}(\btheta) - \nabla \mLk{k}(\btheta')} \leq L\twonorm{\btheta - \btheta'}$.  Note that Assumption \ref{asmp: risk function} only requires local strong convexity and smoothness, which is much weaker than the global version commonly used in the literature \cite[e.g.,][]{yin2018byzantine, zhu2023byzantine, allouah2023fixing}. 
To ensure that there is a non-empty region where the risk functions of all tasks are strongly convex and smooth, we assume $\max_{k \in [K]}\twonorm{\bthetaks{k}-\bthetas} \leq R_0/2$ and define $\Theta \coloneqq B(\bthetas; R_0/2) \subseteq B(\bthetas; R_0) \cap (\cap_{k=1}^K B(\bthetaks{k}; R_0))$.
\end{remark}

In addition to the local strong convexity and smoothness for the risk function of each task, we consider the following heterogeneity conditions across tasks.
\begin{assumption}[Task heterogeneity]\label{asmp: task heterogeneity}
    Suppose
    \begin{align}
    \frac{1}{K}\sum_{k=1}^K\twonorm{\nabla \mLk{k}(\btheta) - \nabla \mL(\btheta)}^2 &\leq h^2, \label{eq: global heterogeneity} \\
    \twonorm{\nabla \mLk{k}(\btheta) - \nabla \mL(\btheta)}^2 &\leq (\hk{k})^2, \quad \forall k \in [K], \label{eq: local heterogeneity}
\end{align}
for all $\btheta \in \Theta$, where $h^2 \leq \frac{1}{K}\sum_{k=1}^K (\hk{k})^2$. 
\end{assumption}

Note that \eqref{eq: global heterogeneity} has been widely used in the  heterogeneous multi-task and federated learning literature \cite[e.g.,][]{el2021collaborative, karimireddy2022byzantine,allouah2023fixing}. We are not only interested in $\btheta^*$, but also the local parameter $\bthetaks{k}$ for each task, whose estimation error depends on another local heterogeneity parameter $\hk{k}$ defined in \eqref{eq: local heterogeneity}. This is a natural formalization of task heterogeneity for  MTL settings.

We also require a sub-Gaussian tail assumption for the gradient as in previous works \citep[e.g.][]{duan2023adaptive}.

\begin{assumption}\label{asmp: subG gradient}
    For any $\btheta \in B(\bthetaks{k}; R_0)$, $i \in [n]$, and $k \in [K]$, $\nabla \ell(z_i^{(k)},\btheta)$ is a sub-Gaussian vector with $\|\nabla \ell(z_i^{(k)},\btheta)\|_{\psi_2} \leq C$ for some constant $C > 0$ \footnote{We define the $\psi_2$-norm of a sub-Gaussian variable $X$ as $\|X\|_{\psi_2} = \inf_{t>0}\{\tE\exp(X^2/t^2) \leq 2\}$.}, where the gradient is taken with respect to $\btheta$.
\end{assumption}

% --------------------------------------------------------
\subsection{Minimax lower bounds}\label{subsec: lower bound}
In this subsection, we present minimax lower bounds for the estimation error of both $\bthetas$ and $\bthetaks{k}$ under the setting in Section \ref{subsec: problem setup}. By comparing the lower bound here and the upper bound provided later for our method, we can claim minimax optimality up to logarithmic factors in a large regime. 

Although some existing works \cite[e.g.,][]{karimireddy2022byzantine} have studied lower bounds for the average excess risk associated with $\bthetas$, these results typically only show a lower bound that depends on $\sqrt{\epsilon}h$, while ignoring the statistical costs related to other important parameters $n$, $d$, and $K$. Moreover, we are also interested in the local parameter $\bthetaks{k}$ and how $h$, $\hk{k}$, $\epsilon$ affect its estimation error, which has not been covered in the literature. Therefore, we first establish a comprehensive lower bound for the estimation error of both $\bthetas$ and $\bthetaks{k}$ that explicitly depends on these model parameters, which complements the existing lower-bound results in the literature.

For a given loss function $\ell$ and the associated population-level loss functions, we define the following heterogeneity constrained sets of distributions:
\begin{align}
    \mathcal{P}
    = \Bigg\{\{\tP^{(k)}\}_{k=1}^K :
    &\sup_{\btheta \in \Theta}
    \frac{1}{K}\sum_{k=1}^K
    \big\|\nabla \mLk{k}(\btheta)-\nabla \mL(\btheta)\big\|_2^2
    \le h^2
    \Bigg\}, \\
    \mathcal{P}'
    = \Bigg\{\{\tP^{(k)}\}_{k=1}^K :
    &\sup_{\btheta \in \Theta}
    \frac{1}{K}\sum_{k=1}^K
    \big\|\nabla \mLk{k}(\btheta)-\nabla \mL(\btheta)\big\|_2^2
    \le h^2, \sup_{\btheta \in \Theta}
    \big\|\nabla \mLk{k}(\btheta)-\nabla \mL(\btheta)\big\|_2^2
    \le (\hk{k})^2, \, k \in [K]
    \Bigg\}, 
\end{align}
\vspace{-0.2cm}

\noindent where we assume $h^2 \leq \frac{1}{K}\sum_{k=1}^K(\hk{k})^2$ in $ \mathcal{P}'$.

Consider the Gaussian mean estimation problem, where $\zk{k}_i = \bxk{k}_i$ are i.i.d. $d$-dimensional Gaussian vectors with the identity covariance for $i \in [n]$, together with the squared loss function $\ell(z, \btheta) = \ell(\bx, \btheta) = \frac{1}{2}\twonorm{\bx-\btheta}^2$.  It is straightforward to verify that Assumptions \ref{asmp: risk function} and \ref{asmp: subG gradient} are satisfied here (More details can be found in Section \ref{subsubsec: mean estimation}). Assumption \ref{asmp: task heterogeneity} reduces to $\bthetas = \frac{1}{K}\sum_{k=1}^K \bthetaks{k}$, $\frac{1}{K}\sum_{k=1}^K\twonorm{\bthetaks{k}-\bthetas}^2 \leq h^2$, and $\twonorm{\bthetaks{k} - \bthetas} \leq \hk{k}$ for $k \in [K]$. Note that this defines a more fine-grained parameter space compared to \cite{duan2022adaptive, tian2022unsupervised,tian2024towards, kim2026multi} in this setting, where the latter assumes a much stronger condition $\max_{k \in [K]}\twonorm{\bthetaks{k} - \bthetas} \leq \hmax$.

Recall the notations $\mathcal{S} = \{S \subseteq [K]: |S| \geq K(1-\epsilon)\}$  and $\mathcal{M}_S = \{M: [K] \times [n] \times \mathcal{Z} \to \mathcal{Z} \textup{ such that } M(k, i, \zk{k}_i) = \zk{k}_i, k \in S, i \in [n]\}$. When the context is clear, for fixed $S$ and $M \in \mathcal{M}_S$, we write $\tilde{z}_i^{(k)} \coloneqq M(k,i,\zk{k}_i)$ for the observed, possibly contaminated, data point. 

The following theorem presents minimax lower bounds for the estimation error of $\bthetas$ and $\bthetaks{k}$.

\begin{theorem}\label{thm: lower bound}
    There exist constants $C > 0$ and $c \in (0, 1)$ such that
    \begin{align}
        &\inf_{\htheta}\sup_{\tP \in \mathcal{P}, S \in \mathcal{S}}\sup_{M \in \mathcal{M}_S} \tP\bigg(\|\htheta - \bthetas\|_2 \geq C\bigg(\sqrt{\frac{d}{nK}} + \sqrt{\epsilon}h + \frac{\epsilon}{\sqrt{n}}\bigg)\bigg) \geq c,\\
        &\inf_{\{\hthetak{k}\}_{k=1}^K}\sup_{\tP \in \mathcal{P}', S \in \mathcal{S}}\sup_{M \in \mathcal{M}_S} \tP\bigg(\bigcup_{k \in S}\bigg\{\|\hthetak{k} - \bthetaks{k}\|_2  \geq C\bigg[\bigg(\sqrt{\frac{d}{nK}} + 
    \sqrt{\epsilon}h + \hk{k} + \frac{\epsilon}{\sqrt{n}}\bigg)\wedge \sqrt{\frac{d}{n}}\bigg]\bigg\}\bigg) \geq c.
    \end{align}
\end{theorem}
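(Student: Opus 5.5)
Since a lower bound of the form $\tP(\text{error} \geq C(a+b+c)) \geq c'$ follows, up to constants, from separate lower bounds for each term (the sum is at most three times the maximum, and we may pick whichever instance attains it), I will prove each term separately within the Gaussian mean model $\bxk{k}_i \sim N(\bthetaks{k}, \bm{I}_d)$ with squared loss. This model belongs to both $\mathcal{P}$ and $\mathcal{P}'$ under the reductions already noted: $\bthetas = K^{-1}\sum_k \bthetaks{k}$, $K^{-1}\sum_k\twonorm{\bthetaks{k}-\bthetas}^2\le h^2$, and $\twonorm{\bthetaks{k}-\bthetas}\le \hk{k}$. Throughout I take $\Theta$ large enough that the constructed parameters lie in it.

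\textbf{Global parameter.}
\begin{enumerate}[(i)]
\item The term $\sqrt{d/(nK)}$: take all tasks homogeneous and uncontaminated ($S=[K]$). Then all $nK$ samples are i.i.d.\ $N(\bthetas,\bm{I}_d)$, and a standard Fano argument over a $1/2$-packing of a sphere of radius $\asymp\sqrt{d/(nK)}$ (Varshamov–Gilbert) gives the bound with constant probability.
\item The term $\epsilon/\sqrt{n}$: take $d=1$ (embedded in the first coordinate) and homogeneous tasks. The $n$ samples of a task are summarized by $\barxk{k}\sim N(\theta,1/n)$, so this is a Huber-contamination problem over $K$ "samples" of variance $1/n$. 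Two means $\theta_0,\theta_1$ with $|\theta_0-\theta_1|\asymp\epsilon/\sqrt n$ satisfy $\mathrm{TV}(N(\theta_0,1/n),N(\theta_1,1/n))\le\epsilon/2$. The adversary can then modify an $\epsilon$ fraction of tasks so that both hypotheses induce the same joint law, via a coupling argument at the task level. I will replace the whole dataset of a contaminated task, which $\mathcal{M}_S$ permits, and use the strong-contamination version of the Chen–Gao–Ren two-point argument. The adversary must satisfy $|S^c|\le\epsilon K$ deterministically, whereas coupling mismatches form a Binomial count; I handle this by using $\epsilon/4$ in the TV bound and a Chernoff bound, which fails only when $K\epsilon$ is small. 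In that case the bound $\epsilon/\sqrt n\lesssim \sqrt{1/(nK)}$ is dominated by term (i).
\item The term $\sqrt{\epsilon}h$: I use a heterogeneity-plus-contamination indistinguishability construction. Let $m=\lfloor\epsilon K\rfloor$. Under hypothesis A, $m$ clean tasks have mean $\bm{v}$ and the rest have mean $0$, so $\bthetas = (m/K)\bm{v}$; the adversary makes $m$ other tasks look like $N(\bm{v},\bm{I})$. Under hypothesis B, all clean tasks have mean $0$ except the ones the adversary targets. This yields two instances with the same observed law but global means differing by $\epsilon\twonorm{\bm{v}}$. The heterogeneity constraint reads $K^{-1}\sum\twonorm{\bthetaks{k}-\bthetas}^2\approx\epsilon\twonorm{\bm{v}}^2\le h^2$, so I take $\twonorm{\bm{v}}=h/\sqrt{\epsilon}$, giving a separation of $\sqrt{\epsilon}h$. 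Since the laws coincide exactly, Le Cam's bound yields probability at least $1/2$.
\end{enumerate}

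\textbf{Local parameter.} The terms $\sqrt{d/(nK)}$, $\epsilon/\sqrt n$ and $\sqrt\epsilon h$ carry over directly. When all tasks share a mean, $\bthetaks{k}=\bthetas$, which gives the first two. For $\sqrt\epsilon h$, the construction above makes the clean tasks with mean $0$ under B be the $\bm{v}$-tasks under A. I will instead arrange that some fixed clean task $k$ has $\bthetaks{k}$ differing by $\gtrsim\sqrt\epsilon h$ between hypotheses while the observed data coincide; I expect to do this by shifting all parameters by $\epsilon\bm{v}$ to keep the constraints satisfied. The term $\hk{k}$ follows from a two-point argument on task $k$ alone. Set $\bthetaks{k}=\pm\delta\bm{e}_1$ with $\delta\asymp\min(\hk{k},1/\sqrt n)$, with the other tasks recentered so that the average is unchanged. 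Task $k$'s data cannot distinguish the signs beyond TV $\lesssim\sqrt n\delta$, and the other tasks carry no information. To reach $\min(\hk{k},\sqrt{d/n})$ rather than $\min(\hk{k},1/\sqrt n)$, I will use Fano over a packing of radius $\min(\hk{k},\sqrt{d/n})$ for task $k$. The cap $\wedge\sqrt{d/n}$ appears naturally because the single-task error is at most $\sqrt{d/n}$, so every term is truncated at that level. Finally, the probability of the union over $k\in S$ is bounded below by the probability for a single chosen clean $k$.

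The main obstacle will be making the $\sqrt\epsilon h$ constructions satisfy every constraint exactly: the recentering needed for $\bthetas=K^{-1}\sum\bthetaks{k}$, the per-task bounds $\hk{k}$ in $\mathcal{P}'$ (I will place the heterogeneous tasks at indices whose $\hk{k}$ is large, or else treat $h$ and $\hk{k}$ as free up to $h^2\le K^{-1}\sum(\hk{k})^2$), and the deterministic budget $|S^c|\le\epsilon K$.
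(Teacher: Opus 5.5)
Your global argument follows the paper: Fano for $\sqrt{d/(nK)}$, a Chen--Gao--Ren two-point construction for $\epsilon/\sqrt{n}$, and an exactly indistinguishable pair for $\sqrt{\epsilon}h$. The last one is the paper's pair provided hypothesis A is uncontaminated and, under B, exactly the $m$ tasks that carry $\bv$ in A are the corrupted ones; drop the adversary acting on ``$m$ other tasks'' under A.

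The genuine gap is the local $\sqrt{\epsilon}h$ term. The global pair gives nothing there. The tasks clean under both hypotheses have mean $\bm{0}$ in both, and the fixed estimator $\hthetak{k}=\bv$ for $k\le m$, $\hthetak{k}=\bm{0}$ otherwise, is exact for every clean task under both hypotheses. Your proposed repair cannot work either. If task $k$ is clean under both hypotheses and the observed laws coincide, then its marginal $N(\bthetaks{k},\bm{I}_d)^{\otimes n}$ coincides, which forces $\bthetaks{k}$ to be the same. Shifting parameters by $\epsilon\bv$ changes exactly that marginal. If instead $k$ is clean under only one hypothesis, the union over $k\in S$ under the other hypothesis does not see it. A clean task's parameter can only be hidden by its own sampling noise, i.e.\ by the $\hk{k}$/Fano mechanism.

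The missing idea, which is how the paper proceeds, is that no separate construction is needed. Since $h^2\le K^{-1}\sum_k(\hk{k})^2$, some $k_0$ has $\hk{k_0}\ge h\ge\sqrt{\epsilon}h$. Since the event is a union over $k\in S$, it suffices to make the single task $k_0$ (kept clean in every construction) fail at its own threshold, where $\sqrt{\epsilon}h$ is absorbed into $\hk{k_0}$. This is also what makes your ``sum $\le 3\times$ max'' reduction valid for the local bound. The thresholds differ across $k$, so all pieces must be evaluated at the same task: homogeneous instances with $k_0\in S$ for $\sqrt{d/(nK)}+\epsilon/\sqrt{n}$, and Fano on task $k_0$ for $\hk{k_0}\wedge\sqrt{d/n}$.

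Two further cautions. First, perturbing task $k_0$ over a packing of radius $r$ costs about $r^2/K$ in the averaged heterogeneity constraint. It also costs about $r/K$ in every other task's per-task constraint, whether you keep the others fixed (as the paper does) or recenter them. So you need $r^2\lesssim Kh^2$ and $r/K\lesssim \hk{k}$ for $k\neq k_0$. The paper simply asserts $(\hk{k_0})^2/K\le h^2$, which does not follow from $h^2\le K^{-1}\sum_k(\hk{k})^2$, and this is not cosmetic. In the Gaussian model with $h=0$ all tasks coincide, so for $K\epsilon<1$ the pooled mean attains local error of order $\sqrt{d/(nK)}$. Meanwhile the stated threshold is $C\sqrt{d/n}$ once $\hk{k}\ge\sqrt{d/n}$, so for large $K$ the claimed local bound fails. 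The obstacle you flagged therefore cannot be removed by placing tasks cleverly; you need such conditions as extra hypotheses, or equivalently a truncated version of the $\hk{k}$ term.

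Second, in (ii) the corrupted set is fixed before $M$ acts, so it cannot be the data-dependent set of coupling mismatches. Instead, draw $S^c$ independently of the data, each task with probability about $\epsilon/2$, and replace those tasks by fresh draws from the Huber-equalizing laws. Your Chernoff fallback to term (i) when $K\epsilon$ is small then goes through.
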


Compared to the lower bound $\tildeOmega\Big((\sqrt{\frac{d}{nK}} + \max_{k \in [K]}\hk{k} + \frac{\epsilon}{\sqrt{n}}) \wedge \sqrt{\frac{d}{n}}\Big)$ in \cite{duan2022adaptive} and \cite{tian2022unsupervised}, the bounds in Theorem \ref{thm: lower bound} are tighter and more sophisticated in the sense that $\sqrt{\epsilon}h + \hk{k}$ reflects the interaction between contamination and heterogeneity. It may seem counter-intuitive at first that the impact of heterogeneity on estimating the global parameter $\bthetas$ vanishes when there is no contamination ($\epsilon = 0$). However, this is reasonable because the global parameter $\bthetas$ is defined as the minimizer of the average risk across tasks,  rather than a quantity defined with respect to any single task.

% ---------------------------------------------------------
\subsection{Robust multi-task gradient descent}\label{subsec: MTL}
In this subsection, we introduce a robust multi-task gradient descent algorithm for estimating the parameters of interest $\bthetas$ and $\bthetaks{k}$.

Our algorithm is summarized in Algorithm \ref{algo: robust federated gradient descent}. The main idea is to first define a robust gradient aggregation algorithm $g(\btheta)$, which is introduced in Section \ref{subsec: JRGE}, and a personalized local gradient calculation algorithm $g^{(k)}(\btheta)$, and then run gradient descent on the global risk $\mL(\btheta)$ and the local risks $\mLk{k}(\btheta)$ to estimate $\bthetas$ and $\bthetaks{k}$ for each task. To implement this idea, we first need to replace the population risk functions by their empirical counterparts $\hmLk{k}(\btheta) = \frac{1}{n}\sum_{i=1}^n \ell(\tilde{z}^{(k)}_i, \btheta)$ and $\hmL(\btheta) = \frac{1}{K}\sum_{k=1}^K\hmLk{k}(\btheta)$. Moreover, while most of the empirical risks $\hmLk{k}$ are trustworthy, the averaged version $\hmL(\btheta)$ is certainly not due to the presence of adversarially contaminated tasks, which calls for a robust way of aggregating the gradients from the tasks.  

% At each iteration $t$ with the current estimate $\htheta_t$ of $\btheta^*$, we receive the gradient $\nabla\hmLk{k}(\htheta_t)$ from each task, then aggregate them by a joint robust gradient estimation algorithm (JRGE), due to the presence of contaminated tasks. 
Specifically, by viewing the $k$-th task gradient in iteration $t$, $\nabla \hmLk{k}(\htheta_t)$, as a ``sample" and the population-level gradient $\nabla \mL(\htheta_t) = \frac{1}{K}\sum_{k=1}^K\nabla \mLk{k}(\htheta_t)$ as the corresponding ``mean" value \footnote{Here $\nabla \hmLk{k}(\htheta_t)$ and $\nabla \mL(\htheta_t)$ represent the gradient of $\hmLk{k}$ and $\mL$ evaluated at $\htheta_t$.}, we adopt a robust mean estimation procedure to robustly aggregate the gradients. Our robust mean estimator is the joint robust gradient estimation (JRGE) algorithm discussed in the next subsection. After receiving the global aggregated gradient from the JRGE algorithm, each task can update its estimator by a similar gradient descent step. To better borrow information from other tasks, we add a soft-thresholding step to the local gradient to encourage the personalized estimator to be close to the global aggregated gradient. By iterating this process, we can obtain the final global estimator $\htheta_T$ and personalized estimators $\{\hthetak{k}_T\}_{k=1}^K$ after $T$ iterations.

\begin{algorithm}
\caption{Robust multi-task gradient descent}
\label{algo: robust federated gradient descent}
\KwInput{Observed possibly contaminated data $\{\tilde{z}^{(k)}_i\}_{i \in [n], k \in [K]}$, initial estimators $\htheta_0$ and $\{\hthetak{k}_0\}_{k=1}^K$, step sizes $\eta$ and $\{\eta^{(k)}\}_{k=1}^K$, number of iterations $T$, threshold $\lambda$}
\KwOutput{Global estimator $\htheta_T$ and personalized estimators $\{\hthetak{k}_T\}_{k=1}^K$}
Define $g(\btheta) \coloneqq \text{JRGE}(\{\nabla \hmLk{k}(\btheta)\}_{k=1}^K)$, $\forall \btheta \in \mathbb{R}^d$ \tcp*{Joint robust gradient estimation}
Define $g^{(k)}(\btheta) \coloneqq  g(\btheta) + \textup{soft-thresholding}(\nabla \hmLk{k}(\btheta) - g(\btheta), \lambda)$ \footnotemark  \tcp*{Personalized local gradient computation}
\For{$t = 0$ \KwTo $T-1$}{
Calculate $\nabla \hmLk{k}(\htheta_t) = \frac{1}{n}\sum_{i=1}^n \nabla \ell(\tilde{z}^{(k)}_i, \htheta_t), k \in [K]$;

$\htheta_{t+1} = \htheta_t - \eta\times g(\htheta_t)$ \tcp*{Joint gradient descent for global parameter}
$\hthetak{k}_{t+1} = \hthetak{k}_{t} - \eta^{(k)}g^{(k)}(\hthetak{k}_t)$, for $k \in [K]$ \tcp*{Local gradient descent}
}
\KwRet{$\htheta_T$, $\{\hthetak{k}_T\}_{k=1}^K$}
\end{algorithm}
\footnotetext{Here the soft-thresholding function is a generalized version for vectors: $\textup{soft-thresholding}(\bx, \lambda) = (\bx - \frac{\lambda}{\twonorm{\bx}}\bx)\mathds{1}(\twonorm{\bx} > \lambda)$ for $\bx \in \mathbb{R}^d$ and $\lambda \geq 0$.}

Next, we introduce the general theory for our robust multi-task gradient descent algorithm (Algorithm \ref{algo: robust federated gradient descent}).  In addition to the conditions introduced in Section \ref{subsec: problem setup}, the gradient descent algorithm also relies on an accurate gradient estimation, which is quantified in the following assumption.

\begin{assumption}[Gradient estimation error]\label{asmp: gradient est error}
    Denote $H = \{h, \hk{1}, \ldots, \hk{K}\}$. With probability at least $1-\delta$, for all subsets $S \subseteq [K]$ with $|S^c|/K \leq \epsilon$, all contamination mechanism $M \in \mathcal{M}_S$, and for all $\btheta \in \Theta$, we have:
    \begin{align}
        \twonorm{g(\btheta) - \nabla \mL(\btheta)} &\leq \alpha(n, K, d, \epsilon, \delta, H),\\
        \twonorm{g^{(k)}(\btheta) - \nabla \mLk{k}(\btheta)} &\leq \alpha^{(k)}(n, K, d, \epsilon, \delta, H), \quad \forall k \in S. 
    \end{align}
    We shall write $\alpha$ and $\alpha^{(k)}$ as shorthand notation for $\alpha(n, K, d, \epsilon, \delta, H)$ and $\alpha^{(k)}(n, K, d, \epsilon, \delta, H)$, respectively.
\end{assumption}

Now we are ready to present the main result for our robust multi-task gradient descent algorithm regarding the estimation error of $\bthetas$ and $\bthetaks{k}$'s.

\begin{theorem}\label{thm: federated gradient descent}
    Let $\kappa \coloneqq 2\eta/L - L^2\eta^2$ and $\kappa^{(k)} \coloneqq2\eta^{(k)}/L - L^2(\eta^{(k)})^2$, respectively. 
    Under Assumptions \ref{asmp: risk function} and \ref{asmp: gradient est error}, if the initializations $\htheta_0$ and $\hthetak{k}_0$, the step sizes $\eta$ and $\{\eta^{(k)}\}$ satisfy $\twonorm{\htheta_0 - \btheta^*} + \eta\alpha\sqrt{\frac{2(2-\kappa)}{\kappa^2}} \leq R_0$, $\max_{k \in [K]}\Big\{\twonorm{\hthetak{k}_0 - \bthetaks{k}}+\eta^{(k)}\alpha^{(k)}\sqrt{\frac{2(2-\kappa^{(k)})}{(\kappa^{(k)})^2}}\Big\} \leq R_0$, then for all subset $S \subseteq [K]$ with $|S^c|/K \leq \epsilon$, all contamination mechanisms $M \in \mathcal{M}_S$, with probability at least $1-\delta$, we have
    \begin{align}
        \twonorm{\htheta_T - \btheta^*} &\leq (1-\kappa/2)^{T/2} \twonorm{\htheta_0 - \bthetas} + \eta\alpha\sqrt{\frac{2(2-\kappa)}{\kappa^2}}, \\
        \twonorm{\hthetak{k}_T - \bthetaks{k}} &\leq (1-\kappa^{(k)}/2)^{T/2} \twonorm{\hthetak{k}_0 - \bthetaks{k}} + \eta^{(k)}\alpha^{(k)}\sqrt{\frac{2(2-\kappa^{(k)})}{(\kappa^{(k)})^2}}, \quad \forall k \in S.
    \end{align}
\end{theorem}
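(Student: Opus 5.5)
The plan is the standard analysis of inexact gradient descent under strong convexity and smoothness. I give it for the global iterate $\htheta_t$; the local iterates $\hthetak{k}_t$, $k\in S$, are handled identically, with $\mLk{k}$, $\bthetaks{k}$, $g^{(k)}$, $\eta^{(k)}$, $\alpha^{(k)}$, $\kappa^{(k)}$ in place of $\mL$, $\bthetas$, $g$, $\eta$, $\alpha$, $\kappa$. Throughout I work on the probability-$(1-\delta)$ event of Assumption \ref{asmp: gradient est error}. On that event, for every $S$, every $M\in\mathcal{M}_S$ and every $\btheta\in\Theta$, we have $g(\btheta)=\nabla\mL(\btheta)+\bm{e}(\btheta)$ with $\twonorm{\bm{e}(\btheta)}\le\alpha$. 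Because this event is uniform in $(S,M,\btheta)$, the adaptivity of the iterates to the data causes no issue. Write $r_t=\twonorm{\htheta_t-\bthetas}$.

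\textbf{Step 1 (exact-gradient contraction).} Suppose $\twonorm{\btheta-\bthetas}\le R_0$. Averaging Assumption \ref{asmp: risk function} over $k$ shows that $\mL$ is $L$-smooth and $L^{-1}$-strongly convex on the relevant ball. Adding the two strong-convexity inequalities at $(\btheta,\bthetas)$ and $(\bthetas,\btheta)$ gives $\<\nabla\mL(\btheta),\btheta-\bthetas\>\ge L^{-1}\twonorm{\btheta-\bthetas}^2$; here I use $\nabla\mL(\bthetas)=0$, since $\bthetas$ is an interior minimizer. Smoothness gives $\twonorm{\nabla\mL(\btheta)}\le L\twonorm{\btheta-\bthetas}$. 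Expanding the square then yields
\[
\twonorm{\btheta-\eta\nabla\mL(\btheta)-\bthetas}^2\le(1-2\eta/L+L^2\eta^2)\twonorm{\btheta-\bthetas}^2=(1-\kappa)\twonorm{\btheta-\bthetas}^2 .
\]

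\textbf{Step 2 (one noisy step).} By the triangle inequality, $r_{t+1}\le\sqrt{1-\kappa}\,r_t+\eta\alpha$. I then apply $(a+b)^2\le(1+\gamma)a^2+(1+\gamma^{-1})b^2$ with $\gamma=\frac{\kappa/2}{1-\kappa}$. This choice makes $(1+\gamma)(1-\kappa)=1-\kappa/2$ and $1+\gamma^{-1}=\frac{2-\kappa}{\kappa}$, so
\[
r_{t+1}^2\le(1-\kappa/2)\,r_t^2+\frac{2-\kappa}{\kappa}\,\eta^2\alpha^2 .
\]
Unrolling and summing the geometric series with ratio $1-\kappa/2$ gives $r_T^2\le(1-\kappa/2)^T r_0^2+\frac{2(2-\kappa)}{\kappa^2}\eta^2\alpha^2$. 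Finally, $\sqrt{a+b}\le\sqrt a+\sqrt b$ gives the stated bound.

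\textbf{Step 3 (staying in the good region).} This is the main obstacle. Steps 1 and 2 require every iterate to lie where Assumptions \ref{asmp: risk function} and \ref{asmp: gradient est error} apply. I would establish this by induction on $t$. If $r_s\le R_0$ for all $s\le t$, then Step 2 shows that $r_{t+1}$ is bounded by $r_0+\eta\alpha\sqrt{2(2-\kappa)/\kappa^2}$. The initialization hypothesis says this quantity is at most $R_0$, so the iterate never leaves $B(\bthetas;R_0)$.

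There is a gap in the radii. Assumption \ref{asmp: gradient est error} is stated on $\Theta=B(\bthetas;R_0/2)$, whereas the induction only gives radius $R_0$. I would close this gap in one of two ways. The first is to read Assumption \ref{asmp: gradient est error} as holding on the ball actually visited by the iterates; the verification in later sections is done on $B(\bthetaks{k};R_0)$-type sets anyway. The second is to shrink the invariant radius accordingly.

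For the local iterates the same induction works. The only additional facts needed are that Assumption \ref{asmp: risk function} gives local strong convexity and smoothness of $\mLk{k}$ around $\bthetaks{k}$ directly, and that $\nabla\mLk{k}(\bthetaks{k})=0$.
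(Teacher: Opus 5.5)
Your proof is correct and follows the paper's argument essentially line by line: the same $(1-\kappa)$ contraction from local strong convexity and smoothness, the same Young split with $\gamma=\kappa/(2(1-\kappa))$ giving $r_{t+1}^2\le(1-\kappa/2)r_t^2+\frac{2-\kappa}{\kappa}\eta^2\alpha^2$, the same geometric unrolling, and the same treatment of the local iterates. The radius mismatch you raise in Step~3 is genuine, and the paper's proof does not resolve it: it simply claims that staying within radius $R_0$ keeps the iterates in $\Theta=B(\bthetas;R_0/2)$. Note that for the global iterate the mismatch also affects Step~1, since averaging Assumption~\ref{asmp: risk function} only controls $\mL$ on $\bigcap_k B(\bthetaks{k};R_0)\supseteq B(\bthetas;R_0/2)$; a tight version therefore needs your radius-$R_0/2$ invariant for $\htheta_t$ and your ``visited ball'' reading of Assumption~\ref{asmp: gradient est error} for the local iterates.
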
 

Theorem \ref{thm: federated gradient descent} shows that provided the tuning parameters are suitably chosen and the number of iterations $T$ is sufficiently large that the initialization errors are dominated by the gradient estimation error terms, the parameter estimation errors are essentially of the same order as the gradient estimation errors. From the next subsection, our main focus will be on the JRGE algorithm used in Algorithm \ref{algo: robust federated gradient descent} and analyzing the gradient estimation error $\alpha$ and $\alpha^{(k)}$ in Assumption \ref{asmp: gradient est error}.

% ---------------------------------------------------------
\subsection{Joint robust gradient estimation}\label{subsec: JRGE}

In this subsection, we will describe the joint robust gradient estimation (JRGE) algorithm that we propose to use in Step 1 of Algorithm \ref{algo: robust federated gradient descent}. As mentioned in Section \ref{subsec: MTL}, we view the $k$-th gradient $\nabla \hmLk{k}(\htheta_t)$ as a ``sample" and the averaged population-level gradient $\nabla \mL(\htheta_t) = \frac{1}{K}\sum_{k=1}^K\nabla \mLk{k}(\htheta_t)$ as the corresponding ``mean" value that is to be estimated. This allows us to adapt robust mean estimation methods to estimate $\nabla \mL(\htheta_t)$ under task contamination. Here, we adapt a filtering-based algorithm from algorithmic robust statistics \citep[e.g.][]{diakonikolas2019robust,diakonikolas2023algorithmic} to our context. The filtering algorithm was originally proposed to estimate the population mean using a set of contaminated data whose uncontaminated versions are generated in an i.i.d.\ fashion. Compared to many other robust mean estimators such as coordinate-wise median, geometric median, and Tukey median, the filtering algorithm can achieve nearly optimal estimation error with polynomial computational time. 

% The basic idea of it is to identify possible outliers by checking the projection along the direction of the eigenvector corresponding to the largest eigenvalue of the sample covariance matrix. In each iteration, possible outliers will be removed, and the sample covariance will be updated. The algorithm will terminate when the spectral norm of the difference between the sample covariance and true covariance falls below a certain threshold, and the mean of the remaining sample can be shown to have a nearly-optimal estimation error for sub-Gaussian sample.

One common issue of the existing filtering-type algorithms is that the true covariance of the population is required to be known in advance, which is impractical in most cases. A natural solution is to replace the population covariance by an estimated covariance. However, it is unclear how the covariance estimation error propagates to the final mean estimation error, and this approach requires solving a statistically harder problem (covariance estimation) in order to address an easier one (mean estimation). But as we will see in the next subsection, in the contaminated MTL context, even with task heterogeneity, this idea works well and there exist some simple robust covariance estimators based on the single-task covariance matrices that are good enough for our purposes. The main reason is that, in contrast to the single-task problem, we have multiple observations from each task, and the single-task covariance matrices are easy to compute. 

% Theoretically, we quantified how the accuracy of estimated covariance impacts the mean estimation error, and proposed a simple estimator which helps achieve nearly-minimax optimal mean estimation performance in our context. 

We summarize the robust mean estimation algorithm in Algorithm \ref{algo: robust mean estimation}, which requires an estimator of the true covariance. When we use this algorithm as the JRGE in Algorithm \ref{algo: robust federated gradient descent}, given a parameter value $\btheta$, we consider the gradient of each sample risk function $\nabla \hmLk{k}(\btheta)$ as the data $\bxk{k}$,
and the corresponding covariance matrix can be defined as $\bSigma_{\btheta} = \frac{1}{K}\sum_{k=1}^K \tE[(\nabla \hmLk{k}(\btheta) - \nabla \mL(\btheta))(\nabla \hmLk{k}(\btheta) - \nabla \mL(\btheta))^\top]$. 
In this subsection, we first consider a black-box estimator $\hSigma_{\btheta}$ for $\bSigma_{\btheta}$ and provide black-box estimation error rates for the gradients and the parameter, which contain the covariance estimation error $\twonorm{\hSigma_{\btheta} - \bSigma_{\btheta}}$. In the next subsection, we will propose a covariance estimator, obtain the corresponding covariance estimation error, then plug it in the black-box results and obtain the final explicit error rates.

\begin{algorithm}
    \caption{Robust mean estimation algorithm (used as the joint robust gradient estimation method in Algorithm \ref{algo: robust federated gradient descent})}
    \label{algo: robust mean estimation}
    \KwInput{Observed possibly contaminated data $\{\tilde{\bx}^{(k)}\}_{k=1}^K$, contamination proportion $\epsilon$, threshold value $\lambda_{\Sigma}$ and the covariance estimate $\widehat{\bSigma}$}
    \KwOutput{Estimated mean value}
    Initialize $\tilde{S} = [K]$\\
    Compute the empirical mean $\bm{\mu}_{\tilde{S}} = |\tilde{S}|^{-1}\sum_{k \in \tilde{S}}\tilde{\bx}^{(k)}$, the empirical covariance $\bSigma_{\tilde{S}} = |\tilde{S}|^{-1}\sum_{k \in \tilde{S}}(\tilde{\bx}^{(k)}-\bm{\mu}_{\tilde{S}})(\tilde{\bx}^{(k)}-\bm{\mu}_{\tilde{S}})^\top$, and the top eigenvector $\bm{v}$ of $\bSigma_{\tilde{S}} - \widehat{\bSigma}$\\
    \While{$\lambdamax(\bSigma_{\tilde{S}} - \widehat{\bSigma}) > \lambda_{\bSigma}$}{
        Remove one task index $k$ from $\tilde{S}$ with probability $\frac{f(\tilde{\bx}^{(k)})}{\sum_{k \in \tilde{S}}f(\tilde{\bx}^{(k)})}$ for $k \in \tilde{S}$, and $f(\tilde{\bx}^{(k)}) =\begin{cases}
            0, & k \notin L;\\
            (\bm{v}^\top(\tilde{\bx}^{(k)} - \bm{\mu}_{\tilde{S}}))^2, & k\in L.
        \end{cases}$
        
        where $L\subseteq \tilde{S}$ contains the top $\epsilon|\tilde{S}|$ task indices corresponding to the largest values of $|\bm{v}^\top (\tilde{\bx}^{(k)} - \bm{\mu}_{\tilde{S}})|^2$. \footnotemark
        
        Update $\bm{\mu}_{\tilde{S}}$, $\bSigma_{\tilde{S}}$, and $\bv$
    }
    \KwRet{$\bm{\mu}_{\tilde{S}}$}
\end{algorithm}
\footnotetext{The set $\tilde{S}$ remains unchanged when calculating this probability and it is updated after this step is completed.}

Next, we describe one more assumption under which we will present the estimation error.

\begin{assumption}[Local smoothness of loss function]\label{asmp: lipschitz loss}
    With probability at least $1- (nK)^{-C_1d}$, $\twonorm{\nabla \ell(\zk{k}_i, \btheta) - \nabla \ell(\zk{k}_i, \btheta')} \leq L'\twonorm{\btheta - \btheta'}$, for all $i \in [n]$, $k \in [K]$, and $\btheta, \btheta'$ satisfying $\twonorm{\btheta - \bthetas}, \twonorm{\btheta' - \bthetas} \leq R_0$, where $L' \lesssim (nKd)^{C_2}$, and $\{C_i\}_{i=1}^2$ are some positive constants.
\end{assumption}

\begin{remark}
    This condition is required because Algorithm \ref{algo: robust mean estimation} is run on the current estimate $\htheta_t$ and $\hthetak{k}_t$ in each iteration of Algorithm \ref{algo: robust federated gradient descent}. To make the multi-task gradient descent work well, a uniform convergence result as in Assumption \ref{asmp: gradient est error} is needed. We use a covering argument to prove it, which requires the Lipschitzness of the gradient of loss function. Similar assumptions are made in other robust gradient descent papers \cite[e.g.][]{yin2018byzantine, su2019securing}. Note that this assumption can be viewed as a high-probability strengthening of Assumption \ref{asmp: risk function}, which only requires smoothness of the population-level risk function.
\end{remark}

Under \Cref{asmp: subG gradient} and \Cref{asmp: lipschitz loss}, together with appropriate choice of tuning parameters and appropriate conditions, we show in \Cref{thm: gradident est error appendix} (\Cref{app:proof-of-detailedresults} of the appendix) that the gradient estimation errors satisfy Assumption \ref{asmp: gradient est error} with 
\[
\alpha(n, K, d, \epsilon, \delta, H) = \tildeO \Big( \sqrt{\frac{d}{nK}} + \epsilon\sqrt{\frac{1}{n}} + \sqrt{\epsilon\sup\limits_{\btheta \in \Theta}\twonorm{\hSigma_{\btheta} - \bSigma_{\btheta}}} + \sqrt{\epsilon}h \Big)
\]
and 
\[
\alpha^{(k)}(n, K, d, \epsilon, \delta, H) = \tildeO\Big( \min \Big\{\sqrt{\frac{d}{nK}} + \epsilon\sqrt{\frac{1}{n}} + \sqrt{\epsilon\sup\limits_{\btheta \in \Theta}\twonorm{\hSigma_{\btheta} - \bSigma_{\btheta}}} + \sqrt{\epsilon}h + h^{(k)}, \sqrt{\frac{d}{n}}\Big\}\Big).
\]
By plugging the error rates above into Theorem \ref{thm: federated gradient descent}, we obtain the following high-probability upper bounds for the parameter estimation error.

\begin{theorem}\label{thm: parameter est error}
    Let $\{C_i\}_{i=1}^9$ be some positive constants. Let $\lambda = C_1\sqrt{\frac{d\log(nK)}{n}}$ in Algorithm \ref{algo: robust federated gradient descent} and $\lambda_{\bSigma} = C_2 \bigg[\sup\limits_{\btheta \in \Theta}\twonorm{\hSigma_{\btheta} - \bSigma_{\btheta}} + \frac{1}{n} \Big(\sqrt{\frac{d\log(nK)}{K}} + \frac{d\log(nK)}{K}\Big)  + \epsilon \frac{\log(1/\epsilon)}{n}   + \epsilon h^2\bigg]$ in Algorithm \ref{algo: robust mean estimation}. Under Assumptions \ref{asmp: risk function}, \ref{asmp: subG gradient} and \ref{asmp: lipschitz loss}, if $\kappa = 2\eta/L - L^2\eta^2 \in (0, 1)$, $nK \geq C_3R_0^{-2}d\log(nK)$, $n \geq C_4R_0^{-2}\epsilon^2\log(1/\epsilon)$, $\sqrt{\epsilon}\sup\limits_{\btheta \in \Theta}\twonorm{\hSigma_{\btheta} - \bSigma_{\btheta}}^{1/2} \leq C_5 R_0$, $\sqrt{\epsilon}h \leq C_6 R_0$, $\max\limits_{k \in [K]}\hk{k} \leq C_7 R_0$, then with probability at least $1- (nK)^{-C_8d} - e^{-C_9K\epsilon}$, for all subset $S \subseteq [K]$ with $|S^c|/K \leq \epsilon$, all contamination mechanism $M \in \mathcal{M}_S$, we have for all $k \in S$,
    \begin{align}
        \twonorm{\htheta_T - \btheta^*} &\lesssim (1-\kappa/2)^{T/2} \twonorm{\htheta_0 - \bthetas} + \tildeO\bigg(\sqrt{\frac{d}{nK}} + \epsilon\sqrt{\frac{1}{n}} + \sqrt{\epsilon}\sup_{\btheta \in \Theta}\twonorm{\hSigma_{\btheta} - \bSigma_{\btheta}}^{1/2} + \sqrt{\epsilon}h\bigg),\\
        \twonorm{\hthetak{k}_T - \bthetaks{k}} &\lesssim (1-\kappa^{(k)}/2)^{T/2} \twonorm{\hthetak{k}_0 - \bthetaks{k}} \\
        &\quad + \tildeO\bigg(\min \bigg\{\sqrt{\frac{d}{nK}} + \epsilon\sqrt{\frac{1}{n}} + \sqrt{\epsilon}\sup_{\btheta \in \Theta}\twonorm{\hSigma_{\btheta} - \bSigma_{\btheta}}^{1/2}  + \sqrt{\epsilon}h + h^{(k)}, \,\, \sqrt{\frac{d}{n}}\bigg\}\bigg).
    \end{align}
\end{theorem}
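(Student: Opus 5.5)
The proof reduces to combining two earlier results. Theorem \ref{thm: federated gradient descent} turns a uniform gradient-estimation guarantee (Assumption \ref{asmp: gradient est error}) into parameter error bounds. The appendix result on JRGE (Theorem \ref{thm: gradident est error appendix}) supplies that guarantee with explicit rates $\alpha$ and $\alpha^{(k)}$. So the plan is: (i) check that, under the stated tuning of $\lambda$ and $\lambda_{\bSigma}$ and Assumptions \ref{asmp: risk function}, \ref{asmp: subG gradient}, \ref{asmp: lipschitz loss}, the appendix theorem gives Assumption \ref{asmp: gradient est error} with probability $1-\delta$, where $\delta=(nK)^{-C_8 d}+e^{-C_9K\epsilon}$, uniformly over $S$, $M$ and $\btheta\in\Theta$; (ii) check the initialization/radius conditions of Theorem \ref{thm: federated gradient descent}; (iii) substitute.

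For step (i), I will invoke the appendix theorem directly. Its proof, which I take as given, works in three stages. First, a filtering analysis at a fixed $\btheta$: when $\lambdamax(\bSigma_{\tilde S}-\hSigma)\le\lambda_{\bSigma}$, the mean of the surviving tasks differs from the clean mean by $O(\sqrt{\epsilon(\lambda_{\bSigma}+\|\hSigma-\bSigma\|)}+\epsilon\sqrt{\log(1/\epsilon)/n})$. Second, stability of the clean gradients, which accounts for the $\sqrt{d/(nK)}$ and $\sqrt\epsilon h$ pieces, the latter because heterogeneity inflates the clean second moment by $h^2$. Third, a covering of $\Theta$ combined with Assumption \ref{asmp: lipschitz loss}, with $L'$ polynomial, which lifts the bound to all $\btheta$ at a cost of only logarithmic factors. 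For $\alpha^{(k)}$, the soft-thresholding step with $\lambda\asymp\sqrt{d\log(nK)/n}$ gives the minimum. When $\nabla\hmLk{k}-g$ exceeds $\lambda$, $g^{(k)}$ differs from $\nabla\hmLk{k}$ by at most $\lambda$, so the error is $\lesssim\sqrt{d\log(nK)/n}$. Otherwise $g^{(k)}=g$, and the error is at most $\alpha+\hk{k}+\lambda$. The thresholding is a shift of length $\le\lambda$ in either case; a more careful case split, using that the clean local deviation concentrates at scale $\sqrt{d/n}$, yields $\min\{\alpha+\hk{k},\sqrt{d/n}\}$ up to logs.

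For step (ii), the initialization condition of Theorem \ref{thm: federated gradient descent} requires $\eta\alpha\sqrt{2(2-\kappa)}/\kappa\le R_0/2$ (and similarly for $\alpha^{(k)}$), with the initial error at most $R_0/2$. Since $\kappa\in(0,1)$ and $\eta$ are constants depending on $L$, it suffices that $\alpha\lesssim R_0$. Each summand of $\alpha$ is controlled by a hypothesis. The term $\sqrt{d\log(nK)/(nK)}$ is handled by $nK\ge C_3R_0^{-2}d\log(nK)$. The term $\epsilon\sqrt{\log(1/\epsilon)/n}$ is handled by $n\ge C_4R_0^{-2}\epsilon^2\log(1/\epsilon)$. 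The covariance term and the term $\sqrt\epsilon h$ are assumed directly. Finally, $\hk{k}\le C_7R_0$ handles the local version. Substituting then gives both displays, with $\eta\sqrt{2(2-\kappa)}/\kappa$ absorbed into $\lesssim$.

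The main obstacle is step (i)'s uniformity in $\btheta$. The iterates $\htheta_t$ depend on the data, including the adversarial tasks, so a fixed-$\btheta$ filtering bound is insufficient. The covering argument must be compatible with the randomized filter. I would handle this by proving that the deterministic stability conditions on the clean gradients, namely bounds on the mean and covariance deviations over all large subsets, hold simultaneously on a net; Lipschitzness extends them to all of $\Theta$. I would then argue that the filter succeeds whenever these conditions hold, and the probability $e^{-C_9K\epsilon}$ accounts for the randomness of the removal steps.
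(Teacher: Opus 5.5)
Your proposal follows the paper's proof. The paper also obtains the theorem by inserting the gradient-error rates of Theorem~\ref{thm: gradident est error appendix} into Theorem~\ref{thm: federated gradient descent}, after fixing the covering radius $\beta$ and the failure probability $\delta \asymp (nK)^{-Cd}+e^{-CK\epsilon}$, and its $R_0$-conditions play exactly the role of your step (ii) (your explicit initialization requirement is one the paper leaves implicit). One check in your step (i) does not go through verbatim, and you inherit it from the paper: Theorem~\ref{thm: gradident est error appendix} is proved for $\lambda_{\bSigma}$ containing $h^2$ and the cross term $h\sqrt{d\log(nK)/(nK)+\epsilon\log(1/\epsilon)/n}$, not merely the $\epsilon h^2$ stated here, so you should enlarge $\lambda_{\bSigma}$ accordingly; this does not change the rate, since $\sqrt{\epsilon h^2}=\sqrt{\epsilon}h$ already appears in the bound.
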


\begin{remark}
    All the conditions related to $R_0$ are required to guarantee that the optimization trajectory stays in the region where local convexity and smoothness hold. Similar conditions on sample size and heterogeneity also appear in the literature \cite[e.g.,][]{chen2023minimax, duan2022adaptive,tian2024towards}. 
\end{remark}

% ---------------------------------------------------------
\subsection{Gradient covariance estimation}\label{subsec: cov estimation}
In this subsection, we propose a covariance estimator $\hSigma$ to use as the input to Algorithm \ref{algo: robust mean estimation}, which is then used as a sub-routine in Algorithm \ref{algo: robust federated gradient descent}. 

Recall that our goal is to accurately estimate $\bSigma_{\btheta} = \frac{1}{K}\sum_{k=1}^K \tE[(\nabla \hmLk{k}(\btheta) - \nabla \mL(\btheta))(\nabla \hmLk{k}(\btheta) - \nabla \mL(\btheta))^\top]$ in Algorithm \ref{algo: robust mean estimation}. Generally speaking, this is a challenging robust covariance estimation problem due to the presence of both heterogeneity across different tasks and adversarial contamination. This makes most existing robust covariance estimators not directly applicable. While it may be possible to modify the analysis of some existing estimators to accommodate our setting, we note that many of the optimal robust covariance estimators are computationally inefficient with complexity scaling exponentially as the dimension increases \citep[e.g.][]{abdalla2024covariance,minasyan2025statistically,cherapanamjeri2020algorithms,chen2018robust}. Therefore, we propose a simple and computationally tractable estimator here.

Let us start from the homogeneous case, where the distributions of gradients from all uncontaminated tasks are the same. In this case, we have $\nabla \mL(\btheta) = \nabla \mLk{k}(\btheta)$, and the target covariance $\bSigma_{\btheta}$ would become 
$$\bSigma_{\btheta} = \frac{1}{K}\sum_{k=1}^K \cov(\nabla \hmLk{k}(\btheta)) = \frac{1}{K}\sum_{k=1}^K \frac{1}{n}\cov(\nabla \ell(\zk{k}, \btheta)),$$ which is the average of single-task covariance matrices $\cov(\nabla \ell(\zk{k}, \btheta))$ scaled by $1/n$.
%$\cov(\nabla \ell(\zk{k}, \btheta)) = \tE[(\nabla \ell(\zk{k}, \btheta) -\nabla\mLk{k}(\btheta))(\nabla \ell(\zk{k}, \btheta) -\nabla\mLk{k}(\btheta))^\top]$. 
Therefore, in the homogeneous case, if we can identify a subset of tasks that are unlikely to be contaminated, we can simply average their single-task covariance estimators to estimate $\bSigma_{\btheta}$. We will see that this idea also works well in the heterogeneous case, where the gradient distributions may differ across tasks. In that setting, the estimator incurs an additional bias, but this bias can be effectively controlled. 

The idea above is formalized in Algorithm \ref{algo: naive est of Sigma heterogeneous}. The key step is to identify a subset of tasks that are unlikely to be contaminated, which is done by checking the pairwise distance between the single-task covariance estimators. We keep those single-task covariance estimators that are close to each other, and take the average of them as the final estimator $\hSigma_{\btheta}$. The intuition is that contaminated tasks are expected to produce covariance estimators that are less compatible with the bulk of clean tasks.

\begin{algorithm}
    \caption{Gradient covariance estimation}
    \label{algo: naive est of Sigma heterogeneous}
    \KwInput{Single-task covariance estimators $\widehat{\bSigma}^{(k)}_{\btheta} =  \frac{1}{n}\times \frac{1}{n}\sum_{i=1}^n \big[\nabla \ell(\tilde{z}^{(k)}_i, \btheta) - \frac{1}{n}\sum_{i=1}^n\nabla \ell(\tilde{z}^{(k)}_i, \btheta)\big]\big[\nabla \ell(\tilde{z}^{(k)}_i, \btheta) - \frac{1}{n}\sum_{i=1}^n\nabla \ell(\tilde{z}^{(k)}_i, \btheta)\big]^\top$, $k \in [K]$, the contamination proportion $\epsilon$}
    \KwOutput{Estimator $\hSigma_{\btheta}$}
    $\widehat{S}_{\textup{safe}} = \big\{k \in [K]: \twonorm{\hSigmak{k}_{\btheta} - \hSigmak{k'}_{\btheta}} \leq \textup{quantile}_{\binom{K(1-\epsilon)}{2}/\binom{K}{2}}(\{\twonorm{\hSigmak{k_1}_{\btheta} - \hSigmak{k_2}_{\btheta}}\}_{k_1 \neq k_2}) \textup{ for at least } K/4 \textup{ indices } k' \in [K] \big\}$\\
    $\hSigma_{\btheta} = \frac{1}{|\widehat{S}_{\textup{safe}}|}\sum_{k \in \widehat{S}_{\textup{safe}}}\hSigmak{k}_{\btheta}$\\
    \KwRet{$\hSigma_{\btheta}$}
\end{algorithm}

The following theorem provides the estimation error of $\hSigma_{\btheta}$ for $\bSigma_{\btheta}$, which can be plugged into Theorem \ref{thm: parameter est error} to obtain the final error rate.

\begin{theorem}\label{thm: est error cov}
Under Assumptions \ref{asmp: subG gradient} and \ref{asmp: lipschitz loss}, with probability at least $1-(nK)^{-Cd}$, the output from Algorithm \ref{algo: naive est of Sigma heterogeneous} satisfies
\begin{equation}
    \sup_{\btheta \in \Theta}\twonorm{\hSigma_{\btheta} - \bSigma_{\btheta}} = \tildeO\Bigg(\frac{1}{n} \bigg(\sqrt{\frac{d}{K}} + \frac{d}{K}\bigg)  + \epsilon \frac{1}{n} + h^2 + \frac{\epsilon}{n}\bigg(\sqrt{\frac{d}{n}} + \frac{d}{n}\bigg)\Bigg),
\end{equation}
where $C > 0$ is a constant.
\end{theorem}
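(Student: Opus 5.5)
We propose to prove Theorem \ref{thm: est error cov} by first fixing $\btheta$, then decomposing the error of the averaged estimator into a sampling part, a heterogeneity bias, and a contamination part, and finally making everything uniform over $\Theta$ by a covering argument based on Assumption \ref{asmp: lipschitz loss}.

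\textbf{Clean-task deviations.} Fix $\btheta \in \Theta$ and define the per-task population covariance $\bSigma^{(k)}_{\btheta} = \frac{1}{n}\cov(\nabla \ell(\zk{k},\btheta))$. The target admits the decomposition
\[
\bSigma_{\btheta} = \frac{1}{K}\sum_{k=1}^K\bigl[\bSigma^{(k)}_{\btheta} + (\nabla\mLk{k}(\btheta)-\nabla\mL(\btheta))(\nabla\mLk{k}(\btheta)-\nabla\mL(\btheta))^\top\bigr],
\]
and by Assumption \ref{asmp: task heterogeneity} the second average has spectral norm at most $h^2$. This is the source of the $h^2$ term.

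For the clean tasks we use sub-Gaussian covariance concentration via Assumption \ref{asmp: subG gradient}. Each single task satisfies $\twonorm{\hSigmak{k}_{\btheta}-\bSigma^{(k)}_{\btheta}} \lesssim \frac{1}{n}(\sqrt{d/n}+d/n)$ up to logs. The average over any fixed large set of clean tasks behaves like a pooled $nK$-sample covariance, which gives deviation $\frac{1}{n}(\sqrt{d/(nK)}+d/(nK))$. The paper states $\sqrt{d/K}$; presumably the $\bSigma^{(k)}$ differ across tasks and only the $\psi_2$ bound is used, which is the weaker rate, and I would aim for that rate. To handle data-dependent subsets, I would bound $\sup_{A\subseteq S,\,|A|\ge K/4}$ of the averaged deviation. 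Since every $\hSigmak{k}$ is bounded by $C\,\mathrm{polylog}/n$ in operator norm, removing an $\epsilon$ fraction of tasks costs at most $\epsilon/n$. This is the $\epsilon/n$ term.

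\textbf{Role of the safe set.} Write $\Delta = \max_{k,k'\in S}\twonorm{\hSigmak{k}_{\btheta}-\hSigmak{k'}_{\btheta}}$. On the good event,
\[
\Delta \lesssim \frac{1}{n}\Bigl(\sqrt{d/n}+d/n\Bigr) + \frac{1}{n} + h^2 \text{ (up to logs)}.
\]
Note that the pairwise difference is bounded by the between-task gap in the $\bSigma^{(k)}$, which is $\lesssim 1/n$. Since at least $\binom{K(1-\epsilon)}{2}$ pairs are clean, the quantile threshold is at most $\Delta$. Every clean task is within $\Delta$ of all other clean tasks, which number at least $K(1-\epsilon)-1 \ge K/4$, so $S \subseteq \widehat S_{\textup{safe}}$ and $|\widehat S_{\textup{safe}}| \ge (1-\epsilon)K$.

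Any contaminated $k \in \widehat S_{\textup{safe}}$ is within the threshold $\le \Delta$ of at least $K/4$ tasks. Because $\epsilon < 1/4$ is needed for this step, at least one of those tasks is clean, so $\twonorm{\hSigmak{k}_{\btheta} - \bar\Sigma} \le 2\Delta + \text{clean deviation}$, where $\bar\Sigma$ denotes the clean average. Averaging, the contaminated tasks contribute at most $\epsilon(2\Delta + \text{deviation})$. Combining the pieces gives the claimed rate, including the $\frac{\epsilon}{n}(\sqrt{d/n}+d/n)$ term. Here $\epsilon h^2 \le h^2$, and the $\epsilon/n$ term also covers the deviation of the single-task $\bSigma^{(k)}$ from their mean.

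\textbf{Uniformity over $\Theta$.} Take a $\delta$-net of $\Theta$ with $\delta = (nKd)^{-C}$ and log-cardinality $d\log(nKd)$, and apply a union bound; this yields probability $1-(nK)^{-Cd}$. By Assumption \ref{asmp: lipschitz loss}, the clean $\hSigmak{k}_{\btheta}$ move by at most a polynomially small amount between net points. The selection step is the main obstacle, because the quantile and $\widehat S_{\textup{safe}}$ are discontinuous in $\btheta$. I would avoid this by running the argument above deterministically at each $\btheta$, using only the good event. That event consists of uniform clean-task concentration and the Lipschitz bounds, and it involves no contaminated data. The contamination can therefore depend on $\btheta$ freely, and the pointwise deterministic argument applies.
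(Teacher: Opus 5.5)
Your architecture matches the paper's. You split $\bSigma_{\btheta}$ into the average of per-task covariances plus an $h^2$ term, control clean tasks by uniform concentration on a net, and control each surviving contaminated task through a clean index within the quantile threshold; that last step is exactly Lemma \ref{lem: S hat properties}(ii). The gap is the claim $S \subseteq \widehat S_{\textup{safe}}$.

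You correctly show that the threshold $\tau$ (the pairwise-distance quantile) satisfies $\tau \le \Delta$. But membership in $\widehat S_{\textup{safe}}$ requires $K/4$ indices within $\tau$, not within $\Delta$, and clean--clean distances can exceed $\tau$. So "every clean task is within $\Delta$ of all other clean tasks" proves nothing. The claim is in fact false, even with no contamination. The assumptions allow $\cov(\nabla \ell(\zk{k},\btheta))$ to vary across tasks. For example, in the mean model let $\epsilon K$ clean tasks have $\cov(\zk{k}) = 2\bm{I}_d$ and the rest $\bm{I}_d$, with $d\log(nK) \ll n$ so that the fluctuations of $\hSigmak{k}_{\btheta}$ are $o(1/n)$. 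Then at least $\binom{K(1-\epsilon)}{2}$ pairs have distance $o(1/n)$, so $\tau = o(1/n)$. Each task of the small group is within $\tau$ of at most $\epsilon K < K/4$ indices, and is therefore discarded.

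This is not cosmetic. Without a lower bound on $|\widehat S_{\textup{safe}} \cap S|$, you cannot say that contaminated tasks make up only an $O(\epsilon)$ share of $\widehat S_{\textup{safe}}$. Your contamination bound $\epsilon(2\Delta + \cdots)$ then loses its factor $\epsilon$, and the clean part is no longer an average over order $K$ tasks.

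The missing ingredient is the counting argument of Lemma \ref{lem: S hat properties}(i):
\begin{enumerate}[(i)]
\item At most $\binom{K}{2} - \binom{K(1-\epsilon)}{2} \le \frac{K^2}{2}(2\epsilon - \epsilon^2)$ pairs lie above $\tau$.
\item Each index outside $\widehat S_{\textup{safe}}$ is incident to more than $3K/4$ of those pairs. So $m$ excluded indices force $\frac{3K}{4}m - \binom{m}{2} \le \frac{K^2}{2}(2\epsilon-\epsilon^2)$.
\item Hence $m \le 2\epsilon K$ for $\epsilon$ below a small constant, and $|\widehat S_{\textup{safe}} \cap S| \ge (1-3\epsilon)K$.
\end{enumerate}
With this, your own tools finish the proof. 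The clean part is an average over a data-dependent subset of clean tasks of size at least $(1-3\epsilon)K$. Dropping at most $3\epsilon K$ of them costs $O(\epsilon)$ times the single-task bound, plus $O(\epsilon/n)$ for the population-covariance mismatch. This is also why the uniformity you need is over subsets of size at least $(1-3\epsilon)K$. Over arbitrary subsets of size at least $K/4$, as you wrote, the removal cost would be of order $1/n$, not $\epsilon/n$.
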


The following corollary is a direct consequence after plugging the covariance estimation error obtained in Theorem \ref{thm: est error cov} into the  parameter estimation error (Theorem \ref{thm: parameter est error}).

\begin{corollary}\label{cor: parameter alg error}
    Set $\lambda = C\sqrt{\frac{d\log(nK)}{n}}$ with some constant $C> 0$. Under Assumptions \ref{asmp: risk function}, \ref{asmp: subG gradient} and \ref{asmp: lipschitz loss}, $nK \geq C_1R_0^{-2}d\log(nK)$, $n \geq C_2 R_0^{-2}\epsilon^2\log(1/\epsilon)$, $\frac{\epsilon}{\sqrt{n}}\Big[\big(\frac{d\log(nK)}{n}\big)^{1/4} + \big(\frac{d\log(nK)}{n}\big)^{1/2}\Big] \leq C_3R_0$, $\sqrt{\epsilon} h \leq C_4 R_0$, $\max\limits_{k \in [K]}\hk{k} \leq C_5R_0$, with probability at least $1-(nK)^{-C_6d} - e^{-C_7K\epsilon}$, for all subsets $S \subseteq [K]$ with $|S^c|/K \leq \epsilon$, all contamination mechanism $M \in \mathcal{M}_S$, we have for all $k \in S$
    \begin{align}
        \twonorm{\htheta_T - \btheta^*} &\leq (1-\kappa/2)^{T/2} \twonorm{\htheta_0 - \bthetas} + \tildeO\bigg( \sqrt{\frac{d}{nK}} + \epsilon\sqrt{\frac{1}{n}} + \sqrt{\epsilon}h + \frac{\epsilon}{\sqrt{n}}\bigg[\bigg(\frac{d}{n}\bigg)^{1/4} + \bigg(\frac{d}{n}\bigg)^{1/2}\bigg]\bigg)\\
        \twonorm{\hthetak{k}_T - \bthetaks{k}} &\leq (1-\kappa^{(k)}/2)^{T/2} \twonorm{\hthetak{k}_0 - \bthetaks{k}} \\
        &\quad + \tildeO\bigg(\min \bigg\{\sqrt{\frac{d}{nK}} + \epsilon\sqrt{\frac{1}{n}}  + \sqrt{\epsilon}h + h^{(k)}   + \frac{\epsilon}{\sqrt{n}}\bigg[\bigg(\frac{d}{n}\bigg)^{1/4} + \bigg(\frac{d}{n}\bigg)^{1/2}\bigg], \sqrt{\frac{d}{n}}\bigg\}, 
    \end{align}
    where $\{C_i\}_{i=1}^7$ are some positive constants.
\end{corollary}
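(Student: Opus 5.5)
The corollary follows by substituting the covariance bound of Theorem \ref{thm: est error cov} into Theorem \ref{thm: parameter est error}. The work is bookkeeping: simplify $\sqrt{\epsilon}\sup_{\btheta}\twonorm{\hSigma_{\btheta}-\bSigma_{\btheta}}^{1/2}$ into terms already in the rate or into the new term $\frac{\epsilon}{\sqrt n}[(d/n)^{1/4}+(d/n)^{1/2}]$. Then check that the hypotheses of Theorem \ref{thm: parameter est error} follow from those of the corollary.

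\textbf{Step 1: simplify the covariance term.} On the event of Theorem \ref{thm: est error cov}, of probability at least $1-(nK)^{-Cd}$, subadditivity of the square root gives
\begin{align}
\sqrt{\epsilon}\sup_{\btheta}\twonorm{\hSigma_{\btheta}-\bSigma_{\btheta}}^{1/2} &\lesssim \tildeO\bigg(\sqrt{\frac{\epsilon}{n}}\Big(\frac{d}{K}\Big)^{1/4} + \sqrt{\frac{\epsilon d}{nK}} + \frac{\epsilon}{\sqrt n} + \sqrt{\epsilon}h + \frac{\epsilon}{\sqrt n}\Big[\Big(\frac{d}{n}\Big)^{1/4}+\Big(\frac{d}{n}\Big)^{1/2}\Big]\bigg).
\end{align}
Using $\epsilon<1$, the second term is at most $\sqrt{d/(nK)}$. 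The first term is handled by AM–GM:
\begin{align}
\sqrt{\frac{\epsilon}{n}}\Big(\frac{d}{K}\Big)^{1/4} = \Big(\frac{\epsilon}{\sqrt n}\Big)^{1/2}\Big(\sqrt{\frac{d}{nK}}\Big)^{1/2} \le \frac{1}{2}\Big(\frac{\epsilon}{\sqrt n}+\sqrt{\frac{d}{nK}}\Big).
\end{align}
I expect this to be the only step needing care, since naively this term looks like it adds a new contamination rate. After absorption, the covariance contribution is dominated by $\sqrt{d/(nK)}+\epsilon/\sqrt n+\sqrt\epsilon h+\frac{\epsilon}{\sqrt n}[(d/n)^{1/4}+(d/n)^{1/2}]$. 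This gives exactly the displayed rates for both $\htheta_T$ and $\hthetak{k}_T$. In the local case the outer $\min\{\cdot,\sqrt{d/n}\}$ is inherited unchanged.

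\textbf{Step 2: the threshold $\lambda_{\bSigma}$.} Theorem \ref{thm: parameter est error} prescribes $\lambda_{\bSigma}$ through $\sup_{\btheta}\twonorm{\hSigma_{\btheta}-\bSigma_{\btheta}}$. I would take $\lambda_{\bSigma}$ to be $C$ times the explicit bound of Theorem \ref{thm: est error cov}, which dominates the true covariance error on the good event. I would then note that the proof of Theorem \ref{thm: parameter est error} only uses $\lambda_{\bSigma}$ as an upper bound of that order. This step deserves a remark because the corollary does not state $\lambda_{\bSigma}$ explicitly.

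\textbf{Step 3: hypotheses and probability.} The condition $\sqrt{\epsilon}\sup\twonorm{\hSigma_{\btheta}-\bSigma_{\btheta}}^{1/2}\le C_5R_0$ must be verified. By Step 1 it reduces to bounding each of $\sqrt{d/(nK)}$, $\epsilon/\sqrt n$, $\sqrt\epsilon h$ and $\frac{\epsilon}{\sqrt n}[(d\log(nK)/n)^{1/4}+(d\log(nK)/n)^{1/2}]$ by a constant multiple of $R_0$. These follow respectively from the assumed $nK\ge C_1R_0^{-2}d\log(nK)$, $n\ge C_2R_0^{-2}\epsilon^2\log(1/\epsilon)$, $\sqrt\epsilon h\le C_4R_0$, and the new explicit assumption, after adjusting constants. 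The remaining hypotheses carry over verbatim. Finally, intersect the event of Theorem \ref{thm: est error cov} with that of Theorem \ref{thm: parameter est error} by a union bound. This gives probability at least $1-(nK)^{-C_6d}-e^{-C_7K\epsilon}$, uniformly over $S$ and $M$.
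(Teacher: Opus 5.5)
Your proposal is correct and follows the paper's route. The paper also obtains the corollary by plugging the covariance bound of Theorem \ref{thm: est error cov} into Theorem \ref{thm: parameter est error}, with $\lambda_{\bSigma}$ taken as a constant multiple of the explicit covariance bound, exactly as in its choice \eqref{eq: lambda_Sigma choice 2}. On the good event this is a valid filtering threshold and keeps $\sqrt{\epsilon\lambda_{\bSigma}}$ at the stated order. Your AM--GM absorption of the cross term $\sqrt{\epsilon/n}\,(d/K)^{1/4}$ into $\epsilon/\sqrt{n}+\sqrt{d/(nK)}$ is the one simplification the paper leaves implicit, and you carry it out correctly.
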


\begin{remark}\label{rem:optimality}
    By Corollary \ref{cor: parameter alg error}, when $T \gtrsim \log(nK)$, $\max_{k \in [K]} \twonorm{\hthetak{k}_0 - \bthetaks{k}} \vee \twonorm{\htheta_0 - \bthetas} \lesssim 1$, we have
\begin{align}
    \twonorm{\htheta_T - \btheta^*} 
    &= \tildeO\bigg(\sqrt{\frac{d}{nK}} + \epsilon\sqrt{\frac{1}{n}}  + \sqrt{\epsilon}h  + \frac{\epsilon}{\sqrt{n}}\bigg[\bigg(\frac{d}{n}\bigg)^{1/4} +  \bigg(\frac{d}{n}\bigg)^{1/2}\bigg]\bigg),\\
    \twonorm{\hthetak{k}_T - \bthetaks{k}} &= \tildeO\bigg(\min \bigg\{\sqrt{\frac{d}{nK}} + \epsilon\sqrt{\frac{1}{n}}  + \sqrt{\epsilon}h + h^{(k)}   + \frac{\epsilon}{\sqrt{n}}\bigg[\bigg(\frac{d}{n}\bigg)^{1/4} + \bigg(\frac{d}{n}\bigg)^{1/2}\bigg], \sqrt{\frac{d}{n}}\bigg\}, \,\forall k \in S,
\end{align}
with probability at least $1-(nK)^{-C_6d} - e^{-C_7K\epsilon}$. Comparing with the lower bound $\sqrt{\frac{d}{nK}} + \frac{\epsilon}{\sqrt{n}} + \sqrt{\epsilon}h$ for the estimation of $\btheta^*$ and $\min\Big\{\sqrt{\frac{d}{nK}} + \frac{\epsilon}{\sqrt{n}} + \sqrt{\epsilon}h + \hk{k}, \sqrt{\frac{d}{n}}\Big\}$ for $\bthetaks{k}$, it is clear that when $n \gtrsim d$ or $\epsilon^2 \big(1\vee \sqrt{\frac{n}{d}}\big) \lesssim \frac{n}{K}$, the upper bounds are minimax optimal up to logarithmic factors. Note that this minimax optimality regime includes $n \gtrsim \min\{d, \epsilon^2K\}$, which is easy to satisfy in practice. 
\end{remark}

We summarize the minimax optimality region of Remark \ref{rem:optimality} in Figure \ref{fig:minimax-region} for illustration, where the shaded area represents the regime in which the upper bounds in Corollary \ref{cor: parameter alg error} are minimax optimal up to logarithmic factors. In contrast, the methods discussed in Section \ref{sec: negative results} incur an additional $\sqrt{d}$ factor in the $\epsilon\sqrt{\frac{d}{n}}$ term. This leads to suboptimal performance unless stringent conditions, such as $d \asymp 1$, are satisfied, making these methods unsuitable for settings where the dimensionality is large.
% Similar discussions can be made for gradient estimation as well, which we omit here to avoid redundancy.

\begin{figure}[t]
\centering
\begin{tikzpicture}
    \pgfmathsetmacro{\nstar}{1.2}
    \pgfmathsetmacro{\dstar}{2.4}
    \pgfmathsetmacro{\hyperconst}{2.88}
    \begin{axis}[
    axis lines=middle,
    xlabel={$n$},
    ylabel={$d$},
    xmin=0, xmax=4,
    ymin=0, ymax=7.5,
    xtick=\empty,
    ytick=\empty,
    width=9cm,
    height=6cm,
    ]

    % --- shaded region: right of d=C_1n and n=C_2K\epsilon^2 ---
    \addplot[
        draw=none,
        domain=0:\dstar,
        samples=100,
        fill=red,
        fill opacity=0.08
    ] ({x/2}, {x}) -- (axis cs:4,\dstar) -- (axis cs:4,0) -- cycle;

    \addplot[
        draw=none,
        domain=\dstar:7.5,
        samples=2,
        fill=red,
        fill opacity=0.08
    ] ({\nstar}, {x}) -- (axis cs:4,7.5) -- (axis cs:4,\dstar) -- cycle;

    % --- thin full curves ---
    \addplot[domain=0.3:4, red!65, thin] {\hyperconst/x};
    \addplot[domain=0:4, red!65, thin] {2*x};
    \addplot[domain=0:7.5, red!65, thin] ({\nstar}, x);

    % --- thick actual boundary ---
    \addplot[domain=0:\nstar, red, line width=1.6pt] {2*x};
    \addplot[domain=\dstar:7.5, red, line width=1.6pt] ({\nstar}, x);

    % --- curve labels ---
    \node[red!80!black, anchor=south east] at (axis cs:0.9,1.8) {$d=Cn$};
    \node[red!80!black, anchor=west] at (axis cs:1.24,6.9) {$n=CK\epsilon^2$};
    \node[red!80!black, anchor=south west] at (axis cs:2.6,1.3) {$d=C^2\frac{K^2\epsilon^4}{n}$};

    \end{axis}
\end{tikzpicture}
\caption{Diagram of the minimax optimality region for the estimation of $\btheta^*$ and $\bthetaks{k}$'s, where the shaded region corresponds to the regime where the upper bound in Corollary \ref{cor: parameter alg error} is minimax optimal up to logarithmic factors. }
\label{fig:minimax-region}
\end{figure}

% --------------------------------------------------------
\subsection{Examples}\label{subsec: examples}
In this subsection, we consider two specific model examples to illustrate how our general algorithm and theory can be applied. 

% As noted in Section \ref{subsec: cov estimation}, in principle, any robust covariance estimation method can be used to estimate the gradient covariance and be incorporated into our filtering-based Algorithms \ref{algo: robust federated gradient descent} and \ref{algo: robust mean estimation}. Here, we want to highlight the potential of applying certain existing robust covariance estimators in specific settings, although most of these methods may be computationally expensive.

% --------------------------------------
\subsubsection{Mean estimation}\label{subsubsec: mean estimation}
The first example is the mean estimation problem, where $z_i^{(k)}$ are i.i.d $d$-dimensional sub-Gaussian vectors with the mean $\bthetaks{k}$ and $\|\zk{k}_i\|_{\psi_2}\lesssim 1$, for $k \in [K]$. We consider the squared loss $\ell(\btheta,z) = \|\btheta-z\|_2^2/2$, so that $\bthetaks{k} = \argmin_{\btheta \in \mathbb{R}^d}\mLk{k}(\btheta) = \argmin_{\btheta \in \mathbb{R}^d}\tE[\ell(\btheta, \zk{k})]$. Assumption \ref{asmp: subG gradient} automatically holds. Moreover, since $\nabla\ell(\btheta,z) = \btheta-z$ and $\nabla^2\ell(\btheta,z) = \bm{I}_d$, we have that $\mLk{k}(\btheta)$ is strongly convex and smooth on $\mathbb{R}^d$ with parameter $L = L' =1$ in Assumptions \ref{asmp: risk function} and \ref{asmp: lipschitz loss}. Therefore, the theory presented in previous sections generally applies to this setting. By Corollary \ref{cor: parameter alg error}, when $T \gtrsim \log(nK)$, $\max_{k \in [K]} \twonorm{\hthetak{k}_0 - \bthetaks{k}} \vee \twonorm{\htheta_0 - \bthetas} \lesssim 1$, we have
\begin{align}
    \twonorm{\htheta_T - \btheta^*} 
    &= \tildeO\bigg(\sqrt{\frac{d}{nK}} + \epsilon\sqrt{\frac{1}{n}}  + \sqrt{\epsilon}h  + \frac{\epsilon}{\sqrt{n}}\bigg[\bigg(\frac{d}{n}\bigg)^{1/4} +  \bigg(\frac{d}{n}\bigg)^{1/2}\bigg]\bigg),\\
    \twonorm{\hthetak{k}_T - \bthetaks{k}} &= \tildeO\bigg(\min \bigg\{\sqrt{\frac{d}{nK}} + \epsilon\sqrt{\frac{1}{n}}  + \sqrt{\epsilon}h + h^{(k)}   + \frac{\epsilon}{\sqrt{n}}\bigg[\bigg(\frac{d}{n}\bigg)^{1/4} + \bigg(\frac{d}{n}\bigg)^{1/2}\bigg], \sqrt{\frac{d}{n}}\bigg\}, \quad \forall k \in S,
\end{align}
with probability at least $1-(nK)^{-Cd}- e^{-C'K\epsilon}$, where $C, C'$ are some constants. 

Similar to our comment when discussing the minimax lower bound in Section \ref{subsec: lower bound}, Assumption \ref{asmp: task heterogeneity} reduces to $\bthetas = \frac{1}{K}\sum_{k=1}^K \bthetaks{k}$, $\frac{1}{K}\sum_{k=1}^K\twonorm{\bthetaks{k}-\bthetas}^2 \leq h^2$, and $\twonorm{\bthetaks{k} - \bthetas} \leq \hk{k}$ for $k \in [K]$. This provides a more sophisticated characterization of the task relationship compared to \cite{duan2022adaptive} and \cite{tian2022unsupervised} in this setting, where the latter assumes a stronger condition $\max_{k \in [K]}\min_{\otheta}\twonorm{\bthetaks{k} - \otheta} \leq \hmax$.

% --------------------------------------
\subsubsection{Generalized linear models}
The second example is a generalized linear model (GLM), where the data $z^{(k)}_i =(\bx^{(k)}_i, y^{(k)}_i)$ satisfies that the conditional density of $y^{(k)}_i$ given $\bx^{(k)}_i = \bx$, w.r.t.\ a proper base measure $\mu$, is proportional to
\begin{equation}
    \exp\left\{y^{(k)}_i\langle \bx^{(k)}_i, \bthetaks{k}\rangle - \varphi(\langle \bx^{(k)}_i, \bthetaks{k}\rangle)\right\},
\end{equation}
where $\varphi: \mathbb{R} \rightarrow \mathbb{R}$ is a known function. Different $\varphi$ functions induce different GLMs. Canonical examples include the linear regression model with $\varphi(u) = u^2/2$ and the logistic regression model with $\varphi(u) = \log(1 + e^u)$. We use the negative conditional log-likelihood as the loss function $\ell(\btheta,z) = -y\langle \bx,\btheta\rangle+ \varphi(\langle \btheta, \bx\rangle)$, therefore $\bthetaks{k} = \argmin_{\btheta} \mLk{k}(\btheta) = \argmin_{\btheta} \mathbb{E}[\ell(\btheta,z^{(k)})]$. 

The following lemma shows that under some mild conditions, Assumptions \ref{asmp: risk function}, \ref{asmp: subG gradient}, and \ref{asmp: lipschitz loss} are satisfied.

\begin{lemma}\label{lem: glm verify asmp}
    Suppose $\varphi''$ is positive, $R_0$ is any constant, and $\bxk{k}_i$'s are i.i.d. zero-mean sub-Gaussian vectors with $\|\bxk{k}_i\|_{\psi_2} \lesssim 1$ \footnote{For a sub-Gaussian vector $X$, its $\psi_2$-norm is defined by $\|X\|_{\psi_2}=\sup_{\bu \in \mathcal{S}^{d-1}}\inf_{t>0}\{\mathbb{E}\exp\left((\bu^\top X)^2/t^2\right)\leq 2\}$.} and $\underline{\lambda} \leq \lambdamin(\tE[\bxk{k}(\bxk{k})^\top]) \leq \lambda_{\max}(\tE[\bxk{k}(\bxk{k})^\top]) \leq \bar{\lambda}$, for $k \in [K]$, where $\underline{\lambda}$ and $\bar{\lambda}$ are some positive constants. Then we have the following conclusions:
    \begin{enumerate}[(i)]
        \item If $\sup_u \varphi''(u) \leq C$ for some constant $C > 0$, then Assumption \ref{asmp: risk function} holds with some constant $L$, and Assumption \ref{asmp: lipschitz loss} holds with some $L' \lesssim d + \log(nK)$.
        \item Assumption \ref{asmp: subG gradient} holds if either of the following conditions holds:
            \begin{enumerate}
                \item $\max_{i, k}\sup_{\btheta \in B(\bthetaks{k}; R_0)} |\varphi'(\langle \bxk{k}_i, \btheta \rangle)| \leq C_1$, and $\max_{i,k}|\yk{k}_i - \varphi'(\langle \bxk{k}_i, \bthetaks{k} \rangle)| \leq C_2$ a.s., where $C_1$ and $C_2$ are some positive constants.
                \item $\max_{i,k}\twonorm{\bxk{k}_i} \leq C_1$ a.s., and $\varphi'(\langle \bxk{k}, \btheta \rangle)$, $\yk{k}_i - \varphi'(\langle \bxk{k}_i, \bthetaks{k} \rangle)$ are sub-Gaussian variables with $\sup_{\btheta \in B(\bthetaks{k}; R_0)}\|\varphi'(\langle \bxk{k}_i, \btheta \rangle)\|_{\psi_2} \leq C_2$, $\|\yk{k}_i - \varphi'(\langle \bxk{k}_i, \bthetaks{k} \rangle)\|_{\psi_2} \leq C_3$, where $\{C_i\}_{i=1}^3$ are some positive constants.
            \end{enumerate}
    \end{enumerate}
\end{lemma}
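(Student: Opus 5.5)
We verify each assumption from the explicit form of the gradient and Hessian of the GLM loss,
\[
\nabla \ell(\btheta, z) = \big(\varphi'(\langle \bx, \btheta\rangle) - y\big)\bx, \qquad \nabla^2 \ell(\btheta, z) = \varphi''(\langle \bx, \btheta\rangle)\bx\bx^\top .
\]
Differentiating under the expectation gives $\nabla^2 \mLk{k}(\btheta) = \tE[\varphi''(\langle \bxk{k}, \btheta\rangle)\bxk{k}(\bxk{k})^\top]$.

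For part (i), the smoothness half of Assumption \ref{asmp: risk function} is immediate: $\lambdamax(\nabla^2\mLk{k}) \le C\bar\lambda$. For local strong convexity we need a uniform lower bound on $\bu^\top\nabla^2\mLk{k}(\btheta)\bu$ over $\btheta \in B(\bthetaks{k};R_0)$ and unit $\bu$. We will truncate. On the event $\{|\langle \bxk{k}, \btheta\rangle| \le M\}$ we have $\varphi'' \ge c_M := \min_{|s|\le M}\varphi''(s) > 0$, by positivity and continuity of $\varphi''$. Hence
\[
\bu^\top\nabla^2\mLk{k}(\btheta)\bu \ge c_M\Big(\underline\lambda - \tE\big[(\bu^\top\bxk{k})^2\mathds{1}(|\langle \bxk{k},\btheta\rangle|>M)\big]\Big).
\]
By Cauchy--Schwarz and sub-Gaussian tails, the subtracted term is at most $C\exp(-cM^2/\twonorm{\btheta}^2)$. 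Here $\twonorm{\btheta} \le \twonorm{\bthetaks{k}} + R_0$.

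This is the main obstacle: $\twonorm{\bthetaks{k}}$ must be bounded by a constant uniformly in $k$. With $R_0$ constant we will treat the parameters as bounded, in the paper's implicit constant regime. Choosing $M$ as a large constant then makes the truncated term at most $\underline\lambda/2$, which yields $L \gtrsim 1/(c_M\underline\lambda)$.

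For Assumption \ref{asmp: lipschitz loss}, observe that
\[
\twonorm{\nabla\ell(z,\btheta)-\nabla\ell(z,\btheta')} = |\varphi'(\langle\bx,\btheta\rangle)-\varphi'(\langle\bx,\btheta'\rangle)|\,\twonorm{\bx} \le C\twonorm{\bx}^2\twonorm{\btheta-\btheta'} .
\]
A union bound over the $nK$ sub-Gaussian norms gives $\max_{i,k}\twonorm{\bxk{k}_i}^2 \lesssim d+\log(nK)$ with probability $1-(nK)^{-C_1 d}$, after inflating the deviation level to $d\log(nK)$. This gives $L' \lesssim d + \log(nK)$.

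For part (ii), the goal is to show that $\langle \bu, \nabla\ell\rangle$ is sub-Gaussian uniformly over unit $\bu$. Write
\[
\nabla\ell = \big(\varphi'(\langle\bx,\btheta\rangle)-\varphi'(\langle\bx,\bthetaks{k}\rangle)\big)\bx + \big(\varphi'(\langle\bx,\bthetaks{k}\rangle)-y\big)\bx ,
\]
or simply $\nabla \ell = (\varphi'(\cdot)-y)\bx$.

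Under (a), the scalar multiplier satisfies $|\varphi'(\langle\bx,\btheta\rangle)-y| \le |\varphi'(\langle\bx,\btheta\rangle)| + |\varphi'(\langle\bx,\bthetaks{k}\rangle)| + |y-\varphi'(\langle\bx,\bthetaks{k}\rangle)| \le 2C_1 + C_2$ a.s. A bounded scalar times a sub-Gaussian vector is sub-Gaussian with $\|\cdot\|_{\psi_2} \lesssim (2C_1+C_2)\|\bx\|_{\psi_2}$.

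Under (b), $|\bu^\top\bx| \le C_1$ is bounded a.s., and the scalar factor is a sum of sub-Gaussian variables: $\varphi'(\langle\bx,\btheta\rangle)$ and the residual, each with bounded $\psi_2$-norm, with the additional $\varphi'(\langle \bx,\bthetaks{k}\rangle)$ term handled by the same bound. A bounded variable times a sub-Gaussian variable is sub-Gaussian with $\|\cdot\|_{\psi_2} \lesssim C_1(2C_2 + C_3)$, by the triangle inequality for $\psi_2$. Taking the supremum over $\bu$ completes the argument.
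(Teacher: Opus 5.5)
Your proposal follows the paper's proof essentially step for step: the truncated lower bound on $\bu^\top\nabla^2\mLk{k}(\btheta)\bu$ for local strong convexity, where you rightly make explicit the bound $\max_k\twonorm{\bthetaks{k}}\lesssim 1$ that the paper uses silently when it writes $\tP(|\langle \bx,\btheta\rangle|>\tilde C)\lesssim e^{-C'\tilde C^2}$; the reduction of Assumption \ref{asmp: lipschitz loss} to a union bound on $\max_{i,k}\twonorm{\bxk{k}_i}^2$; and the split of $\nabla\ell$ into a bounded factor times a sub-Gaussian factor under (a) or (b). One small slip, which the paper shares: a union bound at failure probability $(nK)^{-C_1 d}$ yields $\max_{i,k}\twonorm{\bxk{k}_i}^2\lesssim d\log(nK)$, not $d+\log(nK)$, but this is still polynomial in $nKd$ and so still suffices for Assumption \ref{asmp: lipschitz loss}.
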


To provide some intuition on the heterogeneity condition in Assumption \ref{asmp: task heterogeneity}, let us consider the case where the covariate distributions are the same across different tasks and $\sup_u \varphi''(u) \lesssim 1$. Then the following lemma shows that the heterogeneity condition in Assumption \ref{asmp: task heterogeneity} can be reduced to the condition on the parameters $\bthetaks{k}$'s. Specifically, if $\max_{k \in [K]}\min_{\otheta}\twonorm{\bthetaks{k} - \otheta} \leq \hmax$ holds, then Assumption \ref{asmp: task heterogeneity} is satisfied with $\hk{k}, h \lesssim \hmax$.

\begin{lemma}\label{lem: glm verify asmp heterogeneity}
    Suppose the covariate distributions are the same across different tasks and $\sup_u \varphi''(u) \leq C$ for some constant $C > 0$. Then Assumption \ref{asmp: task heterogeneity} is satisfied with $\hk{k} \leq \frac{C_1}{K}\sum_{k'=1}^K\twonorm{\bthetaks{k}-\bthetaks{k'}}$ and $h^2 \leq \frac{C_2}{K^2}\sum_{k, k'}\twonorm{\bthetaks{k}-\bthetaks{k'}}^2$, for $k \in [K]$.
\end{lemma}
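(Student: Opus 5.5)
The plan is to compute the gradient differences explicitly. For the GLM loss, $\nabla \ell(\btheta, z) = -y\bx + \varphi'(\langle \bx, \btheta\rangle)\bx$. Taking conditional expectations with $\tE[y \mid \bx] = \varphi'(\langle \bx, \bthetaks{k}\rangle)$ gives
\begin{equation}
    \nabla \mLk{k}(\btheta) = \tE\big[\big(\varphi'(\langle \bx, \btheta\rangle) - \varphi'(\langle \bx, \bthetaks{k}\rangle)\big)\bx\big],
\end{equation}
where $\bx$ follows the common covariate distribution. This common distribution is the key simplification: it is the same for all $k$, so the $\btheta$-dependent term $\tE[\varphi'(\langle\bx,\btheta\rangle)\bx]$ cancels in all differences. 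Writing $m(\bm{\beta}) \coloneqq \tE[\varphi'(\langle\bx,\bm{\beta}\rangle)\bx]$, we get
\begin{equation}
    \nabla \mLk{k}(\btheta) - \nabla \mLk{k'}(\btheta) = m(\bthetaks{k'}) - m(\bthetaks{k}),
\end{equation}
which does not depend on $\btheta$. Therefore the supremum over $\btheta \in \Theta$ in Assumption \ref{asmp: task heterogeneity} is trivial.

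Next I would show that $m$ is Lipschitz. For any unit vector $\bu$, the mean value theorem together with $\sup_u \varphi'' \le C$ gives
\begin{equation}
    |\bu^\top(m(\bm{\beta}) - m(\bm{\beta}'))| \le C\,\tE\big[|\langle \bx, \bm{\beta}-\bm{\beta}'\rangle|\,|\langle \bx, \bu\rangle|\big] \le C\bar{\lambda}\twonorm{\bm{\beta}-\bm{\beta}'},
\end{equation}
by Cauchy–Schwarz and $\lambdamax(\tE[\bx\bx^\top]) \le \bar\lambda$. I will assume the second-moment bound $\bar\lambda$ from Lemma \ref{lem: glm verify asmp}, or alternatively use the sub-Gaussian norm $\|\bx\|_{\psi_2}\lesssim 1$, which bounds second moments by a constant. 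Hence $\twonorm{m(\bm\beta)-m(\bm\beta')}\le C'\twonorm{\bm\beta-\bm\beta'}$.

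The local bound then follows from averaging and the triangle inequality:
\begin{equation}
    \twonorm{\nabla\mLk{k}(\btheta)-\nabla\mL(\btheta)} = \Big\|\frac1K\sum_{k'}\big(m(\bthetaks{k'})-m(\bthetaks{k})\big)\Big\|_2 \le \frac{C'}{K}\sum_{k'}\twonorm{\bthetaks{k}-\bthetaks{k'}}.
\end{equation}
For $h$, I would use the variance identity
\begin{equation}
    \frac1K\sum_k\twonorm{\bm a_k-\bar{\bm a}}^2 = \frac{1}{2K^2}\sum_{k,k'}\twonorm{\bm a_k-\bm a_{k'}}^2
\end{equation}
with $\bm a_k = m(\bthetaks{k})$, and then apply the Lipschitz bound to each pair. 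This gives the $h^2$ statement with $C_2 = C'^2/2$. The condition $h^2 \le \frac1K\sum_k (\hk{k})^2$ required in Assumption \ref{asmp: task heterogeneity} holds if we take $h^2$ equal to the actual average $\frac1K\sum_k\twonorm{\nabla\mLk{k}-\nabla\mL}^2$, which is bounded by the average of the squared actual local deviations.

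The main obstacle is small. It is to justify exchanging differentiation and expectation, which holds under the sub-Gaussian moments and the bounded $\varphi''$ by dominated convergence, and to make sure the Lipschitz constant uses only the second moment of $\bx$ and not $d$.
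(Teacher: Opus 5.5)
Your proposal is correct and follows essentially the same route as the paper: use $\tE[y\mid\bx]=\varphi'(\langle\bx,\bthetaks{k}\rangle)$ and the common covariate law to write $\nabla\mLk{k}(\btheta)=\tE[\bx(\varphi'(\langle\bx,\btheta\rangle)-\varphi'(\langle\bx,\bthetaks{k}\rangle))]$, bound pairwise differences via the mean value theorem with bounded $\varphi''$ and $\lambdamax(\tE[\bx\bx^\top])\le\bar\lambda$, and then average over $k'$. The only differences are cosmetic. You obtain the $h^2$ bound from the exact pairwise variance identity, whereas the paper applies Jensen to the $\hk{k}$ bound. You also check explicitly that taking $h$ and $\hk{k}$ to be the actual deviations gives $h^2\le\frac1K\sum_k(\hk{k})^2$, which the paper leaves implicit. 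Finally, your step $|\varphi''|\le C$ tacitly uses $\varphi''\ge 0$; this holds because $\varphi$ is a cumulant function.
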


Then by Corollary \ref{cor: parameter alg error}, when $T \gtrsim \log(nK)$, $\max_{k \in [K]} \twonorm{\hthetak{k}_0 - \bthetaks{k}} \vee \twonorm{\htheta_0 - \bthetas} \lesssim 1$, we have
\begin{align}
    \twonorm{\htheta_T - \btheta^*} 
    &= \tildeO\bigg(\sqrt{\frac{d}{nK}} + \epsilon\sqrt{\frac{1}{n}}  + \sqrt{\epsilon}h  + \frac{\epsilon}{\sqrt{n}}\bigg[\bigg(\frac{d}{n}\bigg)^{1/4} +  \bigg(\frac{d}{n}\bigg)^{1/2}\bigg]\bigg),\\
    \twonorm{\hthetak{k}_T - \bthetaks{k}} &= \tildeO\bigg(\min \bigg\{\sqrt{\frac{d}{nK}} + \epsilon\sqrt{\frac{1}{n}}  + \sqrt{\epsilon}h + h^{(k)}   + \frac{\epsilon}{\sqrt{n}}\bigg[\bigg(\frac{d}{n}\bigg)^{1/4} + \bigg(\frac{d}{n}\bigg)^{1/2}\bigg], \sqrt{\frac{d}{n}}\bigg\}\bigg), \quad \forall k \in S,
\end{align}
with probability at least $1-(nK)^{-Cd}- e^{-C'K\epsilon}$, where $C, C'$ are some constants. 

\section{Numerical experiments}\label{sec: experiment}

We evaluate the filtering-based robust multi-task gradient descent procedure from Section \ref{subsec: MTL}. We compare against the following benchmarks in the literature:
\begin{itemize}\setlength{\itemsep}{1pt}
     \item Global data-pooling average (Average)
     \item Single-task local training (Single-task)
     \item Coordinate-wise median (Median) \citep{yin2018byzantine}
     \item Trimmed mean \citep{yin2018byzantine}
     \item Krum \citep{blanchard2017machine}
     \item Bulyan \citep{mhamdi2018hidden}
     \item Filtering without covariance estimation (Filtering) \citep{zhu2023byzantine}
     \item Median-of-means variants of Filtering and Krum (MoM-Filtering and MoM-Krum) \citep{zhu2023byzantine}
     \item Adaptive and robust penalized multi-task estimator (ARMUL) \citep{duan2022adaptive}
     \item History-based robust aggregation (History) \citep{karimireddy2021history}
     \item Bucketing (Bucketing) \citep{karimireddy2022byzantine}
     \item Mean-regularized MTL (Mean-reg) \citep{evgeniou2004regularized}
     \item The dirty model (Dirty) \citep{jalali2010dirty,jalali2013dirty}
     \item Robust multi-task feature learning (RMTFL) \citep{gong2012robust}
     \item Robust low-rank MTL (RLRMTL) \citep{chen2011integrating,chen2012learning}
\end{itemize}

Note that Median, Trimmed mean, Krum, Bulyan, Filtering, Single-task, Average, the MoM variants, History, and Bucketing are aggregation rules that we combine with iterative gradient-based fitting, in the same spirit as Algorithm \ref{algo: robust federated gradient descent}. Most of these methods only return a global estimator; when we report local error for such methods, we evaluate the same global estimator on each uncontaminated task. Exceptions include ARMUL, Mean-reg, Dirty, RMTFL, RLRMTL, and Single-task, which directly produce task-specific estimators. ARMUL is implemented using the code provided in \cite{duan2023adaptive}. Median, Trimmed mean, Krum, Bulyan, Filtering, Single-task, Average, MoM-Filtering, MoM-Krum, History, and Bucketing are implemented using the codebase of \cite{zhu2023byzantine}. Mean-reg, Dirty, RMTFL, and RLRMTL are implemented from the MATLAB package \texttt{MALSAR} \cite{zhou2011malsar} and migrated to Python. More details of the implementation and tuning for each method can be found in Appendix \ref{subsec: implementation tuning}. Additional numerical results are summarized in Appendices \ref{subsec: simulation supp} and \ref{subsec: real data supp}. 
 
% ------------------------------------------------------
\subsection{Simulation}

We consider the following simulation setting, where the clean tasks are generated from linear regression model with
\begin{equation}
     \bthetaks{k} \sim N(\bthetas, \sigma^2 \bm{I}_d/d), \quad y_i^{(k)} = \langle \bx_i^{(k)}, \bthetaks{k}\rangle + \xi_i^{(k)},\qquad \bx_i^{(k)} \sim N(0, \bm{I}_d),
\end{equation}
where $\bthetas = 3d^{-1/2}\bm{1}_d$ and $\xi_i^{(k)} \sim N(0,1)$. Unless stated otherwise, we take $\sigma^2 = 1.5$. Contaminated tasks use shifted covariates $\bx_i^{(k)} \sim N(2\times \bm{1}_d,\bm{I}_d)$, a sign-reversed coefficient vector $-3\bthetas$, and shifted noise $\xi_i^{(k)} \sim N(1,1)$. The index set $S^c$ of contaminated tasks is randomly selected from $[K]$ with size $\epsilon K$. Note that $\bthetas$ may not be the exact global minimizer of the average risk, but it is close to the global minimizer across multiple replications and therefore serves as a reasonable ground truth for evaluating the global estimation error.

In the simulations, we report both the global estimation error $\twonorm{\htheta - \bthetas}$ and the average local estimation error $|S|^{-1}\sum_{k \in S}\twonorm{\hthetak{k} - \bthetaks{k}}$ over uncontaminated tasks, where $S$ denotes the clean-task index set. All simulation results are averages over $100$ replications. The standard deviations are much smaller for most methods, so we report only the average error to save space. Single-task is reported only for local error because it does not produce a pooled global estimator. In all numerical tables, boldface marks the smallest rounded error in each column, and italics mark the second and third smallest errors.

We consider four simulation settings, where we vary the heterogeneity level $\sigma^2$, the number of tasks $K$, the contamination level $\epsilon$, and the per-task sample size $n$. We focus in the main text on the varying-heterogeneity study, and additional results with varying $K$, $\epsilon$, and $n$ are deferred to Appendix \ref{subsec: simulation supp}.

We consider $n=d=50$, $K=40$, $\epsilon=0.2$, and varying heterogeneity variance $\sigma^2 \in  \{0, 0.5, 1, 1.5, 2,\\ 2.5, 3, 4, 5, 6, 7, 8\}$. Recall that the coefficient vector of each clean task is generated by $\bthetaks{k} \sim N(\bthetas, \sigma^2 \bm{I}_d/d)$, so larger $\sigma^2$ corresponds to more heterogeneous tasks. Tables \ref{tab:sim-variance-global} and \ref{tab:sim-variance-local} show that our method achieves the best performance or close to the best performance across all heterogeneity levels for both global and local errors. In most cases, the advantage of our method is substantial, in the sense that the gap between our method and the benchmarks is larger than twice the standard deviation of the error across replications.

\begin{table}[!ht]
\centering
\caption{Linear regression with $n=d=50$, $K=40$, $\epsilon=0.2$, and varying heterogeneity variance $\sigma^2$: global error $\twonorm{\htheta - \bthetas}$.}
\label{tab:sim-variance-global}
\scriptsize
\setlength{\tabcolsep}{2.5pt}
\begin{adjustbox}{width=\textwidth}
\begin{tabular}{lcccccccccccc}
\toprule
Method$\backslash \sigma^2$ & 0 & 0.5 & 1 & 1.5 & 2 & 2.5 & 3 & 4 & 5 & 6 & 7 & 8 \\
\midrule
Ours & \textbf{0.178} & \textbf{0.251} & \textbf{0.306} & \textbf{0.352} & \textbf{0.393} & \textbf{0.430} & \textbf{0.464} & \textbf{0.526} & \textbf{0.581} & \textbf{0.631} & \textbf{0.677} & \textbf{0.721} \\
Average & 13.061 & 13.066 & 13.067 & 13.068 & 13.068 & 13.069 & 13.070 & 13.071 & 13.073 & 13.074 & 13.076 & 13.077 \\
Median & 0.414 & 0.594 & 0.726 & 0.834 & 0.929 & 1.017 & 1.100 & 1.245 & 1.376 & 1.496 & 1.608 & 1.712 \\
Trimmed mean & 0.505 & 0.725 & 0.886 & 1.021 & 1.140 & 1.248 & 1.347 & 1.526 & 1.686 & 1.832 & 1.967 & 2.094 \\
Krum & 0.991 & 1.346 & 1.614 & 1.817 & 2.004 & 2.095 & 2.229 & 2.396 & 2.530 & 2.653 & 2.804 & 2.915 \\
Bulyan & 0.291 & 0.404 & 0.486 & 0.567 & 0.634 & 0.688 & 0.744 & 0.841 & 0.922 & 0.999 & 1.075 & 1.155 \\
Filtering & \textit{0.244} & \textit{0.343} & \textit{0.414} & \textit{0.478} & \textit{0.529} & \textit{0.576} & \textit{0.624} & \textit{0.708} & \textit{0.787} & \textit{0.845} & \textit{0.912} & \textit{0.962} \\
MoM-Filtering & \textit{0.211} & \textit{0.297} & \textit{0.363} & \textit{0.417} & \textit{0.466} & \textit{0.510} & \textit{0.548} & \textit{0.621} & \textit{0.686} & \textit{0.745} & \textit{0.801} & \textit{0.852} \\
MoM-Krum & 0.556 & 0.773 & 0.941 & 1.072 & 1.182 & 1.293 & 1.387 & 1.576 & 1.720 & 1.871 & 2.010 & 2.137 \\
ARMUL & 0.390 & 0.826 & 1.007 & 1.101 & 1.180 & 1.219 & 1.270 & 1.308 & 1.323 & 1.336 & 1.348 & 1.359 \\
History & 0.334 & 0.467 & 0.568 & 0.653 & 0.729 & 0.797 & 0.860 & 0.973 & 1.074 & 1.167 & 1.253 & 1.333 \\
Bucketing & 0.657 & 0.848 & 1.030 & 1.186 & 1.323 & 1.447 & 1.562 & 1.768 & 1.953 & 2.122 & 2.278 & 2.424 \\
Mean-reg & 3.135 & 3.114 & 3.183 & 3.187 & 3.253 & 3.262 & 3.248 & 3.120 & 3.119 & 3.083 & 3.045 & 3.084 \\
Dirty & 3.041 & 3.003 & 3.012 & 2.991 & 2.972 & 2.985 & 2.985 & 2.962 & 2.907 & 2.871 & 2.857 & 2.851 \\
RMTFL & 2.988 & 2.988 & 2.986 & 2.982 & 2.978 & 2.970 & 2.951 & 2.934 & 2.907 & 2.890 & 2.867 & 2.858 \\
RLRMTL & 3.252 & 3.258 & 3.261 & 3.248 & 3.250 & 3.265 & 3.214 & 2.948 & 2.940 & 2.943 & 2.942 & 2.941 \\
\bottomrule
\end{tabular}
\end{adjustbox}
\end{table}

\begin{table}[!ht]
\centering
\caption{Linear regression with $n=d=50$, $K=40$, $\epsilon=0.2$, and varying heterogeneity variance $\sigma^2$: average local error $|S|^{-1}\sum_{k \in S}\twonorm{\hthetak{k} - \bthetaks{k}}$.}
\label{tab:sim-variance-local}
\scriptsize
\setlength{\tabcolsep}{2.5pt}
\begin{adjustbox}{width=\textwidth}
\begin{tabular}{lcccccccccccc}
\toprule
Method$\backslash \sigma^2$ & 0 & 0.5 & 1 & 1.5 & 2 & 2.5 & 3 & 4 & 5 & 6 & 7 & 8 \\
\midrule
Ours & \textit{0.219} & \textbf{0.688} & \textbf{0.869} & \textbf{0.989} & \textbf{1.079} & \textbf{1.154} & \textbf{1.217} & \textbf{1.324} & \textbf{1.410} & \textbf{1.487} & \textbf{1.556} & \textbf{1.621} \\
Average & 13.061 & 13.082 & 13.100 & 13.118 & 13.136 & 13.154 & 13.172 & 13.209 & 13.245 & 13.281 & 13.317 & 13.353 \\
Single-task & 1.872 & 1.881 & 1.892 & 1.902 & 1.913 & 1.923 & 1.934 & 1.955 & 1.975 & 1.995 & 2.015 & 2.035 \\
Median & 0.414 & 0.902 & 1.205 & 1.443 & 1.646 & 1.828 & 1.995 & 2.289 & 2.551 & 2.789 & 3.008 & 3.212 \\
Trimmed mean & 0.505 & 0.993 & 1.306 & 1.557 & 1.772 & 1.964 & 2.139 & 2.452 & 2.729 & 2.980 & 3.212 & 3.428 \\
Krum & 0.991 & 1.508 & 1.877 & 2.165 & 2.423 & 2.590 & 2.780 & 3.071 & 3.317 & 3.543 & 3.783 & 3.985 \\
Bulyan & 0.291 & 0.792 & 1.079 & 1.309 & 1.502 & 1.671 & 1.827 & 2.101 & 2.342 & 2.562 & 2.766 & 2.957 \\
Filtering & \textit{0.244} & \textit{0.761} & \textit{1.047} & 1.270 & 1.459 & 1.625 & 1.778 & 2.048 & 2.288 & 2.501 & 2.701 & 2.883 \\
MoM-Filtering & \textbf{0.211} & \textit{0.742} & \textit{1.029} & \textit{1.251} & \textit{1.439} & \textit{1.606} & \textit{1.756} & 2.023 & 2.260 & 2.473 & 2.670 & 2.853 \\
MoM-Krum & 0.556 & 1.032 & 1.349 & 1.597 & 1.806 & 1.999 & 2.169 & 2.489 & 2.757 & 3.011 & 3.246 & 3.462 \\
ARMUL & 0.389 & 0.855 & 1.061 & \textit{1.197} & \textit{1.316} & \textit{1.397} & \textit{1.493} & \textit{1.603} & \textit{1.683} & \textit{1.758} & \textit{1.829} & \textit{1.897} \\
History & 0.334 & 0.824 & 1.116 & 1.346 & 1.543 & 1.716 & 1.874 & 2.155 & 2.404 & 2.629 & 2.836 & 3.029 \\
Bucketing & 0.657 & 1.086 & 1.408 & 1.670 & 1.896 & 2.097 & 2.281 & 2.611 & 2.903 & 3.168 & 3.412 & 3.641 \\
Mean-reg & 1.692 & 1.714 & 1.809 & 1.847 & 1.929 & 1.976 & 2.000 & 1.964 & 2.021 & 2.045 & 2.046 & 2.143 \\
Dirty & 1.703 & 1.721 & 1.760 & 1.780 & 1.813 & 1.849 & 1.869 & 1.911 & 1.940 & 1.960 & 1.991 & 2.026 \\
RMTFL & 1.631 & 1.666 & 1.695 & 1.722 & 1.749 & 1.777 & 1.802 & \textit{1.846} & \textit{1.883} & \textit{1.922} & \textit{1.953} & \textit{1.986} \\
RLRMTL & 1.103 & 1.324 & 1.506 & 1.651 & 1.796 & 1.945 & 2.065 & 2.157 & 2.271 & 2.379 & 2.475 & 2.566 \\
\bottomrule
\end{tabular}
\end{adjustbox}
\end{table}

% ------------------------------------------------------
\subsection{Real-data analysis}
For the real-data study, we consider the Human Activity Recognition (HAR) Dataset \cite{anguita2013public}, which has been used in other MTL papers \cite[e.g.,][]{duan2022adaptive,tian2022unsupervised, kim2026multi}. The data is collected from 30 volunteers when they performed six activities (walking, walking upstairs, walking downstairs, sitting, standing, and laying) wearing a smartphone. Motivated by \cite{duan2022adaptive}, we treat each subject as one task and consider a classification problem of classifying sitting against other activities. We first apply a Principal Component Analysis to reduce the dimension to 100, standardize the transformed covariates, and finally fit logistic regression classifiers with an intercept. For each task we split the subject-specific observations into training and testing sets with training proportion between $20\%$ and $60\%$. In the main text, we report the result with $20\%$ training data and defer the results of remaining training proportions to Appendix \ref{subsec: real data supp}. For a randomly selected $\epsilon$ fraction of tasks, we flip the labels and replace the feature vector $\bx$ by an affine Gaussian shift of the form $2\bx+\bm{\xi}$ with $\bm{\xi}$ generated entrywise from $N(5,1)$. Table \ref{tab:har-local-02} reports the mean clean-task local prediction error. When $\epsilon=0$, our method and Mean-reg perform similarly, but once contaminated tasks are introduced, our estimator becomes the most robust and achieves the smallest error for every nonzero contamination level shown. 

% Krum performs poorly mainly because it selects the gradient from only one task at each iteration to update the estimator, which may be ineffective in the presence of heterogeneity.

\begin{table}[!ht]
\centering
\caption{HAR local prediction error with $20\%$ training data.}
\label{tab:har-local-02}
\scriptsize
\setlength{\tabcolsep}{4pt}
\begin{adjustbox}{width=0.6\textwidth}
\begin{tabular}{lcccccc}
\toprule
Method$\backslash \epsilon$  & 0.00 & 0.05 & 0.10 & 0.15 & 0.20 & 0.25 \\
\midrule
Ours & \textit{0.021} & \textbf{0.021} & \textbf{0.022} & \textbf{0.022} & \textbf{0.023} & \textbf{0.023} \\
Average & 0.037 & 0.048 & 0.079 & 0.099 & 0.134 & 0.149 \\
Single-task & 0.058 & 0.058 & 0.058 & 0.058 & 0.058 & 0.059 \\
Median & 0.041 & 0.043 & 0.045 & \textit{0.046} & \textit{0.048} & \textit{0.050} \\
Trimmed mean & 0.037 & 0.039 & \textit{0.041} & \textit{0.043} & \textit{0.046} & \textit{0.047} \\
Krum & 0.306 & 0.306 & 0.305 & 0.305 & 0.303 & 0.303 \\
Bulyan & 0.037 & 0.042 & 0.053 & 0.057 & 0.061 & 0.062 \\
Filtering & 0.037 & \textit{0.038} & \textit{0.043} & 0.047 & 0.056 & 0.062 \\
MoM-Filtering & 0.037 & \textit{0.038} & \textit{0.043} & 0.047 & 0.056 & 0.062 \\
MoM-Krum & 0.191 & 0.191 & 0.193 & 0.193 & 0.202 & 0.206 \\
ARMUL & \textit{0.031} & 0.070 & 0.154 & 0.165 & 0.171 & 0.172 \\
History & 0.037 & 0.041 & 0.048 & 0.053 & 0.067 & 0.077 \\
Bucketing & 0.037 & 0.048 & 0.067 & 0.081 & 0.115 & 0.132 \\
Mean-reg & \textbf{0.019} & \textit{0.033} & 0.057 & 0.058 & 0.058 & 0.058 \\
Dirty & 0.046 & 0.056 & 0.055 & 0.056 & 0.055 & 0.055 \\
RMTFL & 0.052 & 0.056 & 0.056 & 0.056 & 0.056 & 0.056 \\
RLRMTL & 0.172 & 0.171 & 0.171 & 0.172 & 0.172 & 0.172 \\
\bottomrule
\end{tabular}
\end{adjustbox}
\end{table}

\section{Discussion}\label{sec: discussion}

This paper studies robust multi-task learning under the simultaneous presence of task heterogeneity and adversarial task-level contamination. In our setup, an $\epsilon$ fraction of tasks may be arbitrarily contaminated, while the remaining clean tasks are allowed to differ through the heterogeneity measures $h$ and $\hk{k}$. 
%Our goal is to estimate both the global minimizer $\bthetas$ of the average risk and the clean task-specific minimizers $\bthetaks{k}$.

Our first message is negative. In the Gaussian mean model, Section \ref{sec: negative results} shows that several widely used paradigms, including adaptive and robust regularization around a shared center, global matrix regularization, decomposition-based regularization, and score-based outlier-task detection, can all suffer a worst-case contamination error of order $\epsilon\sqrt{d/n}$. Thus, the extra $\sqrt{d}$ factor observed in earlier robust transfer and multi-task learning methods is not merely an artifact of a particular penalty or tuning choice, but reflects a broader limitation of these approaches in high dimension.

Our second message is positive. In the general contaminated multi-task ERM framework, Section \ref{sec: method and theory} establishes minimax lower bounds for estimating both $\bthetas$ and $\bthetaks{k}$, showing that the fundamental rates are
\[
\tildeOmega\bigg(\sqrt{\frac{d}{nK}} + \frac{\epsilon}{\sqrt{n}} + \sqrt{\epsilon}h\bigg)
\]
for the global parameter, and
\[
\tildeOmega\bigg(\min\bigg\{\sqrt{\frac{d}{nK}} + \frac{\epsilon}{\sqrt{n}} + \sqrt{\epsilon}h + \hk{k},\ \sqrt{\frac{d}{n}}\bigg\}\bigg)
\]
for the local parameter of task $k \in S$. Motivated by these lower bounds, we propose a filtering-based robust multi-task gradient descent method that combines robust gradient aggregation, filtering, and a simple covariance estimator constructed from single-task empirical covariances. Under local strong convexity, smoothness, and sub-Gaussian gradient assumptions, we prove high-probability upper bounds that match the minimax lower bounds up to logarithmic factors in a broad regime, thereby avoiding the dimension-dependent contamination barrier that characterizes the methods in Section \ref{sec: negative results}.

The numerical results in Section \ref{sec: experiment} support this picture. In the linear regression simulation with varying heterogeneity, our method remains highly competitive when tasks are nearly homogeneous and becomes increasingly advantageous as heterogeneity grows. In the HAR real-data analysis, it is also the most robust method once contaminated tasks are present.

There are several directions for future work. First, our gradient-estimation results suggest that filtering ideas may extend beyond the locally strongly convex setting studied here, including to broader classes of loss functions. Second, the multi-task gradient descent algorithm we analyze can be naturally extended to other contexts, such as federated differentially private learning \cite[e.g.][]{li2024federated,auddy2025minimax,hung2025optimal}, where gradients can be privatized before aggregation across tasks. Given the well-known connections between these two areas \cite[e.g.][]{liu2021robust,acharya2021robust,cheu2021manipulation,li2022robustness}, it would be interesting to understand how contamination interacts with privacy constraints and what algorithms are optimal under these constraints. More broadly, our results suggest that achieving robustness and adaptivity simultaneously may require moving beyond standard regularization paradigms toward more explicitly contamination-aware procedures.

% reference
\bibliography{reference}
\bibliographystyle{alpha}

% appendix
\newpage
\appendix

\renewcommand{\contentsname}{Appendices}
\tableofcontents
\addtocontents{toc}{\protect\setcounter{tocdepth}{5}}

\section{Technical details of Section \ref{sec: negative results}}\label{sec: negative results supp}

First, we summarize some useful lemmas here. Denote $\rho(\bx) = \min_{\bm{z}}\{\frac{1}{2}\twonorm{\bx-\bm{z}}^2 + p(\twonorm{\bm{z}})\}$. Lemma \ref{lem: prox} and \ref{lem: dev and hessian} provide explicit expressions of $\proxpl(\bx)$, $\rho(\bx)$, $\nabla \rho(\bx)$, and $\nabla \rho(\bx)$, which are very helpful for proving the lower bounds in Section \ref{subsubsec: adaptive regularization}. Lemma \ref{lem: minimax lower bound mean est} is a minimax lower bound in the classical single-task learning scenario, which can simplify the arguments in the proof of lower bounds in Section \ref{subsubsec: adaptive regularization}. Lemma \ref{lem: chi sq concentration} presents concentration bounds for the Chi-square distribution.

\begin{lemma}\label{lem: prox}
	The following conclusions hold:
	\begin{enumerate}[(i)]
		\item $\proxpl(\bx) = \bm{0}$ when $\twonorm{\bm{x}} < L$, and $\proxpl(\bx) \neq \bm{0}$ when $\twonorm{\bm{x}} > L$;
		\item When $\twonorm{\bm{x}} = L$, $\proxpl(\bx)$ may not be unique and $\bm{0} \in \proxpl(\bx)$;
		\item  $
				\proxpl(\bx) = \begin{cases}
				\bx - p'(\twonorm{\proxpl(\bx)})\cdot \frac{\proxpl(\bx)}{\twonorm{\proxpl(\bx)}}, \quad &\textup{if } \twonorm{\bx} > L, \\
				\bm{0}, \quad &\textup{if } \twonorm{\bm{x}} < L.
			\end{cases}$
	\end{enumerate}
\end{lemma}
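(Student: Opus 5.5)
We study the scalar problem $\min_{\bz}\phi_{\bx}(\bz)$ with $\phi_{\bx}(\bz)=\frac12\twonorm{\bz-\bx}^2+p(\twonorm{\bz})$ by first reducing it to a one-dimensional problem along the ray through $\bx$. For any $\bz$, write $\bz=r\bu$ with $r\ge 0$ and $\twonorm{\bu}=1$. Then $\phi_{\bx}(\bz)=\frac12 r^2-r\bu^\top\bx+\frac12\twonorm{\bx}^2+p(r)$. For fixed $r>0$ this is minimized by $\bu=\bx/\twonorm{\bx}$, strictly when $\bx\neq\bm{0}$. Hence every minimizer has the form $r\bx/\twonorm{\bx}$, where $r$ minimizes $\psi(r)=\frac12(r-\twonorm{\bx})^2+p(r)$ on $[0,\infty)$. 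Existence of a minimizer follows from continuity of $p$ (Assumption \ref{asmp: penalty}(ii)) together with coercivity, since $p\ge 0$.

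\textbf{Parts (i) and (ii).} Compare $\psi(r)$ with $\psi(0)=\frac12\twonorm{\bx}^2$, using $p(0)=0$. We have
\[
\psi(r)-\psi(0)=r\Big(\tfrac12 r+\tfrac{p(r)}{r}-\twonorm{\bx}\Big), \qquad r>0.
\]
If $\twonorm{\bx}<L=\inf_{r>0}\{\frac12 r+p(r)/r\}$, the bracket is strictly positive for every $r>0$. So $r=0$ is the unique minimizer and $\proxpl(\bx)=\bm{0}$.

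If $\twonorm{\bx}>L$, the definition of the infimum gives some $r>0$ with $\frac12 r+p(r)/r<\twonorm{\bx}$. Then $\psi(r)<\psi(0)$, so $\bm{0}$ is not a minimizer and $\proxpl(\bx)\neq\bm{0}$. The case $L=+\infty$ makes the second statement vacuous.

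If $\twonorm{\bx}=L$, the bracket is nonnegative for all $r>0$, so $\psi(r)\ge\psi(0)$ and $\bm{0}\in\proxpl(\bx)$. Non-uniqueness can occur exactly when the infimum defining $L$ is attained at some $r>0$; that $r$ then ties with $0$, which gives (ii).

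\textbf{Part (iii).} When $\twonorm{\bx}<L$ the formula is just (i). When $\twonorm{\bx}>L$, any minimizer has $r^\star=\twonorm{\proxpl(\bx)}>0$. The point $r^\star$ is an interior point of $(0,\infty)$ where $p$ is differentiable (Assumption \ref{asmp: penalty}(iii)), so the first-order condition gives
\[
r^\star-\twonorm{\bx}+p'(r^\star)=0 .
\]
Equivalently, one can differentiate $\phi_{\bx}$ directly at $\bz^\star\neq\bm{0}$ to get $\bz^\star-\bx+p'(\twonorm{\bz^\star})\,\bz^\star/\twonorm{\bz^\star}=\bm{0}$. Rearranging yields $\proxpl(\bx)=\bx-p'(\twonorm{\proxpl(\bx)})\,\proxpl(\bx)/\twonorm{\proxpl(\bx)}$.

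The only delicate step is the directional reduction and existence, in particular handling $\bx=\bm{0}$. That case falls under (i) whenever $L>0$. If $L=0$, the case $\twonorm{\bx}>L$ only concerns $\bx\neq\bm{0}$, so it causes no difficulty. The remaining arguments are elementary.
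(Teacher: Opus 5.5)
Your proposal is correct and follows essentially the same route as the paper: compare the objective at a candidate point with its value at $\bm{0}$ using $r\bigl(\tfrac12 r+p(r)/r-\twonorm{\bx}\bigr)$ and the definition of $L$, then obtain (iii) from the first-order condition at a nonzero minimizer. Your preliminary radial reduction and existence remark are harmless additions; the paper instead bounds $\bx^\top\bz\le\twonorm{\bx}\twonorm{\bz}$ directly, which avoids treating $\bx=\bm{0}$ separately.
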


\begin{proof}[Proof of Lemma \ref{lem: prox}]
With a fixed $\bx \in \mathbb{R}^d$, for any $\bz \in \mathbb{R}^d$, taking the difference between the objective value at $\bz$ and its value at $\bm{0}$ gives
\begin{equation}
\frac{1}{2}\twonorm{\bz-\bx}^2 + p(\twonorm{\bz}) - \frac{1}{2}\twonorm{\bx}^2  = \frac{1}{2}\twonorm{\bz}^2 + p(\twonorm{\bz}) - \bx^\top\bz.
\end{equation}
When $\bz \neq \bm{0}$ and $\twonorm{\bx} \leq L$, the definition of $L$ implies
\begin{align*}
\frac{1}{2}\twonorm{\bz}^2 + p(\twonorm{\bz})
&= \twonorm{\bz}\left\{\frac{1}{2}\twonorm{\bz} + \frac{p(\twonorm{\bz})}{\twonorm{\bz}}\right\}\\
&\geq L\twonorm{\bz} \\
&\geq \twonorm{\bx}\twonorm{\bz} \\
&\geq \bx^\top\bz .
\end{align*}
Together with the preceding display, this shows that $\bm{0} \in \proxpl(\bx)$ whenever $\twonorm{\bx} \leq L$. When $\twonorm{\bx}=L$, equality may also hold at nonzero values of $\bz$, so uniqueness is not guaranteed. If $\twonorm{\bx}<L$, then the above inequality is strict for every $\bz\neq \bm{0}$, and hence $\proxpl(\bx)=\bm{0}$ uniquely.

Now suppose $\twonorm{\bx}>L$. By the definition of the infimum in $L$, there exists $r>0$ such that
\[
\frac{1}{2}r+\frac{p(r)}{r}<\twonorm{\bx}.
\]
Taking $\bz=r\bx/\twonorm{\bx}$ in the same comparison gives
\[
\frac{1}{2}\twonorm{\bz}^2+p(\twonorm{\bz})
=r\left\{\frac{1}{2}r+\frac{p(r)}{r}\right\}
<r\twonorm{\bx}=\bx^\top\bz .
\]
Therefore the objective value at this $\bz$ is smaller than the objective value at $\bm{0}$, so $\bm{0}\notin \proxpl(\bx)$. Thus every element of $\proxpl(\bx)$ is nonzero when $\twonorm{\bx}>L$. Since $p$ is differentiable on $(0,+\infty)$, the first-order condition at any minimizer gives
\[
\proxpl(\bx)-\bx+p'(\twonorm{\proxpl(\bx)})\frac{\proxpl(\bx)}{\twonorm{\proxpl(\bx)}}=\bm{0},
\]
which is equivalent to the displayed formula in part (iii). This proves the lemma.
\end{proof}

\begin{lemma}\label{lem: dev and hessian}
	$\rho(\bx)$ and $\nabla \rho(\bx)$ are differentiable at $\bx$ where $\twonorm{\bx} \neq L$ and $p''(\twonorm{\proxpl(\bx)})$ exists, and
	\begin{enumerate}[(i)]
		\item $\nabla \rho(\bx) = \bx - \proxpl(\bx) = \begin{cases}
		p'(\twonorm{\proxpl(\bx)})\cdot \frac{\proxpl(\bx)}{\twonorm{\proxpl(\bx)}}, \quad &\textup{if } \twonorm{\bx} > L, \\
		\bx, \quad &\textup{if } \twonorm{\bx} < L.
	\end{cases}$;
		\item $\nabla^2 \rho(\bx) = 
			\frac{p'(\twonorm{\proxpl(\bx)})}{p'(\twonorm{\proxpl(\bx)}) + \twonorm{\proxpl(\bx)}}\bm{I}_d + \frac{\twonorm{\proxpl(\bx)}\cdot p''(\twonorm{\proxpl(\bx)}) - p'(\twonorm{\proxpl(\bx)})}{(p'(\twonorm{\proxpl(\bx)}) + \twonorm{\proxpl(\bx)})[1+p''(\twonorm{\proxpl(\bx)})]}\cdot \frac{\proxpl(\bx)(\proxpl(\bx))^\top}{\twonorm{\proxpl(\bx)}^2}$, if $\twonorm{\bx} > L$; and $\nabla^2 \rho(\bx) = \bm{I}_d$, if $\twonorm{\bx} < L$.
	\end{enumerate}
\end{lemma}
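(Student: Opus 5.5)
We need to prove Lemma \ref{lem: dev and hessian}: the gradient and Hessian of the Moreau envelope $\rho$.

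\textbf{Gradient.} Write $\rho(\bx) = \min_{\bz}\{\frac12\twonorm{\bx-\bz}^2 + p(\twonorm{\bz})\}$ and use a Danskin/envelope-type argument. For $\twonorm{\bx}<L$, Lemma \ref{lem: prox}(i) gives $\proxpl(\bx)=\bm{0}$ on a neighborhood of $\bx$. Hence $\rho(\bx)=\frac12\twonorm{\bx}^2$ locally, so $\nabla\rho(\bx)=\bx$ and $\nabla^2\rho(\bx)=\bm{I}_d$.

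For $\twonorm{\bx}>L$, first reduce to a radial problem. Any minimizer has the form $\bz = r\bx/\twonorm{\bx}$ with $r>0$: for fixed $\twonorm{\bz}$, the quadratic term is minimized by aligning $\bz$ with $\bx$. So $r$ solves the scalar equation $r + p'(r) = \twonorm{\bx}$, which is Lemma \ref{lem: prox}(iii) written radially.

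Next, get the envelope bounds. For every $\bx'$,
\[
\rho(\bx') \le \tfrac12\twonorm{\bx'-\proxpl(\bx)}^2 + p(\twonorm{\proxpl(\bx)}),
\]
with equality at $\bx'=\bx$. Also $\rho(\bx') \ge \rho(\bx) + \<\bx'-\bx,\, \bx - \proxpl(\bx')\> - \frac12\twonorm{\bx'-\bx}^2$, by the symmetric comparison using $\proxpl(\bx')$.

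If $\proxpl$ is continuous at $\bx$, squeezing these bounds gives $\nabla\rho(\bx) = \bx - \proxpl(\bx)$. Substituting Lemma \ref{lem: prox}(iii) yields the stated formula $p'(\twonorm{\proxpl(\bx)})\,\proxpl(\bx)/\twonorm{\proxpl(\bx)}$.

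\textbf{Hessian.} Since $\nabla\rho = \bx - \proxpl(\bx)$, it suffices to differentiate $\proxpl(\bx) = r(\twonorm{\bx})\,\bx/\twonorm{\bx}$.

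At points where $p''(r)$ exists and $1+p''(r)\neq 0$, implicit differentiation of $r+p'(r)=s$, with $s=\twonorm{\bx}$, gives $r'(s)=1/(1+p''(r))$. Assumption \ref{asmp: penalty}(v) gives $p''\ge-\tau>-1$, so $1+p''(r)>0$. The same implicit-function step also yields local uniqueness and continuity of $r$, which justifies the continuity used in the gradient step.

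Then apply the chain rule:
\[
\nabla \proxpl(\bx) = \frac{r}{s}\Big(\bm{I}_d - \frac{\bx\bx^\top}{s^2}\Big) + r'(s)\frac{\bx\bx^\top}{s^2}.
\]
Hence
\[
\nabla^2\rho = \Big(1-\frac{r}{s}\Big)\bm{I}_d + \Big(\frac{r}{s} - \frac{1}{1+p''(r)}\Big)\frac{\bx\bx^\top}{s^2}.
\]

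Finally substitute $s = r + p'(r)$ and $\bx/s = \proxpl(\bx)/r$. This gives $1 - r/s = p'/(p'+r)$. It also gives
\[
\frac{r}{s}-\frac{1}{1+p''} = \frac{r(1+p'')-(r+p')}{(r+p')(1+p'')} = \frac{rp''-p'}{(p'+r)(1+p'')},
\]
which matches the claimed expression with $\proxpl(\bx)\proxpl(\bx)^\top/\twonorm{\proxpl(\bx)}^2$.

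\textbf{Main obstacle.} The delicate step is the justification that $\proxpl$ is single-valued and continuous near $\bx$ when $\twonorm{\bx}>L$, i.e.\ that the scalar equation $r+p'(r)=s$ selects a unique global minimizer branch locally. I would handle it by noting that $g(r)=r+p'(r)$ is locally strictly increasing wherever $p''>-1$, via Assumption \ref{asmp: penalty}(v). Combined with the fact that minimizers vary upper-semicontinuously (Berge's maximum theorem), this gives a continuous selection and hence differentiability.
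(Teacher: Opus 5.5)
Apart from the step you flag as the main obstacle, your argument is sound and takes a different route from the paper. The paper applies the vector implicit function theorem to $G(\bz,\by)=\bz-\by+p'(\twonorm{\bz})\bz/\twonorm{\bz}$ at $(\proxpl(\bx),\bx)$. It then differentiates $\rho(\bx)=\frac12\twonorm{\bx-\proxpl(\bx)}^2+p(\twonorm{\proxpl(\bx)})$ by the chain rule; the first-order condition cancels the Jacobian terms and leaves $\nabla\rho=\bx-\proxpl(\bx)$. Finally it obtains $\nabla^2\rho=\bm{I}_d-\partial\proxpl/\partial\bx$ by inverting that Jacobian with Sherman--Morrison. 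You instead get the gradient from the envelope squeeze, which needs only continuity of $\proxpl$ at $\bx$, and the Hessian from one-dimensional differentiation of the radial map $r(s)$; your algebra reproduces the stated formula. Your route avoids the $d\times d$ inversion. Done pointwise, it also uses only the existence of $p''$ at the single point $r_0=\twonorm{\proxpl(\bx)}$. The implicit function theorem, by contrast, formally needs $G$ to be $C^1$ near the point, which Assumption~\ref{asmp: penalty}(iii) does not provide.

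The flagged step does not go through as proposed. Assumption~\ref{asmp: penalty}(v) gives $p''\ge-\tau$ only at prox radii such as $r_0$, and $1+p''(r_0)>0$ at one point does not make $g(r)=r+p'(r)$ monotone near $r_0$. More importantly, local information at $r_0$ cannot exclude a second global minimizer of $F_s(r)=\frac12(s-r)^2+p(r)$ far away. Berge's theorem gives continuity of the minimizer only where the argmin is already a singleton. In fact uniqueness does not follow from Assumption~\ref{asmp: penalty} at all. Take $p'(r)=r\phi(r)$ with $\phi\equiv1$ on $(0,1]$, $\phi$ decreasing linearly to $0$ on $[1,1.1]$, and $\phi\equiv0$ afterwards. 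Then (i)--(vi) hold (with $L=0$, $\tau=0$, $c_0=1$, $c_1=2$). Yet at $s^*=2\sqrt{p(1.1)}\approx1.49$ both $r=s^*/2$ and $r=s^*$ minimize $F_{s^*}$, $p''$ exists at both, and $\rho$ has a kink on $\{\twonorm{\bx}=s^*\}$.

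So single-valuedness of $\proxpl(\bx)$ has to be part of the hypothesis. The paper assumes it tacitly, by treating $\proxpl$ as a function and identifying the implicit-function branch with it. Granting it, Berge gives $\proxpl(\bx')\to\proxpl(\bx)$ and your squeeze yields the gradient. Alternatively, $\frac12\twonorm{\bx}^2-\rho(\bx)=\sup_{\bz}\{\langle\bx,\bz\rangle-\frac12\twonorm{\bz}^2-p(\twonorm{\bz})\}$ is convex, so $\rho$ is differentiable at $\bx$ exactly when $\proxpl(\bx)$ is a singleton. For the Hessian, replace the implicit-function step by a pointwise argument. Since $g(r(s))=s$, $r(s)\to r_0$, and $g$ is differentiable at $r_0$ with $g'(r_0)=1+p''(r_0)>0$, you get $r'(s_0)=1/(1+p''(r_0))$ directly, without any local monotonicity of $g$.
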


\begin{proof}[Proof of Lemma \ref{lem: dev and hessian}]
When $\twonorm{\bx}<L$, Lemma \ref{lem: prox} implies that $\proxpl(\bx)=\bm{0}$. Hence $\rho(\bx)=\frac{1}{2}\twonorm{\bx}^2$, so $\nabla\rho(\bx)=\bx$ and $\nabla^2\rho(\bx)=\bm{I}_d$.

It remains to consider the case $\twonorm{\bx}>L$. Let $r=\twonorm{\proxpl(\bx)}$. By Lemma \ref{lem: prox}, $r>0$, and the first-order condition is
\[
\proxpl(\bx)-\bx+p'(r)\frac{\proxpl(\bx)}{r}=\bm{0}.
\]
For $\bz\neq\bm{0}$, consider
\[
G(\bz,\by)=\bz-\by+p'(\twonorm{\bz})\frac{\bz}{\twonorm{\bz}} .
\]
At $(\bz,\by)=(\proxpl(\bx),\bx)$, we have $G(\bz,\by)=\bm{0}$, and
\begin{align*}
\frac{\partial G}{\partial \bz}\bigg|_{(\proxpl(\bx),\bx)}
&=\bm{I}_d+p''(r)\frac{\proxpl(\bx)\proxpl(\bx)^\top}{r^2}
+p'(r)\left\{\frac{1}{r}\bm{I}_d-\frac{\proxpl(\bx)\proxpl(\bx)^\top}{r^3}\right\} \\
&=\left\{1+\frac{p'(r)}{r}\right\}\bm{I}_d+\big\{r p''(r)-p'(r)\big\}\frac{\proxpl(\bx)\proxpl(\bx)^\top}{r^3}.
\end{align*}
The matrix in the last display has eigenvalues $1+p'(r)/r$ on the subspace orthogonal to $\proxpl(\bx)$ and eigenvalue $1+p''(r)$ in the direction of $\proxpl(\bx)$. The first eigenvalue is positive because $p$ is non-decreasing, and the second is positive by Assumption \ref{asmp: penalty}.(\rom{5}). Therefore the matrix is invertible. Since $\partial G/\partial\by=-\bm{I}_d$, the implicit function theorem gives that $\proxpl(\bx)$ is differentiable at $\bx$, with
\[
\frac{\partial \proxpl(\bx)}{\partial \bx}
=\left[\left\{1+\frac{p'(r)}{r}\right\}\bm{I}_d+\big\{r p''(r)-p'(r)\big\}\frac{\proxpl(\bx)\proxpl(\bx)^\top}{r^3}\right]^{-1}.
\]

Since $\rho(\bx) = \frac{1}{2}\twonorm{\bx - \proxpl(\bx)}^2 + p(\twonorm{\proxpl(\bx)})$, we have
\begin{align}
	\nabla \rho(\bx) &= (\bm{I} - \frac{\partial \proxpl(\bx)}{\partial \bx})(\bx - \proxpl(\bx)) + p'(\proxpl(\bx))\frac{\partial \proxpl(\bx)}{\partial \bx}\frac{\proxpl(\bx)}{\twonorm{\proxpl(\bx)}} \\
	&=\bx - \proxpl(\bx) + \frac{\partial \proxpl(\bx)}{\partial \bx}\cdot \left\{\proxpl(\bx) -\bx + p'(\twonorm{\proxpl(\bx)})\frac{\proxpl(\bx)}{\twonorm{\proxpl(\bx)}}\right\} \\
	&=\bx - \proxpl(\bx).
\end{align}
This implies
\begin{align}
	\nabla^2\rho(\bx) &= \bm{I}_d - \frac{\partial \proxpl(\bx)}{\partial \bx} \\
	&= \bm{I}_d - \left[\left\{1+\frac{p'(r)}{r}\right\}\bm{I}_d+\big\{r p''(r)-p'(r)\big\}\frac{\proxpl(\bx)\proxpl(\bx)^\top}{r^3}\right]^{-1} \\
	&=\frac{p'(r)}{p'(r)+r}\bm{I}_d + \frac{r p''(r)-p'(r)}{(p'(r)+r)[1+p''(r)]}\cdot \frac{\proxpl(\bx)\proxpl(\bx)^\top}{r^2},
\end{align}
where the last equality follows from the Sherman-Morrison formula. This completes the proof of the lemma.
\end{proof}

\begin{lemma}[\cite{duchi2013distance}, Corollary 5]\label{lem: minimax lower bound mean est}
	Consider a Gaussian mean estimation problem, where the data $\{\bx_i\}_{i=1}^n \overset{\textup{i.i.d.}}{\sim} N(\bthetas, \sigma^2\bm{I}_d)$. Then for any estimator $\htheta$, there exists $\btheta^* \in \mathbb{R}^d$ such that with probability at least $1/4$, $\twonorm{\htheta - \btheta^*} \geq \frac{\sigma}{4}\sqrt{\frac{d}{n}}$.
\end{lemma}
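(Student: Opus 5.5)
The approach is a Bayesian lower bound with a Gaussian prior, combined with Anderson's lemma. It avoids any packing or Fano argument and handles every dimension $d\ge 1$ at once. The key observation is that a minimax probability can be bounded below by a Bayes probability:
\[
\sup_{\bthetas}\tP_{\bthetas}\Big(\twonorm{\htheta-\bthetas}\ge s\Big)\;\ge\;\int \tP_{\btheta}\Big(\twonorm{\htheta-\btheta}\ge s\Big)\,\pi(d\btheta),
\]
for any prior $\pi$. I will take $s=\frac{\sigma}{4}\sqrt{d/n}$ and $\pi=N(\bm{0},\tau^2\bm{I}_d)$, with $\tau$ to be sent to infinity at the end.

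First I compute the posterior. By sufficiency it depends on the data only through $\barx$, and it is Gaussian:
\[
\btheta\mid \barx\sim N(\bm{m},v\bm{I}_d),\qquad v=\Big(\frac{n}{\sigma^2}+\frac{1}{\tau^2}\Big)^{-1},\qquad \bm{m}=\frac{nv}{\sigma^2}\barx.
\]
By Fubini, the Bayes probability above equals the expectation over $\barx$ of the posterior probability $\tP(\twonorm{\btheta-\htheta}\ge s\mid\barx)$. Because $\htheta$ is a measurable function of the data, it is fixed under the posterior. If the estimator is randomized, I condition on the extra randomness as well.

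Next I apply Anderson's lemma. The posterior density is symmetric and unimodal about $\bm{m}$, and the ball of radius $s$ is convex and symmetric. Hence, for every fixed point $\bm{a}$,
\[
\tP\big(\twonorm{\btheta-\bm{a}}<s\mid\barx\big)\le\tP\big(\twonorm{\btheta-\bm{m}}<s\mid\barx\big)=\tP\big(v\,\chi^2_d<s^2\big).
\]
This bound does not depend on $\barx$. Therefore every estimator has Bayes probability of error at least $\tP(\chi^2_d\ge s^2/v)$.

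Now I let $\tau\to\infty$, so that $v\uparrow\sigma^2/n$. Then $s^2/v\to d/16$. It remains to check the uniform chi-square bound $\tP(\chi^2_d<d/16)\le 3/4$ for all $d\ge1$.
\begin{itemize}
\item For $d=1$, $\tP(|Z|<1/4)\approx0.197$.
\item For general $d$, the Chernoff lower-tail bound gives $\tP(\chi^2_d\le td)\le (t e^{1-t})^{d/2}$ with $t=1/16$. Here $te^{1-t}\approx0.16$, so the bound is at most $0.4$ for every $d\ge1$. Lemma \ref{lem: chi sq concentration} could be invoked instead.
\end{itemize}
The limit must be taken with a little care. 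I take $\tau$ finite but large enough that $s^2/v\le d/8$. The Chernoff bound with $t=1/8$ gives $te^{1-t}\approx0.30$, so the probability is still at most about $0.55<3/4$. This shows the Bayes probability of error is at least $1/4$, and hence some $\bthetas$ attains it.

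The main obstacle I anticipate is rigor in the Anderson/posterior step. I need to verify that conditioning on the data and applying Anderson's lemma pointwise is legitimate for arbitrary measurable, possibly randomized, estimators. I also need the chi-square constant to be uniform in $d$, including $d=1$; this is exactly why the Bayesian route is preferable to Fano's method, whose constants break down when $d$ is small. If that step proves awkward, the fallback is to cite the distance-based Fano inequality of \cite{duchi2013distance} directly, as the statement does.
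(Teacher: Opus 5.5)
Your proposal is correct, and it takes a genuinely different route from the paper. The paper gives no proof of this lemma; it simply cites Corollary 5 of \cite{duchi2013distance}, a Fano-type information-theoretic lower bound, which is your stated fallback.

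Your Gaussian-prior/Anderson argument is self-contained and sound. One small adjustment: condition on the whole sample rather than on $\barx$. The posterior $N(\bm{m},v\bm{I}_d)$ depends only on $\barx$, but $\htheta$ may depend on the full data. This is harmless because your bound is uniform in the data. Anderson's lemma then reduces everything to the single number $\tP(\chi^2_d<s^2/v)$. The Chernoff bound $(te^{1-t})^{d/2}$ with $t=1/8$ caps this at about $0.55$ for every $d\ge 1$. So the Bayes error exceeds $0.45>1/4$, and hence some $\bthetas$ attains probability at least $1/4$.

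What your route buys is exactness and uniformity in $d$. It needs no packing, no mutual-information bound and no lower threshold on $d$, and it gives a better probability than $1/4$. In effect it exploits that a flat Gaussian prior is asymptotically least favorable for this problem.

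What the Fano route buys is generality. It applies to non-conjugate families, non-Gaussian or non-isotropic noise, and other loss metrics, where no closed-form symmetric unimodal posterior exists. Your argument relies entirely on Gaussian conjugacy.

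One small correction to an aside: Lemma \ref{lem: chi sq concentration} cannot replace the Chernoff step at $d=1$. The Laurent--Massart lower tail only yields $\tP(\chi^2_d\le d/8)\le e^{-49d/256}$, which is about $0.83>3/4$ when $d=1$. Keep the Chernoff bound, or the direct normal computation, for that case.
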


\begin{lemma}[Lemma 1 in \cite{laurent2000adaptive}]\label{lem: chi sq concentration}
	For $X \sim \chi_d^2$, we have
	\begin{equation}
		\tP(X -d \geq 2\sqrt{dx} + 2x) \leq e^{-x}, \quad \tP(X - d \leq -2\sqrt{dx}) \leq e^{-x}.
	\end{equation}
\end{lemma}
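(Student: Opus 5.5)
The plan is a Cramér--Chernoff argument built on explicit bounds for the moment generating function of $X=\sum_{j=1}^d g_j^2$, where $g_j \overset{\textup{i.i.d.}}{\sim} N(0,1)$. Since $\tE e^{\lambda g_j^2}=(1-2\lambda)^{-1/2}$ for $\lambda<1/2$, the centered log-MGF is
\[
\psi(\lambda)\coloneqq\log \tE e^{\lambda(X-d)} = d\Big(-\tfrac12\log(1-2\lambda)-\lambda\Big), \quad \lambda<\tfrac12 .
\]
Each tail then follows from bounding $\psi$ by a simple function and optimizing Markov's inequality $\tP(X-d\geq t)\leq e^{-\lambda t+\psi(\lambda)}$ (and its mirror image for the lower tail).

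\emph{Upper tail.} Expand $-\frac12\log(1-u)-\frac{u}{2}=\sum_{m\geq 2}\frac{u^m}{2m}\leq \frac{u^2}{4(1-u)}$ with $u=2\lambda\in(0,1)$. This gives $\psi(\lambda)\leq \frac{d\lambda^2}{1-2\lambda}$, so $X-d$ is sub-gamma with variance factor $v=2d$ and scale $c=2$. The standard sub-gamma computation is to minimize $-\lambda t+\frac{v\lambda^2}{2(1-c\lambda)}$ over $\lambda\in(0,1/c)$. The optimal value is $-\frac{v}{c^2}h_1\!\big(\frac{ct}{v}\big)$ with $h_1(u)=1+u-\sqrt{1+2u}$, and its inverse yields $\tP\big(X-d\geq \sqrt{2vx}+cx\big)\leq e^{-x}$. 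Substituting $v=2d$ and $c=2$ gives exactly $2\sqrt{dx}+2x$. Alternatively, one can verify directly that the choice $\lambda=\frac{\sqrt{x}}{2(\sqrt{x}+\sqrt{d})}\cdot\frac{1}{1}$ (up to the standard tuning) makes the exponent at most $-x$ when $t=2\sqrt{dx}+2x$.

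\emph{Lower tail.} For $\lambda>0$,
\[
\log\tE e^{-\lambda(X-d)}=d\Big(\lambda-\tfrac12\log(1+2\lambda)\Big)\leq d\lambda^2,
\]
using $\log(1+u)\geq u-u^2/2$ for $u\geq0$. Hence $\tP(X-d\leq -t)\leq \exp(-\lambda t+d\lambda^2)$. Optimizing at $\lambda=t/(2d)$ gives $\exp\big(-t^2/(4d)\big)$, and setting $t=2\sqrt{dx}$ yields $e^{-x}$.

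The only mildly delicate step is the inversion of the sub-gamma Chernoff bound in the upper tail. I would do it by plugging in $t=\sqrt{2vx}+cx$ and checking the elementary inequality $h_1\big(ct/v\big)\geq c^2x/v$, which reduces to $\sqrt{1+2u}\leq 1+\sqrt{2\cdot c^2x/v}\cdot(\ldots)$, a direct algebraic identity after squaring. Everything else is routine.
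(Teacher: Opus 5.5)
Your proof is correct and is essentially the Cram\'er--Chernoff argument of \cite{laurent2000adaptive}, which the paper cites without reproducing: the log-Laplace bound $d\lambda^2/(1-2\lambda)$ with the sub-gamma inversion for the upper tail, and the bound $d\lambda^2$ for the lower tail. One slip is in your side remark. The choice $\lambda=\sqrt{x}/(2(\sqrt{x}+\sqrt{d}))$ only gives exponent $-x+\sqrt{d}\,x/(4(\sqrt{x}+\sqrt{d}))>-x$. The correct explicit choice is $\lambda=\sqrt{x}/(\sqrt{d}+2\sqrt{x})$, which with $t=2\sqrt{dx}+2x$ gives $-\lambda t+d\lambda^2/(1-2\lambda)=-x$ exactly. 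This is the identity $\sqrt{1+2u}=1+c\sqrt{2x/v}$ that your garbled final step should state.
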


\subsection{Proof of Theorem \ref{thm: finite sample lower bound}}
It suffices to study the single-task problem where we have observations $\{\bx_i\}_{i=1}^n \overset{\textup{i.i.d.}}{\sim} N(\bthetas, \sigma^2\bm{I}_d)$ and finally replace $\sigma$ by $1/\sqrt{n}$. In the remaining proof, without special notice, we consider observations $\{\bx_i\}_{i=1}^n \overset{i.i.d.}{\sim} N(\bthetas, \sigma^2\bm{I}_d)$, an unknown set $S$, and a contamination mechanism $M$ such that $S \in \mathcal{S} = \{S \subseteq [n]: |S| \geq n(1-\epsilon)\}$, and $M \in \mathcal{M}_S = \{M: \mathcal{Z}^{\otimes n} \rightarrow \mathcal{Z}^{\otimes n} \textup{ such that } M(\bx_i) = \bx_i, i \in S\}$, and we are interested in estimating $\bthetas$ through 
    \begin{equation}\label{eq: penalization form 2}
    	\{\htheta_i\}_{i=1}^n, \hotheta \in \argmin_{\{\btheta_i\}_{i=1}^n, \otheta} \bigg\{\frac{1}{2}\sum_{i=1}^n\twonorm{\bx_i - \btheta_i}^2 + \sum_{i=1}^n p(\twonorm{\btheta_i - \otheta})\bigg\},
    \end{equation}
    where $p: [0, \infty) \rightarrow [0, \infty)$ is the regularizer which depends on $\lambda$. 
    
    Similar to the MTL scenario, \eqref{eq: penalization form 2} is equivalent to the following two-stage estimation process:
    \begin{align}
        \hotheta &\in \argmin_{\btheta}\bigg\{\sum_{i=1}^n\min_{\bm{\Delta}}\Big(\frac{1}{2}\twonorm{\bx_i - \btheta -\Delta}^2 + p(\twonorm{\Delta})\Big)\bigg\} = \argmin_{\btheta}\bigg\{\sum_{i=1}^n\rho(\bx_i - \btheta)\bigg\}, \label{eq: m-est form 2} \\
        \htheta_i &\in \argmin_{\btheta}\bigg\{\frac{1}{2}\twonorm{\bx_i - \btheta}^2 + p(\twonorm{\btheta - \hotheta})\bigg\}. \label{eq: m-est form individual 2}
    \end{align}
where $\rho(\bx) \coloneqq \min_{\bz}\big[\frac{1}{2}\twonorm{\bz-\bx}^2 + p(\twonorm{\bz})\big]$.

Regarding the lower bound of $\twonorm{\hotheta - \btheta^*}$ in the MTL setup, it suffices to prove the following theorem in the single-task learning context \eqref{eq: penalization form 2}.
\begin{theorem}\label{thm: finite sample lower bound penalty 2}
	Let Assumption \ref{asmp: penalty} hold and the contamination proportion satisfy  $\epsilon \leq 1/2$, $d \geq 5$, $\frac{1}{32}e^{d/64} \geq n \geq 96d/\epsilon^2$, $\frac{64\sqrt{2}}{\sqrt{n}}\sqrt{d+1}[\sqrt{\log(16Ce)} + \sqrt{2}(2-3e^{-1})] + 2\sqrt{\frac{\log 16+d\log 5}{n}} \leq \epsilon/4$, where $C > 0$ is some constant. 
    \begin{enumerate}[(i)]
        \item If $\frac{\epsilon\sigma\sqrt{d}}{4}\cdot \frac{1}{c_1+1} \leq L \vee L_{\infty}$, then there exist a subset $S^c \subseteq [n]$ with $|S^c|/n \leq \epsilon$ and a contamination mechanism $M \in \mathcal{M}_S$ such that with probability at least $3/16$, 
		\begin{equation}
			\twonorm{\hotheta - \btheta^*} \geq \frac{1-\tau}{(c_0 \vee 1)(4\sqrt{3}+1)}\cdot \frac{1}{72+48c_1}\epsilon\sigma\sqrt{d},
		\end{equation}
		for all stationary points $\hotheta$ of \eqref{eq: m-est form 2}.
        \item If $\frac{\epsilon\sigma\sqrt{d}}{4}\cdot \frac{1}{c_1+1} > L \vee L_{\infty}$, then there exist a subset $S^c \subseteq [n]$ with $|S^c|/n \leq \epsilon$ and a contamination mechanism $M \in \mathcal{M}_S$ such that with probability at least $3/8$, 
		\begin{equation}
			\twonorm{\hotheta - \btheta^*} \geq \frac{1-\tau}{(16c_0+6)\sqrt{2}}\cdot 2^{\frac{c_0}{1-\tau} \cdot \frac{c_1+1}{c_1}}\cdot \epsilon\sigma\sqrt{d},
		\end{equation}
		for all minimizers $\hotheta$ of \eqref{eq: m-est form 2}.
    \end{enumerate}
\end{theorem}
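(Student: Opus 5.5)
The idea is to build an adversarial contamination that pushes every stationary point/minimizer of the Moreau-envelope M-estimation problem \eqref{eq: m-est form 2} a distance of order $\epsilon\sigma\sqrt d$ away from $\bthetas$. Take $\bthetas=\bm 0$ without loss of generality, and fix a unit direction $\bu$. The adversary replaces the $\epsilon n$ points indexed by $S^c$ with copies of $\bx_i+a\bu$, or with a fixed point $a\bu$, where $a$ is chosen as a multiple of $\sigma\sqrt d$ matched to the shrinkage radius $L\vee L_\infty$. The clean points satisfy $\twonorm{\bx_i}\approx\sigma\sqrt d$ by Lemma \ref{lem: chi sq concentration}. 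Heuristically, the clean scores $\nabla\rho(\bx_i-\btheta)$ nearly cancel to within $O(\sigma\sqrt{d/n})$, while each contaminated score contributes roughly $\nabla\rho$ of magnitude comparable to $\rho$'s gradient at clean radius $\sigma\sqrt d$. Balancing the clean restoring force against the contaminated push should give a shift of order $\epsilon\sigma\sqrt d$. After replacing $\sigma=1/\sqrt n$, this is the claimed $\epsilon\sqrt{d/n}$.

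\textbf{Step 1 (score and Hessian bounds).} Using Lemmas \ref{lem: prox} and \ref{lem: dev and hessian} together with Assumption \ref{asmp: penalty}(iv)--(vi), bound $\nabla^2\rho$ between $0$ and a constant. From (v), the eigenvalue along $\proxpl(\bx)$ is at most $1/(1-\tau)$, and the transverse eigenvalue $p'(r)/(p'(r)+r)\le1$. Also show that $\twonorm{\nabla\rho(\bx)}=\twonorm{\bx}$ on $\twonorm{\bx}<L$, and $\twonorm{\nabla\rho(\bx)} = p'(r)$ otherwise. Condition (vi) gives polynomial control $p'(t)\ge p'(s)(s/t)^{c_0}$ for $t\ge s\ge c_1(L\vee L_\infty)$, so the gradient norm of $\rho$ cannot decay too fast. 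This is where the factor $2^{c_0(c_1+1)/((1-\tau)c_1)}$ in part (ii) comes from.

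\textbf{Step 2 (uniform concentration of the clean score sum).} I would show that with constant probability $\sup_{\twonorm{\btheta}\le r}\twonorm{\frac1n\sum_{i\in S}\nabla\rho(\bx_i-\btheta)-\tE\nabla\rho(\bx-\btheta)}\lesssim\sigma\sqrt{d/n}$ times the relevant score scale. This follows by a $1/2$-net over directions and a Lipschitz (bounded Hessian) net over $\btheta$; the conditions $n\ge 96d/\epsilon^2$ and the explicit constant condition with $\log 5\cdot d$ are exactly what make this error at most $\epsilon/4$ times the score scale. Next, use rotational symmetry: $\tE\nabla\rho(\bx-\btheta)$ is parallel to $-\btheta$. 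Its magnitude is at most $\frac{1}{1-\tau}\twonorm{\btheta}$ times the typical curvature at radius $\sigma\sqrt d$, which is about $p'(\sigma\sqrt d)/(\sigma\sqrt d)$ in regime (ii) and of constant order in regime (i).

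\textbf{Step 3 (contradiction at small $\btheta$).} Suppose a stationary point satisfied $\twonorm{\hotheta}\le c\,\epsilon\sigma\sqrt d$. Project the stationarity equation $\sum_i\nabla\rho(\tilde\bx_i-\hotheta)=\bm 0$ onto $\bu$. The contaminated terms contribute at least $\epsilon n$ times the score magnitude at $a\bu$, which is $\gtrsim p'$ at scale $\sigma\sqrt d$ by Step 1. The clean terms contribute at most $n\big(\tfrac{c}{1-\tau}\epsilon\,\cdot\text{curvature}\cdot\sigma\sqrt d+\text{Step 2 error}\big)$. Choosing $c$ with the stated constants gives a contradiction. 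The two cases of the theorem correspond to where $\sigma\sqrt d$ sits relative to $L\vee L_\infty$. In case (i), the clean points lie mostly in the quadratic/shrinkage basin, the curvature is of order $1$, and the contaminated score is capped at $L\vee L_\infty$, which is $\gtrsim\epsilon\sigma\sqrt d$ by the case hypothesis. In case (ii), clean points lie in the tail region, and I would use the decay comparison from (vi). Case (ii) only concerns minimizers; there I may need to compare objective values instead of stationarity if $\rho$ is nonconvex, and I expect this to be the main obstacle. The plan is to show that the objective at any $\btheta$ near $\bm 0$ exceeds that at $\btheta=s\bu$, via integrating the projected score along the segment. The probabilities $3/16$ and $3/8$ come from intersecting the chi-square event with the net event.

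Finally, for the per-task bound (ii) in Theorem \ref{thm: finite sample lower bound}, I would use \eqref{eq: m-est form individual}. Each clean $\hthetak{k}=\barx^{(k)}-\nabla\rho(\barx^{(k)}-\hotheta)$ moves toward $\hotheta$. Averaging over clean $k$ and using that the clean average is close to $\bthetas$, I would show that $\max_k\twonorm{\hthetak{k}-\bthetas}\gtrsim\twonorm{\hotheta-\bthetas}$ on an event of probability at least $1/16$.
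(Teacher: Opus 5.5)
Your outline (a fixed contaminating point, concentration of the clean score, a curvature bound, then balancing) has the same shape as the paper's argument. However, the curvature accounting in part (i) has a genuine gap.

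\textbf{The intermediate regime of part (i).} You say the clean points sit in the shrinkage basin with curvature of order one, and that the push $\epsilon(L\vee L_\infty)$ has $L\vee L_\infty\gtrsim\epsilon\sigma\sqrt d$. Balancing these only gives $\twonorm{\hotheta-\bthetas}\gtrsim\epsilon(L\vee L_\infty)$, which can be as small as $\epsilon^2\sigma\sqrt d$. The premise also fails whenever $L\vee L_\infty\le\frac{\sigma\sqrt d}{4(c_1+1)}$: on the chi-square event every clean point has $\twonorm{\bx_i}\ge\sigma\sqrt{d/2}$, far outside the basin. This range is sub-case (ii) in the proof of Proposition~\ref{prop: finite sample lower bound large lambda}, and it is where the constants $(c_0\vee 1)(4\sqrt3+1)$ and $72+48c_1$ come from.

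The missing idea is that in this range both extreme eigenvalues of the integrated empirical Hessian $M$ are $O\big(\frac{L\vee L_\infty}{\sigma\sqrt d}+\epsilon\big)$. Then the displacement is of order $\epsilon(L\vee L_\infty)\big/\big(\frac{L\vee L_\infty}{\sigma\sqrt d}+\epsilon\big)\gtrsim\epsilon\sigma\sqrt d$. Proving this needs four ingredients:
\begin{enumerate}[(a)]
\item $\twonorm{\proxpl(\bx)}\ge\twonorm{\bx}/(4\sqrt3+1)$ at the shifted clean points;
\item condition (iv), to bound the radial eigenvalue $p''/(1+p'')$ above by $p'(r)/(p'(r)+r)$;
\item conditions (v)--(vi), to bound it below by $-\frac{c_0}{1-\tau}p'(r)/r$;
\item a bound on the curvature functional $\widetilde U$, uniform over shifts of size $\epsilon\sigma\sqrt d/2$, obtained from $p'\le L_\infty$, $\tE\,\twonorm{\bx}^{-1}\lesssim 1/(\sigma\sqrt d)$, chi-square small-ball estimates and bounded differences.
\end{enumerate}
Your Step 1 cannot supply this. $\nabla^2\rho$ is not positive semidefinite: for SCAD, MC+ with $b>1$, and bridge with $q<1$, the radial eigenvalue is negative, down to $-\tau/(1-\tau)$. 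A bound of order $1/(1-\tau)$ is useless in the tail. You also invoke (vi) only for the decay of $p'$, whereas it is exactly what keeps the negative curvature at scale $p'(r)/r$.

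Note also that the paper never needs your uniform-in-$\btheta$ score concentration. From $\bm 0=\nabla G(\hotheta)=\nabla G(\bthetas)+M(\hotheta-\bthetas)$, only the clean score at $\bthetas$ must concentrate (mean zero by symmetry, Hoeffding over a $1/2$-net of directions). That is exactly where $n\ge 96d/\epsilon^2$ comes from; uniformity is needed only for the scalar $\widetilde U$.

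\textbf{Non-differentiable $\rho$.} When $\twonorm{\proxpl(\bx)}\not\to0$ as $\twonorm{\bx}\downarrow L$ (hard thresholding, MC+ with $b\le 1$, bridge with $q<1$), $\nabla\rho$ jumps across $\{\twonorm{\bx}=L\}$, so no Lipschitz net in $\btheta$ exists. If $L>L_\infty$, the maximal push $L$ is attained only on that sphere, so the contaminant must be placed there. More generally it must sit at a maximizer of $p'(\twonorm{\proxpl(\cdot)})$, not at radius $\asymp\sigma\sqrt d$: for redescending scores a distant contaminant exerts no force. The paper makes $\bx_0$ an odd function of the clean sample, so the jump terms have mean zero, and then controls them by a VC-subgraph/Dudley bound. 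That, not the direction net, is the origin of the hypothesis involving $\sqrt{d+1}[\sqrt{\log(16Ce)}+\sqrt2(2-3e^{-1})]$.

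\textbf{Part (ii).} Nonconvexity is not the obstacle. All relevant points avoid the $L$-sphere, so minimizers are stationary, and the same mean-value argument works with $\max\{|\lambdamax(M)|,|\lambdamin(M)|\}$. The restriction to minimizers is for the degenerate case where $p'(\twonorm{\proxpl(\bx)})=0$ for all $\twonorm{\bx}\ge\epsilon\sigma\sqrt{d/2}$. Then every $\btheta$ near $\bthetas$ is stationary. The paper excludes these points, with no contamination, by comparing $G(\btheta)=n\bar{\rho}$ with $G(\bx_1)=(n-1)\bar{\rho}$. Your segment-integration plan does not identify this case.

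Finally, the decay argument yields the factor $2^{-\frac{c_0}{1-\tau}\cdot\frac{c_1+1}{c_1}}$, as in Proposition~\ref{prop: finite sample lower bound small lambda}. The positive exponent in the statement appears to be a typo. Your last paragraph concerns Theorem~\ref{thm: finite sample lower bound supp} and is not needed here.
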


The following two propositions directly entail Theorem \ref{thm: finite sample lower bound penalty 2}.

\begin{proposition}\label{prop: finite sample lower bound large lambda}
	Suppose the conditions of $(n, d, \epsilon)$ stated at the beginning of Theorem \ref{thm: finite sample lower bound penalty 2} hold. And assume $\frac{\epsilon\sigma\sqrt{d}}{4}\cdot \frac{1}{c_1+1} \leq L \vee L_{\infty}$, where $C > 0$ is some constant. There exist a subset $S^c \subseteq [n]$ with $|S^c|/n \leq \epsilon$ and a contamination mechanism $M \in \mathcal{M}_S$ such that with probability at least $3/16$, $\twonorm{\hotheta - \btheta^*} \geq \frac{1-\tau}{(c_0 \vee 1)(4\sqrt{3}+1)}\cdot  \frac{1}{72+48c_1}\epsilon\sigma\sqrt{d}$ for all stationary points $\hotheta$ of \eqref{eq: m-est form 2}.
\end{proposition}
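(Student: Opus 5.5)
The plan is a first-order (stationarity) argument. By the reduction at the start of this subsection, it suffices to work in the single-task problem \eqref{eq: m-est form 2} and to show that every stationary point $\hotheta$, i.e.\ every solution of $\sum_{i=1}^n \nabla\rho(\bx_i - \hotheta) = \bm{0}$, lies far from $\bthetas$. By Lemma \ref{lem: dev and hessian}, $\nabla\rho(\bx) = \psi(\twonorm{\bx})\bx/\twonorm{\bx}$. Here $\psi(r) = r$ for $r<L$ and $\psi(r) = p'(\twonorm{\proxpl(\bx)})$ for $r>L$, so $\nabla\rho$ is odd and rotation-equivariant.

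\textbf{Construction.} Fix a unit vector $\bu$ and take $S^c$ to be $\lfloor \epsilon n\rfloor$ indices. Set $M(\bx_i) = \bthetas + a\bu$ for $i\in S^c$, with $a \asymp \sigma\sqrt{d}$ chosen on the same scale as the clean residual norms $\twonorm{\bxi_i}$, where $\bxi_i = \bx_i - \bthetas$. We argue by contradiction. Suppose a stationary point has $\bm{\Delta} = \hotheta - \bthetas$ with $\twonorm{\bm{\Delta}} \le \delta \coloneqq c\,\epsilon\sigma\sqrt{d}$, where $c$ is the constant in the statement. Project the stationarity equation onto $\bu$ and expand the clean terms:
\[
\sum_{i\in S}\bu^\top\nabla\rho(\bxi_i-\bm{\Delta}) = \sum_{i\in S}\bu^\top\nabla\rho(\bxi_i) - \sum_{i\in S}\bu^\top\Big[\int_0^1\nabla^2\rho(\bxi_i - s\bm{\Delta})\,ds\Big]\bm{\Delta}.
\]
The goal is to show that the contaminated contribution $\epsilon n\,\bu^\top\nabla\rho(a\bu-\bm{\Delta})$ strictly dominates both clean terms, so the sum cannot vanish.

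\textbf{Clean terms.} By Lemma \ref{lem: chi sq concentration} and a union bound, with probability at least $1 - 2n e^{-d/64}$ every clean residual satisfies $\twonorm{\bxi_i} \in [\sigma\sqrt{d}/2,\,2\sigma\sqrt{d}]$; the condition $n\le e^{d/64}/32$ makes this failure probability small. Moving by at most $\delta \ll \sigma\sqrt{d}$ keeps the perturbed points $\bxi_i - s\bm{\Delta}$ in this annulus. On the annulus, Lemma \ref{lem: dev and hessian}(ii) together with Assumption \ref{asmp: penalty}(iv)--(v) bounds the Hessian by $\lambdamax \lesssim \frac{\psi(r)}{r}\cdot\frac{1}{1-\tau}$ with $r \asymp \sigma\sqrt{d}$. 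Hence the Hessian term is at most $C n\delta\,\psi(\sigma\sqrt{d})/(\sigma\sqrt{d}(1-\tau))$. For the first term, each $\bu^\top\nabla\rho(\bxi_i)$ is symmetric with mean zero and is $\sigma$-sub-Gaussian, since $\psi(r)\le r$. The bound must hold uniformly over the direction $\bu$, because $\hotheta$ depends on the data. Applying a $1/2$-net of size $5^d$ gives a bound of order $\sigma\sqrt{n}\,(\sqrt{d}+\sqrt{\log 16})\,\psi(\sigma\sqrt d)/(\sigma\sqrt d)$ with probability at least $3/4$. This is exactly where the stated condition involving $\sqrt{(\log 16 + d\log 5)/n}\le\epsilon/4$ enters. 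Combined with $n\ge 96d/\epsilon^2$, it makes this term $\ll \epsilon n\,\psi(\sigma\sqrt d)$.

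\textbf{Contaminated term and the main obstacle.} It remains to lower bound $\bu^\top\nabla\rho(a\bu - \bm{\Delta}) \gtrsim \psi(a)$ and to relate $\psi(a)$ to $\psi$ on the clean scale. The two needed comparisons are: $\psi(a)\gtrsim 2^{-c_0(c_1+1)/c_1}\psi(\sigma\sqrt d)$, and $\psi$ is not too small on the annulus. This is the step I expect to be hardest. It is the only place where the regime assumption $\frac{\epsilon\sigma\sqrt d}{4(c_1+1)}\le L\vee L_\infty$ is used, and the argument splits into two cases.
\begin{itemize}
\item If $a\le L$, then $\psi(a) = a$ and the comparison is immediate.
\item Otherwise, Assumption \ref{asmp: penalty}(vi) shows that $t\mapsto p'(t)t^{c_0}$ is nondecreasing for $t\ge c_1(L\vee L_\infty)$. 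Combined with the regime assumption, this controls the decay of $\psi$ between radius $a$ and radius $2\sigma\sqrt d$ by a $c_0$-dependent constant factor.
\end{itemize}

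\textbf{Conclusion.} Once the comparison is in hand, the contaminated term dominates, so $\bu^\top\sum_i\nabla\rho(\bx_i - \hotheta)>0$, contradicting stationarity. The balance between the contaminated term and the Hessian term then yields $\delta\gtrsim \frac{1-\tau}{c_0\vee 1}\epsilon\sigma\sqrt d$; the explicit constants are tracked from the Hessian bound and the net. Intersecting the good events (the annulus event, the net event, and a constant-probability slack needed because $\bu$ must be fixed independently of the data) gives probability at least $3/16$. Substituting $\sigma = 1/\sqrt n$ returns the result to the multi-task scaling.
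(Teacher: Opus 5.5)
The gap is in the construction itself. You place the corrupted points at radius $a\asymp\sigma\sqrt{d}$, on the clean scale, and then need $\psi$ to be non-negligible there. Assumption~\ref{asmp: penalty} does not guarantee this in the present regime, because redescending regularizers (SCAD, MC+, hard-thresholding) are admitted.

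Concretely, take SCAD with constant $a_{\mathrm{S}}>2$ and $\lambda=\frac{\epsilon\sigma\sqrt{d}}{4(a_{\mathrm{S}}+1)}$. Then $L=L_{\infty}=\lambda$ and $c_1=a_{\mathrm{S}}$, so the hypothesis $\frac{\epsilon\sigma\sqrt{d}}{4(c_1+1)}\le L\vee L_{\infty}$ holds with equality. Yet $\nabla\rho(\bx)=\bm{0}$ whenever $\twonorm{\bx}\ge a_{\mathrm{S}}\lambda$, and $a_{\mathrm{S}}\lambda<\epsilon\sigma\sqrt{d}/4$. On your annulus event, every clean residual $\bx_i-\bthetas$ lies deep inside this flat region. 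So does your corrupted residual $a\bu$, once $\epsilon$ is small relative to the constant in $a\asymp\sigma\sqrt{d}$. Hence $\bthetas$ itself is a stationary point, as is every point of a ball of radius of order $\sigma\sqrt{d}$ around it, and the conclusion fails for your choice of contamination.

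Condition (vi) cannot rescue the comparison you flag as the main obstacle. Monotonicity of $t\mapsto p'(t)t^{c_0}$ only propagates positivity of $p'$ outward from $c_1(L\vee L_\infty)$, and here $p'\equiv 0$ beyond that radius. Relatedly, your curvature bound $|\lambda(\nabla^2\rho)|\lesssim\psi(r)/((1-\tau)r)$ on the annulus holds only when the annulus lies beyond $(c_1+1)(L\vee L_\infty)$. Otherwise the radial eigenvalue $p''/(1+p'')$ can equal $-\tau/(1-\tau)$ while $\psi(r)/r$ is arbitrarily small; SCAD with $a_{\mathrm{S}}\lambda$ slightly above $\sigma\sqrt{d}$ is an example.

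The missing idea is to put all corrupted points at a single $\bx_0$ of \emph{maximal influence}, $\twonorm{\nabla\rho(\bx_0)}=L\vee L_{\infty}$. The paper takes $\twonorm{\bx_0}=L$ if $L>L_\infty$, and otherwise a maximizer of $p'(\twonorm{\proxpl(\bx)})$ over $\twonorm{\bx}>L$. It then Taylor-expands the full estimating equation with the Hessian $M$ averaged over all $n$ points, which gives $\max\{|\lambdamax(M)|,|\lambdamin(M)|\}\,\twonorm{\hotheta-\bthetas}\ge\epsilon(L\vee L_\infty)-(1-\epsilon)\twonorm{\tP_{n_0}\nabla\rho(\bx)}$. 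The corrupted points contribute only between $-\epsilon\tau/(1-\tau)$ and $\epsilon$ to $M$.

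If $L\vee L_\infty\ge\frac{\sigma\sqrt{d}}{4(c_1+1)}$, the crude bounds $\lambdamax(M)\le 1$ and $\lambdamin(M)\ge-\tau/(1-\tau)$ already suffice. Otherwise the clean annulus lies beyond $(c_1+1)(L\vee L_\infty)$, so (v)--(vi) bound the curvature by a multiple of $\widetilde{U}+\epsilon$. Here $\widetilde{U}$ is the clean-point average of $\psi(\twonorm{\bx})/\twonorm{\bx}$ along the segments, and $\widetilde{U}\lesssim (L\vee L_\infty)/(\sigma\sqrt{d})$ plus terms of order $\epsilon$. The regime assumption, read as $\epsilon\le 4(c_1+1)(L\vee L_\infty)/(\sigma\sqrt{d})$, is exactly what turns $\epsilon(L\vee L_\infty)/(\widetilde{U}+\epsilon)$ into a quantity $\gtrsim\epsilon\sigma\sqrt{d}/(c_1+1)$. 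This is why the stated constant is $\frac{1}{72+48c_1}$ with no factor $2^{-c_0(c_1+1)/c_1}$. That factor belongs to the complementary small-$\lambda$ proposition, where the corruption sits at radius $\gtrsim\epsilon\sigma\sqrt{d}$.

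Finally, your integral-of-Hessian expansion presumes $\rho$ is $C^1$ along each segment. For hard-thresholding, MC+ with $b\le 1$, or bridge with $q<1$, $\nabla\rho$ jumps on the sphere $\twonorm{\bx}=L$, which in this regime can lie inside the clean annulus. The paper handles these jump terms separately (symmetry plus a VC/Dudley bound), and that is where the second $(n,d,\epsilon)$ condition is used. The clean mean term only needs Hoeffding over a $1/2$-net together with $n\ge 96d/\epsilon^2$.
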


\begin{proposition}\label{prop: finite sample lower bound small lambda}
	Suppose $\frac{\epsilon\sigma\sqrt{d}}{4}\cdot \frac{1}{c_1+1} > L \vee L_{\infty}$ and the same remaining conditions in Proposition \ref{prop: finite sample lower bound large lambda} hold. There exist a subset $S^c \subseteq [n]$ with $|S^c|/n \leq \epsilon$ and a contamination mechanism $M \in \mathcal{M}_S$ such that with probability at least $3/8$, $\twonorm{\hotheta - \btheta^*} \geq \frac{1-\tau}{(16c_0+6)\sqrt{2}}\cdot 2^{-\frac{c_0}{1-\tau} \cdot \frac{c_1+1}{c_1}}\cdot \epsilon\sigma\sqrt{d}$ for all minimizers $\hotheta$ of \eqref{eq: m-est form 2}.
\end{proposition}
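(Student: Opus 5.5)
The statement to prove is Proposition \ref{prop: finite sample lower bound small lambda}: the regime where the shrinkage scale $L\vee L_\infty$ is small compared with $\epsilon\sigma\sqrt d$. The plan is to build one explicit contamination, place all outliers at a single point far from $\bthetas$, and show that every minimizer of \eqref{eq: m-est form 2} is dragged a distance of order $\epsilon\sigma\sqrt d$. Without loss of generality take $\bthetas=\bm 0$. Fix a unit vector $\bu$ and let every contaminated point equal $\bx_i = R\bu$ with $R \gg \sigma\sqrt d$, say $R = C\sigma\sqrt d$. Since $\sigma\sqrt d \gg L\vee L_\infty$, these points lie in the ``linear'' region of $\rho$, where by Lemma \ref{lem: dev and hessian} we have $\nabla\rho(\bx) = p'(\|\proxpl(\bx)\|)\,\proxpl(\bx)/\|\proxpl(\bx)\|$.

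The first step is to control the clean contribution. On the event $\|\bx_i\|\in[\sigma\sqrt d/2,\,2\sigma\sqrt d]$ for all clean $i$ (Lemma \ref{lem: chi sq concentration}, union bound using $n\le e^{d/64}/32$), every clean point sits at norm $\asymp\sigma\sqrt d$. Consider $\btheta = t\bu$ with $t \le c\,\epsilon\sigma\sqrt d$. Since $\epsilon\le 1/2$, the clean points still have $\|\bx_i-\btheta\|\asymp\sigma\sqrt d$, which is larger than $c_1(L\vee L_\infty)$. So the magnitude of each clean gradient is $g_i = p'(r_i)$ with $r_i=\|\proxpl(\bx_i-\btheta)\|$. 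Condition (vi) of Assumption \ref{asmp: penalty}, $p''(t)t\ge -c_0p'(t)$, gives $p'(r)/p'(r')\le (r'/r)^{c_0}$ for $r\le r'$. Together with the relation between $\|\bx\|$ and $r$ coming from Lemma \ref{lem: prox}(iii) and condition (v), this means all clean gradient magnitudes and all outlier gradient magnitudes are comparable, within a factor like $2^{c_0(c_1+1)/(c_1(1-\tau))}$. This is exactly the constant appearing in the stated bound.

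The second step is a concentration bound on the projected clean gradient sum. Its projection onto $\bu$ at $\btheta=\bm 0$ is a sum of roughly symmetric terms of size $p'\cdot O(1/\sqrt d)$. Its expectation vanishes by rotational symmetry of the clean distribution at $\bm 0$. Moving to $\btheta=t\bu$, the sum acquires a restoring drift of order $(1-\epsilon)n\,\bar p'\,t/(\sigma\sqrt d)$, since the radial unit vector changes by $t/\|\bx_i\|$. For this I would use the Hessian bound from Lemma \ref{lem: dev and hessian}, whose eigenvalues are at most about $p'/r$ in the transverse directions. The fluctuation is $O(\bar p'\sqrt{n/d})$, and the condition $n\ge 96d/\epsilon^2$ makes it $\le \epsilon n\bar p'/8$. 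The outliers, on the other hand, contribute a pull of $\epsilon n\,p'(R)$ along $\bu$.

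The third step compares these. For $t\le c\epsilon\sigma\sqrt d$ with $c$ small (involving $1-\tau$ and the comparability constant), the directional derivative of $\sum_i\rho(\bx_i-t\bu)$ is strictly negative, and the same holds uniformly over directions within angle bounded away from $\bu$. I would make this uniform over an $\epsilon$-net of radius-$t$ balls; the $\log 5^d$ term in the hypothesis accounts for that net. Hence no stationary point, and in particular no minimizer, lies in $B(\bm 0,c\epsilon\sigma\sqrt d)$. The main obstacle is this uniformity. The drift is controlled by the Hessian bound, but I have to ensure that the projected fluctuation bound holds simultaneously for all $\btheta$ in the ball and not only at $\bm 0$. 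I would first try a Lipschitz-in-$\btheta$ argument for $\nabla\rho$, which has Lipschitz constant $O(1)$ by condition (v). If that is too lossy, I would fall back on restricting to minimizers, comparing objective values at $\bm 0$ versus $t\bu$, which is why the proposition only claims the result for minimizers. The probability $3/8$ comes from intersecting the norm event with the fluctuation event.
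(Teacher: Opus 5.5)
Your gradient-pull argument has a genuine gap. It needs $\psi(t):=p'(\twonorm{\proxpl(\bx)})$, $t=\twonorm{\bx}$, to be positive at the radii where the clean points and the outliers sit, and Assumption~\ref{asmp: penalty} does not guarantee this. Condition (vi) holds trivially when $p'\equiv 0$ on $[c_1(L\vee L_\infty),\infty)$, which is the case for SCAD, MC+ and hard thresholding. In the present regime, $\twonorm{\proxpl(\bx)}\ge\twonorm{\bx}-L_\infty\ge c_1(L\vee L_\infty)$ whenever $\twonorm{\bx}\ge\epsilon\sigma\sqrt d/\sqrt2$. Hence $\psi\equiv 0$ for all $t\ge\epsilon\sigma\sqrt d/\sqrt2$, and $\rho$ equals a constant $\bar\rho>0$ there.

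Work on the event that every clean point satisfies $\twonorm{\bx_i-\bthetas}\ge\sigma\sqrt d/\sqrt2$. Then every clean point, and your outlier $R\bu$, stays in this flat region after any shift by $\btheta$ with $\twonorm{\btheta-\bthetas}\le\frac{\sqrt2}{4}\sigma\sqrt d$. Consequently $G(\btheta)=\sum_i\rho(\tilde{\bx}_i-\btheta)\equiv n\bar\rho$ on that ball. Your directional derivative is identically zero rather than strictly negative, and the ratio bound from (vi) degenerates to $0/0$. Every point of the ball, including $\bthetas$ itself, is a stationary point. So no argument that only excludes stationary points can work here, and this degenerate case (not uniformity) is why the proposition is stated for minimizers only.

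What is missing is an objective-value comparison with a point \emph{far} from $\bthetas$. Comparing with $t\bu$ is useless, since $G$ takes the same value there. The paper (its Case II) uses no contamination at all and shows $G(\bx_1)=\rho(\bm 0)+(n-1)\bar\rho<n\bar\rho$, on the event that all $\twonorm{\bx_i-\bx_1}$ are of order $\sigma\sqrt d$. With your contamination it is even simpler, since $G(R\bu)=(1-\epsilon)n\bar\rho<n\bar\rho$. Either way, no minimizer lies within $\frac{\sqrt2}{4}\sigma\sqrt d$ of $\bthetas$, which exceeds the claimed bound.

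When $\psi>0$ on the relevant range (e.g.\ Lasso, or bridge with $q<1$), your route is essentially the paper's Case I and goes through after three corrections.
\begin{enumerate}
\item Uniformity is not the real obstacle. Write $\nabla G(\btheta)=\nabla G(\bthetas)+nM(\btheta-\bthetas)$, with $M$ the Hessian $\nabla^2\rho$ averaged over the $n$ points and along the segment. On the norm event the bound on $\twonorm{M}$ is deterministic for every $\btheta$ in the ball, so only $\nabla G(\bthetas)$ needs concentration. Your one-dimensional bound on $\bu^\top\nabla G(\bthetas)$ suffices. No net is needed; the $\log 5^d$ term in the hypotheses serves the non-differentiable case of Proposition~\ref{prop: finite sample lower bound large lambda}.
\item The $O(1)$ Lipschitz constant from (v) is far too lossy. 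You need $\twonorm{M}\lesssim\frac{c_0+1}{1-\tau}\cdot\frac{\sup\psi}{\sigma\sqrt d}$. Condition (iv) bounds every eigenvalue of $\nabla^2\rho$ above by $p'/r$. But the radial eigenvalue $p''/(1+p'')$ can be negative, and it is bounded below by $-\frac{c_0}{1-\tau}\cdot\frac{p'}{r}$ only by combining (vi) with (v). So your ``restoring drift'' must be handled as a two-sided magnitude bound.
\item With $R=C\sigma\sqrt d$, the comparability factor is of order $(2\sqrt2C)^{-\frac{c_0}{1-\tau}\cdot\frac{c_1+1}{c_1}}$, not the stated $2^{-\frac{c_0}{1-\tau}\cdot\frac{c_1+1}{c_1}}$. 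To get the stated constant, place the outliers at a (near-)maximizer of $\psi$ over $t\ge\epsilon\sigma\sqrt d/\sqrt2$. Then apply the doubling bound $\psi(2t)\ge 2^{-\frac{c_0}{1-\tau}\cdot\frac{c_1+1}{c_1}}\psi(t)$ once, as the paper does.
\end{enumerate}
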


Regarding the lower bound of $\max_{k \in [K]}\twonorm{\hthetak{k} - \btheta^*}$ in the MTL setup, it suffices to prove the following theorem in the single-task learning context \eqref{eq: penalization form 2}.
\begin{theorem}\label{thm: finite sample lower bound supp}
	Suppose the same conditions in Proposition \ref{prop: finite sample lower bound large lambda} hold. There exists an absolute constant $C' > 0$, such that for any $\btheta^* \in \mathbb{R}^d$, any $\lambda \geq 0$, $\exists$ $S^c \subseteq [n]$ with $|S^c|/n \leq \epsilon$, and a contamination mechanism $M \in \mathcal{M}_S$, with probability at least $1/16$, we have
	\begin{align}
		&\max_{i \in S}\twonorm{\htheta_i - \btheta^*} \\
		&\geq \frac{1}{\sqrt{2}}\bigg(1-\frac{\sqrt{2}}{2}\bigg)^{1/2}\left\{\bigg[\frac{1-\tau}{(c_0 \vee 1)(4\sqrt{3}+1)}\cdot \frac{1}{72+48c_1}\bigg]\wedge \bigg[\frac{1-\tau}{(16c_0+6)\sqrt{2}}\cdot 2^{-\frac{c_0}{1-\tau} \cdot \frac{c_1+1}{c_1}}\bigg]\right\}\epsilon\sigma\sqrt{d},
	\end{align}
	where $\{\htheta_i\}_{i=1}^n$ are the estimators defined in \eqref{eq: m-est form individual 2}. 
\end{theorem}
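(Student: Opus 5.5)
The plan is to derive the local lower bound from the global bounds of Propositions \ref{prop: finite sample lower bound large lambda} and \ref{prop: finite sample lower bound small lambda}, using the structure of \eqref{eq: m-est form individual 2}. Each $\htheta_i$ is obtained from $\bx_i$ by a proximal step toward $\hotheta$: writing $\htheta_i = \hotheta + \proxpl(\bx_i - \hotheta)$, the individual estimator lies on the segment between $\hotheta$ and $\bx_i$ (radially shrunk, by Lemma \ref{lem: prox}(iii)). The idea is that either many clean $\htheta_i$ are shrunk close to $\hotheta$, which is far from $\btheta^*$ by the global result, or they stay close to $\bx_i$, which is itself at distance about $\sigma\sqrt{d}\gg \epsilon\sigma\sqrt{d}$ from $\btheta^*$. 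In either case some clean $\htheta_i$ is far from $\btheta^*$.

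Concretely, I will fix the same $S$ and $M$ as in whichever proposition applies. This gives an event $E_1$ of probability at least $3/16$ on which $\twonorm{\hotheta - \btheta^*} \geq c_*\epsilon\sigma\sqrt{d}$, with $c_*$ the minimum of the two constants. By Lemma \ref{lem: chi sq concentration} and a union bound, I will take an event $E_2$ on which $\twonorm{\bx_i - \btheta^*}^2 \geq (1-\sqrt2/2)\sigma^2 d$ for a clean index. The probability that this fails for a fixed $i$ is at most $e^{-d/8}$-ish, so $\tP(E_1\cap E_2)\geq 1/16$ after adjusting. On $E_1\cap E_2$, I pick any clean $i$ and write $\btheta^*$'s position relative to the segment $[\hotheta,\bx_i]$. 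Since $\htheta_i = \hotheta + t(\bx_i-\hotheta)$ with $t\in[0,1]$, I get
\[
\twonorm{\htheta_i-\btheta^*}^2 \ge \min_{t\in[0,1]}\twonorm{(1-t)(\hotheta-\btheta^*)+t(\bx_i-\btheta^*)}^2 .
\]

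The main obstacle is this last minimization. The two vectors could partly cancel, for example if $\bx_i - \btheta^*$ points opposite to $\hotheta-\btheta^*$. To handle it, I will use that $\bx_i-\btheta^*$ is Gaussian and nearly orthogonal to the fixed direction. The direction of $\hotheta - \btheta^*$ depends on all the data, so I plan to condition on the contaminated points and use symmetry: replace $\bx_i$ by its reflection, or argue via the inner product $\langle \bx_i-\btheta^*, \hotheta-\btheta^*\rangle \ge 0$ holding for at least half of clean indices with probability at least $1/2$. That factor $1/2$ explains $3/16\to 1/16$-type losses. With a nonnegative inner product, the convex combination satisfies
\[
\twonorm{\cdot}^2\ge (1-t)^2a^2+t^2b^2\ge \tfrac12\min(a,b)^2 ,
\]
where $a=\twonorm{\hotheta-\btheta^*}$ and $b=\twonorm{\bx_i-\btheta^*}$. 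Here $b\ge (1-\sqrt2/2)^{1/2}\sigma\sqrt d\ge a$-scale, which yields the stated constant $\frac{1}{\sqrt2}(1-\frac{\sqrt2}{2})^{1/2}c_*\epsilon\sigma\sqrt d$. Finally, I will replace $\sigma$ by $1/\sqrt n$ and index by tasks to recover Theorem \ref{thm: finite sample lower bound}(ii).
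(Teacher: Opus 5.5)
\paragraph{Gap: choosing the clean index.} The gap is at the step you yourself flag as the main obstacle. The segment representation $\htheta_i=\hotheta+t(\bx_i-\hotheta)$, $t\in[0,1]$, is correct, and it covers $\htheta_i=\hotheta$ as $t=0$. The trouble is that the inner-product bound must hold for the particular random direction $\bu=\hotheta-\btheta^*$. This direction is a function of every clean $\bx_i$. It also depends on the outliers, and in the non-differentiable case of Proposition \ref{prop: finite sample lower bound large lambda} those are odd functions of the clean sample.

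Your proposed fixes do not get around this:
\begin{itemize}
\item ``Pick any clean $i$'' fails, because that $\bx_i-\btheta^*$ may be anti-aligned with $\bu$.
\item Conditioning on the contaminated points does not remove the dependence.
\item Reflecting one $\bx_i$ about $\btheta^*$ moves $\hotheta$, so no symmetry makes $\langle\bx_i-\btheta^*,\bu\rangle\ge 0$ hold with probability $1/2$.
\item A fixed-direction ``half of the indices'' statement says nothing about a data-dependent direction.
\item The multiplicative $3/16\cdot\frac12$ bookkeeping would also require an independence you do not have.
\end{itemize}

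\paragraph{Missing ingredient.} You need an event that depends only on the clean sample and holds uniformly over directions. This is the paper's Lemma \ref{lem: event A}: with probability at least $1-2^{-(n_0-1)}$, every $\btheta\in\mathbb{R}^d$ admits a clean $i$ with $\btheta^\top(\bx_i-\btheta^*)\ge-\frac{\sqrt2}{2}\twonorm{\btheta}\twonorm{\bx_i-\btheta^*}$. The proof is short: if some clean direction lies in the closed hemisphere opposite $\bx_1-\btheta^*$, no unit vector can have inner product below $-\sqrt2/2$ with both.

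Apply this with $\btheta=\hotheta-\btheta^*$ to select $i$, and write $a=\twonorm{\hotheta-\btheta^*}$, $b=\twonorm{\bx_i-\btheta^*}$. Then $\twonorm{\htheta_i-\btheta^*}^2\ge(1-\frac{\sqrt2}{2})\bigl[(1-t)^2a^2+t^2b^2\bigr]\ge\frac12(1-\frac{\sqrt2}{2})\min(a,b)^2$. This is where the factor $(1-\frac{\sqrt2}{2})^{1/2}$ in the statement comes from, not from your lower bound on $b$; $b\ge\sigma\sqrt d/8$ for all clean $i$ suffices. The probability accounting is then additive: $3/16-1/16-1/16$.

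Alternatively, because $n_0\gg d$ here, Wendel's theorem shows $\btheta^*$ lies in the convex hull of the clean points with high probability. That would also give a clean $i$ with nonnegative inner product, uniformly in the direction, and would repair your argument just as well.

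\paragraph{Small-penalty regime.} When $\frac{\epsilon\sigma\sqrt d}{4(c_1+1)}>L\vee L_\infty$, the paper does not use the global bound at all. Either $\htheta_i=\hotheta$ with $\twonorm{\hotheta-\bx_i}\le L$, or $\twonorm{\htheta_i-\bx_i}=p'(\cdot)\le L_\infty$. So each clean $\htheta_i$ lies within $\frac14\epsilon\sigma\sqrt d$ of $\bx_i$, and no direction argument is needed.
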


We now present three key supporting proofs: those of Proposition \ref{prop: finite sample lower bound large lambda}, \ref{prop: finite sample lower bound small lambda}, and Theorem \ref{thm: finite sample lower bound supp}.

\begin{proof}[Proof of Proposition \ref{prop: finite sample lower bound large lambda}]
WLOG, assume $\btheta^* = \bm{0}$ and write the empirical distribution  $n^{-1}\sum_{i=1}^n\delta_{\tilde{\bx}_i}$ as $\tP_{\epsilon, n}$. Denote $G(\btheta) = \tP_{\epsilon, n}\rho(\bx - \btheta)$. Consider the contamination mechanism $\mathcal{M}$ such that $\mathcal{M}(\bx_i) = \tilde{\bx}_i = \bx_0$ for $i \in S^c$, where $\bx_0$ will be specified later. Denote $n_1 = |S^c|$ and $n_0 = |S|$. WLOG, let us fix an $S^c$ with $|S^c|/n = \epsilon$. 

The proof is divided into two cases. 

	\noindent \textbf{Case \Rom{1}: $\rho$ is differentiable}.
	
	We first consider the case that $\rho$ is differentiable, then we extend the proof to the non-differentiable case. Note that the differentiability of $\rho$ is equivalent to $\twonorm{\proxpl(\bx)} \rightarrow 0$ when $\twonorm{\bx} \rightarrow L$, as we mentioned in Remark \ref{rmk: rho differentiable}.
	
	When $\frac{\sigma}{4}\sqrt{\frac{d}{n}} > \frac{\sigma\sqrt{d}\epsilon}{2}$, if $\tilde{\bx}_i = \bx_i$ for all $i \in [n]$, i.e. there is no outlier observation and $\{\bx_i\}_{i=1}^n \overset{i.i.d.}{\sim} \tP_{\btheta^*}$, the lower bound $\frac{\sigma}{4}\sqrt{\frac{d}{n}}$ holds with probability at least $1/4$ by Lemma \ref{lem: minimax lower bound mean est}, hence Proposition \ref{prop: finite sample lower bound large lambda} automatically holds. Therefore, in the remaining part of this proof, we assume $\frac{\sigma}{4}\sqrt{\frac{d}{n}} \leq \frac{\sigma\sqrt{d}\epsilon}{2}$, i.e. $\frac{1}{2\sqrt{n}} \leq \epsilon $. 
	
	By Taylor expansion and the definition of $\htheta$,
	\begin{equation}
		\bm{0} = \nabla G(\htheta) = \nabla G(\btheta^*) + \underbrace{\bigg[\int_0^1 \tP_{\epsilon, n}\nabla^2 \rho(\bx - \btheta^* - \delta(\htheta-\btheta^*))d\delta\bigg]}_{M} (\htheta - \btheta^*),
	\end{equation}
	which implies
	\begin{equation}\label{eq: theta bound initial}
		\twonorm{M(\htheta - \btheta^*)} = \twonorm{\nabla G(\btheta^*)} \geq \frac{n_1}{n}\twonorm{\nabla \rho(\bx_0)} - \frac{n_0}{n}\twonorm{\tP_{n_0}\nabla \rho(\bx)}.
	\end{equation}
	This entails that
	\begin{equation}\label{eq: lower bound main}
		\max\{|\lambdamax(M)|, |\lambdamin(M)|\}\twonorm{\htheta - \bthetas} \geq \epsilon\twonorm{\nabla \rho(\bx_0)} - (1-\epsilon)\twonorm{\tP_{n_0}\nabla \rho(\bx)}.
	\end{equation}
    Next we will provide an upper bound for $\max\{|\lambdamax(M)|, |\lambdamin(M)|\}$. Note that using Lemma \ref{lem: dev and hessian} we have that %{\color{red}[M: in the next display I think you use $\btheta^*=0$ for one of the two $\btheta^*$...why not do it all the time until we really want to make $\btheta^*$ appear for $\|\hat\btheta-\btheta^*\|$?]} \yt{I guess I was trying to simplify the $\bx_i - \bthetas$ to $\bx_i$ by assuming $\bthetas$ as need to use the concentration inequality on it. But for $\htheta - \bthetas$ I wrote it as it was without dropping $\bthetas$. I think it doesn't matter?}{\color{red}[M: it doesn't matter but being consistent makes it more readable]}
    \begin{align}
        M &= \frac{1}{n}\int_0^1 \sum_{i=1}^{n} \Bigg[\frac{p'(\twonorm{\prox_{p}(\tilde{\bx}_i + \delta(\htheta-\btheta^*))})}{p'(\twonorm{\prox_{p}(\tilde{\bx}_i + \delta(\htheta-\btheta^*))}) + \twonorm{\prox_{p}(\tilde{\bx}_i + \delta(\htheta-\btheta^*))}}\bm{I}_d \\
		&\quad\quad + \frac{\twonorm{\proxpl(\tilde{\bx}_i + \delta(\htheta-\btheta^*))}\cdot p''(\twonorm{\proxpl(\tilde{\bx}_i + \delta(\htheta-\btheta^*))}) - p'(\twonorm{\proxpl(\tilde{\bx}_i + \delta(\htheta-\btheta^*))})}{[p'(\twonorm{\proxpl(\tilde{\bx}_i + \delta(\htheta-\btheta^*))}) + \twonorm{\proxpl(\tilde{\bx}_i + \delta(\htheta-\btheta^*))}][1+p''(\twonorm{\proxpl(\tilde{\bx}_i + \delta(\htheta-\btheta^*))})]}\\
        &\qquad\quad \cdot \frac{\proxpl(\tilde{\bx}_i + \delta(\htheta-\btheta^*))(\proxpl(\tilde{\bx}_i + \delta(\htheta-\btheta^*)))^\top}{\twonorm{\proxpl(\tilde{\bx}_i + \delta(\htheta-\btheta^*))}^2}\bigg]  \cdot \mathds{1}(\twonorm{\tilde{\bx}_i + \delta(\htheta-\btheta^*)} > L) \textup{d}\delta \\
        &\qquad\quad + \frac{1}{n}\int_0^1 \sum_{i=1}^n \mathds{1}(\twonorm{\tilde{\bx}_i + \delta(\htheta-\btheta^*)} \leq L) \bm{I}_d\textup{d}\delta.
        \label{eq: M}
    \end{align}
    Also, by Lemma \ref{lem: prox}.(\rom{3}), we have $(1 + p'(\twonorm{\proxpl(\bx)})/\twonorm{\proxpl(\bx)})\proxpl(\bx) = \bx$, implying that $\frac{\proxpl(\bx)(\proxpl(\bx))^\top}{\twonorm{\proxpl(\bx)}^2} = \frac{\bx\bx^\top}{\twonorm{\bx}^2}$ for any $\bx \neq \bm{0}$. 
	By Assumption \ref{asmp: penalty}.(\rom{3}), for those $x$ where $p''(x)$ exists, because $\frac{p'(x)}{x}$ is non-increasing, we must have $\frac{\textup{d}}{\textup{d}x}(\frac{p'(x)}{x}) = \frac{xp''(x) - p'(x)}{x^2} \leq 0$. In addition, $p$ is non-decreasing so $p'(x) \geq 0$ for any $x > 0$. Therefore, %{\color{red}[M: I found it a bit confusing that you find these upper and lower bounds on $\lambdamax(M)$ and $\lambdamin(M)$ here and then proceed to given other bounds without commenting on why the other ones or these ones were needed]} \yt{I moved the inequality involving $\lambdamin(M)$ and $\lambdamax(M)$ to earlier. I hope now it is clearer.}
    %{\color{red}[M: it doesn't matter but being consistent makes it much more readable]}
	\begin{align}
		\lambdamax(M) &\leq \frac{1}{n}\int_0^1 \sum_{i=1}^n \frac{p'(\twonorm{\prox_{p}(\tilde{\bx}_i + \delta(\htheta-\btheta^*))})\cdot \mathds{1}(\twonorm{\tilde{\bx}_i + \delta(\htheta-\btheta^*)} > L) }{p'(\twonorm{\prox_{p}(\tilde{\bx}_i + \delta(\htheta-\btheta^*))}) + \twonorm{\prox_{p}(\tilde{\bx}_i + \delta(\htheta-\btheta^*))}}\textup{d}\delta \\
		&\quad+ \frac{1}{n}\int_0^1\sum_{i=1}^n \mathds{1}(\twonorm{\tilde{\bx}_i + \delta(\htheta-\btheta^*)} \leq L)\textup{d}\delta \\
        & \leq \frac{1}{n}\int_0^1 \sum_{i=1}^n \frac{p'(\twonorm{\prox_{p}(\tilde{\bx}_i + \delta(\htheta-\btheta^*))})}{p'(\twonorm{\prox_{p}(\tilde{\bx}_i + \delta(\htheta-\btheta^*))}) + \twonorm{\prox_{p}(\tilde{\bx}_i + \delta(\htheta-\btheta^*))}}\textup{d}\delta \\
		&\leq 1.
	\end{align}

	On the other hand, because $p''_{\lambda}(\twonorm{\proxpl(\bx)}) \geq -\tau > -1$ when $p''_{\lambda}(\twonorm{\proxpl(\bx)})$ exists, a direct calculation implies that
	\begin{align}
		\lambdamin(M) &\geq \frac{1}{n}\int_0^1 \sum_{i=1}^n \frac{p''(\twonorm{\prox_{p}(\tilde{\bx}_i + \delta(\htheta-\btheta^*))})\cdot \mathds{1}(\twonorm{\tilde{\bx}_i + \delta(\htheta-\btheta^*)} > L) }{p''(\twonorm{\prox_{p}(\tilde{\bx}_i + \delta(\htheta-\btheta^*))}) + 1}\textup{d}\delta \\
		&\geq -\frac{\tau}{1-\tau}.
	\end{align}
	
Remember we require  $L \vee L_{\infty} \geq \epsilon\cdot \frac{\sigma\sqrt{d}}{4}\cdot \frac{1}{c_1+1}$. We will now consider the following two cases separately: 	$L \vee L_{\infty} \geq \frac{\sigma\sqrt{d}}{4}\cdot \frac{1}{c_1+1}$  and $\frac{\epsilon\sigma\sqrt{d}}{4}\cdot \frac{1}{c_1+1}\leq L \vee L_{\infty} \leq \frac{\sigma\sqrt{d}}{4}\cdot \frac{1}{c_1+1}$.

\noindent (\rom{1}) When $L \vee L_{\infty} \geq \frac{\sigma\sqrt{d}}{4}\cdot \frac{1}{c_1+1}$: 
	
	This implies 
	\begin{equation}
		\twonorm{\htheta - \bthetas} \geq \bigg(\frac{\tau}{1-\tau}\vee 1\bigg)^{-1}[\epsilon\twonorm{\nabla \rho(\bx_0)} - (1-\epsilon)\twonorm{\tP_{n_0}\nabla \rho(\bx)}].
	\end{equation}
	By standard symmetrization arguments, $\twonorm{\tP_{n_0}\nabla \rho(\bx)} \leq 2\max_{j=1:N}|\tP_{n_0}\<\nabla \rho(\bx_i), \bu_j\>|$, where $\{\bu_j\}_{j=1}^N$ is a $1/2$-cover of $B(\bm{0}, 1)$ under $\ell_2$-norm and $N \leq 5^d$. Then by Hoeffding's inequality and the union bound,
	\begin{equation}
		\tP(\max_{j=1:N}|\tP_{n_0}\<\nabla \rho(\bx_i), \bu_j\>| > t) \leq 2N\exp\bigg\{-\frac{n_0t^2}{8(L\vee L_{\infty})^2}\bigg\}.
	\end{equation}
	Therefore, if we define the event
	\begin{equation}\label{eq: event A}
		\mathcal{A} = \bigg\{\twonorm{\tP_{n_0}\nabla \rho(\bx)} \leq 2\sqrt{2}\cdot \frac{\sqrt{3d}}{\sqrt{n_0}}(L \vee L_{\infty})\bigg\},
	\end{equation}
	we must have $\tP(\mathcal{A}^c) \leq 2\times 5^d \exp\{-3d\} \leq 1/4$ when $d \geq 2$.
	If $L \leq L_{\infty}$, by taking $\bx_0$ with $\bx_0 \in \argmax_{\twonorm{\bx} > L} p'(\twonorm{\proxpl(\bx)})$, we have $\twonorm{\nabla \rho(\bx_0)} = L_{\infty}$. If $L > L_{\infty}$, by taking $\bx_0$ with $\twonorm{\bx_0} = L$, we have $\twonorm{\nabla \rho(\bx_0)} = L$. Therefore, on the event $\mathcal{A}$, we have
	\begin{align}
		\twonorm{\htheta - \bthetas} &\geq \bigg(\frac{\tau}{1-\tau}\vee 1\bigg)^{-1}\bigg[\epsilon (L\vee L_{\infty}) - (1-\epsilon)\cdot 2\sqrt{2}\cdot \frac{\sqrt{3d}}{\sqrt{n_0}}(L \vee L_{\infty})\bigg]\\
		&\geq \bigg(\frac{\tau}{1-\tau}\vee 1\bigg)^{-1}\bigg[\epsilon (L\vee L_{\infty}) - \sqrt{1-\epsilon}\cdot 2\sqrt{2}\cdot \frac{\sqrt{3d}}{\sqrt{n}}(L \vee L_{\infty})\bigg]\\
		&\geq \bigg(\frac{\tau}{1-\tau}\vee 1\bigg)^{-1}\cdot \frac{1}{2}\epsilon (L \vee L_{\infty})\\
		&\geq (1-\tau)\epsilon\frac{\sigma\sqrt{d}}{8}\cdot \frac{1}{c_1+1},
	\end{align}
	when $n \geq 96d/\epsilon^2$ %{\color{red}[M: I'm not sure of how you are using this]}. \yt{Now it be clearer. We used it to claim $\epsilon (L\vee L_{\infty}) - \sqrt{1-\epsilon}\cdot 2\sqrt{2}\cdot \frac{\sqrt{3d}}{\sqrt{n}}(L \vee L_{\infty}) \geq \frac{1}{2}\epsilon (L\vee L_{\infty})$}
    
 	\noindent (\rom{2}) When $\frac{\epsilon\sigma\sqrt{d}}{4}\cdot \frac{1}{c_1+1}\leq L \vee L_{\infty} \leq \frac{\sigma\sqrt{d}}{4}\cdot \frac{1}{c_1+1}$: 

	We define event 
	\begin{equation}\label{eq: A1}
		\mathcal{A}_1 = \bigcap_{i=1}^{n_0}\{|\twonorm{\bx_i}^2/\sigma^2 - d| \leq d/4+d/16\leq d/2\}.
	\end{equation}
	By Lemma \ref{lem: chi sq concentration} and the union bound, we have
     \begin{equation}
    		\tP(\mathcal{A}_1^c) \leq n\cdot 2\exp\{-d/64\} \leq \frac{1}{16}, 
     \end{equation}
     when $n \leq \frac{1}{32}e^{d/64}$. 

	Under $\mathcal{A}_1$, if $\twonorm{\htheta - \bthetas} \leq \frac{\sigma\sqrt{d}}{2\sqrt{2}}$, we have $\twonorm{\bx_i + \delta(\htheta - \bthetas)}\geq \frac{\sigma\sqrt{d}}{\sqrt{2}} - \twonorm{\htheta - \bthetas} \geq \frac{\sigma\sqrt{d}}{2\sqrt{2}} \geq (c_1 + 1)(L \vee L_{\infty})$. This entails that
	\begin{equation}
		\twonorm{\proxpl(\bx_i + \delta(\htheta - \bthetas))} = \twonorm{\bx_i + \delta(\htheta - \bthetas)} - p'(\twonorm{\proxpl(\bx_i + \delta(\htheta - \bthetas))}) \geq c_1(L \vee L_{\infty}).
	\end{equation}
	And $\twonorm{\bx_i + \delta(\htheta - \bthetas)} \leq \sqrt{\frac{3}{2}}\sigma\sqrt{d} + \twonorm{\htheta - \bthetas} \leq \Big(\sqrt{\frac{3}{2}}+\frac{1}{4\sqrt{2}}\Big)\sigma\sqrt{d}$, which implies that %{\color{red}[M: the first inequality below is not super clear to me]} \yt{Added one more step. Hopefully it is clearer now.}
	\begin{align}
		\twonorm{\proxpl(\bx_i + \delta(\htheta - \bthetas))} 
		&= \twonorm{\bx_i + \delta(\htheta - \bthetas)} - p'(\twonorm{\proxpl(\bx_i + \delta(\htheta - \bthetas))})\\ 
		&\geq  \frac{\sigma\sqrt{d}}{\sqrt{2}} - \frac{\sigma\sqrt{d}}{4\sqrt{2}} - L \vee L_{\infty}\\
		&\geq \frac{1}{4\sqrt{2}}\sigma\sqrt{d} \label{eq: prox lower bdd} \\
		&\geq \frac{1}{4\sqrt{3}+1}\twonorm{\bx_i + \delta(\htheta - \bthetas)},
	\end{align}
	for all $i = [n_0]$.
	Therefore, on the event $\mathcal{A}_1$, using Assumption \ref{asmp: penalty} $(v)-(vi)$, we have %{\color{red}[M: I'm not seeing the second inequality easily]} \yt{This is a trivial bound where the added first term is always negative! I added it to make the upper bound of $\lambdamax(M)$ and the lower bound of $\lambdamin(M)$ involve the same $\widetilde{U}$ term, so that we only need to prove an upper bound of $\widetilde{U}$ in the following.}
	\begin{equation}
		\resizebox{\linewidth}{!}{$\begin{aligned}
		&\lambdamin(M) \\
		&\geq \frac{1}{n}\int_0^1 \sum_{i=1}^n \frac{p''(\twonorm{\prox_{p}(\tilde{\bx}_i + \delta(\htheta-\btheta^*))})}{p''(\twonorm{\prox_{p}(\tilde{\bx}_i + \delta(\htheta-\btheta^*))}) + 1}\cdot \mathds{1}(\twonorm{\tilde{\bx}_i + \delta(\htheta-\btheta^*)} > L) \textup{d}\delta  \\
        &\geq -\frac{c_0}{1-\tau}(1-\epsilon)\cdot \frac{1}{n_0}\int_0^1 \sum_{i=1}^{n_0} \frac{p'(\twonorm{\prox_{p}(\tilde{\bx}_i + \delta(\htheta-\btheta^*))})}{\twonorm{\prox_{p}(\tilde{\bx}_i + \delta(\htheta-\btheta^*))}} \cdot \mathds{1}(\twonorm{\tilde{\bx}_i + \delta(\htheta-\btheta^*)} > L) \textup{d}\delta \\
	   &\quad -\frac{1}{n}\int_0^1 \sum_{i=n_0+1}^n \frac{p''(\twonorm{\prox_{p}(\tilde{\bx}_i + \delta(\htheta-\btheta^*))})}{p''(\twonorm{\prox_{p}(\tilde{\bx}_i + \delta(\htheta-\btheta^*))}) + 1}\cdot \mathds{1}(\twonorm{\tilde{\bx}_i + \delta(\htheta-\btheta^*)} > L) \textup{d}\delta \\
        &\geq -\frac{c_0}{1-\tau}(1-\epsilon)\cdot (4\sqrt{3}+1)\cdot  \frac{1}{n_0}\int_0^1 \sum_{i=1}^{n_0} \frac{p'(\twonorm{\prox_{p}(\tilde{\bx}_i + \delta(\htheta-\btheta^*))})}{\twonorm{\tilde{\bx}_i + \delta(\htheta-\btheta^*)}} \cdot \mathds{1}(\twonorm{\tilde{\bx}_i + \delta(\htheta-\btheta^*)} > L) \textup{d}\delta \\
        &\quad -\frac{1}{n}\int_0^1 \sum_{i=n_0+1}^n \frac{p''(\twonorm{\prox_{p}(\tilde{\bx}_i + \delta(\htheta-\btheta^*))})}{p''(\twonorm{\prox_{p}(\tilde{\bx}_i + \delta(\htheta-\btheta^*))}) + 1}\cdot \mathds{1}(\twonorm{\tilde{\bx}_i + \delta(\htheta-\btheta^*)} > L) \textup{d}\delta \\
	   &\geq -\frac{c_0}{1-\tau}(1-\epsilon)\cdot (4\sqrt{3}+1)\cdot\widetilde{U} - \epsilon\frac{\tau}{1-\tau},
	\end{aligned}$}
	\end{equation}
	where %{\color{red}[M: not obvious where $\tilde{U}$ is coming from]} \yt{It actually first poped up in the upper bound $\lambdamax(M)$ and I tried to use it to control $\lambdamin(M)$, so that I only need to analyze $\widetilde{U}$ in the following.}
    \begin{align}
    \widetilde{U} \coloneqq &\frac{1}{n_0}\int_0^1\sum_{i=1}^{n_0}\frac{p'(\twonorm{\prox_{p}(\bx_i + \delta(\htheta-\btheta^*))})}{p'(\twonorm{\prox_{p}(\bx_i + \delta(\htheta-\btheta^*))}) + \twonorm{\prox_{p}(\bx_i + \delta(\htheta-\btheta^*))}}\cdot \mathds{1}(\twonorm{\bx_i + \delta(\htheta-\btheta^*)} > L) \textup{d}\delta\\
    & \qquad+ \frac{1}{n_0}\int_0^1\sum_{i=1}^{n_0} \mathds{1}(\twonorm{\bx_i + \delta(\htheta-\btheta^*)} \leq L)\textup{d}\delta.
    \end{align}
    Note that the same $\widetilde{U}$ can be used to control $\lambdamax(M)$. More specifically,
	\begin{equation}
	\resizebox{\textwidth}{!}{$\begin{aligned}
		\lambdamax(M) &\leq \frac{1}{n}\int_0^1 \sum_{i=1}^n \frac{p'(\twonorm{\prox_{p}(\tilde{\bx}_i + \delta(\htheta-\btheta^*))})}{p'(\twonorm{\prox_{p}(\tilde{\bx}_i + \delta(\htheta-\btheta^*))}) + \twonorm{\prox_{p}(\tilde{\bx}_i + \delta(\htheta-\btheta^*))}}\cdot \mathds{1}(\twonorm{\tilde{\bx}_i + \delta(\htheta-\btheta^*)} > L) \textup{d}\delta \\
		&\quad+ \frac{1}{n}\int_0^1\sum_{i=1}^n \mathds{1}(\twonorm{\tilde{\bx}_i + \delta(\htheta-\btheta^*)} \leq L)\textup{d}\delta \\
		&\leq (1-\epsilon)\widetilde{U} + \epsilon,
	\end{aligned}$}
	\end{equation}

By \eqref{eq: lower bound main}:
\begin{equation}
	\bigg[\bigg(\frac{c_0(4\sqrt{3}+1)}{1-\tau}\vee 1\bigg)\widetilde{U}(1-\epsilon)+\bigg(\frac{\tau}{1-\tau}\vee 1\bigg)\epsilon\bigg]\twonorm{\htheta - \btheta^*} \geq \epsilon \twonorm{\nabla \rho(\bx_0)} - (1-\epsilon)\twonorm{\tP_{n_0}\nabla\rho(\bx)},
\end{equation}
which implies
\begin{equation}
\label{eq: lb_prop3}
	\twonorm{\htheta - \btheta^*} \geq \frac{1-\tau}{(c_0 \vee 1)(4\sqrt{3}+1)}\cdot \frac{\epsilon \twonorm{\nabla \rho(\bx_0)} - (1-\epsilon)\twonorm{\tP_{n_0}\nabla\rho(\bx)}}{\widetilde{U} + \epsilon}.
\end{equation}
When $\twonorm{\htheta - \btheta^*} \leq \frac{1}{2}\sigma\epsilon\sqrt{d} \leq \frac{\sigma}{4}\sqrt{d}$, with probability at least $1-\delta$:
\begin{align}
	\widetilde{U} &\leq \max_{\twonorm{\bu} \leq \sigma\sqrt{d}\epsilon/2}\Bigg\{\frac{1}{n_0}\sum_{i=1}^{n_0}\frac{p'(\twonorm{\proxpl(\bx_i+\bu)})}{\twonorm{\bx_i+\bu}}\mathds{1}(\twonorm{\bx_i+\bu} > L)  + \frac{1}{n_0}\sum_{i=1}^{n_0}\mathds{1}(\twonorm{\bx_i+\bu} \leq L)\Bigg\} \\
	&\leq \tE\max_{\twonorm{\bu} \leq \sigma\sqrt{d}\epsilon/2}\bigg\{\frac{p'(\twonorm{\proxpl(\bx+\bu)})}{\twonorm{\bx+\bu}}\mathds{1}(\twonorm{\bx+\bu} > L)+ \mathds{1}(\twonorm{\bx+\bu} \leq L)\bigg\} + \sqrt{\frac{\log (1/\delta)}{2n}} \\
	&\quad + (1-\epsilon)\tE \max_{\twonorm{\bu} \leq \sigma\sqrt{d}\epsilon/2}\bigg\{\frac{1}{n_0}\sum_{i=1}^{n_0}\epsilon_i\bigg[\frac{p'(\twonorm{\proxpl(\bx_i+\bu)})}{\twonorm{\bx_i+\bu}}\mathds{1}(\twonorm{\bx_i+\bu} > L)+ \mathds{1}(\twonorm{\bx_i+\bu} \leq L)\bigg]\bigg\}  \\
	&\leq 2(1-\epsilon)\tE\max_{\twonorm{\bu} \leq \sigma\sqrt{d}\epsilon/2}\bigg\{\frac{p'(\twonorm{\proxpl(\bx+\bu)})}{\twonorm{\bx+\bu}}\mathds{1}(\twonorm{\bx+\bu} > L)+ \mathds{1}(\twonorm{\bx+\bu} \leq L)\bigg\} \\ 
	&\quad + \sqrt{\frac{\log (1/\delta)}{2n}},  \label{eq: the first bound of U}
\end{align}
where the second inequality holds due to the bounded difference inequality and the standard symmetrization argument. Indeed, letting $\{\epsilon_i\}_{i=1}^n$ denote the i.i.d. Rademacher variables and $W_i = \max_{\twonorm{\bu} \leq \sigma\sqrt{d}\epsilon/2}\Big\{\frac{1}{n_0}\sum_{i=1}^{n_0}\frac{p'(\twonorm{\proxpl(\bx_i+\bu)})}{\twonorm{\bx_i+\bu}}\mathds{1}(\twonorm{\bx_i+\bu} > L)+ \mathds{1}(\twonorm{\bx_i+\bu} \leq L)\Big\}$, we see that
\begin{align}
	W_i &\leq |W_i - \tE W_i| + \tE W_i \\
	&\leq \tE |W_i - \tE W_i| + \sqrt{\frac{\log(1/\delta)}{2n_0}} + \tE W_i \\
	&\leq \tE\max_{\twonorm{\bu} \leq \sigma\sqrt{d}\epsilon/2}\bigg|\frac{1}{n_0}\sum_{i=1}^{n_0}\epsilon_i\bigg[\frac{p'(\twonorm{\proxpl(\bx_i+\bu)})}{\twonorm{\bx_i+\bu}}\mathds{1}(\twonorm{\bx_i+\bu} > L)+ \mathds{1}(\twonorm{\bx_i+\bu} \leq L)\bigg]\bigg| \\
	&\quad + \sqrt{\frac{\log(1/\delta)}{2n_0}} + \tE W_i \\
	&\leq 2\tE\max_{\twonorm{\bu} \leq \sigma\sqrt{d}\epsilon/2}\bigg\{\frac{p'(\twonorm{\proxpl(\bx+\bu)})}{\twonorm{\bx+\bu}}\mathds{1}(\twonorm{\bx+\bu} > L)+ \mathds{1}(\twonorm{\bx+\bu} \leq L)\bigg\} + \sqrt{\frac{\log(1/\delta)}{2n_0}},
\end{align}
with probability at least $1-\delta$. 

Defining an iid copy of $W_i$ by $W$ and event
\begin{equation}\label{eq: A2}
	\mathcal{A}_2 = \{|W - \tE W| \leq \tE |W - \tE W|+ \sqrt{\frac{\log(1/\delta)}{2n}}\},
\end{equation}
it is easy to see that we have
\begin{equation}
    \mathbb{P}(\mathcal{A}_2) =\mathbb{P}\left(|W - \tE W| \leq \tE |W - \tE W|+ \sqrt{\frac{\log(1/\delta)}{2n}}\right)\geq 1-\delta.
\end{equation}
%$\mathcal{A}_1 = \{|W - \tE W| \leq \tE |W - \tE W|+ \sqrt{\frac{\log(1/\delta)}{2n}}\}$. Then as analyzed above, $\tP(\mathcal{A}_1^c) \leq \delta$. 

Therefore, going back to the upper bound of $\widetilde{U}$, under event $\mathcal{A}_1 \cap \mathcal{A}_2$, since $\epsilon \leq 1/2$,  
\begin{equation}
\resizebox{\textwidth}{!}{$\begin{aligned}
	\widetilde{U} &\leq 2\tE\max_{\twonorm{\bu} \leq \sigma\sqrt{d}\epsilon/2}\bigg\{\bigg[\frac{p'(\twonorm{\proxpl(\bx+\bu)})}{\twonorm{\bx+\bu}}\mathds{1}(\twonorm{\bx+\bu} > L)+ \mathds{1}(\twonorm{\bx+\bu} \leq L)\bigg]\cdot \mathds{1}(\twonorm{\bx} \geq \sigma\sqrt{d}\epsilon)\bigg\} \\
	&\quad +2\tP(\twonorm{\bx} \leq \sigma\sqrt{d}\epsilon) + \sqrt{\frac{\log (1/\delta)}{2n}} \\
	&\leq 2\tE\max_{\twonorm{\bu} \leq \sigma\sqrt{d}\epsilon/2}\bigg\{\bigg[\frac{L_{\infty}}{\twonorm{\bx}-\sigma\sqrt{d}\epsilon/2}\mathds{1}(\twonorm{\bx}+\sigma\sqrt{d}\epsilon/2 > L)+ \mathds{1}(\twonorm{\bx} -  \sigma\sqrt{d}\epsilon/2 \leq L)\bigg] \mathds{1}(\twonorm{\bx} \geq \sigma\sqrt{d}\epsilon)\bigg\} \\
	&\quad +2\tP(\twonorm{\bx} \leq \sigma\sqrt{d}\epsilon) + \sqrt{\frac{\log (1/\delta)}{2n}} \\
	&\leq \tE\bigg\{\bigg[\frac{4L_{\infty}}{\twonorm{\bx}}+ 2\mathds{1}(\twonorm{\bx}\leq L + \sigma\sqrt{d}\epsilon/2)\bigg]\cdot \mathds{1}(\twonorm{\bx} \geq \sigma\sqrt{d}\epsilon)\bigg\}+ 2\tP(\twonorm{\bx} \leq \sigma\sqrt{d}\epsilon) + \sqrt{\frac{\log (1/\delta)}{2n}}  \\
	&\leq 4L_{\infty}\tE\bigg(\frac{1}{\twonorm{\bx}}\bigg) + 2\tP(\twonorm{\bx}\leq L + \sigma\sqrt{d}\epsilon/2) + 2\tP(\twonorm{\bx} \leq \sigma\sqrt{d}\epsilon) + \sqrt{\frac{\log (1/\delta)}{2n}}   \\
	&= 4L_{\infty}\frac{\Gamma((d-1)/2)}{\sqrt{2}\Gamma(d/2)}+ 2\tP(\twonorm{\bx}/\sigma \leq L/\sigma + \sqrt{d}\epsilon/2) + 2\tP(\twonorm{\bx}/\sigma \leq \sqrt{d}\epsilon) + \sqrt{\frac{\log (1/\delta)}{2n}}  \\
	&\leq \frac{4\sqrt{2}L_{\infty}}{\sigma\sqrt{d}} + 2\tP(\twonorm{\bx}/\sigma \leq L/\sigma + \sqrt{d}\epsilon/2) + 2\tP(\twonorm{\bx}/\sigma \leq \sqrt{d}\epsilon) + \sqrt{\frac{\log (1/\delta)}{2n}}. \label{eq: the second bound of U}
	\end{aligned}$}
	\end{equation}
	The last  inequality used that $d>2$.
\begin{enumerate}[(a)]
	\item When $L \leq L_{\infty}$:  
	Note that the density function $f(x)$ of $\chi^2_d$-distribution ($d\geq 1$) is given by $f(x) = \frac{1}{2^{d/2}\Gamma(d/2)}x^{d/2-1}e^{-x/2}$, which is increasing on $[0, d-2]$ and decreasing on $[d-2, +\infty)$. Therefore, since $L \leq \frac{\sigma}{4}\sqrt{d}$ implies that $L/\sigma + \sqrt{d}\epsilon/2 \leq \sqrt{d}/4 + \sqrt{d}/4 \leq d-2$ when $d \geq 4$, it can be shown that 
	\begin{align}
		2\tP(\twonorm{\bx}/\sigma \leq L/\sigma + \sqrt{d}\epsilon/2) &\leq 2\cdot \frac{(L/\sigma+\sqrt{d}\epsilon)^{d-1}}{2^{d/2}\Gamma(d/2)}\cdot (L/\sigma+\sqrt{d}\epsilon/2) \\
		&\leq \frac{(\sqrt{d}/2)^{d-1}}{2^{d/2}\Gamma(d/2)}\cdot 2\sqrt{d}\cdot \bigg(\frac{L_{\infty}}{2\sqrt{d}\sigma} + \frac{\epsilon}{4}\bigg) \\
		&\leq \frac{L_{\infty}}{2\sqrt{d}\sigma} + \frac{\epsilon}{4}, \label{eq: the third bound of U}\\
		2\tP(\twonorm{\bx}/\sigma \leq \sqrt{d}\epsilon) &\leq 2\cdot \frac{(\sqrt{d}\epsilon)^{d-1}}{2^{d/2}\Gamma(d/2)}\cdot \sqrt{d}\epsilon \leq \epsilon, \label{eq: the fourth bound of U}
	\end{align}
	where we used the fact that $\frac{(\sqrt{d}/2)^{d-1}}{2^{d/2}\Gamma(d/2)}\cdot 2\sqrt{d} \leq 1$ when $d \geq 4$.
	
	Let $\delta = 1/4$, then $\sqrt{\frac{\log (1/\delta)}{2n}} = \sqrt{\frac{\log 2}{n}} \leq 2\sqrt{\log 2}\epsilon$ since we now focus on the case $\epsilon \geq \frac{1}{2\sqrt{n}}$. The case $\epsilon < \frac{1}{2\sqrt{n}}$ was handled at the beginning of the proof, where it was shown to follow from Lemma \ref{lem: minimax lower bound mean est}. Take $\bx_0$ with $\bx_0 \in \argmax\limits_{\twonorm{\bx} > L}p'(\twonorm{\proxpl(\bx)})$, since $\frac{\epsilon\sigma\sqrt{d}}{4}\cdot \frac{1}{c_1+1} \leq L_{\infty}$, then \eqref{eq: lb_prop3}, \eqref{eq: the second bound of U}, \eqref{eq: the third bound of U} and \eqref{eq: the fourth bound of U} give
	\begin{align}
		\frac{\epsilon\twonorm{\nabla \rho(\bx_0)} - (1-\epsilon)\twonorm{\tP_{n_0}\nabla \rho(\bx)}}{\widetilde{U} + \epsilon} &\geq \epsilon\frac{L_{\infty}/2}{ \frac{4\sqrt{2}L_{\infty}}{\sigma\sqrt{d}} + \frac{L_{\infty}}{2\sqrt{d}\sigma}+ \frac{5}{4}\epsilon+ 2\sqrt{\log 2}\epsilon}  \\
		&\geq \epsilon\frac{L_{\infty}/2}{[4\sqrt{2} + \frac{1}{2} + (5 + 8\sqrt{\log 2})(c_1+1)]\frac{L_{\infty}}{\sqrt{d}\sigma}} \\
		&\geq \frac{1}{36+24c_1}\epsilon\sigma\sqrt{d},
	\end{align}
	with probability at least $1-\delta = 3/4$. 
	\item When $L > L_{\infty}$, by \eqref{eq: the second bound of U}, under $\mathcal{A}_1 \cap \mathcal{A}_2$: 
\begin{equation}
	\widetilde{U}+\epsilon \leq \frac{4\sqrt{2}L_{\infty}}{\sigma\sqrt{d}} + 2\tP(\twonorm{\bx}/\sigma \leq L/\sigma + \sqrt{d}\epsilon/2) + 2\tP(\twonorm{\bx}/\sigma \leq \sqrt{d}\epsilon) + \sqrt{\frac{\log (1/\delta)}{2n}} + \epsilon,
\end{equation}
 with probability at least $1-\delta$.

Since $L \leq \frac{1}{4}\sigma\sqrt{d}$, similar to the previous analysis, by \eqref{eq: the third bound of U} and \eqref{eq: the fourth bound of U} and considering $\bx_0$ with $\twonorm{\bx_0} = L$, on event $\mathcal{A} \cap \mathcal{A}_1 \cap \mathcal{A}_2$, we have
	\begin{align}
		\frac{\epsilon\twonorm{\nabla \rho(\bx_0)} - (1-\epsilon)\twonorm{\tP_{n_0}\nabla \rho(\bx)}}{\widetilde{U} + \epsilon} 
        &\geq \frac{L/2}{\frac{4\sqrt{2}L_{\infty}}{\sigma\sqrt{d}} + \frac{L_{\infty}}{2\sqrt{d}\sigma}+ \frac{5}{4}\epsilon+ 2\sqrt{\log 2}\epsilon}  \\
        &\geq \frac{L/2}{[4\sqrt{2} + \frac{1}{2} + (5 + 8\sqrt{\log 2})(c_1+1)]\frac{L}{\sqrt{d}\sigma}}\\
		&\geq \frac{1}{36+24c_1}\sigma\sqrt{d},
	\end{align}
	with probability at least $1-\delta= 3/4$.  
\end{enumerate}

Finally, putting everything together, on event $\mathcal{A} \cap \mathcal{A}_1 \cap \mathcal{A}_2$, we have
\begin{align}
	\twonorm{\htheta - \bthetas} &\geq \frac{1-\tau}{(c_0 \vee 1)(4\sqrt{3}+1)}\cdot \frac{\epsilon\twonorm{\nabla \rho(\bx_0)} - (1-\epsilon)\twonorm{\tP_{n_0}\nabla \rho(\bx)}}{\widetilde{U} + \epsilon} \\
	&\geq \frac{1-\tau}{(c_0 \vee 1)(4\sqrt{3}+1)}\cdot  \frac{1}{36+24c_1}\epsilon\sigma\sqrt{d}.
\end{align}
Therefore,
\begin{equation}
	\sup_{\mathcal{M}}\tP\bigg(\twonorm{\htheta - \btheta^*} > \frac{1-\tau}{(c_0 \vee 1)(4\sqrt{3}+1)}\cdot  \frac{1}{36+24c_1}\epsilon\sigma\sqrt{d}\bigg) \geq 1-\tP(\mathcal{A}^c) - \tP(\mathcal{A}_1^c) - \tP(\mathcal{A}_2^c) \geq \frac{7}{16}.
\end{equation}

\noindent \textbf{Case \Rom{2}: $\rho$ is non-differentiable}. 

Note that the non-differentiable points must fall into $L\mathcal{S}^{d-1}$. We can still use Taylor expansion but we need to be careful about the non-differentiable points of $\rho$. Specifically, we have
\begin{align}
	\nabla \rho(\bx_i - \htheta) &= \nabla \rho(\bx_i) + \mathds{1}(\twonorm{\bx_i} < L < \twonorm{\bx_i - \htheta})\bigg[p'(\twonorm{\proxpl(\bx_i - t_i\htheta)})\cdot \frac{\bx_i - t_i\htheta}{\twonorm{\bx_i - t_i\htheta}} - (\bx_i - t_i\htheta)\bigg] \\
	&\quad - \mathds{1}(\twonorm{\bx_i - \htheta} < L < \twonorm{\bx_i})\bigg[p'(\twonorm{\proxpl(\bx_i - t_i\htheta)})\cdot \frac{\bx_i - t_i\htheta}{\twonorm{\bx_i - t_i\htheta}} - (\bx_i - t_i\htheta)\bigg] \\
	&\quad + \int_0^1 \nabla^2 \rho(\bx_i - t\htheta)(\btheta^* - \htheta) \textup{d}t,
\end{align}
where $t_i \in [0, 1]$ such that $\twonorm{\bx_i - t_i\htheta} = L$. Denote
\begin{align}
	A_{i1} &= \mathds{1}(\twonorm{\bx_i} < L < \twonorm{\bx_i - \htheta})\bigg[p'(\twonorm{\proxpl(\bx_i - t_i\htheta)})\cdot \frac{\bx_i - t_i\htheta}{\twonorm{\bx_i - t_i\htheta}} - (\bx_i - t_i\htheta)\bigg],\\
	A_{i2} &= \mathds{1}(\twonorm{\bx_i - \htheta} < L < \twonorm{\bx_i})\bigg[p'(\twonorm{\proxpl(\bx_i - t_i\htheta)})\cdot \frac{\bx_i - t_i\htheta}{\twonorm{\bx_i - t_i\htheta}} - (\bx_i - t_i\htheta)\bigg]., \\
	A_i &= A_{i1} + A_{i2}.
\end{align}
Note that $p'(\twonorm{\proxpl(\bx_i - t_i\htheta)})$ is a deterministic function of $\twonorm{\bx_i - t_i\htheta} = L$, therefore it is a fixed number that depends on $L$.

Consider $\bx_0$ as a function of $\{\bx_i\}_{i=1}^{n_0}$ satisfying that $\bx_0(\{\bx_i\}_{i=1}^{n_0}) = -\bx_0(-\{\bx_i\}_{i=1}^{n_0})$. Then notice that $\htheta = \htheta(\{\bx_i\}_{i=1}^{n})$ is a symmetric function of $\{\bx_i\}_{i=1}^{n}$, in the sense that $\htheta(\{\bx_i\}_{i=1}^{n}) = -\htheta(-\{\bx_i\}_{i=1}^{n})$. If we let $\bx_i = \bx_0$ for $i = (n_0+1):n$, since $\bx_0$ is a symmetric function of $\{\bx_i\}_{i=1}^{n_0}$, $\htheta$ is a symmetric function of $\{\bx_i\}_{i=1}^{n_0}$. Note that for $\{\bx_i\}_{i=1}^{n_0}$ and $-\{\bx_i\}_{i=1}^{n_0}$, $t_i$ would stay the same.

Finally, note that since the distribution of $\{\bx_i\}_{i=1}^{n_0}$ is symmetric around $0$, we must have %{\color{red}[M: I found the explanation a bit confusing but it also seems to be a consequence of Theorem \ref{thm: psi-form}]} \yt{We used a simple fact here: If function $f(\bx)$ is odd, then $\tE f(\bx) = 0$ when the distribution of $\bx$ is symmetric around $0$. Thm 11 is nice but it only applies to the case when $\rho$ is differentiable (which is implied by the conditions on the penalty there).}
\begin{equation}\label{eq: A mean zero}
	\tE\bigg(\sum_{i=1}^n A_{i1}\bigg) = \tE\bigg(\sum_{i=1}^n A_{i2}\bigg) = 0.
\end{equation}
If we consider $\{\bu_j\}_{j=1}^N$ as a $1/2$-cover of $\mathcal{S}^{d-1}$ w.r.t. to the $\ell_2$-norm, then
\begin{equation}\label{eq: A finite cover}
	\Big\|\sum_{i=1}^n A_i \Big\|_2 = \sup_{\twonorm{\bu} = 1} \Big\<\sum_{i=1}^n A_i, \bu\Big\> \leq 2\max_{j=1:N} \Big\<\sum_{i=1}^n A_i, \bu_j\Big\>,
\end{equation}
where $N \leq 5^d$. Next, we use \eqref{eq: A mean zero} and \eqref{eq: A finite cover} to upper bound $\|\sum_{i=1}^n A_i\|_2$ with the events
\begin{align}
	\mathcal{A}'_j = &\bigg\{\frac{1}{n}\sup_{\twonorm{\btheta} \leq \sigma \epsilon \sqrt{d}} \bigg|\sum_{i=1}^n \Big(\<A_i, \bu_j\> - \tE \<A_i, \bu_j\> \Big)\bigg| \\ 
	&\quad\quad- \frac{1}{n}\tE \sup_{\twonorm{\btheta} \leq \sigma \epsilon \sqrt{d}} \bigg|\sum_{i=1}^n \Big(\<A_i, \bu_j\> - \tE \<A_i, \bu_j\> \Big)\bigg| \leq |L_{\infty}-L|\sqrt{\frac{2\log(1/\delta)}{n}}\bigg\}, \, j\in[N].
\end{align}
By the bounded differences inequality, we have $\tP(\cap_{j=1}^N\mathcal{A}'_j) \geq 1-N\delta$. Therefore, when $\twonorm{\htheta - \btheta^*} \leq \sigma \epsilon \sqrt{d}$, by standard symmetrization argument, under event $\cap_{j=1}^N\mathcal{A}'_j$, we have
\begin{align}
	\frac{1}{n}\max_{j=1:N} \Big\<\sum_{i=1}^n A_i, \bu_j\Big\> &= \frac{1}{n}\max_{j=1:N} \sum_{i=1}^n \Big(\<A_i, \bu_j\> - \tE \<A_i, \bu_j\>\Big) \\
	&\leq \frac{1}{n}\max_{j=1:N}\sup_{\twonorm{\btheta} \leq \sigma \epsilon \sqrt{d}} \Big(\<A_i, \bu_j\> - \tE \<A_i, \bu_j\>\Big)\\
	&\leq \frac{1}{n}\max_{j=1:N} \tE \sup_{\twonorm{\btheta} \leq \sigma \epsilon \sqrt{d}}\bigg|\sum_{i=1}^n \Big(\<A_i, \bu_j\> - \tE \<A_i, \bu_j\>\Big)\bigg| + |L_{\infty}-L|\sqrt{\frac{2\log(1/\delta)}{n}}\\
	&\leq \frac{2}{n}\max_{j=1:N} \tE_{\bx}\tE_{\bm{\epsilon}} \sup_{\twonorm{\btheta} \leq \sigma \epsilon \sqrt{d}}\bigg|\sum_{i=1}^n \epsilon_i\<A_i, \bu_j\>\bigg|+ |L_{\infty}-L|\sqrt{\frac{2\log(1/\delta)}{n}} \\
	&\leq \frac{2}{n}\max_{j=1:N} \tE_{\bx}\tE_{\bm{\epsilon}} \sup_{\twonorm{\btheta} \leq \sigma \epsilon \sqrt{d}}\bigg|\sum_{i=1}^n \epsilon_i\<A_{i1}, \bu_j\>\bigg|+ |L_{\infty}-L|\sqrt{\frac{2\log(1/\delta)}{n}} \\
	&\quad + \frac{2}{n}\max_{j=1:N} \tE_{\bx}\tE_{\bm{\epsilon}} \sup_{\twonorm{\btheta} \leq \sigma \epsilon \sqrt{d}}\bigg|\sum_{i=1}^n \epsilon_i\<A_{i2}, \bu_j\>\bigg|.
\end{align}
Denote $f_{\btheta}(\bx) = \mathds{1}(\twonorm{\bx} < L < \twonorm{\bx - \btheta})\cdot \<\bx - t(\btheta)\btheta, \bu\>$, where $t(\btheta)$ is defined to be the number $t \in (0, 1)$ satisfying $\twonorm{\bx - t\btheta} = L$. It is straightforward to see that the VC dimension of the subgraph $\{(r, \bx): r < f_{\btheta}(\bx)\}$ is no larger than $d+1$. Therefore by Theorem 2.6.7 in \cite{van1996weak}, the covering number of $\mathcal{F} = \{f_{\btheta}: \btheta \in \mathbb{R}^d\}$ under $L_2(\tP_{n, \epsilon})$ can be bounded as
\begin{equation}
	N(\delta |L_{\infty} - L|, \mathcal{F}, L_2(\tP_{n, \epsilon})) \leq K(16e)^{d+1}(1/\delta)^{2(d+1)},
\end{equation}
which implies that
\begin{equation}
	N(\delta, \mathcal{F}, L_2(\tP_{n, \epsilon})) \leq K(16e)^{d+1}\bigg(\frac{|L_{\infty} - L|}{\delta}\bigg)^{2(d+1)},
\end{equation}
where $K$ is an absolute constant and it does not depend on $n$, $p$, $\epsilon$, $L$, and $L_{\infty}$. By Dudley's entropy integral,
\begin{align}
	\tE_{\bm{\epsilon}} \sup_{\twonorm{\btheta} \leq \sigma \epsilon \sqrt{d}}\bigg|\frac{1}{\sqrt{n}}\sum_{i=1}^n \epsilon_i\<A_{i2}, \bu_j\>\bigg| &\leq 8\sqrt{2} \int_0^{|L_{\infty}-L|}\sqrt{\log N(\delta, \mathcal{F}, L_2(\tP_{n, \epsilon}))} \textup{d}\delta \\
	&\leq 8\sqrt{2}|L_{\infty}-L|\sqrt{d+1}\int_0^1 \sqrt{\log(16Ke) + 2\log(1/\delta)}\textup{d}\delta \\
	&\leq 8\sqrt{2}|L_{\infty}-L|\sqrt{d+1}\sqrt{\log(16Ke)} + \sqrt{2}\sqrt{d+1}\int_0^{\infty}\sqrt{\delta}e^{-\delta}\textup{d}\delta \\ 
	&\leq 8\sqrt{2}|L_{\infty}-L|\sqrt{d+1}\big[\sqrt{\log(16Ke)} + \sqrt{2}(2-3e^{-1})\big]. 
\end{align}
The same argument can be used the term that depends on $A_{i1}$, which implies that
\begin{equation}
	\bigg\|\frac{1}{n}\sum_{i=1}^n A_i \bigg\|_2 \leq \frac{64\sqrt{2}}{\sqrt{n}}|L_{\infty}-L|\sqrt{d+1}\big[\sqrt{\log(16Ke)} + \sqrt{2}(2-3e^{-1})\big] + 2|L_{\infty}-L|\sqrt{\frac{\log 16+d\log 5}{n}},
\end{equation}
by taking $\delta = (16\times 5^d)^{-1}$ in $\{\mathcal{A}'_j\}_{j=1}^N$. 

Therefore, when $\frac{64\sqrt{2}}{\sqrt{n}}\sqrt{d+1}[\sqrt{\log(16Ke)} + \sqrt{2}(2-3e^{-1})] + 2\sqrt{\frac{\log 16+d\log 5}{n}} \leq \epsilon/4$, by following the same proof given for the differentiable case in Case I, we can show that with probability at least $1-(\frac{1}{4}+\frac{1}{4}+ \frac{1}{4} + \frac{1}{16}) = 3/16$ (we need to condition on $\cap_{j=1}^N \mathcal{A}'_j$ in addition to the event $\mathcal{A}_1$ in Case I), $\twonorm{\hotheta - \btheta^*} \geq \frac{1-\tau}{(c_0 \vee 1)(4\sqrt{3}+1)}\cdot  \frac{1}{72+48c_1}\epsilon\sigma\sqrt{d}$ for all  points $\hotheta$ in the argmin \eqref{eq: m-est form 2}.
\end{proof}

\begin{proof}[Proof of Proposition \ref{prop: finite sample lower bound small lambda}]
	Fix the outlier set $S^c = [(n_0+1):n] \subseteq [n]$ with $|S^c|/n = \epsilon$. WLOG, consider the case $\btheta^* = 0$. 

    Recall the event $\mathcal{A}_1 = \bigcap_{i=1}^{n_0}\{|\twonorm{\bx_i}^2/\sigma^2 - d| \leq d/4+d/16\leq d/2\}$ we defined in \eqref{eq: A1} and $\tP(\mathcal{A}_1^c) \leq n\cdot 2\exp\{-d/64\} \leq \frac{1}{16}$.

    By following the same argument for \eqref{eq: prox lower bdd} in the proof of Proposition \ref{prop: finite sample lower bound large lambda}, we can show that $\forall i \in [n], \forall \delta \in [0, 1]$,
    \begin{equation}
	  \twonorm{\proxpl(\bx_i + \delta(\htheta - \bthetas))} \geq \frac{1}{4\sqrt{2}}\sigma\sqrt{d}.
    \end{equation}
    Let $\widetilde{\bx}_i = M(\bx_i) = \bx_0 = \argmax\limits_{\twonorm{\bx} \geq \frac{\sigma\sqrt{d}}{\sqrt{2}}\epsilon} p'(\twonorm{\prox(\bx)})$ for $i = [n]\setminus [n_0]$. 

    \noindent \textbf{Case \Rom{1}: $p'(\twonorm{\prox(\bx_0)}) > 0$:}
    Note that 
    \begin{equation}
\twonorm{\bx_0+\delta(\htheta - \bthetas)} \geq \twonorm{\bx_0} - \twonorm{\htheta - \bthetas} \geq \frac{3\sigma\sqrt{d}}{4\sqrt{2}}\epsilon \geq  (c_1+1)(L \vee L_{\infty}),
    \end{equation}
    when $\twonorm{\htheta - \bthetas} \leq \frac{\sigma\sqrt{d}}{4\sqrt{2}}\epsilon$. This and Lemma \ref{lem: prox} imply that
    \begin{equation}
	   \twonorm{\proxpl(\bx_0+\delta(\htheta - \bthetas))} \geq \twonorm{\bx_0+\delta(\htheta - \bthetas)} - p'(\twonorm{\proxpl(\bx_0+\delta(\htheta - \bthetas))}) \geq c_1(L \vee L_{\infty}),
    \end{equation}
    and
    \begin{align}
	\twonorm{\proxpl(\bx_0+\delta(\htheta - \bthetas))} &\geq \twonorm{\bx_0+\delta(\htheta - \bthetas)} - L_{\infty} \\
	&\geq \frac{3\sigma\sqrt{d}}{4\sqrt{2}}\epsilon - \frac{\sigma\sqrt{d}}{2\sqrt{2}}\epsilon  \\
	&\geq \frac{\sigma\sqrt{d}}{4\sqrt{2}}\epsilon.
    \end{align}
    Further note that $\twonorm{\bx_0} \geq \frac{\sigma\sqrt{d}}{\sqrt{2}}\epsilon$ by definition and $\frac{\sigma\sqrt{d}}{\sqrt{2}}\epsilon \geq L$ by assumption. It follows that $\|\bx_0\|>L$ and by Lemma \ref{lem: prox} 
    \begin{equation}
	   \twonorm{\proxpl(\bx_0)} = \twonorm{\bx_0} - p'(\twonorm{\proxpl(\bx_0)}) \geq \frac{\sigma\sqrt{d}}{\sqrt{2}}\epsilon - L_{\infty} \geq 2(c_1+1)(L \vee L_{\infty}) - L_{\infty} \geq (2c_1+1)(L \vee L_{\infty}).
    \end{equation}
    Define $\psi(t) = p'(\twonorm{\proxpl(\bx)})$ with $t = \twonorm{\bx}$. Note that $\psi$ can be defined as a function of $t$ because $\twonorm{\proxpl(\bx)}$ is a deterministic function of $\twonorm{\bx}$ when $\twonorm{\bx} \geq L$. In this case $p'(\twonorm{\proxpl(\bx)}) = \twonorm{\bx} - \twonorm{\proxpl(\bx)}$, and we have
    \begin{equation}
	\frac{\textrm{d} \twonorm{\proxpl(\bx)}}{\textrm{d}t} = \frac{1}{1 + p''(\twonorm{\proxpl(\bx)})}.
    \end{equation}
    Therefore,
    \begin{equation}
	  \psi'(t) = p''(\twonorm{\proxpl(\bx)})\cdot \frac{\textrm{d} \twonorm{\proxpl(\bx)}}{\textrm{d}t} = \frac{p''(\twonorm{\proxpl(\bx)})}{1 + p''(\twonorm{\proxpl(\bx)})}.
    \end{equation}
    Then it follows from Assumption \ref{asmp: penalty}.(\rom{6})  that when $t = \twonorm{\bx} \geq (c_1+1)(L \vee L_{\infty})$, $\twonorm{\proxpl(\bx)} \geq \twonorm{\bx} - p'(\twonorm{\proxpl(\bx)}) \geq c_1(L \vee L_{\infty})$, $\twonorm{\proxpl(\bx)} \geq \twonorm{\bx} - L_{\infty} \geq \frac{c_1}{c_1+1}t$, and
    \begin{align}
	\psi'(t) &\geq -\frac{c_0}{1-\tau} p'(\twonorm{\proxpl(\bx)})\cdot \frac{1}{\twonorm{\proxpl(\bx)}} \\
	&\geq -\frac{c_0}{1-\tau} \cdot \frac{c_1+1}{c_1}\cdot \frac{p'(\twonorm{\proxpl(\bx)})}{\twonorm{\bx}}\\
	&= -\frac{c_0}{1-\tau} \cdot \frac{c_1+1}{c_1}\cdot p'(\twonorm{\proxpl(\bx)})\cdot \frac{1}{t}.
    \end{align}
    This implies
    \begin{align}
	  &\frac{\textrm{d}\log \psi(t)}{d\log t} \geq -\frac{c_0}{1-\tau} \cdot \frac{c_1+1}{c_1} \\
	  &\Rightarrow \log \psi(2t) - \log \psi(t) \geq -\frac{c_0}{1-\tau} \cdot \frac{c_1+1}{c_1}\cdot \log 2\\
	  &\Rightarrow \psi(2t) \geq 2^{-\frac{c_0}{1-\tau} \cdot \frac{c_1+1}{c_1}}\psi(t).
    \end{align}
    Therefore,
    \begin{align}
	  p'(\twonorm{\proxpl(\bx_0)}) &= \max\limits_{\twonorm{\bx} \geq \frac{\sigma\sqrt{d}}{\sqrt{2}}\epsilon} p'(\twonorm{\prox(\bx)}) \\
	  &= \sup_{t \geq \frac{\sigma\sqrt{d}}{\sqrt{2}}\epsilon}\psi(t) \\
	  &\geq 2^{-\frac{c_0}{1-\tau} \cdot \frac{c_1+1}{c_1}}\cdot \sup_{t \geq \frac{\sigma\sqrt{d}}{2\sqrt{2}}\epsilon}\psi(t) \\
	  &= 2^{-\frac{c_0}{1-\tau} \cdot \frac{c_1+1}{c_1}}\cdot \sup_{\twonorm{\bx} \geq \frac{\sigma \sqrt{d}}{2\sqrt{2}}\epsilon}p'(\twonorm{\proxpl(\bx)}).\label{eq: p prime bound 1}
    \end{align}

    Since $L < \frac{\sigma \sqrt{d}\epsilon}{4} \leq \frac{\sigma\sqrt{d}}{8}$, under $\mathcal{A}_2$, when $\twonorm{\htheta - \btheta^*} \leq \frac{\sigma\epsilon\sqrt{d}}{2\sqrt{2}}$, $\twonorm{\bx_i} \geq \frac{\sigma\sqrt{d}}{\sqrt{2}} > L$ and $\twonorm{\bx_i - \delta \htheta} \geq \twonorm{\bx_i} - \twonorm{\htheta} \geq \frac{\sigma\sqrt{d}}{\sqrt{2}} -  \frac{\sigma\sqrt{d}}{4\sqrt{2}} \geq \frac{\sigma\sqrt{d}}{2\sqrt{2}} > L$ for all $i \in [n_0]$. Similarly, $\twonorm{\bx_0 - \delta \htheta} \geq \twonorm{\bx_0} - \twonorm{\htheta} \geq \frac{\epsilon\sigma\sqrt{d}}{\sqrt{2}} -  \frac{\epsilon\sigma\sqrt{d}}{4\sqrt{2}} \geq \frac{\epsilon\sigma\sqrt{d}}{2\sqrt{2}} > L$. Therefore, under $\mathcal{A}_1$, for all $i \in [n]$, $\nabla^2 \rho(\widetilde{\bx}_i-\btheta^*-\delta(\htheta - \btheta^*))$ exists for $\delta \in (0, 1)$, because by Lemma \ref{lem: dev and hessian}, non-differentiable points of $\rho$ must be on $L\mathcal{S}^{d-1}$. This helps us avoid the arguments in case (\Rom{2}) in the proof of Proposition \ref{prop: finite sample lower bound large lambda}. 

	Similar to the case (\Rom{1}) in the proof of Proposition \ref{prop: finite sample lower bound large lambda}, by Taylor expansion:
	\begin{equation}
		0 = \tP_n \nabla \rho(\tilde{\bx} -\htheta) = \tP_n \rho(\tilde{\bx} -\btheta^*) + \tP_n \bigg[\int_0^1 \nabla^2 \rho(\tilde{\bx}-\btheta^*-t(\htheta - \btheta^*)) \textup{d}t \bigg] (\htheta - \btheta^*).
	\end{equation}
	Denote $M = \tP_n \big[\int_0^1 \nabla^2 \rho(\bx-\btheta^*-t(\htheta - \btheta^*)) \textup{d}t \big]$. Following the same argument in the proof of Proposition \ref{prop: finite sample lower bound large lambda}, under $\mathcal{A}\cap \mathcal{A}_1 \cap \mathcal{A}_2$, we can get
	\begin{equation}
		\max\{|\lambdamax(M)|, |\lambdamin(M)|\}\twonorm{\htheta - \bthetas} \geq \epsilon\twonorm{\nabla \rho(\bx_0)} - (1-\epsilon)\twonorm{\tP_{n_0}\nabla \rho(\bx)},
	\end{equation}
	where by \eqref{eq: M},
	\begin{equation}
	\resizebox{\textwidth}{!}{$\begin{aligned}
		\lambdamax(M) &\leq \frac{1}{n}\int_0^1 \sum_{i=1}^n \frac{p'(\twonorm{\prox_{p}(\tilde{\bx}_i + \delta(\htheta-\btheta^*))})}{p'(\twonorm{\prox_{p}(\tilde{\bx}_i + \delta(\htheta-\btheta^*))}) + \twonorm{\prox_{p}(\tilde{\bx}_i + \delta(\htheta-\btheta^*))}}\cdot \mathds{1}(\twonorm{\tilde{\bx}_i + \delta(\htheta-\btheta^*)} > L) \textup{d}\delta \\
		&\quad+ \frac{1}{n}\int_0^1\sum_{i=1}^n \mathds{1}(\twonorm{\tilde{\bx}_i + \delta(\htheta-\btheta^*)} \leq L)\textup{d}\delta \\
		&= \frac{1}{n}\int_0^1 \sum_{i=1}^n \frac{p'(\twonorm{\prox_{p}(\tilde{\bx}_i + \delta(\htheta-\btheta^*))})}{\twonorm{\tilde{\bx}_i + \delta(\htheta-\btheta^*)}}\cdot \mathds{1}(\twonorm{\tilde{\bx}_i + \delta(\htheta-\btheta^*)} > L) \textup{d}\delta\\
		&= \frac{1}{n}\int_0^1 \sum_{i=1}^{n_0} \frac{p'(\twonorm{\prox_{p}(\bx_i + \delta(\htheta-\btheta^*))})}{\twonorm{\bx_i + \delta(\htheta-\btheta^*)}}\cdot \mathds{1}(\twonorm{\bx_i + \delta(\htheta-\btheta^*)} > L) \textup{d}\delta \\
		&\quad + \epsilon \int_0^1\frac{p'(\twonorm{\prox_{p}(\bx_0 + \delta(\htheta-\btheta^*))})}{\twonorm{\bx_0 + \delta(\htheta-\btheta^*)}}\cdot \mathds{1}(\twonorm{\bx_0 + \delta(\htheta-\btheta^*)} > L) \textup{d}\delta \\
		&\leq (1-\epsilon)\cdot \frac{1}{\sigma\sqrt{d}/(2\sqrt{2})}\cdot \sup_{\twonorm{\bx} \geq \frac{\sigma \sqrt{d}}{2\sqrt{2}}\epsilon}p'(\twonorm{\proxpl(\bx)}) + \epsilon \cdot \frac{1}{\epsilon\sigma\sqrt{d}/(2\sqrt{2})}\cdot \sup_{\twonorm{\bx} \geq \frac{\sigma \sqrt{d}}{2\sqrt{2}}\epsilon}p'(\twonorm{\proxpl(\bx)}) \\
		&\leq \frac{3\sqrt{2}}{\sigma \sqrt{d}}\cdot \sup_{\twonorm{\bx} \geq \frac{\sigma \sqrt{d}}{2\sqrt{2}}\epsilon}p'(\twonorm{\proxpl(\bx)}). \label{eq: lambdamax M}
	\end{aligned}$}
	\end{equation}
	 and
	\begin{align}
	    &\lambdamin(M) \\
		&\geq \frac{1}{n}\int_0^1 \sum_{i=1}^n \frac{p''(\twonorm{\prox_{p}(\tilde{\bx}_i + \delta(\htheta-\btheta^*))})}{p''(\twonorm{\prox_{p}(\tilde{\bx}_i + \delta(\htheta-\btheta^*))}) + 1}\cdot \mathds{1}(\twonorm{\tilde{\bx}_i + \delta(\htheta-\btheta^*)} > L) \textup{d}\delta  \\
        &\geq -\frac{c_0}{1-\tau}\cdot \frac{1}{n}\int_0^1 \sum_{i=1}^{n_0} \frac{p'(\twonorm{\prox_{p}(\bx_i + \delta(\htheta-\btheta^*))})}{\twonorm{\prox_{p}(\bx_i + \delta(\htheta-\btheta^*))}} \cdot \mathds{1}(\twonorm{\bx_i + \delta(\htheta-\btheta^*)} > L) \textup{d}\delta \\
	   &\quad -\frac{c_0}{1-\tau}\cdot \frac{1}{n}\int_0^1 \sum_{i=n_0+1}^n \frac{p'(\twonorm{\prox_{p}(\bx_0 + \delta(\htheta-\btheta^*))})}{\twonorm{\prox_{p}(\bx_0 + \delta(\htheta-\btheta^*))}} \cdot \mathds{1}(\twonorm{\bx_0 + \delta(\htheta-\btheta^*)} > L) \textup{d}\delta \\
	   &\geq -\frac{c_0}{1-\tau}(1-\epsilon) \cdot \frac{1}{\sigma\sqrt{d}/4\sqrt{2}}\cdot \sup_{\twonorm{\bx} \geq \frac{\sigma \sqrt{d}}{2\sqrt{2}}\epsilon}p'(\twonorm{\proxpl(\bx)}) \\
	   &\quad -\frac{c_0}{1-\tau}\epsilon \cdot \frac{1}{\epsilon\sigma\sqrt{d}/4\sqrt{2}}\cdot \sup_{\twonorm{\bx} \geq \frac{\sigma \sqrt{d}}{2\sqrt{2}}\epsilon}p'(\twonorm{\proxpl(\bx)}) \\
        &\geq -\frac{c_0}{1-\tau}\cdot \frac{8\sqrt{2}}{\sigma\sqrt{d}}\cdot \sup_{\twonorm{\bx} \geq \frac{\sigma \sqrt{d}}{2\sqrt{2}}\epsilon}p'(\twonorm{\proxpl(\bx)}). \label{eq: lambdamin M}
	\end{align}
	Moreover, by Lemma \ref{lem: prox}, under $\mathcal{A}_1$, 
	\begin{align}
		\twonorm{\tP_{n_0}\nabla \rho(\bx)} &\leq \twonorma{\frac{1}{n_0}\sum_{i=1}^{n_0} \frac{p'(\twonorm{\prox_{p}(\bx_i)})}{\twonorm{\bx_i}}\bx_i} \\
		&\leq \twonorma{\frac{1}{n_0}\sum_{i=1}^{n_0} \frac{p'(\twonorm{\prox_{p}(\bx_i)})}{\twonorm{\bx_i}}\bx_i\cdot \mathds{1}\Big(\twonorm{\bx_i} \geq \frac{\sigma\sqrt{d}}{\sqrt{2}}\epsilon\Big)}.
	\end{align}
	Define
	\begin{align}
		\mathcal{A}_3 &= \Bigg\{\twonorma{\frac{1}{n_0}\sum_{i=1}^{n_0} \frac{p'(\twonorm{\prox_{p}(\bx_i)})}{\twonorm{\bx_i}}\bx_i\cdot \mathds{1}\Big(\twonorm{\bx_i} \geq \frac{\sigma\sqrt{d}}{\sqrt{2}}\epsilon\Big)} \leq 2\sqrt{6}\cdot \frac{\sqrt{d}}{\sqrt{n_0}}L_{\infty}'\Bigg\},\\
		L_{\infty}' &= \sup_{\twonorm{\bx} \geq \frac{\sigma \sqrt{d}}{\sqrt{2}}\epsilon}p'(\twonorm{\proxpl(\bx)}).
	\end{align}
	Then \eqref{eq: lambdamax M} and \eqref{eq: lambdamin M} together with \eqref{eq: p prime bound 1} imply that
	\begin{align}
		\lambdamax(M) &\leq \frac{3\sqrt{2}}{\sigma \sqrt{d}}\cdot 2^{\frac{c_0}{1-\tau} \cdot \frac{c_1+1}{c_1}} \cdot L_{\infty}', \label{eq: lambdamax M bound}\\
		\lambdamin(M) &\geq -\frac{8\sqrt{2}c_0}{1-\tau}\cdot \frac{1}{\sigma\sqrt{d}}\cdot 2^{\frac{c_0}{1-\tau} \cdot \frac{c_1+1}{c_1}}\cdot L_{\infty}'. \label{eq: lambdamin M bound}
	\end{align}

	Similar to the arguments in the proof of Proposition \ref{prop: finite sample lower bound large lambda}, by bounded difference inequality, $\tP(\mathcal{A}^c_3) \leq 1/4$. Recall the events $\mathcal{A}_1$ and $\mathcal{A}_2$ defined in \eqref{eq: A1} and \eqref{eq: A2}. On event $\mathcal{A}_1 \cap \mathcal{A}_2 \cap \mathcal{A}_3$, we have
	\begin{align}
		\twonorm{\htheta - \bthetas} &\geq \frac{\epsilon\twonorm{\nabla \rho(\bx_0)} - (1-\epsilon)\twonorm{\tP_{n_0}\nabla \rho(\bx)}}{\max\{|\lambdamax(M)|, |\lambdamin(M)|\}} \\ 
		&\geq \frac{\epsilon L_{\infty}' - (1-\epsilon)2\sqrt{6}\cdot \frac{\sqrt{d}}{\sqrt{n_0}}L_{\infty}'}{\frac{8\sqrt{2}c_0}{1-\tau}\cdot \frac{1}{\sigma\sqrt{d}}\cdot 2^{\frac{c_0}{1-\tau} \cdot \frac{c_1+1}{c_1}}\cdot L_{\infty}' + \frac{3\sqrt{2}}{\sigma \sqrt{d}}\cdot 2^{\frac{c_0}{1-\tau} \cdot \frac{c_1+1}{c_1}} \cdot L_{\infty}'} \\
		&\geq \frac{\frac{1}{2}\epsilon L_{\infty}'}{\frac{8\sqrt{2}c_0}{1-\tau}\cdot \frac{1}{\sigma\sqrt{d}}\cdot 2^{\frac{c_0}{1-\tau} \cdot \frac{c_1+1}{c_1}}\cdot L_{\infty}' + \frac{3\sqrt{2}}{\sigma \sqrt{d}}\cdot 2^{\frac{c_0}{1-\tau} \cdot \frac{c_1+1}{c_1}} \cdot L_{\infty}'} \\ 
		&\geq \frac{1-\tau}{(16c_0+6)\sqrt{2}}\cdot 2^{-\frac{c_0}{1-\tau} \cdot \frac{c_1+1}{c_1}}\cdot \epsilon\sigma\sqrt{d},
	\end{align}
	where the second inequality is due to \eqref{eq: lambdamax M bound} and \eqref{eq: lambdamin M bound}.

	Therefore,
	\begin{equation}
		\sup_{\mathcal{M}}\tP\bigg(\twonorm{\htheta - \btheta^*} > \frac{1-\tau}{(16c_0+6)\sqrt{2}}\cdot 2^{-\frac{c_0}{1-\tau} \cdot \frac{c_1+1}{c_1}}\cdot \epsilon\sigma\sqrt{d}\bigg) \geq 1-\tP(\mathcal{A}^c) - \tP(\mathcal{A}_1^c) - \tP(\mathcal{A}_2^c) \geq \frac{7}{16}.
	\end{equation}

    \noindent  \textbf{Case \Rom{2}:  $p'(\twonorm{\prox(\bx_0)}) = 0$:} Then $\nabla \rho(\bx) = \bm{0}$ when $\twonorm{\bx} \geq \frac{\sigma\sqrt{d}\epsilon}{\sqrt{2}} \geq L$. This implies that $\rho(\bx) \equiv \bar{\rho}$ for some $\bar{\rho} \geq \frac{L^2}{2}$ when $\twonorm{\bx} \geq \frac{\sigma\sqrt{d}\epsilon}{\sqrt{2}}$. Consider the contamination mechanism $M$ such that $M(\bx_i) = \bx_i$ for all $i \in [n]$. Define the event $\mathcal{A}_3 = \cap_{i=2}^{n_0}\{|\twonorm{\bx_i - \bx_1}^2/2\sigma^2 - d| \leq d/2\}$. By Lemma \ref{lem: chi sq concentration}, $\tP(\mathcal{A}_3^c) \leq 2ne^{-d/64} \leq 1/16$. 

    Under $\mathcal{A} \cap \mathcal{A}_1 \cap \mathcal{A}_2\cap \mathcal{A}_3$, $\twonorm{\bx_i - \btheta} \geq \twonorm{\bx_i} - \twonorm{\btheta} \geq \frac{\sqrt{2}}{2}\sigma\sqrt{d} - \frac{\sqrt{2}}{4}\sigma\sqrt{d} \geq \frac{\sqrt{2}}{4}\sigma\sqrt{d}$ for all $i \in [n]$ and $\btheta$ with $\twonorm{\btheta} \leq \frac{\sqrt{2}}{4}\sigma\sqrt{d}$. Therefore, $G(\btheta) = n\bar{\rho} > G(\bx_1) = \sum_{i=2}^{n}\rho(\bx_i - \bx_1) = (n-1)\bar{\rho}$  for any $\btheta$ with $\twonorm{\btheta} \leq \frac{\sqrt{2}}{4}\sigma\sqrt{d}$, which means that such $\btheta$ cannot be a minimizer of the empirical risk function $G$. This implies the desired conclusion because $\tP(\mathcal{A} \cap \mathcal{A}_1 \cap \mathcal{A}_2\cap \mathcal{A}_3)\geq 3/8$, which completes the proof.
\end{proof}

\begin{proof}[Proof of Theorem \ref{thm: finite sample lower bound supp}]
    We split our proof into two cases. Define %{\color{red} [M: I can see how such an ugly $\zeta$ is suggested by the 2 preceeding propositions. However, that does not seem necessary in this proof. At some point you just use $\zeta<\sigma\sqrt{d}/8$]} \yt{It matters as we will use Theorem 1's conclusion}
    \begin{equation}
	\zeta = \left\{\bigg[\frac{1-\tau}{(c_0 \vee 1)(4\sqrt{3}+1)}\cdot \frac{1}{72+48c_1}\bigg]\wedge \bigg[\frac{1-\tau}{(16c_0+6)\sqrt{2}}\cdot 2^{-\frac{c_0}{1-\tau} \cdot \frac{c_1+1}{c_1}}\bigg]\right\}\epsilon\sigma\sqrt{d}.
    \end{equation}

    \noindent\textbf{Case \Rom{1}: $\frac{\epsilon\sigma\sqrt{d}}{4}\cdot \frac{1}{c_1+1} \leq \max\{L, L_{\infty}\}$}.
    
    First we present a key lemma:
    \begin{lemma}\label{lem: event A}
	%{\color{red}[M: this lemma made me wonder if we should have collected all the high probability events $\mathcal{A}_j$ in on lemma to make them easier to spot. Feel free to ignore this comment.]} \yt{I finally did not collect all high-probability events in one place, because the definition of each event may involve some notations introduced later only when they are used. It might be hard to state the definition of these events clearly at the beginning of the proof. But I did add equation numbers for each event and try to recall these numbers every time we use them.} 
    Define event
	\begin{equation}
		\mathcal{A}_4 = \bigg\{\forall \btheta \in \mathbb{R}^d, \exists i = i(\btheta) \in [n_0], \textup{s.t. } \btheta^{\top}(\bx_i-\btheta^*) > -\frac{\sqrt{2}}{2}\twonorm{\btheta}\twonorm{\bx_i-\btheta^*}\bigg\}.
	\end{equation} 
	When $d \geq 2$, we have
	\begin{equation}
		\tP(\mathcal{A}_4) \geq 1-\bigg(\frac{1}{2}\bigg)^{n_0-1}.
	\end{equation}
    \end{lemma}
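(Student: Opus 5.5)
The plan is to bound the complement of $\mathcal{A}_4$ by a simple pairwise event whose probability can be computed exactly by symmetry. Throughout, write $\by_i = \bx_i - \btheta^*$ for $i \in [n_0]$. These are i.i.d.\ $N(\bm{0}, \sigma^2\bm{I}_d)$, so each has a distribution symmetric about the origin and is nonzero almost surely.

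First, the event in the lemma should be read over $\btheta \neq \bm{0}$. At $\btheta = \bm{0}$ both sides of the inequality vanish, so the strict inequality fails trivially; this degenerate case also plays no role where the lemma is used. For $\btheta \neq \bm{0}$, set $\bu = -\btheta/\twonorm{\btheta}$. The complement $\mathcal{A}_4^c$ says that there is a unit vector $\bu$ with
\[
\bu^\top \by_i \geq \frac{\sqrt{2}}{2}\twonorm{\by_i} \quad \text{for all } i \in [n_0].
\]
In words, every $\by_i$ lies in the closed circular cone of half-angle $\pi/4$ around $\bu$.

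The key geometric step is that any two nonzero vectors in such a cone make an angle of at most $\pi/2$. Their angles to $\bu$ are each at most $\pi/4$, and angles between unit vectors satisfy the triangle inequality on the sphere. Hence, on $\mathcal{A}_4^c$, we get $\by_1^\top \by_i \geq 0$ for every $i = 2, \dots, n_0$. This gives
\[
\mathcal{A}_4^c \subseteq \bigcap_{i=2}^{n_0} \{\by_1^\top \by_i \geq 0\}.
\]
This replaces the supremum over the uncountable family of directions $\btheta$ by a single fixed reference vector, namely $\by_1$.

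Finally, condition on $\by_1$. The vectors $\by_2, \dots, \by_{n_0}$ are independent of $\by_1$ and of each other. For each of them, symmetry of $\by_i$ together with $\tP(\by_1^\top \by_i = 0 \mid \by_1) = 0$ gives $\tP(\by_1^\top \by_i \geq 0 \mid \by_1) = 1/2$. Hence the conditional probability of the intersection is $(1/2)^{n_0-1}$. Taking expectations yields $\tP(\mathcal{A}_4^c) \leq (1/2)^{n_0-1}$, which is the claim.

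The only step needing care is the cone-angle argument, which requires the spherical triangle inequality (the angle $\arccos(\bv^\top \bw)$ is a metric on the unit sphere). The assumption $d \geq 2$ is used only to make the cone geometry meaningful; for $d = 1$ the same argument reduces to all $y_i$ sharing a sign, which gives the same bound. I would also note that a naive Wendel-type half-space bound is too weak here, since it introduces a factor $\sum_{k<d}\binom{n_0-1}{k}$. That is why I plan to exploit the narrower $\pi/4$ cone through the pairwise reduction rather than a half-space argument.
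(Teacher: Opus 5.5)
Your proposal is correct and follows essentially the paper's route: both reduce $\mathcal{A}_4^c$ to the event that every $\bx_i-\btheta^*$ with $i\geq 2$ makes a non-obtuse angle with $\bx_1-\btheta^*$, then condition on $\bx_1$ to get $(1/2)^{n_0-1}$. The only difference is that the paper proves the pairwise claim by rotating $\bx_1$ to the first coordinate axis and applying Cauchy--Schwarz to the remaining coordinates, whereas you use the spherical triangle inequality; your treatment of $\btheta=\bm{0}$ and of the strict versus non-strict inequality is more careful than the paper's, which glosses over both points.
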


%     Define the event $\mathcal{A} = \{\forall \btheta \in \mathbb{R}^d, \exists i = i(\btheta) \in S, \textup{s.t. } \btheta^{\top}(\bx_i-\btheta^*) > 0\}$. 
    
    By \eqref{eq: penalization form}, given $\hotheta$, we have
	\begin{equation}
		\htheta_i = \argmin_{\btheta}\Big\{\frac{1}{2}\twonorm{\btheta - \bx_i}^2 + p(\twonorm{\btheta - \hotheta})\Big\}, \quad i = 1:n.
	\end{equation}
	When $\htheta_i \neq \hotheta$, by the first-order condition, we must have
	\begin{equation}
		\htheta_i - \bx_i + p'(\twonorm{\htheta_i - \hotheta})\times \frac{\htheta_i - \hotheta}{\twonorm{\htheta_i - \hotheta}} = \bm{0}.
	\end{equation}
	Reorganizing it, we have
	\begin{equation}
		\Bigg[1+\frac{p'(\twonorm{\htheta_i - \hotheta})}{\twonorm{\htheta_i - \hotheta}}\Bigg]\htheta_i = \bx_i + \frac{p'(\twonorm{\htheta_i - \hotheta})}{\twonorm{\htheta_i - \hotheta}}\times \hotheta.
	\end{equation}
	Denote $a = \frac{p'(\twonorm{\htheta_i - \hotheta})}{p'(\twonorm{\htheta_i - \hotheta}) + \twonorm{\htheta_i - \hotheta}} \in [0, 1]$, then $\htheta_i - \btheta^* = (1-a)(\bx_i - \btheta^*) + a(\hotheta - \btheta^*)$, which together with $\mathcal{A}_4$ implies that  
	\begin{align}
		\twonorm{\htheta_i - \btheta^*}^2 &= (1-a)^2\twonorm{\bx_i - \btheta^*}^2 + a^2\twonorm{\hotheta - \btheta^*}^2 + 2a(1-a)(\hotheta - \btheta^*)^\top(\bx_i - \btheta^*)\\
		&\geq (1-a)^2\twonorm{\bx_i - \btheta^*}^2 + a^2\twonorm{\hotheta - \btheta^*}^2 - \sqrt{2}a(1-a)\twonorm{\hotheta - \btheta^*}\twonorm{\bx_i - \btheta^*} \\
		&\geq \bigg(1-\frac{\sqrt{2}}{2}\bigg)\big[(1-a)^2\twonorm{\bx_i - \btheta^*}^2 + a^2\twonorm{\hotheta - \btheta^*}^2\big]\\
		&\geq \frac{1}{2}\bigg(1-\frac{\sqrt{2}}{2}\bigg)(\twonorm{\bx_i - \btheta^*}^2 \wedge \twonorm{\hotheta - \btheta^*}^2).
	\end{align}
	Therefore, we have
	\begin{align}
		&\tP\bigg(\max_{i \in S}\twonorm{\htheta_i - \btheta^*} \geq  \frac{1}{\sqrt{2}}\bigg(1-\frac{\sqrt{2}}{2}\bigg)^{1/2}\zeta\bigg) \\
		&\geq \tP\bigg(\max_{i \in S}\twonorm{\htheta_i - \btheta^*} \geq \frac{1}{\sqrt{2}}\bigg(1-\frac{\sqrt{2}}{2}\bigg)^{1/2}\zeta, \bigcup_{i \in S}\{\htheta_i = \hotheta\}\bigg) \\
        &\quad + \tP\bigg(\max_{i \in S}\twonorm{\htheta_i - \btheta^*} \geq \frac{1}{\sqrt{2}}\bigg(1-\frac{\sqrt{2}}{2}\bigg)^{1/2}\zeta, \bigcap_{i \in S}\{\htheta_i \neq \hotheta\}, \mathcal{A}_4\bigg) \\
		&\geq \tP\bigg(\max_{i \in S}\twonorm{\htheta_i - \btheta^*} \geq \frac{1}{\sqrt{2}}\bigg(1-\frac{\sqrt{2}}{2}\bigg)^{1/2}\zeta, \bigcup_{i \in S}\{\htheta_i = \hotheta\}\bigg) \\
		&\quad + \tP\bigg(\frac{1}{\sqrt{2}}\bigg(1-\frac{\sqrt{2}}{2}\bigg)^{1/2}\cdot \Big(\twonorm{\hotheta - \btheta^*}\wedge\min_{i \in S}\twonorm{\bx_i - \btheta^*}\Big) \geq \frac{1}{\sqrt{2}}\bigg(1-\frac{\sqrt{2}}{2}\bigg)^{1/2}\zeta, \bigcap_{i \in S}\{\htheta_i \neq \hotheta\}, \mathcal{A}_4\bigg) \\
		&\geq \tP\bigg(\twonorm{\hotheta - \btheta^*} \geq \zeta, \bigcup_{i \in S}\{\htheta_i = \hotheta\}\bigg) + \tP\bigg(\twonorm{\hotheta - \btheta^*} \geq \zeta, \bigcap_{i \in S}\{\htheta_i \neq \hotheta\}\bigg) \\
		&\quad - \tP\bigg(\min_{i \in S}\twonorm{\bx_i - \btheta^*} < \zeta\bigg) - \tP(\mathcal{A}^c_4) \\
		&\geq \tP\bigg(\twonorm{\hotheta - \btheta^*} \geq \zeta\bigg) - \tP\bigg(\min_{i \in S}\twonorm{\bx_i - \btheta^*} < \zeta\bigg) - \tP(\mathcal{A}^c_4)\\
		&\geq \tP\bigg(\twonorm{\hotheta - \btheta^*} \geq \zeta\bigg) - \tP\bigg(\min_{i \in S}\twonorm{\bx_i - \btheta^*} < \frac{\sigma\sqrt{d}}{8}\bigg) - \tP(\mathcal{A}^c_4). \label{eq: main lower bound}
	\end{align}
	By Proposition \ref{prop: finite sample lower bound large lambda}, the first term $\tP\Big(\twonorm{\hotheta - \btheta^*} \geq \zeta\Big) \geq 3/16$. By Lemma \ref{lem: event A}, the last term 
	\begin{equation}
		\tP(\mathcal{A}_4^c) \leq \bigg(\frac{1}{2}\bigg)^{n_0-1} \leq \frac{1}{16},
	\end{equation}
	when $n_0 = n(1-\epsilon) \geq 5$. Regarding the second term, we can bound it by Lemma \ref{lem: chi sq concentration} and the fact that $\twonorm{\bx_i - \btheta^*}^2/\sigma^2 \sim \chi_d^2$ as follows:
	\begin{align}
		\tP\bigg(\min_{i \in S}\twonorm{\bx_i - \btheta^*} < \frac{\sigma\sqrt{d}}{8}\bigg) &\leq n\tP(\twonorm{\bx_1 - \btheta^*}^2/\sigma^2 < \sigma^2 d/64) \\
		&\leq n \tP(\twonorm{\bx_1 - \btheta^*}^2/\sigma^2 < d - 2\cdot \sqrt{d\cdot d/16})\\
		&\leq ne^{-d/16} \\
        &\leq \frac{1}{16}.
	\end{align}
	where the last inequality is due to our assumption that $\frac{1}{32}e^{d/64} \geq n$.
	
	Finally, putting all pieces together, we have
	\begin{equation}
		\tP\bigg(\max_{i \in S}\twonorm{\htheta_i - \btheta^*} \geq \frac{1}{2}\bigg(1-\frac{\sqrt{2}}{2}\bigg)\zeta\bigg) \geq \frac{3}{16}-\frac{1}{16}-\frac{1}{16} = \frac{1}{16}.
	\end{equation}
    
\noindent\textbf{Case \Rom{2}: $\frac{\epsilon\sigma\sqrt{d}}{4}\cdot \frac{1}{c_1+1} > \max\{L, L_{\infty}\}$}
	
	Consider any $S \subseteq [n]$ satisfying $|S| = \lceil n(1-\epsilon)\rceil$. By \eqref{eq: penalization form}, given $\hotheta$, we have
	\begin{equation}
		\htheta_i = \argmin_{\btheta}\Big\{\frac{1}{2}\twonorm{\btheta - \bx_i}^2 + p(\twonorm{\btheta - \hotheta})\Big\}, \quad i = 1:n.
	\end{equation}
	When $\twonorm{\hotheta - \bx_i}\leq L$ for some $i \in S$, by Lemma \ref{lem: prox}.(\rom{1}), for this $i \in S$, we must have $\htheta_i = \hotheta$ 
    and
	\begin{align}
		\twonorm{\htheta_i - \btheta^*} = \twonorm{\hotheta - \btheta^*} \geq \twonorm{\bx_i - \btheta^*} - \twonorm{\hotheta - \bx_i} \geq \twonorm{\bx_i - \btheta^*} - L \geq \twonorm{\bx_i - \btheta^*} - \frac{1}{4}\sigma\sqrt{d}\epsilon.
	\end{align}
	When $\twonorm{\hotheta - \bx_i} > L$ for all $i \in S$, by the first-order condition, for all $i \in S$, we must have
	\begin{equation}
		\htheta_i - \bx_i + p'(\twonorm{\htheta_i - \hotheta})\times \frac{\htheta_i - \hotheta}{\twonorm{\htheta_i - \hotheta}} = \bm{0}.
	\end{equation}
	Reorganizing it implies that $\twonorm{\htheta_i - \bx_i} = p'(\twonorm{\htheta_i - \hotheta}) \leq L_{\infty} = \sup_{\twonorm{\bx}>L}p'(\proxpl(\bx))$, where we used the fact that $\htheta_i - \hotheta = \text{prox}_{p}(\bx_i - \hotheta)$. Therefore,
	\begin{equation}
		\twonorm{\htheta_i - \btheta^*}  \geq \twonorm{\bx_i - \btheta^*} - \twonorm{\htheta_i - \bx_i} \geq \twonorm{\bx_i - \btheta^*} - L_{\infty} \geq \twonorm{\bx_i - \btheta^*} - \frac{1}{4}\sigma\sqrt{d}\epsilon.
	\end{equation}
    Hence 
	\begin{align}
		\tP\Big(\max_{i \in S}\twonorm{\htheta_i - \btheta^*} \geq \frac{3}{4}\sigma\sqrt{d}\epsilon\Big) 
		&= \tP\bigg(\max_{i \in S}\twonorm{\htheta_i - \btheta^*} \geq \frac{3}{4}\sigma\sqrt{d}\epsilon, \bigcup_{i \in S}\{\twonorm{\hotheta - \bx_i}\leq L\}\bigg) \\
		&\quad + \tP\bigg(\max_{i \in S}\twonorm{\htheta_i - \btheta^*} \geq \frac{3}{4}\sigma\sqrt{d}\epsilon, \bigcap_{i \in S}\{\twonorm{\hotheta - \bx_i}> L\}\bigg) \\
		&\geq \tP\bigg(\min_{i \in S}\twonorm{\bx_i - \btheta^*} \geq \sigma\sqrt{d}\epsilon, \bigcup_{i \in S}\{\twonorm{\hotheta - \bx_i}\leq L\}\bigg) \\
		&\quad + \tP\bigg(\min_{i \in S}\twonorm{\bx_i - \btheta^*} \geq \sigma\sqrt{d}\epsilon, \bigcap_{i \in S}\{\twonorm{\hotheta - \bx_i}> L\}\bigg) \\
		&\geq \tP\Big(\min_{i \in S}\twonorm{\bx_i - \btheta^*} \geq \sigma\sqrt{d}\epsilon\Big)\\
		&\geq \tP\Big(\min_{i \in S}\twonorm{\bx_i - \btheta^*} \geq \frac{1}{2}\sigma\sqrt{d}\Big). \label{eq: second case max}
	\end{align} 
	By Lemma \ref{lem: chi sq concentration} and the fact that $\twonorm{\bx_i - \btheta^*}^2/\sigma^2 \overset{\textup{i.i.d.}}{\sim} \chi_d^2$, we have
	\begin{align}
		\tP\Big(\min_{i \in S}\twonorm{\bx_i - \btheta^*} < \frac{1}{2}\sigma\sqrt{d}\Big) 
		&\leq n\tP\Big(\twonorm{\bx_1 - \btheta^*}^2/\sigma^2 \leq \frac{1}{4}d\Big)\\
		&\leq n\cdot \tP(\twonorm{\bx_1 - \btheta^*}^2/\sigma^2 < d - 2\cdot \sqrt{d\cdot d/16})\\
		&\leq ne^{-d/16} \\
        &\leq \frac{1}{16} \label{eq: second case min},
	\end{align}
	where the last inequality is due to our assumption that $\frac{1}{32}e^{d/64} \geq n$. Plugging \eqref{eq: second case min} back into \eqref{eq: second case max}, we obtain the desired conclusion.

    Finally, we conclude the proof of Theorem \ref{thm: finite sample lower bound supp} with the proof of Lemma \ref{lem: event A}.

    \begin{proof}[Proof of Lemma \ref{lem: event A}]
	Note that $\bx_i - \btheta^* \sim N(\bm{0}, \sigma^2\bm{I}_d)$. Hence $\frac{\bx_i - \btheta^*}{\twonorm{\bx_i - \btheta^*}} \overset{\textup{i.i.d.}}{\sim} \textup{Unif}(\mathcal{S}^{d-1})$. Denote $\bx_i = \frac{\bx_i - \btheta^*}{\twonorm{\bx_i - \btheta^*}}$. We will condition on $\bx_1$ in the following analysis, so WLOG, let us assume $\bx_1 = (1, \bm{0}_{d-1}^\top)^\top$. Consider a hyperspherical cap $\mathcal{R}_1$ and the half sphere $\mathcal{R}_2$ as
	\begin{align}
		\mathcal{R}_1 &= \bigg\{\bx \in \mathcal{S}^{d-1}: \bx^\top \bx_1 \leq -\frac{\sqrt{2}}{2}\bigg\},\\
		\mathcal{R}_2 &= \{\bx \in \mathcal{S}^{d-1}: \bx^\top \bx_1 \leq 0\}.
	\end{align}
	We claim that if $\bx_2 \in \mathcal{R}_2$, then for any $\btheta \in \mathbb{R}^d$, we must have $\btheta^\top \bx_1 \geq -\frac{\sqrt{2}}{2}$ or $\btheta^\top \bx_2 \geq -\frac{\sqrt{2}}{2}$. To prove this, it suffices to consider $\btheta \in \mathcal{S}^{d-1}$, as we can always normalize $\btheta$ by $\twonorm{\btheta}$ when $\btheta \neq \bm{0}$, and $\btheta^\top \bx_1\geq -\frac{\sqrt{2}}{2}$ automatically holds when $\btheta = \bm{0}$. For any $\btheta \in \mathcal{S}^{d-1}$, if $\btheta \notin \mathcal{R}_1$, then we must have  $\btheta^\top \bx_1 \geq -\frac{\sqrt{2}}{2}$. On the other hand, if $\btheta \in \mathcal{R}_1$, then $\theta_1 \leq -\frac{\sqrt{2}}{2}$, which implies that $\sum_{j=2}^d \theta_j^2 \leq 1/2$. Since $\bx_2 \in \mathcal{R}_2$, we must have $x_{21} \leq 0$. Therefore,
	\begin{equation}
		\btheta^\top \bx_2 = \theta_1x_{21} + \sum_{j=2}^d \theta_jx_{2j} \geq 0  - \sqrt{\sum_{j=2}^d \theta_j^2}\sqrt{\sum_{j=2}^d \bx_{2j}^2} \geq -\frac{\sqrt{2}}{2}.
	\end{equation}
	Hence our claim is correct. 
	
	With the claim, we have
	\begin{align}
		\tP(\mathcal{A}_4) &\geq \tE_{\bx_1}\tP(\exists i \in  2:n_0, \text{ s.t. } \bx_i \in \mathcal{R}_2|\bx_1) \\
		&= 1-\tE_{\bx_1}\tP(\forall i = 2:n_0, \bx_i \in \mathcal{R}_2|\bx_1) \\
		&\geq 1-\tE_{\bx_1}\prod_{i=2}^{n_0}\tP(\bx_i \in \mathcal{R}_2|\bx_1) \\
		&\geq 1-\bigg(\frac{1}{2}\bigg)^{n_0-1},
	\end{align}
	which completes the proof.
\end{proof}
\end{proof}

% ------------------------------------------------------------
\subsection{Proof of Theorem \ref{thm: lower bdd global regularization}}
First, note that
\begin{equation}
    \{\hthetak{k}\}_{k=1}^K \in \argmin\limits_{\bTheta=\{\bthetak{k}\}_{k=1}^K}\bigg\{\frac{1}{2n}\sum_{k=1}^K\sum_{i=1}^n\twonorm{\bxk{k}_i - \bthetak{k}}^2 + p(\bTheta)\bigg\}
\end{equation}
is equivalent to
\begin{equation}
    \{\hthetak{k}\}_{k=1}^K \in \argmin\limits_{\bTheta=\{\bthetak{k}\}_{k=1}^K}\bigg\{\frac{1}{2}\sum_{k=1}^K\twonorm{\barx^{(k)} - \bthetak{k}}^2 + p(\bTheta)\bigg\}.
\end{equation}
 Similar to the argument in the proof of Theorem \ref{thm: finite sample lower bound}, it suffices to prove that when $X_{ji} \overset{\textup{ind}}{\sim} N(\Theta^*_{ji}, \sigma^2)$, $\sigma^2 = \sigma_0^2/n$ for all $i \in [K]$ and $j \in [d]$ with $\#\{(j, i): |\Theta^*_{ji}| \geq \frac{\sigma}{\sqrt{2\pi}}\} \geq C_0^2 nd$, with probability at least $1/4$, all minimizers $\{\hthetak{k}\}_{k=1}^K$ in \eqref{eq: global regularization single-task} satisfy $\max\limits_{k = 1:K}\twonorm{\hthetak{k} - \bthetak{k}} \geq \frac{1}{40}\sigma C_0\sqrt{d}$, where
\begin{equation}\label{eq: global regularization single-task}
    \widehat{\bm{\Theta}}_{d \times K} = \{\hthetak{k}\}_{k=1}^K \in \argmin\limits_{\bm{\Theta}=\{\bthetak{k}\}_{k=1}^K}\bigg\{\frac{1}{2}\fnorm{\bm{\Theta} - \bX}^2 + p(\bm{\Theta})\bigg\}.
\end{equation}
Without loss of generality, let $\bm{\Theta}^*=(\Theta^*_{ji})_{j\in[d],\,i\in[K]}$, denote $\tilde{S}=\{(j,i): |\Theta^*_{ji}| \geq \frac{\sigma}{\sqrt{2\pi}}\}$, and assume $\Theta^*_{ji} \geq \frac{\sigma}{\sqrt{2\pi}}$ for all $(j,i)\in\tilde{S}$.

Since $\hTheta$ is a minimizer in \eqref{eq: global regularization single-task}, by the first-order optimality condition,
\begin{equation}
    \bm{0} = \hTheta - \bX + \frac{\nabla p}{\nabla \bTheta}|_{\hTheta},
\end{equation}
where $\frac{\partial p}{\partial \bTheta}|_{\hTheta}$ can be any sub-gradient of $p$ at $\hTheta$.

When $C_0 \leq \frac{\sqrt{72}}{\sqrt{n}}$, the conclusion directly follows from Lemma \ref{lem: minimax lower bound mean est} because $\frac{\sigma}{4}\sqrt{\frac{d}{n}} \geq \frac{\sigma}{40}\sqrt{d}C_0$. Therefore, it suffices to prove the case $C_0 > \frac{\sqrt{72}}{\sqrt{n}}$. Hence we assume $C_0 > \frac{\sqrt{72}}{\sqrt{n}}$ in the following part of the proof.

Note that by bounding the density of standard Gaussian variable, we have $\tP(X_{ji} - \Theta^*_{ji} \geq -\frac{\sigma}{\sqrt{2\pi}}) \leq \frac{1}{2} + (\frac{1}{\sqrt{2\pi}})^2 \leq \frac{2}{3}$. Then by bounded difference inequality, 
\begin{equation}
    \tP\bigg(\#\Big\{(j,i) \in \tilde{S}: X_{ji} - \Theta^*_{ji} < -\frac{\sigma}{\sqrt{2\pi}}\Big\} - \frac{1}{3}|\tilde{S}| \leq -|\tilde{S}|x\bigg) \leq \exp\bigg\{-\frac{|\tilde{S}|x^2}{2}\bigg\}.
\end{equation}
Let $x = 1/6$, $\mathcal{A} = \{\#\{(j,i) \in \tilde{S}: X_{ji} - \Theta^*_{ji} < -\frac{\sigma}{\sqrt{2\pi}}\} > \frac{1}{6}|\tilde{S}|\}$, we have

\begin{equation}
    \tP(\mathcal{A}^c) \leq \exp\bigg\{-\frac{|\tilde{S}|}{72}\bigg\} \leq \exp\bigg\{-\frac{C_0^2dn}{72}\bigg\} \leq e^{-d}.
\end{equation}
\noindent (\Rom{1}) If there is a set $\tilde{S}' \subseteq \tilde{S}$ with $|\tilde{S}'| \geq \frac{1}{12}|\tilde{S}|$ such that $\widehat{\Theta}_{ji} \leq 0$ for all $(j, i) \in \tilde{S}'$, then since $\Theta_{ji}^* \geq \frac{\sigma}{\sqrt{2\pi}}$ for all $(j, i) \in \tilde{S}$, we have
\begin{equation}\label{eq: fnorm bound}
    \fnorm{\hTheta - \bTheta^*}  \geq \frac{\sqrt{C_0^2dn}}{12}\times \frac{\sigma}{\sqrt{2\pi}},
\end{equation}
which implies $\max_{i=1:K}\twonorm{\hthetak{k} - \bthetak{k}} \geq \frac{\sigma C_0 \sqrt{d}}{12\sqrt{2\pi}}$.

\noindent (\Rom{2}) Otherwise, $\#\{(j,i) \in \tilde{S}:  \widehat{\Theta}_{ji} > 0\} \geq \frac{11}{12}|\tilde{S}|$. Denote $\tilde{S}'' = \{(j,i) \in \tilde{S}:  \widehat{\Theta}_{ji} > 0, X_{ji}^* - \Theta^*_{ji} < -\frac{\sigma}{\sqrt{2\pi}}\}$. Under $\mathcal{A}$, we must have $|\tilde{S}''| \geq \frac{1}{12}|\tilde{S}|$. For all $(j, i) \in \tilde{S}''$, 
\begin{equation}
    \widehat{\Theta}_{ji} - \Theta^*_{ji} = X_{ji}^* - \Theta^*_{ji} - \Big(\frac{\partial p}{\partial  \bTheta}\Big|_{\hTheta}\Big)_{ji} \leq X_{ji}^* - \Theta^*_{ji} \leq -\frac{\sigma}{\sqrt{2\pi}},
\end{equation}
where the first inequality is due to monotonicity of the penalty. This entails that
\begin{equation}
    \fnorm{\hTheta - \bTheta^*} \geq  \frac{\sqrt{C_0^2dn}}{12}\times \frac{\sigma}{\sqrt{2\pi}},
\end{equation}
implying that $\max_{i=1:K}\twonorm{\hthetak{k} - \bthetak{k}} \geq \frac{\sigma C_0 \sqrt{d}}{12\sqrt{2\pi}}$.

Combining (\Rom{1}) and (\Rom{2}), we have $\max_{i=1:K}\twonorm{\hthetak{k} - \bthetak{k}} \geq \frac{\sigma C_0 \sqrt{d}}{12\sqrt{2\pi}}$, with probability at least $1-\tP(\mathcal{A}^c) \geq 1-e^{-d}$. Lastly, the desired conclusion comes from the argument for the cases $C_0 \leq \frac{\sqrt{72}}{\sqrt{n}}$ and $C_0 > \frac{\sqrt{72}}{\sqrt{n}}$.

% -----------------------------------------------
\subsection{Proof of Theorem \ref{thm: lower bdd decomp regularization}}
Similar to the argument in the proof of Theorem \ref{thm: lower bdd global regularization}, it suffices to prove that when $X_{ij} \overset{\textup{ind}}{\sim} N(\Theta^*_{ji}, \sigma^2)$ for all $i \in [n]$ and $j \in [p]$ with $\#\{(j, i): |\Theta^*_{ji}| \geq \frac{\sigma}{\sqrt{2\pi}}\} \geq c_0^2 nd$, with probability at least $1/4$, all minimizers $\{\hthetak{k}\}_{k=1}^K$ in \eqref{eq: global regularization single-task} satisfy $\max\limits_{k = 1:K}\twonorm{\hthetak{k} - \bthetak{k}} \geq \frac{1}{40}\sigma\sqrt{\frac{d}{n}}$, where
\begin{align}
    \widehat{\bm{G}}_{d \times K}, \widehat{\bm{Q}}_{d \times K} &= \{\htheta_i\}_{i=1}^n \in \argmin\limits_{\bm{\Theta}=\{\btheta_i\}_{i=1}^n}\bigg\{\frac{1}{2n}\fnorm{\bm{G} + \bm{Q} - \bX}^2 + p_1(\bm{G}) + p_2(\bm{Q})\bigg\}, \label{eq: decomp regu def}\\
    \widehat{\bm{\Theta}} &= \widehat{\bm{G}} + \widehat{\bm{Q}}.
\end{align}
First, for any solution $\widehat{\bm{\Theta}}$, there exist $\widehat{\bm{G}}$, $\widehat{\bm{Q}}$ such that $\widehat{\bm{\Theta}} = \widehat{\bm{G}} + \widehat{\bm{Q}} > 0$ and $\widehat{G}_{ji}, \widehat{Q}_{ji} \geq 0$ or $\widehat{G}_{ji}, \widehat{Q}_{ji} \leq 0$ for all $i$ and $j$, i.e., the corresponding entries of $\widehat{\bm{G}}$ and $\widehat{\bm{Q}}$ have the same sign. To see this, WLOG, suppose that $\widehat{\bm{\Theta}} = \widehat{\bm{G}} + \widehat{\bm{Q}}$ where there exist $(j_0, i_0)$ such that $\widehat{G}_{j_0i_0} \geq 0$, $\widehat{Q}_{j_0i_0} < 0$. Consider $\widehat{\bm{G}}'$ and $\widehat{\bm{Q}}'$ with 
\begin{align}
    0\leq \widehat{G}'_{j_0i_0} &= \widehat{G}_{j_0i_0} + \widehat{Q}_{j_0i_0} < \widehat{G}_{j_0i_0}, \,\, \widehat{Q}'_{ji} = 0 > -\widehat{Q}_{j_0i_0},\\
    \widehat{G}'_{ji} &= \widehat{G}_{ji}, \widehat{Q}'_{ji} = \widehat{Q}_{ji}, \,\, \textup{all the other } (j, i) \textup{'s.}
\end{align}
Since $p_1$ and $p_2$ are entry-wise non-decreasing when all other entries are fixed, it follows that $\frac{1}{2n}\fnorm{\widehat{\bm{G}} + \widehat{\bm{Q}} - \bX}^2 + p_1(\widehat{\bm{G}}) + p_2(\widehat{\bm{Q}}) \geq \frac{1}{2n}\fnorm{\widehat{\bm{G}}' + \widehat{\bm{Q}} - \bX}^2 + p_1(\widehat{\bm{G}}') + p_2(\widehat{\bm{Q}}')$, therefore $(\widehat{\bm{G}}', \widehat{\bm{Q}}')$ is also a solution of \eqref{eq: decomp regu def} satisfying $\widehat{\bTheta} = \widehat{\bm{G}}' + \widehat{\bm{Q}}'$. This proves the claim.

The remainder of the proof then follows directly from the proof of Theorem \ref{thm: lower bdd global regularization}.

% ------------------------------------------------------------
\subsection{Proof of Theorem \ref{thm: lower bdd detection}}
Consider $\twonorm{\btheta^*} \geq \frac{1}{8\sqrt{5}}\sigma\epsilon\sqrt{\frac{d}{n}}$. When $\hat{S} \cap S = \emptyset$, $\twonorm{\htheta - \btheta^*} = \twonorm{\btheta^*} \geq \frac{1}{8\sqrt{5}}\sigma\epsilon\sqrt{\frac{d}{n}}$. When $\hat{S}  \cap S \neq \emptyset$, let $k_0 = \argmin_{k\in S} \textup{score}(k)$. Consider the case that $\bxk{k}_i = \bxk{k_0}_i$ for all $k \in S^c$ and $i \in [n]$. Then $\textup{score}(k) = \textup{score}(k_0)$ for all $k \in S^c$. By Definition \ref{def: outlier detection alg}, $\hat{S} \supseteq \{k_0\}\cup S^c$. Denote $\tilde{S} = \hat{S}\cap S$ and define two events
\begin{align}
    \mathcal{A}_1 &= \Bigg\{\bigg|\frac{n}{\sigma^2 |S'|}\bigg\|\sum_{k \in S'}\barxk{k} - \bthetas\bigg\|^2 -d \bigg| \leq 2\sqrt{d\delta} + 2\delta, \,\,  \forall S' \subseteq S\Bigg\}, \\
    \mathcal{A}_2 &= \bigcap_{k \in S} \big\{|n\twonorm{\barxk{k}-\bthetas}^2/\sigma^2 - d| \leq d/2\big\}.
\end{align}
Note that $\tP(\mathcal{A}_1^c) \leq 2e^{-\delta}\cdot 2^K$ and $\tP(\mathcal{A}_2^c) \leq 2K\exp\{-d/64\}$. Let $\delta = K$, then under $\mathcal{A}_1$, we have
\begin{equation}
    \frac{1}{|S'|\sigma^2}\bigg\|\sum_{k \in S'}\barxk{k} - \bthetas\bigg\|^2 \leq \frac{d + 2\sqrt{d\delta} + 2\delta}{n} \leq \frac{2d + 3K}{n}, \quad \forall S' \subseteq S.
\end{equation}
Then when $\epsilon \geq 2\sqrt{5/K}$ and $d \geq K$, under $\mathcal{A}_1 \cap \mathcal{A}_2$:
\begin{align}
    \twonorm{\htheta - \btheta^*} &\geq \frac{\epsilon K\twonorm{\barxk{k_0}-\bthetas} - |\tilde{S}|\sigma \sqrt{\frac{2d+3K}{|\tilde{S}|n}}}{|\tilde{S}|+\epsilon K} \\
&\geq \frac{\epsilon K \cdot \sigma \sqrt{\tfrac{d}{2n}} - \sqrt{K} \cdot \sqrt{\tfrac{2d+3K}{n}}}{|\tilde{S}| + \epsilon K} \\[0.5em]
&\geq \frac{(1 - \tfrac{\sqrt{2}}{2}) \, \epsilon K \, \sqrt{\tfrac{d}{2n}} \, \sigma}{|\tilde{S}| + \epsilon K} \\[0.5em]
&\geq \left( \tfrac{\sqrt{2}}{2} - \tfrac{1}{2} \right) \, \epsilon \, \sqrt{\tfrac{d}{n}} \, \sigma.
\end{align}
When $\epsilon < 2\sqrt{5/K}$, by Lemma \ref{lem: minimax lower bound mean est}, with probability at least $1/4$, we have $\twonorm{\htheta - \bthetas} \geq \frac{\sigma}{4}\sqrt{\frac{d}{nK}} \geq \frac{\sigma}{8\sqrt{5}}\epsilon\sqrt{\frac{d}{n}}$.
% -------------------------------------------------
\subsection{More examples of the regularizers and an equivalent formulation}\label{subsec: examples regularizer supp}

\begin{example}\label{exp: penalties supp}
	We list some commonly used regularizers 
	$p(\cdot)$ which satisfy  Assumption \ref{asmp: penalty}.
	\begin{enumerate}[(i)]
		\item (SCAD, $a > 2$, $\lambda>0$) $p(x) = \begin{cases}
			\lambda x, &\quad \textup{if } 0 \leq x \leq \lambda; \\
			-\frac{x^2 - 2a\lambda x + \lambda^2}{2(a-1)}, &\quad \textup{if } \lambda < x \leq a\lambda; \\
			\frac{a+1}{2}\lambda^2, &\quad \textup{if }x > a \lambda
		\end{cases}$, $L = L_{\infty} = \lambda$, $\tau = \frac{1}{a-1}$, \\ $\proxpl(\bx) = \begin{cases}
			\bm{0}, &\quad \textup{if } \twonorm{\bx} \leq \lambda;\\
			\frac{\twonorm{\bx} - \lambda}{\twonorm{\bx}}\cdot \bx, &\quad \textup{if } \lambda < \twonorm{\bx} \leq 2\lambda; \\
			\frac{(a-1)\twonorm{\bx} - a\lambda}{(a-2)\twonorm{\bx}}\cdot \bx, &\quad \textup{if } 2\lambda < \twonorm{\bx} \leq a\lambda;\\
			\bx, &\quad \textup{if } \twonorm{\bx} > a\lambda
		\end{cases}$,  \\
        $\rho(\bx) = \begin{cases}
			\frac{1}{2}\twonorm{\bx}^2, &\quad \textup{if } \twonorm{\bx} \leq \lambda;\\
			\frac{1}{2}\lambda^2 + \lambda(\twonorm{\bx} - \lambda), &\quad \textup{if } \lambda < \twonorm{\bx} \leq 2\lambda; \\
			\frac{-\twonorm{\bx}^2 + 2a\lambda\twonorm{\bx} - \lambda^2(a+2)}{2(a-2)}, &\quad \textup{if } 2\lambda < \twonorm{\bx} \leq a\lambda;\\
			\frac{a+1}{2}\lambda^2, &\quad \textup{if } \twonorm{\bx} > a\lambda
		\end{cases}$.
		\item (MC+, $b > 0$, $\lambda>0$) $p(x) = \begin{cases}
			\lambda x-\frac{x^2}{2b}, &\quad \textup{if } 0 \leq x \leq b\lambda; \\
			\frac{1}{2}b\lambda^2, \quad &\textup{if } x > b\lambda
		\end{cases}$, $L = \begin{cases}
			\lambda, &\quad \textup{if } b \geq 1; \\
			\sqrt{b}\lambda, \quad &\textup{if } 0<b<1
		\end{cases}$, $L_{\infty} = \begin{cases}
			\lambda, &\quad \textup{if } b > 1; \\
			0, \quad &\textup{if } 0<b\leq 1
		\end{cases}$. \\
        When $b > 1$, $\rho(\bx) = \begin{cases}
                \frac{1}{2}\twonorm{\bx}^2 &\quad \textup{if } 0 \leq \twonorm{\bx} \leq \lambda; \\
			\frac{-\twonorm{\bx}^2 + 2\lambda b \twonorm{\bx} - \lambda^2 b}{2(b-1)}, &\quad \textup{if } \lambda < \twonorm{\bx} \leq \lambda b; \\
			\frac{1}{2}b\lambda^2, \quad &\textup{if } \twonorm{\bx} > \lambda b
		\end{cases}$,  $\tau = \frac{1}{b}$, \\ $\proxpl(\bx) = \begin{cases}
                \bm{0} &\quad \textup{if } 0 \leq \twonorm{\bx} \leq \lambda; \\
			\frac{b}{b-1}\frac{\twonorm{\bx} - \lambda}{\twonorm{\bx}}\bx, &\quad \textup{if } \lambda < \twonorm{\bx} \leq \lambda b; \\
			\bx, \quad &\textup{if } \twonorm{\bx} > \lambda b
		\end{cases}$ \\
        When $0<b \leq 1$, $\rho(\bx) = \begin{cases}
                \frac{1}{2}\twonorm{\bx}^2 &\quad \textup{if } 0 \leq \twonorm{\bx} \leq \lambda \sqrt{b}; \\
			\frac{1}{2}b\lambda^2, \quad &\textup{if } \twonorm{\bx} > \lambda \sqrt{b}
		\end{cases}$, $\tau = 0$, \\ $\proxpl(\bx) = \begin{cases}
                \bm{0} &\quad \textup{if } 0 \leq \twonorm{\bx} \leq \lambda \sqrt{b}; \\
			\bx, \quad &\textup{if } \twonorm{\bx} > \lambda \sqrt{b}
		\end{cases}$
		\item (Hard-thresholding, $\lambda>0$) $p(x) = \begin{cases}
			\frac{1}{2}\lambda^2 - \frac{1}{2}(x - \lambda)^2, &\quad \textup{if } 0 \leq x \leq \lambda; \\
			\frac{1}{2}\lambda^2, \quad &\textup{if } x > \lambda
		\end{cases}$, $L = \lambda$, $L_{\infty} = 0$, $\tau = 0$, 
        $\proxpl(\bx) = \begin{cases}
                \bm{0} &\quad \textup{if } 0 \leq \twonorm{\bx} \leq \lambda; \\
			\bx, \quad &\textup{if } \twonorm{\bx} > \lambda
		\end{cases}$, $\rho(\bx) = \begin{cases}
                \frac{1}{2}\twonorm{\bx}^2 &\quad \textup{if } 0 \leq \twonorm{\bx} \leq \lambda; \\
			\frac{1}{2}\lambda^2, \quad &\textup{if } \twonorm{\bx} > \lambda
		\end{cases}$.
		\item (Bridge, $1< q < 2$) $p(x) = \lambda x^q$, $L = 0$, $L_{\infty} = +\infty$, $\tau = 0$, $\proxpl(\bx) = C_{q,\lambda}(\bx)\cdot \frac{\bx}{\twonorm{\bx}}$ with $C_{q,\lambda}(\bx) > 0$ satisfying $C_{q,\lambda}(\bx) + \lambda q[C_{q,\lambda}(\bx)]^{q-1} = \twonorm{\bx}$, $\rho(\bx) = \frac{1}{2}[\twonorm{\bx}-C_{q, \lambda}(\bx)]^2 + \lambda[C_{q, \lambda}(\bx)]^q$.
	\end{enumerate}
\end{example}

In addition to \eqref{eq: m-est form}, we can also derive the following $\psi$-estimator form for $\hotheta$. This connects the adaptive and robust regularized MTL estimator to a family of robust $M$-estimators. 

\begin{theorem}\label{thm: psi-form}
	Under Assumption \ref{asmp: penalty}, if $\twonorm{\proxpl(\bx)} \rightarrow 0$ when $\twonorm{\bx} \rightarrow L$, then the M-estimator $\hotheta$ in \eqref{eq: m-est form} can be shown to satisfy the following estimating equation:
	\begin{equation}\label{eq: psi-form}
		\sum_{k=1}^K \psi\big(\twonorm{\bxk{k}-\hotheta}\big)\frac{\bxk{k}-\hotheta}{\twonorm{\bxk{k}-\hotheta}} = \bm{0},
	\end{equation}
	where $\psi: [0, \infty) \rightarrow \mathbb{R}$ is uniquely defined by $p$ function and satisfies the following properties:
	\begin{enumerate}[(i)]
		\item $\psi(x) \geq 0$ for all $x \geq 0$, and $\psi(0) = 0$;
		\item $\psi$ is continuous and is differentiable almost everywhere on $(0, \infty)$, and $-\frac{\tau}{1-\tau}\leq \psi'(x) \leq 1$ for $x$ where $\psi'(x)$ exists.
	\end{enumerate}
\end{theorem}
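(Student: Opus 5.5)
The plan is to obtain \eqref{eq: psi-form} from the first-order condition of \eqref{eq: m-est form}, and then to read off the properties of $\psi$ from Lemmas \ref{lem: prox} and \ref{lem: dev and hessian}.

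\textbf{Step 1: $\rho$ is radial and $C^1$.} The hypothesis $\twonorm{\proxpl(\bx)}\to 0$ as $\twonorm{\bx}\to L$ is exactly what removes the possible jump of $\nabla\rho$ across the sphere $L\mathcal{S}^{d-1}$. By Lemma \ref{lem: dev and hessian}.(i), the two one-sided gradients $\bx$ and $p'(\twonorm{\proxpl(\bx)})\proxpl(\bx)/\twonorm{\proxpl(\bx)}$ must then agree on that sphere. Indeed, by Lemma \ref{lem: prox}.(iii), $\twonorm{\bx}=\twonorm{\proxpl(\bx)}+p'(\twonorm{\proxpl(\bx)})$, so the outer formula has norm $\twonorm{\bx}-\twonorm{\proxpl(\bx)}\to L$, and it points in the direction of $\bx$. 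Hence $\rho$ is $C^1$ on $\mathbb{R}^d$.

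Next, $\proxpl(\bx)$ is parallel to $\bx$ with norm $r(t)$ depending only on $t=\twonorm{\bx}$. This follows from Lemma \ref{lem: prox}.(iii) together with rotational invariance of the objective. Define
\[
\psi(t)=t\ \text{ for } t<L,\qquad \psi(t)=t-r(t)=p'(r(t))\ \text{ for } t>L,
\]
and extend $\psi$ by continuity at $t=L$. Then $\nabla\rho(\bx)=\psi(\twonorm{\bx})\,\bx/\twonorm{\bx}$. Setting the gradient of $\sum_k\rho(\bxk{k}-\btheta)$ to zero at the minimizer $\hotheta$ gives \eqref{eq: psi-form}. If some $\bxk{k}=\hotheta$, the corresponding term is interpreted as $\bm{0}$, which is consistent because $\psi(0)=0$. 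For uniqueness, note that $\nabla\rho$ determines $\psi$ on $(0,\infty)$, and $\rho$ is determined by $p$.

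\textbf{Step 2: property (i).} For $t<L$ we have $\psi(t)=t\ge 0$, and $\psi(0)=0$. For $t>L$, $\psi(t)=p'(r)\ge0$ because $p$ is non-decreasing.

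\textbf{Step 3: property (ii).} Continuity on each open piece comes from continuity of $r(\cdot)$. That in turn follows from the implicit function theorem applied to $r+p'(r)=t$, as in the proof of Lemma \ref{lem: dev and hessian}, using $1+p''(r)\ge 1-\tau>0$ from Assumption \ref{asmp: penalty}.(v). Continuity at $t=L$ is Step 1.

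On $(0,L)$ we have $\psi'=1$. For $t>L$ with $p''(r(t))$ existing, differentiating gives $r'(t)=1/(1+p''(r))$ and
\[
\psi'(t)=\frac{p''(r)}{1+p''(r)}.
\]
The map $u\mapsto u/(1+u)$ is increasing on $(-1,\infty)$ and bounded above by $1$. With $u=p''(r)\ge-\tau$ this yields $-\tau/(1-\tau)\le\psi'(t)\le 1$.

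\textbf{Main obstacle.} The delicate point is almost-everywhere differentiability: the set of $t$ where $p''(r(t))$ fails to exist must be null. I would argue that $r$ is locally Lipschitz with inverse $r\mapsto r+p'(r)$, which is also locally Lipschitz on the relevant range. A Lipschitz bijection maps null sets to null sets, so the preimage under $r$ of the null set where $p''$ fails is null. Combined with Step 1, this gives the stated properties.
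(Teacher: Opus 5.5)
Your outline follows the paper's proof almost line for line: the radial reduction, $\psi(t)=t$ below $L$ and $\psi(t)=p'(r(t))$ above, and $\psi'=p''/(1+p'')$. It also inherits the paper's one unjustified step, and that step is a genuine gap. The step is the claim that $t\mapsto r(t)=\twonorm{\proxpl(\bx)}$ is continuous on $(L,\infty)$, which you use for ``$\rho$ is $C^1$'' in Step 1 and for continuity of $\psi$ in Step 3.

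The implicit function theorem cannot give this, for two reasons.
\begin{enumerate}[(a)]
\item Assumption \ref{asmp: penalty}.(iii) only provides $p''$ almost everywhere, so $r\mapsto r+p'(r)$ need not be $C^1$.
\item More importantly, a local branch of solutions of $r+p'(r)=t$ says nothing about whether the \emph{global} minimizer of $F_t(r)=\frac12(t-r)^2+p(r)$ stays on it. Assumption \ref{asmp: penalty}.(v) bounds $p''$ only at radii actually attained by $\proxpl$. So $r+p'(r)$ may decrease on a window of unattained radii, and the minimizer can jump across it.
\end{enumerate}

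Here is a counterexample: SCAD with $\lambda=1$, $a=3/2$ (nothing in Assumption \ref{asmp: penalty} forces $a>2$). That is, $p(x)=x$ on $[0,1]$, $p(x)=-x^2+3x-1$ on $(1,3/2]$, and $p(x)=5/4$ afterwards.
\begin{enumerate}[(a)]
\item $L=L_\infty=1$.
\item $\twonorm{\proxpl(\bx)}=\twonorm{\bx}-1$ for $1<\twonorm{\bx}<7/4$, and $\proxpl(\bx)=\bx$ for $\twonorm{\bx}>7/4$. So the attained radii lie in $(0,3/4]\cup[7/4,\infty)$, where $p''=0$.
\item Hence (i)--(vi) hold with $\tau=0$, $c_0=1$, $c_1=3/2$, and $\twonorm{\proxpl(\bx)}\to 0$ as $\twonorm{\bx}\downarrow 1$.
\item Yet $\rho(\bx)=\min\{\twonorm{\bx}-\tfrac12,\tfrac54\}$ for $\twonorm{\bx}\ge 1$. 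So $\rho$ is not differentiable on the sphere of radius $7/4$, and $\psi$ jumps from $1$ to $0$ there.
\end{enumerate}
The continuity asserted in (ii) therefore does not follow from the stated hypotheses, by any argument.

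Closing the gap requires an extra hypothesis, and the needed one is forced. First, $r(\cdot)$ is always nondecreasing: add $F_t(r)\le F_t(r')$ and $F_{t'}(r')\le F_{t'}(r)$ to get $(t'-t)(r'-r)\ge 0$. It tends to $0$ at $L^+$ by assumption and to $\infty$ as $t\to\infty$. So a continuous $r$ is onto $(0,\infty)$ and satisfies $\phi\circ r=\mathrm{id}$ with $\phi(r)=r+p'(r)$. Continuity of $\psi$ on $(L,\infty)$ is therefore equivalent to $\phi$ being strictly increasing on all of $(0,\infty)$. A sufficient condition is $p'$ locally absolutely continuous with $p''\ge-\tau$ a.e.\ on $(0,\infty)$, not just at attained radii.

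Under that assumption $r=\phi^{-1}$, and your Steps 1--3 go through. Almost-everywhere differentiability of $\psi$ is then immediate from monotonicity of $r$ (Lebesgue's theorem). Your Lipschitz-bijection argument is unnecessary, and it also tacitly assumes $p'$ is locally Lipschitz, which Assumption \ref{asmp: penalty} does not give.

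Finally, the estimating equation itself survives without continuity. $\rho-\frac12\twonorm{\cdot}^2$ is concave, being an infimum of affine functions of $\bx$. So at a minimizer $\hotheta$ of $\sum_k\rho(\bxk{k}-\btheta)$, every term must be differentiable: two distinct supergradients of one term would give two different linear parts in the upper quadratic model at $\hotheta$, and minimality forces each to vanish. This yields \eqref{eq: psi-form} with $\psi(t)=t-r(t)$ even when $\proxpl$ jumps.
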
 

\begin{remark}\label{rmk: rho differentiable}
	$\twonorm{\proxpl(\bx)} \rightarrow 0$ when $\twonorm{\bx} \rightarrow L$ is a sufficient and necessary condition for the loss function $\rho$ in \eqref{eq: m-est form} to be differentiable, which can be directly verified by the explicit formula of $\nabla \rho(\bx)$ when $\twonorm{\bx} \neq L$ presented in Lemma \ref{lem: prox}.(\rom{3}) and Lemma \ref{lem: dev and hessian}.(\rom{1}). %{\color{red}[M: can you give a reference for this fact?]} \yt{This is directly by Lemmas 5.(iii) and 6.(i) which provides an explicit formula for $\nabla \rho(\bx)$ when $\twonorm{\bx} \neq L$.} 
    Without this condition, the solution of \eqref{eq: m-est form} does not necessarily satisfy \eqref{eq: psi-form}.
\end{remark}

In the classical single-task learning setup, \cite{mathieu2022concentration} showed that the M-estimator in \eqref{eq: psi-form} with stronger conditions ($\psi$ is concave, and there exist $C_1, C_2 > 0$ such that $\psi'(x) \geq C_1 \mathds{1}(x \leq C_2)$ for all $x > 0$) has robustness against adversarial contamination. However, their discussions do not apply to redescending $\psi$-functions such as Hampel's $\psi$ and Tukey's biweight $\psi$ which are non-concave and can have negative derivatives. In addition, they do not provide the finite-sample algorithmic lower bound. Compared to their analysis, we take a similar route by Taylor expansion, but our analysis is much more comprehensive and the removal of their strong conditions on $\psi$ is non-trivial. In fact, our proof of the algorithmic lower bound for \eqref{eq: m-est form} and \eqref{eq: m-est form individual} first transforms the problem to a classical single-task learning setting. Therefore, our analysis is also applicable to the classical single-task learning setting and of independent interest for robust statistics.

\begin{proof}[Proof of \Cref{thm: psi-form}]
    First, note that  when $\|\bx\|_2 < L$, by Lemma \ref{lem: dev and hessian} we have that $\nabla \rho(\bx) = \bx$ and hence $\rho(\bx) = \tfrac{1}{2}\|\bx\|_2^2$.  Furthermore, by the formula of $\proxpl(\bx)$ in Lemma \ref{lem: prox}.(\rom{3}), $\|\proxpl(\bx)\|_2 + p'(\|\proxpl(\bx)\|_2) = \|\bx\|_2$ when $\|\bx\|_2 > L$.  

Consider $G(u, v) = v + p'(v) - u$. Then by Lemma \ref{lem: prox}.(\rom{3}), $G(\twonorm{\bx}, \twonorm{\proxpl(\bx)}) = 0$.  
Since $\frac{\partial G}{\partial z}\big|_{v=\|\proxpl(\bx)\|_2,\, u=\|\bx\|_2} = 1 + p''(\|\proxpl(\bx)\|_2) > 0$, by the implicit function theorem,  $\exists\, g$ differentiable such that $\|\proxpl(\bx)\|_2 = g(\|\bx\|_2)$ when $\|\bx\|_2 > L$.

Note that by Lemma \ref{lem: dev and hessian}.(\rom{1}),
\begin{equation}
\nabla \rho(\bx) = p'(\|\proxpl(\bx)\|_2) \frac{\proxpl(\bx)}{\|\proxpl(\bx)\|_2} 
= p'(g(\|\bx\|_2)) \frac{\bx}{\|\bx\|_2}.
\end{equation}

Consider a function $\tilde{g}$ with $\tilde{g}'(x) = p'(g(x))$.  
Then 
\begin{equation}
\nabla \rho(\bx) = \tilde{g}'(\|\bx\|_2)\frac{\bx}{\|\bx\|_2}, \quad \text{if } \|\bx\|_2 > L,
\end{equation}
implying that 
\begin{equation}
\rho(\bx) = \tilde{g}(\|\bx\|_2) + C, \quad \text{if } \|\bx\|_2 > L,
\end{equation}
where $C$ is independent of $\bx$.

For $\|\bx\|_2 = L$, by Lemma \ref{lem: prox}.(\rom{2}), we know that $\bm{0} \in \arg\min_{\bz} \big\{\tfrac{1}{2}\|\bz - \bx\|_2^2 + \rho(\|\bz\|_2)\big\}$.  Therefore, $\rho(\bx) = \frac{1}{2}\twonorm{\bx}^2$. Hence $\rho(\bx)$ is actually a function of $\|\bx\|_2$, and $\nabla \rho(\bx)$ can be written as  $\psi(\|\bx\|_2)\frac{\bx}{\|\bx\|_2}$ with
\begin{equation}
\psi(x) =
\begin{cases}
x, & 0 \le x \le L, \\
\tilde{g}'(x), & x > L,
\end{cases}
\end{equation}
(where $\tilde{g}'(x) = p'(g(x))$).

Since $\rho(\bm{0}) = 0$ and  $p$ is increasing and continuous on $[0,+\infty)$, we have that  $ \psi(z) \ge 0$ for all $z \geq 0$. This shows  part 
(\rom{1}) of the theorem.

Furthermore, since $\|\proxpl(\bx)\|_2 + p'(\|\proxpl(\bx)\|_2) = \|\bx\|_2$ when $\|\bx\|_2 > L$ by Lemma \ref{lem: prox}.(\rom{3}), we have $g(v) + p'(g(v)) = v$, which implies $g'(v) + p''(g(v))g'(v) = 1$. Then because $p'$ is continuous on $(0,+\infty)$, and $p''$ exists on $(0,+\infty)$ almost everywhere, we have that   
$\psi$ is continuous and $\psi'$ exists almost everywhere.  
Moreover, since 
\begin{equation}
\frac{\partial g}{\partial v}(v) = \frac{1}{1 + p''(g(v))},
\qquad
-\frac{\tau}{1-\tau}\leq \psi'(x) = p''(g(x)) \frac{\partial g}{\partial x}(x)
= \frac{p''(g(x))}{1 + p''(g(x))} \leq 1,
\end{equation}
since $1 + p_{\lambda}''(\twonorm{\proxpl(\bx)}) > 0$ when $\twonorm{\bx} > L$.
This shows part
(\rom{2}).

\end{proof}

% -----------------------------------------------------
\subsection{Verification of assumptions for the regularizer examples}\label{subsec: revification penalty supp}
For a radial regularizer $p$, define
\begin{equation}
    F_u(r) \coloneqq \frac{1}{2}(u-r)^2 + p(r), \qquad u,r \geq 0.
\end{equation}
For every $\bx \neq \bm{0}$ and every $r \geq 0$,
\begin{equation}
    \min_{\twonorm{\bz}=r}\Big\{\frac{1}{2}\twonorm{\bx-\bz}^2 + p(\twonorm{\bz})\Big\} = \frac{1}{2}(\twonorm{\bx}-r)^2 + p(r),
\end{equation}
with equality attained at $\bz = r\bx/\twonorm{\bx}$. Hence every proximal point is collinear with $\bx$, and if we denote by $r_p(u)$ any minimizer of $F_u(r)$ over $r \geq 0$, then
\begin{equation}
    \proxpl(\bx) = r_p(\twonorm{\bx})\frac{\bx}{\twonorm{\bx}}, \qquad \twonorm{\proxpl(\bx)} = r_p(\twonorm{\bx}).
\end{equation}
Whenever $r_p(u) > 0$, the first-order condition gives
\begin{equation}\label{eq: appendix scalar prox equation}
    u = r_p(u) + p'(r_p(u)).
\end{equation}

\begin{proposition}\label{prop: verify penalty examples}
For every non-degenerate regularizer in Examples \ref{exp: penalties main text} and \ref{exp: penalties supp}, namely with $\lambda>0$, Assumption \ref{asmp: penalty} holds.
\end{proposition}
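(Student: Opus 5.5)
The proof is a direct case-by-case verification. It rests on the scalar reduction just established: every proximal point is collinear with $\bx$, and $\twonorm{\proxpl(\bx)} = r_p(\twonorm{\bx})$ with $r_p(u)$ minimizing $F_u(r)=\frac12(u-r)^2+p(r)$. For each regularizer we first compute $r_p(u)$ explicitly, using the first-order condition \eqref{eq: appendix scalar prox equation} on each smooth piece of $p$ and comparing $F_u(r_p(u))$ with $F_u(0)=u^2/2$. This recovers the stated $\proxpl$ formulas, and from them the threshold $L=\inf_{x>0}\{x/2+p(x)/x\}$ and $L_\infty=\sup_{u>L}p'(r_p(u))$.

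Conditions (ii)--(iv) are elementary for all examples. We check $p(0)=0$, monotonicity and continuity, that $p'$ is continuous on $(0,\infty)$ and piecewise $C^2$ (so $p''$ exists except at finitely many knots, such as $\lambda$ and $a\lambda$ for SCAD and $b\lambda$ for MC+), and that $p'(x)/x$ is non-increasing. For the latter:
\begin{itemize}
\item for Lasso and Bridge, $p'(x)/x=\lambda q x^{q-2}$ with $q\le 2$;
\item for SCAD and MC+, $p'$ is non-increasing and nonnegative, so $p'/x$ is decreasing;
\item for hard thresholding, $p'(x)/x=(\lambda-x)_+/x$.
\end{itemize}
Condition (i) follows because $\lambda>0$ forces $L>0$ (or $L_\infty=+\infty$ for ridge and Bridge with $q\in(1,2)$).

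For (v) we only need $p''$ evaluated at the attained radii $r=r_p(u)$, $u>L$. For convex penalties (Lasso, ridge, Bridge $q>1$) this gives $\tau=0$. For SCAD, $r_p(u)\in(0,\lambda]\cup(\lambda,a\lambda]\cup(a\lambda,\infty)$, giving $p''\in\{0,-1/(a-1),0\}$ and hence $\tau=1/(a-1)<1$ since $a>2$. For MC+ with $b>1$, the attained radii lie in $(0,b\lambda]$ where $p''=-1/b$, or beyond it where $p''=0$, so $\tau=1/b$. For MC+ with $b\le1$ and for hard thresholding, the prox jumps from $0$ to $\bx$, and the attained radii exceed $b\lambda$ (resp. $\lambda$) where $p''=0$, so $\tau=0$.

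Bridge with $0<q<1$ is the delicate case. We must show that the attained root $r$ is the larger root of $r+\lambda q r^{q-1}=u$. Then $p''(r)=\lambda q(q-1)r^{q-2}$, and we bound it below using the fact that at the larger root $1+p''(r)>0$ (it is a local minimum of $F_u$). Optimizing over the jump point $r_L$, where $F_u(r)=F_u(0)$, should give the claimed $\tau=1-q/2$.

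Condition (vi) asks for $tp''(t)\ge -c_0p'(t)$ when $t\ge c_1(L\vee L_\infty)$.
\begin{itemize}
\item For Bridge, $tp''=(q-1)p'$ holds identically, so $c_0=|q-1|$ works.
\item For Lasso, ridge, and Bridge with $q>1$, $p''\ge0$ and the condition is trivial. When $L_\infty=\infty$ it is not needed.
\item For SCAD and MC+ with $b>1$, we take $c_1=a$ (resp. $b$), so that for $t\ge c_1\lambda$ we have $p'=p''=0$.
\item For hard thresholding and MC+ with $b\le1$, we pick $c_1$ so that $c_1L$ exceeds the support of $p'$.
\end{itemize}

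The main obstacle is the Bridge $q<1$ computation of $L$, $r_L$ and $\tau$, where the prox is discontinuous. There I would first solve $F_u(r)=F_u(0)$ jointly with $F_u'(r)=0$ in closed form and then verify the bound on $p''$ at $r\ge r_L$.
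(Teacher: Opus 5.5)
Your proposal is correct and follows the paper's proof step for step. Both are a case-by-case check built on the scalar reduction $\twonorm{\proxpl(\bx)}=r_p(\twonorm{\bx})$, with essentially the same constants ($\tau=1/(a-1)$, $1/b$ or $0$; $c_1=a$, $b$ or $1$), and both handle the bridge case $0<q<1$ by jointly solving $F_u(r)=F_u(0)$ and $F_u'(r)=0$ for the jump radius $r_L$.

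One small correction in that bridge case. The joint solve gives $r_L=[2\lambda(1-q)]^{1/(2-q)}$. Since $p''(r)=-\lambda q(1-q)r^{q-2}$ increases in $r$ and $r_p(u)>r_L$, you get the sharper bound $p''(r_p(u))\ge p''(r_L)=-q/2$; it is $1+p''$, not $-p''$, that is bounded by $1-q/2$. The stated $\tau=1-q/2$ therefore holds a fortiori because $q\le 1$. By contrast, the pointwise fact $1+p''(r)>0$ at the larger root would not by itself give a uniform $\tau<1$.
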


\begin{proof}
We verify the claims case by case. Throughout, $\lambda>0$, so the degenerate case ruled out by Assumption \ref{asmp: penalty}.(\rom{1}) does not occur.

\begin{enumerate}[(i)]
    \item \textbf{Lasso.}
    Here $p(r) = \lambda r$. Then $p(0)=0$, $p$ is continuous and non-decreasing on $[0,+\infty)$, and for $r>0$,
    \begin{equation}
        p'(r) = \lambda, \qquad p''(r) = 0, \qquad \frac{p'(r)}{r} = \frac{\lambda}{r}.
    \end{equation}
    Thus Assumption \ref{asmp: penalty}.(\rom{2})--(\rom{4}) hold. Also,
    \begin{equation}
        \frac{r}{2} + \frac{p(r)}{r} = \lambda + \frac{r}{2},
    \end{equation}
    hence $L=\lambda$. Minimizing $F_u(r)$ gives $r_p(u)=0$ for $u\leq \lambda$ and $r_p(u)=u-\lambda$ for $u>\lambda$, which is the proximal map in Example \ref{exp: penalties main text}. Therefore $L_{\infty}=\lambda$, so Assumption \ref{asmp: penalty}.(\rom{1}) holds. Moreover, $p''(\twonorm{\proxpl(\bx)})=0$ for every $\twonorm{\bx}>L$, so Assumption \ref{asmp: penalty}.(\rom{5}) holds with $\tau=0$. Finally, Assumption \ref{asmp: penalty}.(\rom{6}) holds with $c_0=1$ and $c_1=1$, because $p''(t)t=0\geq -p'(t)$ for all $t>0$.

    \item \textbf{Ridge.}
    Here $p(r)=\lambda r^2$. Then $p(0)=0$, $p$ is continuous and non-decreasing on $[0,+\infty)$, and for $r>0$,
    \begin{equation}
        p'(r)=2\lambda r, \qquad p''(r)=2\lambda, \qquad \frac{p'(r)}{r}=2\lambda.
    \end{equation}
    Hence Assumption \ref{asmp: penalty}.(\rom{2})--(\rom{4}) hold. Since
    \begin{equation}
        \frac{r}{2} + \frac{p(r)}{r} = \Big(\frac{1}{2}+\lambda\Big)r,
    \end{equation}
    we have $L=0$. The minimizer of $F_u(r)$ is $r_p(u)=u/(2\lambda+1)$, so $\proxpl(\bx)=\bx/(2\lambda+1)$. Since $p''(r)=2\lambda\geq 0$, Assumption \ref{asmp: penalty}.(\rom{5}) holds with $\tau=0$. Moreover,
    \begin{equation}
        p'(r_p(u))=\frac{2\lambda}{2\lambda+1}u \to +\infty \qquad \text{as } u\to +\infty,
    \end{equation}
    and therefore $L_{\infty}=+\infty$. Thus Assumption \ref{asmp: penalty}.(\rom{1}) holds, and Assumption \ref{asmp: penalty}.(\rom{6}) is not needed by the convention in Assumption \ref{asmp: penalty}.

    \item \textbf{Bridge penalty with $1<q<2$.}
    Here $p(r)=\lambda r^q$. Then $p(0)=0$, $p$ is continuous and non-decreasing, and for $r>0$,
    \begin{equation}
        p'(r)=\lambda q r^{q-1}, \qquad p''(r)=\lambda q(q-1)r^{q-2}>0, \qquad \frac{p'(r)}{r}=\lambda q r^{q-2}.
    \end{equation}
    Since $q-2<0$, the map $r\mapsto p'(r)/r$ is non-increasing, so Assumption \ref{asmp: penalty}.(\rom{2})--(\rom{4}) hold. Also,
    \begin{equation}
        \frac{r}{2} + \frac{p(r)}{r} = \frac{r}{2} + \lambda r^{q-1} \downarrow 0 \qquad \text{as } r\downarrow 0,
    \end{equation}
    hence $L=0$. The map $r\mapsto r + \lambda q r^{q-1}$ is strictly increasing from $0$ to $+\infty$ on $(0,+\infty)$, so for each $u>0$ there is a unique $C_{q,\lambda}(u)>0$ such that
    \begin{equation}
        C_{q,\lambda}(u) + \lambda q[C_{q,\lambda}(u)]^{q-1}=u,
    \end{equation}
    which gives the proximal formula in Example \ref{exp: penalties main text}. Since $p''(r)>0$, Assumption \ref{asmp: penalty}.(\rom{5}) holds with $\tau=0$. Finally, $C_{q,\lambda}(u)\to +\infty$ as $u\to +\infty$, so $L_{\infty}=+\infty$. Thus Assumption \ref{asmp: penalty}.(\rom{1}) holds, and Assumption \ref{asmp: penalty}.(\rom{6}) is not needed by the convention in Assumption \ref{asmp: penalty}.

    \item \textbf{SCAD with $a>2$.}
    For $r>0$,
    \begin{equation}
        p'(r)=\begin{cases}
            \lambda, & 0<r\leq \lambda, \\
            \dfrac{a\lambda-r}{a-1}, & \lambda<r\leq a\lambda, \\
            0, & r>a\lambda,
        \end{cases}
        \qquad
        p''(r)=\begin{cases}
            0, & 0<r<\lambda, \\
            -\dfrac{1}{a-1}, & \lambda<r<a\lambda, \\
            0, & r>a\lambda.
        \end{cases}
    \end{equation}
    These formulas show that $p(0)=0$, that $p$ is continuous and non-decreasing, that $p'$ is continuous on $(0,+\infty)$, and that $p''$ exists almost everywhere. Moreover,
    \begin{equation}
        \frac{p'(r)}{r}=\begin{cases}
            \dfrac{\lambda}{r}, & 0<r\leq \lambda, \\
            \dfrac{a\lambda-r}{(a-1)r}, & \lambda<r\leq a\lambda, \\
            0, & r>a\lambda,
        \end{cases}
    \end{equation}
    which is non-increasing on $(0,+\infty)$; the values match at $r=\lambda$ and $r=a\lambda$. Thus Assumption \ref{asmp: penalty}.(\rom{2})--(\rom{4}) hold.

    Next,
    \begin{equation}
        \frac{r}{2}+\frac{p(r)}{r} = \begin{cases}
            \lambda + \dfrac{r}{2}, & 0<r\leq \lambda, \\
            \dfrac{(a-2)r^2 + 2a\lambda r - \lambda^2}{2(a-1)r}, & \lambda<r\leq a\lambda, \\
            \dfrac{r}{2} + \dfrac{a+1}{2}\dfrac{\lambda^2}{r}, & r>a\lambda.
        \end{cases}
    \end{equation}
    The first piece is minimized at $r\downarrow 0$ with value $\lambda$. For the second piece,
    \begin{equation}
        \frac{\textup{d}}{\textup{d}r}\Big(\frac{r}{2}+\frac{p(r)}{r}\Big)=\frac{(a-2)r^2+\lambda^2}{2(a-1)r^2}>0,
    \end{equation}
    and for $r\geq a\lambda$,
    \begin{equation}
        \frac{\textup{d}}{\textup{d}r}\Big(\frac{r}{2}+\frac{p(r)}{r}\Big)=\frac{1}{2}-\frac{a+1}{2}\frac{\lambda^2}{r^2} \geq \frac{1}{2}-\frac{a+1}{2a^2}>0.
    \end{equation}
    Therefore $L=\lambda$. Solving \eqref{eq: appendix scalar prox equation} on each region gives the proximal map stated in Example \ref{exp: penalties supp}. Consequently,
    \begin{equation}
        p'(\twonorm{\proxpl(\bx)})=\begin{cases}
            \lambda, & \lambda<\twonorm{\bx}\leq 2\lambda, \\
            \dfrac{a\lambda-\twonorm{\bx}}{a-2}, & 2\lambda<\twonorm{\bx}\leq a\lambda, \\
            0, & \twonorm{\bx}>a\lambda,
        \end{cases}
    \end{equation}
    so $L_{\infty}=\lambda$. Thus Assumption \ref{asmp: penalty}.(\rom{1}) holds. When $\twonorm{\bx}>L$, the quantity $\twonorm{\proxpl(\bx)}$ belongs either to $(0,\lambda]$, $(\lambda,a\lambda]$, or $(a\lambda,+\infty)$, hence $p''(\twonorm{\proxpl(\bx)})\in\{0,-1/(a-1)\}$ wherever it exists. Thus Assumption \ref{asmp: penalty}.(\rom{5}) holds with $\tau=1/(a-1)$. Finally, since $L\vee L_{\infty}=\lambda$, Assumption \ref{asmp: penalty}.(\rom{6}) holds with $c_0=1$ and $c_1=a$: whenever $t\geq a\lambda=c_1(L\vee L_{\infty})$ and $p''(t)$ exists, we have $p''(t)t=0\geq -p'(t)$.

    \item \textbf{MC+ with $b>0$.}
    Here
    \begin{equation}
        p(r)=\begin{cases}
            \lambda r-\dfrac{r^2}{2b}, & 0\leq r\leq b\lambda, \\
            \dfrac{1}{2}b\lambda^2, & r>b\lambda.
        \end{cases}
    \end{equation}
    For $r>0$,
    \begin{equation}
        p'(r)=\begin{cases}
            \lambda-\dfrac{r}{b}, & 0<r\leq b\lambda, \\
            0, & r>b\lambda,
        \end{cases}
        \qquad
        p''(r)=\begin{cases}
            -\dfrac{1}{b}, & 0<r<b\lambda, \\
            0, & r>b\lambda.
        \end{cases}
    \end{equation}
    Hence $p(0)=0$, $p$ is continuous and non-decreasing, $p'$ is continuous on $(0,+\infty)$, and $p''$ exists almost everywhere. Also,
    \begin{equation}
        \frac{p'(r)}{r}=\begin{cases}
            \dfrac{\lambda}{r}-\dfrac{1}{b}, & 0<r\leq b\lambda, \\
            0, & r>b\lambda,
        \end{cases}
    \end{equation}
    which is non-increasing on $(0,+\infty)$ and continuous at $r=b\lambda$. Thus Assumption \ref{asmp: penalty}.(\rom{2})--(\rom{4}) hold.

    Moreover,
    \begin{equation}
        \frac{r}{2}+\frac{p(r)}{r}=\begin{cases}
            \lambda + \dfrac{b-1}{2b}r, & 0<r\leq b\lambda, \\
            \dfrac{r}{2} + \dfrac{b\lambda^2}{2r}, & r>b\lambda.
        \end{cases}
    \end{equation}
    If $b\geq 1$, both pieces are bounded below by $\lambda$, so $L=\lambda$. If $0<b<1$, the first piece is decreasing on $(0,b\lambda]$ and the second piece is minimized at $r=\sqrt{b}\lambda>b\lambda$, which gives $L=\sqrt{b}\lambda$.

    If $b>1$, solving \eqref{eq: appendix scalar prox equation} on $(0,b\lambda]$ gives
    \begin{equation}
        r_p(u)=\frac{b}{b-1}(u-\lambda), \qquad \lambda<u\leq b\lambda,
    \end{equation}
    while $r_p(u)=0$ for $u\leq \lambda$ and $r_p(u)=u$ for $u>b\lambda$. Consequently $L_{\infty}=\lambda$. If $0<b\leq 1$, comparing the values of $F_u$ at $r=0$ and at $r=u$ gives $r_p(u)=0$ for $u\leq \sqrt{b}\lambda$ and $r_p(u)=u$ for $u>\sqrt{b}\lambda$, and then $L_{\infty}=0$. Since $\lambda>0$, Assumption \ref{asmp: penalty}.(\rom{1}) holds in both regimes.

    If $b>1$, then for every $\twonorm{\bx}>L$, we have $p''(\twonorm{\proxpl(\bx)})\in\{-1/b,0\}$ wherever it exists, so Assumption \ref{asmp: penalty}.(\rom{5}) holds with $\tau=1/b$. If $0<b\leq 1$, then $\twonorm{\bx}>L$ implies $\twonorm{\proxpl(\bx)}>b\lambda$, hence $p''(\twonorm{\proxpl(\bx)})=0$, so Assumption \ref{asmp: penalty}.(\rom{5}) holds with $\tau=0$. Finally, Assumption \ref{asmp: penalty}.(\rom{6}) holds with $c_0=1$ and $c_1=\max\{1,b\}$. Indeed, if $b>1$, then $L\vee L_{\infty}=\lambda$ and $t\geq c_1(L\vee L_{\infty})=b\lambda$ implies $p''(t)t=0\geq -p'(t)$ wherever $p''(t)$ exists. If $0<b\leq 1$, then $L\vee L_{\infty}=\sqrt{b}\lambda$ and $t\geq L\vee L_{\infty}\geq b\lambda$ again implies $p''(t)t=0\geq -p'(t)$ wherever $p''(t)$ exists.

    \item \textbf{Hard-thresholding.}
    Here
    \begin{equation}
        p(r)=\begin{cases}
            \lambda r-\dfrac{r^2}{2}, & 0\leq r\leq \lambda, \\
            \dfrac{1}{2}\lambda^2, & r>\lambda.
        \end{cases}
    \end{equation}
    Therefore, for $r>0$,
    \begin{equation}
        p'(r)=\begin{cases}
            \lambda-r, & 0<r\leq \lambda, \\
            0, & r>\lambda,
        \end{cases}
        \qquad
        p''(r)=\begin{cases}
            -1, & 0<r<\lambda, \\
            0, & r>\lambda.
        \end{cases}
    \end{equation}
    Thus $p(0)=0$, $p$ is continuous and non-decreasing, $p'$ is continuous on $(0,+\infty)$, and $p''$ exists almost everywhere. Also,
    \begin{equation}
        \frac{p'(r)}{r}=\begin{cases}
            \dfrac{\lambda}{r}-1, & 0<r\leq \lambda, \\
            0, & r>\lambda,
        \end{cases}
    \end{equation}
    which is non-increasing on $(0,+\infty)$. Therefore Assumption \ref{asmp: penalty}.(\rom{2})--(\rom{4}) hold.

    Moreover, $r/2+p(r)/r$ equals $\lambda$ on $(0,\lambda]$ and equals $r/2+\lambda^2/(2r)\geq \lambda$ on $(\lambda,+\infty)$, so $L=\lambda$. The proximal map is the one in Example \ref{exp: penalties supp}, and $L_{\infty}=0$. Since $L=\lambda>0$, Assumption \ref{asmp: penalty}.(\rom{1}) holds. Since $\twonorm{\bx}>L$ implies $\twonorm{\proxpl(\bx)}=\twonorm{\bx}>\lambda$, we have $p''(\twonorm{\proxpl(\bx)})=0$, so Assumption \ref{asmp: penalty}.(\rom{5}) holds with $\tau=0$. Finally, Assumption \ref{asmp: penalty}.(\rom{6}) holds with $c_0=1$ and $c_1=1$, because $t\geq L\vee L_{\infty}=\lambda$ implies $p''(t)t=0\geq -p'(t)$ wherever $p''(t)$ exists.

    \item \textbf{Bridge penalty with $0<q<1$.}
    Let
    \begin{equation}
        g(r) \coloneqq \frac{r}{2}+\lambda r^{q-1}, \qquad h(r) \coloneqq r+\lambda q r^{q-1}, \qquad r>0.
    \end{equation}
    Then $g''(r)=\lambda(q-1)(q-2)r^{q-3}>0$, so $g$ has the unique minimizer
    \begin{equation}
        r_L = [2\lambda(1-q)]^{\frac{1}{2-q}},
    \end{equation}
    and
    \begin{equation}
        L = g(r_L)= [2\lambda(1-q)]^{\frac{1}{2-q}}\cdot\frac{1}{2}\Big(1+\frac{1}{1-q}\Big).
    \end{equation}
    Also, $\lambda(1-q)r_L^{q-2}=1/2$ implies
    \begin{equation}\label{eq: bridge threshold identity}
        h(r_L)=r_L+\lambda q r_L^{q-1}=L.
    \end{equation}

    The function $p$ satisfies $p(0)=0$, is continuous and non-decreasing, and for $r>0$,
    \begin{equation}
        p'(r)=\lambda q r^{q-1}, \qquad p''(r)=\lambda q(q-1)r^{q-2}, \qquad \frac{p'(r)}{r}=\lambda q r^{q-2}.
    \end{equation}
    Since $q-2<0$, Assumption \ref{asmp: penalty}.(\rom{2})--(\rom{4}) hold.

    As in the display above, $F_u(r)-F_u(0)=r(g(r)-u)$. Hence $r=0$ is globally optimal when $u\leq L$, and not globally optimal when $u>L$. For $r\geq r_L$,
    \begin{equation}
        h'(r)=1-\lambda q(1-q)r^{q-2} \geq 1-\lambda q(1-q)r_L^{q-2}=1-\frac{q}{2}>0.
    \end{equation}
    Thus for every $u>L$ there is a unique $r(u)>r_L$ with $h(r(u))=u$, and this is the positive minimizer of $F_u$. Therefore $r_p(u)=0$ for $u\leq L$ and $r_p(u)=r(u)$ for $u>L$, as in Example \ref{exp: penalties supp}. Since $r(u)$ is continuous and strictly increasing on $(L,+\infty)$,
    \begin{equation}
        L_{\infty}=\sup_{u>L}p'(r_p(u))=\lambda q r_L^{q-1}.
    \end{equation}
    Since $L=r_L+\lambda q r_L^{q-1}$, we have $L>L_{\infty}$, so Assumption \ref{asmp: penalty}.(\rom{1}) holds.

    Finally, for every $u>L$, we have $r_p(u)=r(u)\geq r_L$, and hence
    \begin{equation}
        p''(r_p(u)) \geq -\lambda q(1-q)r_L^{q-2} = -\frac{q}{2} \geq -\Big(1-\frac{q}{2}\Big).
    \end{equation}
    Hence Assumption \ref{asmp: penalty}.(\rom{5}) holds with the choice $\tau=1-q/2$ used in Example \ref{exp: penalties supp}. Moreover,
    \begin{equation}
        p''(t)t=\lambda q(q-1)t^{q-1}=-(1-q)p'(t), \qquad t>0,
    \end{equation}
    so Assumption \ref{asmp: penalty}.(\rom{6}) holds with $c_0=1-q$ and $c_1=1$.
\end{enumerate}
This completes the proof.
\end{proof}

% -----------------------------------------------------
% Appendix: Proofs of results in Section 3
% -----------------------------------------------------

\section{Technical details of Section \ref{sec: method and theory}}

% {\color{red}[M: maybe call the section ``auxiliary results for filtering algorithm'' and just keep the first subsection. See my comment about the second subsection.]}

% Proofs of results in this section are collected in Section \ref{app:proof-of-detailedresults}.
% -------------------------------------

\subsection{Proofs of results in Section \ref{subsec: lower bound}}

For Gaussian mean estimation with squared loss, as shown in the proof, the gradient estimation error is equivalent to the parameter estimation error. Therefore, in addition to lower bounds on the estimation errors of the parameters $\bthetas$ and $\bthetaks{k}$, we can also obtain lower bounds on the estimation errors of the gradients $\nabla \mL(\btheta)$ and $\nabla \mLk{k}(\btheta)$ uniformly over $\btheta \in \Theta$. We summarize these lower bounds in the following theorem, which includes Theorem \ref{thm: lower bound} as a special case.

\begin{theorem}\label{thm: lower bound supp}
    There exist constants $C > 0$ and $c \in (0, 1)$ such that
    \begin{align}
        &\inf_{\hat{\bmu}}\sup_{\tP \in \mathcal{P}, S \in \mathcal{S}}\sup_{M \in \mathcal{M}_S} \tP\bigg(\sup_{\btheta \in \Theta}\Big\|\hat{\bmu}(\btheta) - \nabla \mL(\btheta)\Big\|_2 \geq C\bigg(\sqrt{\frac{d}{nK}} + \sqrt{\epsilon}h + \frac{\epsilon}{\sqrt{n}}\bigg)\bigg) \geq c,\\
        &\inf_{\{\hat{\bmu}^{(k)}\}_{k=1}^K}\sup_{\tP \in \mathcal{P}', S \in \mathcal{S}}\sup_{M \in \mathcal{M}_S} \tP\bigg(\bigcup_{k \in S}\bigg\{\sup_{\btheta \in \Theta}\|\hat{\bmu}^{(k)}(\btheta) - \nabla \mLk{k}(\btheta)\|_2 \\
        &\hspace{6cm}\geq C\bigg[\bigg(\sqrt{\frac{d}{nK}} + \sqrt{\epsilon}h + \hk{k} + \frac{\epsilon}{\sqrt{n}}\bigg)\wedge \sqrt{\frac{d}{n}}\bigg]\bigg\}\bigg) \geq c, \\
        &\inf_{\htheta}\sup_{\tP \in \mathcal{P}, S \in \mathcal{S}}\sup_{M \in \mathcal{M}_S} \tP\bigg(\|\htheta - \bthetas\|_2 \geq C\bigg(\sqrt{\frac{d}{nK}} + \sqrt{\epsilon}h + \frac{\epsilon}{\sqrt{n}}\bigg)\bigg) \geq c,\\
        &\inf_{\{\hthetak{k}\}_{k=1}^K}\sup_{\tP \in \mathcal{P}', S \in \mathcal{S}}\sup_{M \in \mathcal{M}_S} \tP\bigg(\bigcup_{k \in S}\bigg\{\|\hthetak{k} - \bthetaks{k}\|_2  \geq C\bigg[\bigg(\sqrt{\frac{d}{nK}} + 
    \sqrt{\epsilon}h + \hk{k} + \frac{\epsilon}{\sqrt{n}}\bigg)\wedge \sqrt{\frac{d}{n}}\bigg]\bigg\}\bigg) \geq c.
    \end{align}
\end{theorem}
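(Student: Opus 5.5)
We work throughout in the Gaussian mean model $\bxk{k}_i \sim N(\bthetaks{k}, \bm{I}_d)$ with loss $\ell(\bx,\btheta)=\frac12\twonorm{\bx-\btheta}^2$. There $\nabla\mLk{k}(\btheta)=\btheta-\bthetaks{k}$ and $\nabla\mL(\btheta)=\btheta-\bthetas$, where $\bthetas=K^{-1}\sum_k\bthetaks{k}$.

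\emph{Reduction to parameter bounds.} Any gradient estimator $\hat{\bmu}$ yields a parameter estimator $\htheta \coloneqq \btheta_0-\hat{\bmu}(\btheta_0)$ for a fixed $\btheta_0\in\Theta$. This gives $\sup_{\btheta}\twonorm{\hat{\bmu}(\btheta)-\nabla\mL(\btheta)}\ge \twonorm{\htheta-\bthetas}$, and the same holds for each task $k$. It therefore suffices to prove the last two displays. We do this by exhibiting, for each term in the rate separately, a sub-family of $\mathcal{P}$ (resp. $\mathcal{P}'$) together with a contamination mechanism on which every estimator errs by that term with constant probability. A maximum of four terms is within a factor $4$ of their sum, and we take $c$ to be the smallest of the constant probabilities.

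\emph{Global parameter.}
\begin{itemize}
\item The term $\sqrt{d/(nK)}$: take $\epsilon=0$, $h=0$ and all $\bthetaks{k}=\bthetas$. The pooled sample is $nK$ i.i.d.\ draws from $N(\bthetas,\bm{I}_d)$, and Lemma~\ref{lem: minimax lower bound mean est} with sample size $nK$ gives the bound.
\item The term $\epsilon/\sqrt{n}$: take homogeneous tasks and view the task means $\barxk{k}\sim N(\bthetas, n^{-1}\bm{I}_d)$ as $K$ samples under $\epsilon$-contamination. The standard two-point argument in one coordinate applies: $N(0,1/n)$ and $N(\epsilon/\sqrt n,1/n)$ are at total variation distance $\lesssim\epsilon$. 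Hence an adversary that rewrites the datasets of an $\epsilon$ fraction of tasks can make the two observed laws coincide. Because the adversary acts on whole datasets, and the task mean is sufficient within a task, contaminating the tasks is equivalent to contaminating the $\barxk{k}$. A Le Cam bound then gives error $\gtrsim \epsilon/\sqrt n$.
\item The term $\sqrt{\epsilon}h$: this is the heterogeneity--contamination interaction, and it needs a separate construction. In world A, all $K$ tasks have $\bthetaks{k}=\bm{0}$. In world B, $\epsilon K$ tasks have $\bthetaks{k}=\bv$, with $\twonorm{\bv}=h/\sqrt{\epsilon}$ up to constants, and the rest have $\bm{0}$. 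Then $K^{-1}\sum_k\twonorm{\bthetaks{k}-\bthetas}^2\approx \epsilon\,h^2/\epsilon=h^2$, so world B lies in $\mathcal{P}$, and $\bthetas_B=\epsilon\bv$ has norm $\sqrt{\epsilon}h$. In world A the adversary replaces $\epsilon K$ tasks with data drawn from $N(\bv,\bm{I}_d)$. With the contaminated indices placed at the shifted tasks, the observed joint laws of A and B are identical, while $\twonorm{\bthetas_A-\bthetas_B}=\sqrt{\epsilon}h$. Some care is needed with $\Theta$ and $R_0$ (the shift must stay inside $\Theta$), but that is only a bookkeeping constraint.
\end{itemize}

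\emph{Local parameters.} For the local bound, the global constructions transfer to a clean task $k\in S$ whose $\bthetaks{k}$ equals, or is tied to, $\bthetas$. In the $\sqrt\epsilon h$ construction, for instance, a clean task at $\bm{0}$ in world A has $\bthetaks{k}=\bm{0}$ in both worlds, so that alone gives nothing. Instead we set $\bthetaks{k}=\bthetas$ for the inspected task, shifting all clean tasks by $-\epsilon\bv$ in world B, so that the local target moves between worlds. The two added ingredients are as follows.
\begin{itemize}
\item The term $\hk{k}$: in a two-point construction, task $k$ alone is shifted by $\pm\hk{k}$ times a unit vector, while the others are adjusted so the average is preserved. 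The only information about the shift then comes from task $k$'s own $n$ samples, and these cannot resolve a shift of size $\hk{k}\wedge\sqrt{d/n}$. To obtain $\sqrt{d/n}$ rather than $1/\sqrt n$, we use a Varshamov--Gilbert packing of $\{\pm\}^d$ at scale $(\hk{k}\wedge\sqrt{d/n})/\sqrt d$ with Fano's inequality. We also verify $h^2\le K^{-1}\sum_k(\hk{k})^2$.
\item The cap $\wedge\sqrt{d/n}$: this comes directly from the packing scale just described.
\end{itemize}

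\emph{Main obstacle.} The delicate step is the local bound: making the perturbations of task $k$ compatible with the global heterogeneity constraint and with $\bthetas$ simultaneously. We expect to handle this by letting the remaining clean tasks absorb the compensating shift, which costs only $O(\hk{k}/K)$ in the global constraint.
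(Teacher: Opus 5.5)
Your reduction from gradients to parameters and your three global constructions coincide with the paper's proof, so the first and third displays are fine. Your $\sqrt{\epsilon}h$ two-world construction is exactly the one used there.

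The gap is the $\sqrt{\epsilon}h$ term in the local bound, and your replacement construction cannot work. The two-world argument gets all its force from the observed laws being \emph{identical}. That forces every task that is clean in both worlds to have the same mean in both worlds, so its local target never moves. Once you move the inspected task (mean $\bm{0}$ in A versus $\epsilon\bv$, of norm $\sqrt{\epsilon}h$, in B), the laws differ, and Le Cam yields at best $\sqrt{\epsilon}h\wedge n^{-1/2}$. If instead you shift all clean tasks by $-\epsilon\bv$, the roughly $(1-\epsilon)nK$ clean samples separate the worlds, and you get only $\sqrt{\epsilon}h\wedge (nK)^{-1/2}$.

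The missing idea, which is how the paper proceeds, is that no separate construction is needed. The event is a union over $k\in S$, so exhibiting one task suffices. The hypothesis $h^2\le K^{-1}\sum_k(\hk{k})^2$, which you list as something to verify, is really the tool: it guarantees some $k_0$ with $\hk{k_0}\ge h\ge\sqrt{\epsilon}h$. For that task the $\sqrt{\epsilon}h$ term is absorbed into $\hk{k_0}$. Your homogeneous constructions, with $k_0$ kept clean, plus your Fano packing on task $k_0$, then give $\big(\sqrt{d/(nK)}+\epsilon/\sqrt{n}+\hk{k_0}\big)\wedge\sqrt{d/n}$.

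Your accounting in the ``main obstacle'' is also off. However the other tasks compensate, the perturbed task itself contributes $\twonorm{\bthetaks{k}-\bthetas}^2/K\approx(\hk{k})^2/K$ to the averaged squared heterogeneity. Membership in $\mathcal{P}'$ therefore needs $\hk{k}\lesssim\sqrt{K}\,h$, and the induced shift of order $\hk{k}/K$ of each other task $j$ must not exceed $\hk{j}$.

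These are genuine restrictions, not bookkeeping. For $h=0$ all tasks share one minimizer, the local problem is the global one, and no $\hk{k}$-dependent term can appear in a valid lower bound. The paper's proof uses the same single-task packing and simply asserts $K^{-1}(\hk{k_0})^2\le h^2$, so this condition is implicit there too. You should state it explicitly rather than claim the compensation costs only $O(\hk{k}/K)$.
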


\subsubsection{Proof of Theorem \ref{thm: lower bound supp}}
We work under the Gaussian mean estimation model with squared loss. Then
\begin{equation}
    \ell(\btheta, \bx)=\frac{1}{2}\twonorm{\btheta-\bx}^2, \qquad \nabla \ell(\btheta, \bx)=\btheta-\bx,
\end{equation}
and therefore
\begin{equation}
    \nabla \mL(\btheta)=\btheta-\bthetas, \qquad \nabla \mLk{k}(\btheta)=\btheta-\bthetaks{k}.
\end{equation}

We first reduce the gradient estimation problem to the parameter estimation problem. For any estimator $\hat{\bmu}(\btheta)$ of $\nabla \mL(\btheta)$, define $\hat{\bm{\vartheta}}(\btheta)=\btheta-\hat{\bmu}(\btheta)$. Then
\begin{equation}
    \hat{\bmu}(\btheta)-\nabla \mL(\btheta)=\bthetas-\hat{\bm{\vartheta}}(\btheta),
\end{equation}
which implies
\begin{equation}
    \sup_{\btheta \in \Theta}\twonorm{\hat{\bmu}(\btheta)-\nabla \mL(\btheta)}
    = \sup_{\btheta \in \Theta}\twonorm{\hat{\bm{\vartheta}}(\btheta)-\bthetas}.
\end{equation}
Fix any $\btheta_0 \in \Theta$ and define $\tilde{\bm{\vartheta}}(\btheta)\equiv \hat{\bm{\vartheta}}(\btheta_0)$ for all $\btheta \in \Theta$. Then
\begin{equation}
    \sup_{\btheta \in \Theta}\twonorm{\tilde{\bm{\vartheta}}(\btheta)-\bthetas}
    = \twonorm{\hat{\bm{\vartheta}}(\btheta_0)-\bthetas}
    \leq \sup_{\btheta \in \Theta}\twonorm{\hat{\bm{\vartheta}}(\btheta)-\bthetas}.
\end{equation}
Hence, for the minimax lower bound, it suffices to restrict attention to estimators of the form $\hat{\bmu}(\btheta)=\btheta-\hat{\bm{\vartheta}}$, where $\hat{\bm{\vartheta}}$ does not depend on $\btheta$. In that case,
\begin{equation}
    \sup_{\btheta \in \Theta}\twonorm{\hat{\bmu}(\btheta)-\nabla \mL(\btheta)} = \twonorm{\hat{\bm{\vartheta}}-\bthetas}.
\end{equation}
The same argument applies to $\hat{\bmu}^{(k)}(\btheta)$. Therefore, it suffices to prove the lower bounds for estimating $\bthetas$ and $\bthetaks{k}$.

We first consider the lower bound for $\bthetas$. When $\sqrt{\frac{d}{nK}} + \frac{\epsilon}{\sqrt{n}} \gtrsim \sqrt{\epsilon}h$, the same construction used in the proof of Theorem 4.3 of \cite{duan2022adaptive} (which in turn is based on Theorem 2.2 of \cite{chen2018robust}) yields the lower bound $\sqrt{\frac{d}{nK}} + \frac{\epsilon}{\sqrt{n}}$. More specifically, if we consider $\bthetaks{k} = \bthetas$ for all $k \in S$, then $\frac{\epsilon}{\sqrt{n}}$ part comes from a construction of two mixture distributions $\tP_1 = (1-\epsilon)\tP_{\btheta_1}^{\otimes n} + \epsilon \tQ_1$ and $\tP_2 = (1-\epsilon)\tP_{\btheta_2}^{\otimes n} + \epsilon \tQ_2$ where $\tP_1 = \tP_2$ but $\twonorm{\btheta_1 - \btheta_2} \asymp \epsilon/\sqrt{n}$. Then the lower bound is a direct consequence of Le Cam's lemma. The $\sqrt{\frac{d}{nK}}$ part comes from the case where there is no contamination,% (which is within our parameter set because the contamination proportion $0$ is smaller than $\epsilon$), which is 
as a variant of a classical lower bound for mean estimation under the Gaussian model, which can be proved by Fano's lemma.

Hence it remains to consider the regime
\begin{equation}
    \sqrt{\epsilon}h \gtrsim \sqrt{\frac{d}{nK}} + \frac{\epsilon}{\sqrt{n}}.
\end{equation}
Fix any $\bm{v}\in \mathbb{S}^{d-1}$, and for simplicity assume that $K\epsilon$ is an integer. Consider the following two parameter setups.

\smallskip
(\rom{1}) Let the uncontaminated task set be $S=[K]$ and the contaminated set be $S^c = \emptyset$, and define
\begin{equation}
    \bthetaks{k}=
    \begin{cases}
        \epsilon^{-1/2}h\bm{v}, & k=1,\ldots, K\epsilon,\\
        \bm{0}, & k=K\epsilon+1,\ldots,K.
    \end{cases}
\end{equation}
Then
\begin{equation}
    \bthetas=\frac{1}{K}\sum_{k=1}^K \bthetaks{k}=\sqrt{\epsilon}h\bm{v},
\end{equation}
and
\begin{align}
    \frac{1}{K}\sum_{k=1}^K\twonorm{\bthetaks{k}-\bthetas}^2
    &= \epsilon\twonorm{\epsilon^{-1/2}h\bm{v}-\sqrt{\epsilon}h\bm{v}}^2 + (1-\epsilon)\twonorm{\sqrt{\epsilon}h\bm{v}}^2 \\
    &= (1-\epsilon)h^2 \\
    &\leq h^2.
\end{align}
Hence this parameter setup belongs to $\mathcal{P}$.

\smallskip
(\rom{2}) Let the uncontaminated task set be $S=\{K\epsilon+1,\ldots,K\}$ and the contaminated set be $S^c = [K]\backslash S$, and let the underlying clean model satisfy $\bthetaks{k}=\bm{0}$ for all $k \in [K]$, so that $\bthetas=\bm{0}$ and this setup also belongs to $\mathcal{P}$. Let the contamination mechanism replace the observations from tasks $k=1,\ldots,K\epsilon$ by i.i.d.\ draws from $N(\epsilon^{-1/2}h\bm{v}, \bm{I}_d)$.

By construction, the observed data distributions in (\rom{1}) and (\rom{2}) are identical, while the corresponding global parameters are $\bm{\theta}_{\mathrm{I}}^*=\sqrt{\epsilon}h\bm{v}$ and $\bm{\theta}_{\mathrm{II}}^*=\bm{0}$. Therefore, for any estimator $\hat{\bm{\vartheta}}$, the events
\begin{equation}
    \left\{\twonorm{\hat{\bm{\vartheta}}-\bm{\theta}_{\mathrm{I}}^*} < \frac{\sqrt{\epsilon}h}{2}\right\}
    \qquad\text{and}\qquad
    \left\{\twonorm{\hat{\bm{\vartheta}}-\bm{\theta}_{\mathrm{II}}^*} < \frac{\sqrt{\epsilon}h}{2}\right\}
\end{equation}
are disjoint. Since the two experiments induce the same law on the observed data, at least one of these two events has probability at most $1/2$. Equivalently,
\begin{equation}
    \max\left\{
        \tP_{\mathrm{I}}\left(\twonorm{\hat{\bm{\vartheta}}-\bm{\theta}_{\mathrm{I}}^*} \geq \frac{\sqrt{\epsilon}h}{2}\right),
        \tP_{\mathrm{II}}\left(\twonorm{\hat{\bm{\vartheta}}-\bm{\theta}_{\mathrm{II}}^*} \geq \frac{\sqrt{\epsilon}h}{2}\right)
    \right\}\geq \frac{1}{2}.
\end{equation}
This proves the $\sqrt{\epsilon}h$ term in the lower bound for $\bthetas$, and combining the two regimes gives the desired lower bound for $\bthetas$.

Next, we consider the lower bound for $\bthetaks{k}$. Since $h^2 \leq \frac{1}{K}\sum_{k=1}^K (\hk{k})^2$, there exists some $k_0 \in [K]$ such that $\hk{k_0} \geq h$. If $\frac{\epsilon}{\sqrt{n}} + \sqrt{\frac{d}{nK}} \geq \left(\sqrt{\epsilon}h + \hk{k_0}\right)\wedge \sqrt{\frac{d}{n}}$,
then the same construction described before for the lower bound $\frac{\epsilon}{\sqrt{n}} + \sqrt{\frac{d}{nK}}$ of $\bthetas$'s estimation error gives the desired $\frac{\epsilon}{\sqrt{n}} + \sqrt{\frac{d}{nK}}$ lower bound. 

%{\color{red}[M: this is just a reminder for myself. I need to check this reference and the argument by Kaizhen you eluded to above...I still have not done this, but also think it would be good to sketch the argument a bit here to make it self contained as the rest of the arguments in tbis paper]}\yt{I think I am too lazy to add all details here regarding how to use the Le Cam and Fano here. A full proof may take a few more pages. But I did add more details of the construction of the candidate sets used in the lower bound proof, to make our argument as clear as possible. This should be readable to most people familiar with transfer learning and these lower bound proof techniques.}
On the other hand, if $\frac{\epsilon}{\sqrt{n}} + \sqrt{\frac{d}{nK}} \leq \left(\sqrt{\epsilon}h + \hk{k_0}\right)\wedge \sqrt{\frac{d}{n}}$, then, since $\sqrt{\epsilon}h \leq h \leq \hk{k_0}$, a similar construction in Lemma 12 of \cite{tian2022unsupervised} yields the desired lower bound $\hk{k_0} \wedge \sqrt{\frac{d}{n}}$, where we treat the task $k_0$ as their target task and the other $K-1$ tasks as their source tasks. More specifically, let us define $r = \hk{k_0} \wedge \sqrt{\frac{d}{n}}$. We can consider a fixed $\btheta \in \mathbb{R}^d$ and a $r/8$-packing of an $\ell_2$-ball centered at $\btheta$ with radius $r$, where the packing is denoted as $\mathcal{V}$. We consider different $\bthetaks{k_0}$ values by picking different elements in $\mathcal{V}$ and let $\bthetaks{k} = \btheta$ for all $k \neq k_0$. Note that this construction falls into the original parameter space with $\bthetas = \frac{1}{K}\bthetaks{k_0} + \frac{K-1}{K}\btheta$ because
\begin{align}
    &\frac{1}{K}\sum_{k=1}^K\twonorma{\nabla \mLk{k}(\btheta) - \frac{1}{K}\sum_{k=1}^K \nabla \mLk{k}(\btheta)}^2 \\
    &= \frac{1}{K}\sum_{k=1}^K\twonorm{\bthetaks{k} - \bthetas}^2 \\
    &= \frac{1}{K}\twonorm{\bthetaks{k_0} - \bthetas}^2 + \frac{K-1}{K}\twonorm{\btheta - \bthetas}^2 \\
    &= \frac{1}{K}\Big(\frac{K-1}{K}\Big)^2\twonorm{\bthetaks{k_0} - \btheta}^2 + \frac{K-1}{K}\Big(\frac{1}{K}\Big)^2\twonorm{\bthetaks{k_0} - \btheta}^2 \\
    &\leq \frac{1}{K}\twonorm{\bthetaks{k_0} - \btheta}^2 \\
    &\leq \frac{1}{K}(\hk{k_0})^2 \\
    &\leq h^2.
\end{align}
 Then we can construct two different parameter setups with different $\bthetaks{k_0}$, where the $\ell_2$-distance between two $\bthetaks{k_0}$ values on $\mathcal{V}$ is at least $r/8 \gtrsim \hk{k_0} \wedge \sqrt{\frac{d}{n}}$. The rest of the analysis follows from Fano's lemma as in the proof of Lemma 12 of \cite{tian2022unsupervised}.
Therefore,
\begin{equation}
    \tP\bigg(\bigcup_{k \in S}\bigg\{\twonorm{\hthetak{k} - \bthetaks{k}} \geq C\bigg[\bigg(\sqrt{\frac{d}{nK}} + \sqrt{\epsilon}h + \hk{k} + \frac{\epsilon}{\sqrt{n}}\bigg)\wedge \sqrt{\frac{d}{n}}\bigg]\bigg\}\bigg)\geq c
\end{equation}
for some constants $C,c>0$.

Finally, by the reduction at the beginning of the proof, the two gradient estimation lower bounds are equivalent to the two parameter estimation lower bounds in this Gaussian mean setting. This completes the proof.

\subsection{Proofs of results in Section \ref{subsec: MTL}}

% -----------------------------
\subsubsection{Proof of Theorem \ref{thm: federated gradient descent}}

Fix any subset $S\subseteq[K]$ with $|S^c|/K\le \epsilon$ and any contamination mechanism $M\in\mathcal M_S$. For brevity, write
\[
\alpha := \alpha(n,K,d,\epsilon,\delta,H),
\qquad
\alpha^{(k)} := \alpha^{(k)}(n,K,d,\epsilon,\delta,H).
\]
Let $\mathcal E$ denote the event on which the gradient estimation bounds in Assumption~\ref{asmp: gradient est error} hold simultaneously. Then $\mathbb P(\mathcal E)\ge 1-\delta$. We work on $\mathcal E$ throughout.

We first consider the global iterates for $\hat{\theta}_t$. 
Let $e_t:=\hat\theta_t-\btheta^*$. Since $\btheta^*$ minimizes $\mL$, we have $\nabla \mL(\btheta^*)=\bm 0$. Using the update $\hat\theta_{t+1}=\hat\theta_t-\eta g_t$, we obtain
\[
e_{t+1}
=
e_t-\eta\big(\nabla \mL(\hat\theta_t)-\nabla \mL(\btheta^*)\big)
-\eta\big(g_t-\nabla \mL(\hat\theta_t)\big).
\]
Hence
\begin{align}
\|e_{t+1}\|_2^2
&\le
\big\|e_t-\eta(\nabla \mL(\hat\theta_t)-\nabla \mL(\btheta^*))\big\|_2^2
+\eta^2\alpha^2 \notag\\
&\quad
+2\big\|e_t-\eta(\nabla \mL(\hat\theta_t)-\nabla \mL(\btheta^*))\big\|_2\,\eta\alpha.
\label{eq: global_short_1}
\end{align}

By Assumption~\ref{asmp: risk function}, $\mL$ is $1/L$-strongly convex and $L$-smooth on $\Theta$, 
therefore,
\[
\big\|e_t-\eta(\nabla \mL(\hat\theta_t)-\nabla \mL(\btheta^*))\big\|_2^2
\le
\Big(1-\frac{2\eta}{L}+L^2\eta^2\Big)\|e_t\|_2^2
=
(1-\kappa)\|e_t\|_2^2,
\]
where
$
\kappa=\frac{2\eta}{L}-L^2\eta^2.
$
Substituting this into \eqref{eq: global_short_1} gives
\[
\|e_{t+1}\|_2^2
\le
(1-\kappa)\|e_t\|_2^2+\eta^2\alpha^2
+2\sqrt{1-\kappa}\,\|e_t\|_2\,\eta\alpha.
\]
Applying Young's inequality \(2ab\le c'a^2+(c')^{-1}b^2\) with
\[
a=\sqrt{1-\kappa}\,\|e_t\|_2,\qquad b=\eta\alpha,\qquad c'=\frac{\kappa}{2(1-\kappa)},
\]
we obtain
\[
\|e_{t+1}\|_2^2
\le
\Big(1-\frac{\kappa}{2}\Big)\|e_t\|_2^2
+\eta^2\alpha^2\frac{2-\kappa}{\kappa}.
\]
Iterating this last expression, simple manipulations yield
\[
\|\hat\theta_T-\btheta^*\|_2
\le
\Big(1-\frac{\kappa}{2}\Big)^{T/2}\|\hat\theta_0-\btheta^*\|_2
+\eta\alpha\sqrt{\frac{2(2-\kappa)}{\kappa^2}}.
\]

Now consider the local iterates for $\hat{\theta}_t^{(k)}$.
Fix any $k\in S$ and let $e_t^{(k)}:=\hat\theta_t^{(k)}-\btheta^{(k)*}$. Since $\btheta^{(k)*}$ minimizes $\mLk{k}$, we have $\nabla \mLk{k}(\btheta^{(k)*})=\bm 0$. Using the update
\[
\hat\theta_{t+1}^{(k)}=\hat\theta_t^{(k)}-\eta^{(k)}g_t^{(k)},
\]
we get
\[
e_{t+1}^{(k)}
=
e_t^{(k)}
-\eta^{(k)}\big(\nabla \mLk{k}(\hat\theta_t^{(k)})-\nabla \mLk{k}(\btheta^{(k)*})\big)
-\eta^{(k)}\big(g_t^{(k)}-\nabla \mLk{k}(\hat\theta_t^{(k)})\big).
\]
By the same argument as above, using
\[
\|g_t^{(k)}-\nabla \mLk{k}(\hat\theta_t^{(k)})\|_2\le \alpha^{(k)}
\]
on the event $\mathcal E$, we obtain
\[
\|e_{t+1}^{(k)}\|_2^2
\le
\Big(1-\frac{\kappa^{(k)}}{2}\Big)\|e_t^{(k)}\|_2^2
+(\eta^{(k)})^2(\alpha^{(k)})^2\frac{2-\kappa^{(k)}}{\kappa^{(k)}},
\]
where $
\kappa^{(k)}=\frac{2\eta^{(k)}}{L}-L^2(\eta^{(k)})^2.
$
Iterating this yields
\[
\|\hat\theta_T^{(k)}-\btheta^{(k)*}\|_2
\le
\Big(1-\frac{\kappa^{(k)}}{2}\Big)^{T/2}\|\hat\theta_0^{(k)}-\btheta^{(k)*}\|_2
+\eta^{(k)}\alpha^{(k)}\sqrt{\frac{2(2-\kappa^{(k)})}{(\kappa^{(k)})^2}},
\qquad \forall k\in S.
\]

Finally, by the initialization conditions in the theorem, both the global and local iterates remain within the radius-$R_0$ neighborhood of their targets, and hence remain in $\Theta$ throughout the iterations.

\subsection{Analysis of the filtering algorithm}\label{appendix-sub: filtering}

In this subsection, we present several key results for our core filtering algorithm. Definition \ref{def: stability} is a variant of the stability condition in Definition 2.1 of \cite{diakonikolas2023algorithmic}. Lemma \ref{lem: certificate} establishes an estimation error bound as a direct consequence of stability and a covariance matrix condition. Both the stability definition and certificate lemma can be seen as generalizations of the case of identity covariance matrix in \cite{diakonikolas2023algorithmic}. Proposition \ref{prop: robust mean est error} characterizes the stopping time and estimation error of Algorithm \ref{algo: robust mean estimation}. Lemma \ref{lem: stability subG gradients} guarantees that, with high probability, the set of $K$ task gradients is stable uniformly over a neighborhood of $\btheta^*$.

Note that in our stability definition, we use two separate parameters, $\delta_1$ and $\delta_2$, to characterize the stability of the mean and covariance, respectively. This contrasts with the stability definitions commonly used in the literature \cite{diakonikolas2023algorithmic}, where a single parameter $\delta$, together with the contamination proportion $\epsilon$, is used to control both mean and covariance stability. These works typically focus on the $\epsilon$-dependent term in the estimation error and impose explicit assumptions to ensure that the other terms are negligible. In our case, we aim to provide a comprehensive upper bound on the estimation error that clearly captures its dependence on $n$, $K$, $d$, $\epsilon$, $h$, and $\hk{k}$. Therefore, we need to track mean and covariance stability separately, which leads to the two-parameter stability definition. We hope that this more general notion of stability may also be useful in other contexts.

\begin{definition}[Stability]\label{def: stability}
    A set $S$ is said to be $(\epsilon, \delta_1, \delta_2)$-stable w.r.t. a vector $\bmu$ and a matrix $\bSigma$ if for every $S' \subseteq S$ with $|S'| \geq (1-\epsilon)|S|$ and every $\bm{v} \in \mathcal{S}^{d-1}$:
    \begin{enumerate}[(i)]
        \item $|\frac{1}{|S'|}\sum_{\bx \in S'} \bm{v}^\top(\bx - \bmu)| \leq \delta_1$;
        \item $|\frac{1}{|S'|}\sum_{\bx \in S'} \bm{v}^\top[(\bx - \bmu)(\bx - \bmu)^\top - \bSigma]\bm{v}| \leq \delta_2$.
    \end{enumerate}
\end{definition}

\begin{lemma}[Certificate]\label{lem: certificate}
Let $S$ be $(\epsilon,\delta_1,\delta_2)$-stable and $\tilde S$ be an $\epsilon$-corrupted version of $S$. If
\begin{equation}
\lambda_{\max}\left(\frac{1}{|\tilde{S}|}\sum_{\bx\in \tilde S} (\bx-\bmu_{\tilde S})(\bx-\bmu_{\tilde S})^\top-\bSigma\right)\le \lambda_{\bSigma},
\end{equation}
then
\begin{equation}
\twonorm{\bmu_{\tilde S}-\bmu}\le 2\delta_1+\sqrt{\frac{\epsilon\big(\lambda_{\bSigma} + \epsilon\twonorm{\bSigma}+ (1-\epsilon)\delta_2\big)}{1-\epsilon}}.
\end{equation}
\end{lemma}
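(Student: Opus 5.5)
The plan follows the standard certificate argument for filtering (as in \cite{diakonikolas2023algorithmic}), adapted to a general covariance $\bSigma$ and to the two stability parameters. Write $\tilde S = (S\cap\tilde S)\cup E$, where $E$ consists of the corrupted points. Since $\tilde S$ is an $\epsilon$-corrupted version of $S$, we have $|E|\le\epsilon|\tilde S|$ and $|S\cap \tilde S|\ge(1-\epsilon)|S|$. Set $\bm{\Delta}=\bmu_{\tilde S}-\bmu$; if $\bm\Delta\neq\bm 0$, take the unit vector $\bv=\bm\Delta/\twonorm{\bm\Delta}$. Let $G=S\cap\tilde S$ and $\alpha=|E|/|\tilde S|\le\epsilon$. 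Then
\[
\bm\Delta=(1-\alpha)(\bmu_G-\bmu)+\alpha(\bmu_E-\bmu).
\]
By stability (i) applied to $G$, $|\bv^\top(\bmu_G-\bmu)|\le\delta_1$. The remaining work is to bound $\alpha\,\bv^\top(\bmu_E-\bmu)$.

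\textbf{Controlling the outlier term.} Decompose the empirical second moment of $\tilde S$ about $\bmu$ along $\bv$:
\[
\frac{1}{|\tilde S|}\sum_{\tilde S}(\bv^\top(\bx-\bmu))^2=\bv^\top\bSigma_{\tilde S}\bv+(\bv^\top\bm\Delta)^2.
\]
Split the left side into its $G$ part and its $E$ part.
\begin{itemize}
\item By stability (ii), the $G$ part is at least $(1-\alpha)(\bv^\top\bSigma\bv-\delta_2)$.
\item By Cauchy–Schwarz, the $E$ part is at least $\alpha\,(\bv^\top(\bmu_E-\bmu))^2$.
\end{itemize}
Using the hypothesis $\bv^\top\bSigma_{\tilde S}\bv\le\bv^\top\bSigma\bv+\lambda_{\bSigma}$, this gives
\[
\alpha(\bv^\top(\bmu_E-\bmu))^2\le \lambda_{\bSigma}+\alpha\bv^\top\bSigma\bv+(1-\alpha)\delta_2+(\bv^\top\bm\Delta)^2.
\]
Bound $\alpha\bv^\top\bSigma\bv\le\epsilon\twonorm{\bSigma}$ and $(1-\alpha)\delta_2\le\delta_2$. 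The term $(\bv^\top\bm\Delta)^2=\twonorm{\bm\Delta}^2$ on the right is the delicate part. Multiplying by $\alpha$ and taking square roots gives
\[
\alpha|\bv^\top(\bmu_E-\bmu)|\le\sqrt{\alpha(\lambda_{\bSigma}+\epsilon\twonorm{\bSigma}+\delta_2)}+\sqrt{\alpha}\twonorm{\bm\Delta}.
\]

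\textbf{Closing the inequality.} Combining the two pieces,
\[
\twonorm{\bm\Delta}\le\delta_1+\sqrt{\epsilon(\cdots)}+\sqrt\epsilon\twonorm{\bm\Delta},
\]
and rearranging yields a factor $(1-\sqrt\epsilon)^{-1}$. This factor and the constant $2\delta_1$ do not exactly match the stated form. To get the stated constants, I would refine this step rather than bound $(\bv^\top\bm\Delta)^2$ crudely.

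The refinement is to center the $E$ second moment at $\bmu_{\tilde S}$ instead of $\bmu$. Equivalently, use the exact identity
\[
\bmu_E-\bmu_{\tilde S}=-\tfrac{1-\alpha}{\alpha}(\bmu_G-\bmu_{\tilde S}),
\]
and write the $\tilde S$ covariance as a within-group plus between-group decomposition. This gives
\[
\bv^\top\bSigma_{\tilde S}\bv\ge(1-\alpha)\bv^\top\bSigma_G\bv+\tfrac{1-\alpha}{\alpha}(\bv^\top(\bmu_G-\bmu_{\tilde S}))^2.
\]
Next, lower bound $\bv^\top\bSigma_G\bv$ by stability (ii) minus $\delta_1^2$. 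This produces
\[
|\bv^\top(\bmu_G-\bmu_{\tilde S})|\le\sqrt{\tfrac{\alpha}{1-\alpha}\big(\lambda_{\bSigma}+\alpha\bv^\top\bSigma\bv+(1-\alpha)\delta_2\big)}+O(\delta_1),
\]
which is the stated square-root term. Finally, use the triangle inequality with $|\bv^\top(\bmu_G-\bmu)|\le\delta_1$ to obtain the bound $2\delta_1+\sqrt{\cdot}$.

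\textbf{Main obstacle.} The hardest part is the bookkeeping in this between-group decomposition. In particular, one must check that the $-(\bv^\top(\bmu_G-\bmu))^2\ge-\delta_1^2$ correction is absorbed by the $2\delta_1$ term rather than entering under the square root. The approach is to handle this by the case split $\twonorm{\bm\Delta}\le2\delta_1$ (trivial) versus $\twonorm{\bm\Delta}>2\delta_1$.
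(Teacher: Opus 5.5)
Your refined argument is correct and is essentially the paper's proof. Both use the within/between-group decomposition of $\bSigma_{\tilde S}$, drop the PSD outlier covariance, lower-bound $\bv^\top\bSigma_G\bv\ge\bv^\top\bSigma\bv-\delta_2-\delta_1^2$ by stability, and finish with the triangle inequality. The paper phrases this through $\twonorm{\bmu_{S_{\mathrm{good}}}-\bmu_{S_{\mathrm{bad}}}}$ with $\bv$ along that difference, which is equivalent since $\bmu_G-\bmu_{\tilde S}=\alpha(\bmu_G-\bmu_E)$. Your ``main obstacle'' needs no case split: $\sqrt{a+b}\le\sqrt a+\sqrt b$ turns the $(1-\alpha)\delta_1^2$ term under the root into $\sqrt{\alpha}\,\delta_1\le\delta_1$, which is exactly how the paper gets $2\delta_1$, so the crude first attempt can be dropped.
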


\begin{proposition}\label{prop: robust mean est error}
    Suppose $\epsilon < 1/42$. If $S = \{\frac{1}{n}\sum_{i=1}^n \nabla \ell(\zk{k}_i, \btheta): k \in [K]\}$ is $(\eta, \delta_1, \delta_2)$-stable w.r.t. some $\bmu$ and $\bSigma$ with probability at least $1-\delta$, where $\eta = \frac{3}{2}\epsilon + \frac{3}{2}\frac{\sqrt{2\log(1/\delta)}}{K} \leq 2\epsilon$ and $\delta \geq \exp\{-K\epsilon/18\}$.
    Let $\lambda_{\bSigma} \geq \frac{\|\widehat{\bSigma}-\bSigma\|_{2} + 24\epsilon\twonorm{\bSigma} + (\frac{1}{2}+\frac{21}{2}\epsilon)\delta_2 + 66\delta_1^2}{\frac{1}{2}-21\epsilon}$:
    \begin{enumerate}[(i)]
        \item Algorithm \ref{algo: robust mean estimation} will stop after at most $\frac{3}{2}K\epsilon + \frac{3}{2}\sqrt{2\log(1/\delta)}$ iterations;
        \item When Algorithm \ref{algo: robust mean estimation} stops, its output satisfies 
        \begin{equation}
            \twonorm{\bmu_{\tilde{S}} - \bmu} \lesssim \delta_1
    +
    \sqrt{
    \epsilon
    \left[
    \lambda_{\bSigma}+\|\widehat{\bSigma}-\bSigma\|_2 +\epsilon\|\bSigma\|_{2}+\delta_2
    \right]
    }.
        \end{equation}
        Specifically, if $\lambda_{\bSigma} \asymp \twonorm{\widehat{\bSigma} - \bSigma} + \epsilon\twonorm{\bSigma} + \delta_2 + \delta_1^2$, with probability $1-\delta$, the output from Algorithm \ref{algo: robust mean estimation} satisfies 
        \begin{equation}
            \twonorm{\bmu_{\tilde{S}} - \bmu} \lesssim \delta_1  + \sqrt{\epsilon\twonorm{\widehat{\bSigma} - \bSigma}} + \epsilon\twonorm{\bSigma}^{1/2} + \sqrt{\epsilon\delta_2}.
        \end{equation}
    \end{enumerate}
\end{proposition}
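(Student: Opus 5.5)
The plan is the standard filtering analysis of \cite{diakonikolas2023algorithmic}, adapted to two features: an estimated covariance $\widehat{\bSigma}$ replaces the true one, and stability is tracked with separate parameters $(\delta_1,\delta_2)$.

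Throughout, I work on the event of probability at least $1-\delta$ on which $S$ is $(\eta,\delta_1,\delta_2)$-stable. Write $\tilde S_t$ for the current index set after $t$ removals. The key invariant is a potential argument, stated as follows. Let $a_t$ be the number of clean points removed and $b_t$ the number of contaminated points removed. I will show that, whenever $\lambdamax(\bSigma_{\tilde S_t}-\widehat{\bSigma})>\lambda_{\bSigma}$, the expected number of clean removals in one step is at most the expected number of bad removals. This makes $\tE[a_t-b_t]$ a supermartingale (more precisely, $a_t-b_t$ has nonpositive drift). Azuma--Hoeffding then gives $a_t\le b_t+\sqrt{2t\log(1/\delta)}$ with probability $1-\delta$. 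Since $b_t\le \epsilon K$, the process stops within $\frac32 K\epsilon+\frac32\sqrt{2\log(1/\delta)}$ steps, which is part (i). Throughout, at most an $\eta\le 2\epsilon$ fraction of $S$ has been deleted, so stability still applies to the surviving clean set $S\cap\tilde S_t$. The condition $\delta\ge e^{-K\epsilon/18}$ ensures the deviation term is $\lesssim K\epsilon$.

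The core step, and the one I expect to be the main obstacle, is the drift comparison. Take the top eigenvector $\bv$ of $\bSigma_{\tilde S}-\widehat{\bSigma}$, with eigenvalue $\lambda>\lambda_{\bSigma}$. I will decompose
\[
\frac{1}{|\tilde S|}\sum_{k\in\tilde S}(\bv^\top(\tilde\bx^{(k)}-\bmu_{\tilde S}))^2
\]
into a good part (over $S\cap\tilde S$) and a bad part. For the good part, stability gives an upper bound of $\bv^\top\bSigma\bv+\delta_2$ plus cross terms of order $\delta_1\twonorm{\bmu-\bmu_{\tilde S}}+\twonorm{\bmu-\bmu_{\tilde S}}^2$. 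Since $\bv^\top\bSigma\bv\le \bv^\top\widehat{\bSigma}\bv+\twonorm{\widehat{\bSigma}-\bSigma}$, the bad points must carry total score at least $\lambda-\twonorm{\widehat{\bSigma}-\bSigma}-O(\delta_2+\epsilon\twonorm{\bSigma}+\delta_1^2)$ times $|\tilde S|$. To control $\twonorm{\bmu-\bmu_{\tilde S}}$ I will use the certificate-type inequality behind Lemma \ref{lem: certificate}, namely $\twonorm{\bmu_{\tilde S}-\bmu}\lesssim\delta_1+\sqrt{\epsilon\lambda}$. The pieces bounded by $\lambda$ are then absorbed using $\epsilon<1/42$.

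Next I must handle the restriction of $f$ to the top-$\epsilon|\tilde S|$ set $L$. The clean points in $L$ form a subset of size at most $\epsilon|\tilde S|$, so stability (part (ii) applied to a complement, via the usual tail bound) shows their total score is at most $O(\epsilon(\bv^\top\bSigma\bv+\delta_2/\epsilon\cdot\epsilon)+\dots)|\tilde S|$, which is at most half of the bad score. The bad points outside $L$ have scores no larger than those of points in $L$, so the mass of bad points inside $L$ still dominates. Getting the constants so that the stated threshold $\lambda_{\bSigma}\ge(\cdots)/(\frac12-21\epsilon)$ exactly suffices is the delicate bookkeeping; I would first prove the inequality with unspecified constants and then tune.

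For part (ii), at termination $\lambdamax(\bSigma_{\tilde S}-\bSigma)\le\lambda_{\bSigma}+\twonorm{\widehat{\bSigma}-\bSigma}$, and $\tilde S$ is a $2\epsilon$-corrupted version of a stable set. Applying Lemma \ref{lem: certificate} with $\lambda_{\bSigma}$ replaced by $\lambda_{\bSigma}+\twonorm{\widehat{\bSigma}-\bSigma}$ gives
\[
\twonorm{\bmu_{\tilde S}-\bmu}\lesssim\delta_1+\sqrt{\epsilon[\lambda_{\bSigma}+\twonorm{\widehat{\bSigma}-\bSigma}+\epsilon\twonorm{\bSigma}+\delta_2]}.
\]
Plugging in $\lambda_{\bSigma}\asymp\twonorm{\widehat{\bSigma}-\bSigma}+\epsilon\twonorm{\bSigma}+\delta_2+\delta_1^2$ and using $\sqrt{\epsilon\delta_1^2}\le\delta_1$ together with $\sqrt{\epsilon^2\twonorm{\bSigma}}=\epsilon\twonorm{\bSigma}^{1/2}$ yields the final display.
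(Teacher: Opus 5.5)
Your architecture is the paper's. You lower-bound the score mass on the top-$\epsilon$ set $L$ through the bad points, using $\lambdamax(\bSigma_{\tilde S}-\widehat{\bSigma})$ and $\twonorm{\widehat{\bSigma}-\bSigma}$. You upper-bound the clean mass inside $L$ by applying stability to the clean points outside $L$. Lemma~\ref{lem: certificate} controls $\twonorm{\bmu_{\tilde S}-\bmu}$ at the current iterate, with the $\lambda$-terms absorbed via $\epsilon<1/42$. A martingale inequality then counts removals, and Lemma~\ref{lem: certificate} is applied once more at termination for (ii).

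\textbf{The step that fails is the derivation of (i).} The drift statement you feed into Azuma--Hoeffding is that expected clean removals are at most expected bad removals. This only says a contaminated index is removed with conditional probability at least $1/2$. From $a_t\le b_t+\sqrt{2t\log(1/\delta)}$ and $b_t\le\epsilon K$ you get only $t=a_t+b_t\le 2\epsilon K+\sqrt{2t\log(1/\delta)}$, i.e.\ about $2\epsilon K$ iterations, not $\frac32\epsilon K$.

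The constant $\frac32$ requires the $2{:}1$ comparison the paper proves, $F_i\ge 3F_i^{\mathrm{good}}$: clean score inside $L$ is at most half the bad score inside $L$. Your core-step paragraph actually aims at this, but you never carry it into the potential argument. With it, each removal is contaminated with conditional probability at least $2/3$. Then $a_t-\frac12 b_t$ has nonpositive drift, and $t\le\frac32 b_t+(\text{deviation})\le\frac32\epsilon K+(\text{deviation})$. The paper phrases this as Azuma on the indicators $X_i$ of bad removals: if the loop survives $L_0$ steps, then $\sum_{i\le L_0}X_i\le\epsilon K$, while the conditional means sum to at least $\frac23L_0$.

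The ratio is not cosmetic. It also bounds how many clean points can have been deleted. That fixes the level at which you must invoke stability, both for $S\cap\tilde S$ and for the clean points outside $L$. Those points can miss up to $\epsilon K$ indices, plus the deleted ones, plus $|L|$.

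Separately, do not expect to recover the additive term exactly as displayed. A deviation of $\sqrt{2t\log(1/\delta)}$ with $t$ of order $\epsilon K$ solves to $\frac32\epsilon K+O\big(\sqrt{\epsilon K\log(1/\delta)}+\log(1/\delta)\big)$, not $\frac32\epsilon K+\frac32\sqrt{2\log(1/\delta)}$. The paper's own final step has the same looseness. At the stated $L_0$, its exponent $(\frac23L_0-\epsilon K)^2/(2L_0)$ equals $\log(1/\delta)/L_0$, so the tail bound is $\delta^{1/L_0}$ rather than $\delta$. Since $\log(1/\delta)\le K\epsilon/18$, the correct count is still $O(\epsilon K)$. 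That is all (ii) needs, at the price of adjusting numerical constants, including the stability level and the threshold on $\lambda_{\bSigma}$.
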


\begin{lemma}[Stability of gradients]\label{lem: stability subG gradients}
    Under Assumptions \ref{asmp: task heterogeneity}, \ref{asmp: subG gradient} and \ref{asmp: lipschitz loss}, 
    for all $\beta>0$, with probability at least $1-\delta$, the stability holds uniformly for all 
    $\btheta\in\mathcal{B}_{R_0}(\btheta^*)$ for
    \[
        S_{\btheta}
        =
        \left\{     \bar{\bg}_k(\btheta)
    :=
        \frac1n\sum_{i=1}^n\nabla\ell(z_i^{(k)},\btheta):
        k\in[K]
        \right\}
    \]
    with respect to $
        \bmu_{\btheta}
        =
        \nabla \mL(\btheta)
        =
        \frac1K\sum_{k=1}^K\tE[\bar{\bg}_k(\btheta)]$
    and $
        \bSigma_{\btheta}
        =
        \frac1K\sum_{k=1}^K
        \tE\left[
        \left(
        \bar{\bg}_k(\btheta)
        -
        \bmu_{\btheta}
        \right)
        \left(
        \bar{\bg}_k(\btheta)
        -
        \bmu_{\btheta}
        \right)^\top
        \right].$
    More precisely, $S_{\btheta}$ is $(\epsilon,\delta_1,\delta_2)$-stable uniformly for all 
    $\btheta\in\mathcal{B}_{R_0}(\btheta^*)$ with
    \begin{align}
    \delta_1
    &\lesssim
    \sqrt{\frac{d\log(R_0/\beta)+\log(1/\delta)}{nK}}
    +
    \epsilon\sqrt{\frac{\log(1/\epsilon)}{n}}
    +
    L'\beta
    +
    \sqrt{\epsilon}\,h,
    \\
    \delta_2
    &\lesssim
    \frac1n
    \left[
    \sqrt{\frac{d\log(R_0/\beta)+\log(1/\delta)}{K}}
    +
    \frac{d\log(R_0/\beta)+\log(1/\delta)}{K}
    \right]
    +
    \epsilon\frac{\log(1/\epsilon)}{n}
    \nonumber\\
    &\quad
    +
    L'\beta\left(h+\frac1{\sqrt n}\right)
    +
    L'^2\beta^2
    +
    h\sqrt{
    \frac{d\log(R_0/\beta)+\log(1/\delta)}{nK}
    +
    \frac{\epsilon\log(1/\epsilon)}{n}
    }
    +
    h^2,
    \end{align}
    where $\beta > 0$ can be any positive value.
\end{lemma}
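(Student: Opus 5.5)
The plan is a covering argument combined with concentration of sub-Gaussian averages, carried out separately for the mean-stability condition (i) and the covariance-stability condition (ii) of Definition \ref{def: stability}.

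\textbf{Decomposition.} Write $\bar{\bg}_k(\btheta) - \bmu_{\btheta} = \bm{\xi}_k(\btheta) + \bm{b}_k(\btheta)$ with $\bm{\xi}_k = \bar{\bg}_k - \nabla\mLk{k}(\btheta)$ and $\bm{b}_k = \nabla\mLk{k}(\btheta)-\nabla\mL(\btheta)$. Here the $\bm{\xi}_k$ are independent and mean-zero, and by Assumption \ref{asmp: subG gradient} they are sub-Gaussian with $\psi_2$-norm $\lesssim n^{-1/2}$. The bias vectors are deterministic and satisfy $\frac1K\sum_k\twonorm{\bm{b}_k}^2\le h^2$ by Assumption \ref{asmp: task heterogeneity}.

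\textbf{Fixed $\btheta$, condition (i).} For $S'\subseteq[K]$ with $|S'|\ge(1-\epsilon)K$, use
\[
\frac{1}{|S'|}\sum_{S'} = \frac{1}{|S'|}\Big(\sum_{[K]}-\sum_{[K]\setminus S'}\Big).
\]
The full average of $\bv^\top\bm{\xi}_k$ is $\lesssim\sqrt{(d+\log(1/\delta))/(nK)}$ after a $1/2$-net over $\bv$. The removed $\epsilon K$ terms are controlled by the standard sub-Gaussian trimming bound: the sum of the largest $\epsilon K$ values of $|\bv^\top\bm{\xi}_k|$ is $\lesssim \epsilon K\sqrt{\log(1/\epsilon)/n}$ plus the $d$-dependent net term, giving $\epsilon\sqrt{\log(1/\epsilon)/n}$. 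For the bias, $\sum_{[K]}\bm{b}_k=0$, and Cauchy--Schwarz gives
\[
\Big|\frac1{|S'|}\sum_{[K]\setminus S'}\bv^\top\bm{b}_k\Big|\lesssim \frac{\sqrt{\epsilon K}\sqrt{Kh^2}}{K}=\sqrt\epsilon h.
\]

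\textbf{Fixed $\btheta$, condition (ii).} Expand the quadratic form into three pieces: $\bv^\top\bm{\xi}\bm{\xi}^\top\bv$, the cross term $2\bv^\top\bm{\xi}\,\bm{b}^\top\bv$, and $(\bv^\top\bm{b})^2$, and note that $\bSigma_{\btheta}=\frac1K\sum_k(\cov(\bm{\xi}_k)+\bm{b}_k\bm{b}_k^\top)$.
\begin{itemize}
\item The $\bm{\xi}\bm{\xi}^\top$ part is a sum of independent sub-exponential terms of scale $1/n$. Bernstein's inequality plus a net over $\bv$ gives $\frac1n\big(\sqrt{(d+\log(1/\delta))/K}+(d+\log(1/\delta))/K\big)$. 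The trimmed $\epsilon K$ terms contribute $\epsilon\log(1/\epsilon)/n$.
\item The bias-squared part is bounded crudely by $h^2$ in both the $S'$ average and $\bSigma$.
\item The cross term is bounded by Cauchy--Schwarz as $h$ times the square root of the mean-zero noise level, $\sqrt{\,\cdot/(nK)+\epsilon\log(1/\epsilon)/n}$.
\end{itemize}

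\textbf{Uniformity over $\btheta$.} Take a $\beta$-net $\mathcal{N}_\beta$ of $\mathcal{B}_{R_0}(\btheta^*)$ of size $(3R_0/\beta)^d$ and union-bound, which replaces $d$ by $d\log(R_0/\beta)$. For $\btheta$ off the net with nearest net point $\btheta'$:
\begin{itemize}
\item Assumption \ref{asmp: lipschitz loss} gives $\twonorm{\bar{\bg}_k(\btheta)-\bar{\bg}_k(\btheta')}\le L'\beta$ uniformly in $k$, on an event of probability $1-(nK)^{-Cd}$ that is absorbed into $\delta$.
\item The same Lipschitz bound holds for $\bmu_{\btheta}$, by smoothness or by taking expectations, so condition (i) moves by $O(L'\beta)$.
\item For condition (ii), expanding $(a+e)(a+e)^\top$ shows a perturbation of order $L'\beta\cdot(\text{typical size of }\bar{\bg}_k-\bmu)+L'^2\beta^2$, where the typical size is $h+1/\sqrt n$. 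The change in $\bSigma_{\btheta}$ is of the same order.
\end{itemize}
This yields the stated $L'\beta(h+1/\sqrt n)+L'^2\beta^2$ terms.

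\textbf{Main obstacle.} The hardest step is the trimmed-subset control in (ii). It requires a uniform bound over all $S'$ and $\bv$ on the sum of the largest $\epsilon K$ values of $(\bv^\top\bm{\xi}_k)^2$, with only $\log(1/\epsilon)$ loss rather than $d$ loss. My first approach is the standard route from \cite{diakonikolas2023algorithmic}, adapted to non-identity covariance: tail-count the number of $k$ with $|\bv^\top\bm{\xi}_k|>t/\sqrt n$ via a binomial/Chernoff bound over the net, then integrate over $t$. If that count bound carries an extra $d/K$ term, it is absorbed into the already-present $\frac{d}{nK}$ term.
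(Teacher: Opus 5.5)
Your skeleton is the paper's: split $\bar{\bg}_k(\btheta)-\bmu_{\btheta}$ into noise $\bm{\xi}_k$ and deterministic bias $\bm{b}_k$, write the $S'$-average as the full average minus the removed tasks, use $\sum_k\bm{b}_k=\bm{0}$ with Cauchy--Schwarz for $\sqrt{\epsilon}h$, use Bernstein plus nets for the quadratic noise, and transfer off a $\beta$-net in $\btheta$. However, two of your steps fail as stated. Write $D=d\log(R_0/\beta)+\log(1/\delta)$.

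\textbf{The cross term in condition (ii) is a genuine gap.} Cauchy--Schwarz over $k$ bounds $\frac{2}{|S'|}\sum_{k\in S'}(\bv^\top\bm{\xi}_k)(\bv^\top\bm{b}_k)$ only by $\big(\frac{1}{|S'|}\sum_{k\in S'}(\bv^\top\bm{\xi}_k)^2\big)^{1/2}\big(\frac{1}{|S'|}\sum_{k\in S'}(\bv^\top\bm{b}_k)^2\big)^{1/2}\asymp h/\sqrt{n}$. Each individual $\bv^\top\bm{\xi}_k$ has size $n^{-1/2}$; only the \emph{average} of the $\bm{\xi}_k$ is of order $\sqrt{D/(nK)}$. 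The term $h/\sqrt{n}$ is not dominated by the stated $\delta_2$, for instance when $\sqrt{D/K}+\epsilon\log(1/\epsilon)\ll h\sqrt{n}\ll 1$. You need cancellation across tasks. Because the $\bm{b}_k$ are deterministic, $\frac1K\sum_k(\bu^\top\bm{\xi}_k)(\bm{b}_k^\top\bv)$ is a sum of independent mean-zero sub-Gaussians with variance proxy $\frac{1}{nK^2}\sum_k(\bm{b}_k^\top\bv)^2\le\frac{h^2}{nK}$. The same proxy holds for the sum restricted to any removed set. The weights depend on $\bv$, so treat the term as the bilinear form of the matrix $\frac1K\sum_k\bm{\xi}_k\bm{b}_k^\top$ and take nets in both $\bu$ and $\bv$. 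Combined with the $\btheta$-net and the union over removed sets, this gives $h\sqrt{D/(nK)+\epsilon\log(1/\epsilon)/n}$, which is the paper's argument.

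\textbf{The trimmed-set control.} The paper does not count tails. It union-bounds over all removed sets $T$ with $|T|\le\epsilon K$, of which there are at most $(e/\epsilon)^{\epsilon K}$. For fixed $T$, $\bv$, $\btheta'$ it applies sub-Gaussian (resp.\ Bernstein) concentration to $\frac{1}{|T|}\sum_{k\in T}$, then multiplies by $|T|/K\le\epsilon$. The extra $\epsilon K\log(e/\epsilon)$ in the exponent gives exactly $\epsilon\sqrt{\log(1/\epsilon)/n}$ and $\epsilon\log(1/\epsilon)/n$, and the net cost gives $\sqrt{D/(nK)}$ and $\frac1n(\sqrt{D/K}+D/K)$.

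Your fallback, that an extra $d/K$ in the count is ``absorbed'', is wrong. A bound $\#\{k:\sqrt{n}|\bv^\top\bm{\xi}_k|>t\}/K\lesssim e^{-ct^2}+D/K$ has an additive error that does not decay in $t$. Integrated up to $\max_k\sqrt{n}|\bv^\top\bm{\xi}_k|\asymp\sqrt{D+\log K}$, it adds $\min(\epsilon,D/K)\sqrt{(D+\log K)/n}$ to $\delta_1$ and $\min(\epsilon,D/K)(D+\log K)/n$ to $\delta_2$. When $D/K\le\epsilon$ these exceed $\sqrt{D/(nK)}$ and $D/(nK)$ by factors of order $D/\sqrt{K}$ and $D$. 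Repairing the count route needs order-statistic bounds such as $\tP(\#\{\cdot\}\ge j)\le(eKp_t/j)^j$, which cost an extra logarithm. The subset union bound is simpler and loses nothing.
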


\subsection{Proofs of results in Section \ref{subsec: JRGE}}\label{app:proof-of-detailedresults}

Theorems \ref{thm: parameter est error} and \ref{thm: est error cov} in the main text are direct consequences of Theorems \ref{thm: parameter est error appendix}, and \ref{thm: est error cov appendix} below, by setting $\beta \asymp (nK)^{-C'd}$ with a sufficiently large constant $C' > 0$ and $\delta \asymp (nK)^{-C''d}+ e^{-C''K\epsilon}$ with some constant $C'' > 0$.

We will present the gradient and parameter estimation error rates based on the following tuning parameter conditions:
\begin{align}
    &C_1\sqrt{\frac{d\log(R_0/\beta) + \log(K/\delta)}{n}} \leq \lambda \leq C_2R_0, \label{eq: lambda choice}\\
    \lambda_{\bSigma} &= C_3 \bigg\{\twonorm{\hSigma_{\btheta} - \bSigma_{\btheta}} + \frac{1}{n} \bigg[\sqrt{\frac{d\log(R_0/\beta) + \log(1/\delta)}{K}} + \frac{d\log(R_0/\beta) + \log(1/\delta)}{K}\bigg]  + \epsilon \frac{\log(1/\epsilon)}{n} + \\ 
    &\qquad + L'^2\beta^2 + L'\beta\left(h+1/\sqrt{n}\right) + h\sqrt{\frac{d\log(R_0/\beta)+\log(1/\delta)}{nK} + \frac{\epsilon\log(1/\epsilon)}{n}}  + h^2\bigg\},  \label{eq: lambda_Sigma choice}
\end{align}
where $C_1, C_2$ and $C_3$ are some constants.

\begin{theorem}\label{thm: gradident est error appendix}
    Assume $(\lambda, \lambda_{\bSigma})$ in Algorithms \ref{algo: robust federated gradient descent} and \ref{algo: robust mean estimation} satisfies the conditions in \eqref{eq: lambda choice} and \eqref{eq: lambda_Sigma choice}. Under Assumptions \ref{asmp: subG gradient} and \ref{asmp: lipschitz loss}, for any $\beta > 0$ satisfying $L'\beta \lesssim \frac{d\log(R_0/\beta) +\log (1/\delta)}{nK} \wedge 1$ and any $\delta \gtrsim \exp\{-CK\epsilon\}$, with probability at least $1-\delta$, for all subset $S \subseteq [K]$ with $|S^c|/K \leq \epsilon$, all contamination mechanism $M \in \mathcal{M}_S$, for all $\btheta \in \mathbb{R}^d$ with $\twonorm{\btheta - \btheta^*} \leq R_0$, we have
    \begin{align}
        \twonorm{g(\btheta) - \frac{1}{K}\sum_{k=1}^K \tE \nabla \ell (\zk{k}, \btheta)} &\lesssim \sqrt{\frac{d\log(R_0/\beta) + \log(1/\delta)}{nK}} + \epsilon\sqrt{\frac{\log(1/\epsilon)}{n}} + \sqrt{\epsilon\twonorm{\hSigma_{\btheta} - \bSigma_{\btheta}}} + \sqrt{\epsilon}h, \\
        \twonorm{g^{(k)}(\btheta) - \tE \nabla \ell (\zk{k}, \btheta)} &\lesssim \min \Bigg\{\sqrt{\frac{d\log(R_0/\beta) + \log(1/\delta)}{nK}} + \epsilon\sqrt{\frac{\log(1/\epsilon)}{n}} + \sqrt{\epsilon\twonorm{\hSigma_{\btheta} - \bSigma_{\btheta}}} \\
        &\quad \quad\quad\quad  + \sqrt{\epsilon}h + h^{(k)}, \sqrt{\frac{d\log(R_0/\beta) + \log(K/\delta)}{n}} + \lambda\Bigg\}, \quad \forall k \in S.
    \end{align}
    In other words, Assumption \ref{asmp: gradient est error} holds with $\alpha(n, K, d, \epsilon, \delta, h) \asymp \sqrt{\frac{d\log(R_0/\beta) + \log(1/\delta)}{nK}} + \epsilon\sqrt{\frac{\log(1/\epsilon)}{n}} + \sqrt{\epsilon\twonorm{\widehat{\bSigma} - \bSigma}} + \sqrt{\epsilon}h$ and $\alpha^{(k)}(n, K, d, \epsilon, \delta, h) \asymp \min \Big\{\sqrt{\frac{d\log(R_0/\beta) + \log(1/\delta)}{nK}} + \epsilon\sqrt{\frac{\log(1/\epsilon)}{n}} + \sqrt{\epsilon\twonorm{\widehat{\bSigma} - \bSigma}} + \sqrt{\epsilon}h + h^{(k)}, \sqrt{\frac{d\log(R_0/\beta) + \log(K/\delta)}{n}} + \lambda\Big\}$. 
\end{theorem}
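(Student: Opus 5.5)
The global bound will come from chaining three earlier results on one high-probability event: Lemma \ref{lem: stability subG gradients} (stability of the clean gradients), Proposition \ref{prop: robust mean est error} (filtering guarantee), and Lemma \ref{lem: certificate}. The local bound then follows by a separate soft-thresholding argument.

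\textbf{Uniform stability.} Apply Lemma \ref{lem: stability subG gradients} with stability parameter $\eta=\frac32\epsilon+\frac32\sqrt{2\log(1/\delta)}/K\le 2\epsilon$ in place of $\epsilon$; this is allowed since $\delta\gtrsim e^{-CK\epsilon}$. This gives, with probability $1-\delta/2$ and uniformly over $\btheta\in\mathcal{B}_{R_0}(\bthetas)$, that the clean gradient set $S_{\btheta}$ is $(\eta,\delta_1,\delta_2)$-stable with respect to $\nabla\mL(\btheta)$ and $\bSigma_{\btheta}$, with $\delta_1,\delta_2$ as displayed there. The condition $L'\beta\lesssim \frac{d\log(R_0/\beta)+\log(1/\delta)}{nK}$ makes the $L'\beta$ and $L'^2\beta^2$ terms negligible. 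The observed set $\{\nabla\hmLk{k}(\btheta)\}_{k=1}^K$ is an $\epsilon$-corruption of $S_{\btheta}$ for every $S$ and $M$ simultaneously, because stability is a property of the clean data only. This is what makes the bound uniform over the adversary.

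\textbf{Global bound.} Next invoke Proposition \ref{prop: robust mean est error}. The choice \eqref{eq: lambda_Sigma choice} gives $\lambda_{\bSigma}\asymp\twonorm{\hSigma_{\btheta}-\bSigma_{\btheta}}+\delta_2$, and I will check that this dominates $\epsilon\twonorm{\bSigma_{\btheta}}+\delta_1^2$. Here $\twonorm{\bSigma_{\btheta}}\lesssim 1/n+h^2$ by sub-Gaussianity (Assumption \ref{asmp: subG gradient}) and \eqref{eq: global heterogeneity}, and $\delta_1^2$ is bounded by the listed terms of $\lambda_{\bSigma}$, using $\epsilon^2\log(1/\epsilon)/n\le\epsilon\log(1/\epsilon)/n$. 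Part (ii) of the proposition then yields
\[
\twonorm{g(\btheta)-\nabla\mL(\btheta)}\lesssim \delta_1+\sqrt{\epsilon\twonorm{\hSigma_{\btheta}-\bSigma_{\btheta}}}+\epsilon\twonorm{\bSigma_{\btheta}}^{1/2}+\sqrt{\epsilon\delta_2}.
\]
Plug in the bounds. The term $\epsilon\twonorm{\bSigma}^{1/2}\lesssim\epsilon/\sqrt n+\epsilon h$. For $\sqrt{\epsilon\delta_2}$, the cross term satisfies $\sqrt{\epsilon h\cdot r}\le \sqrt{\epsilon}h+r$ by AM--GM, where $r$ is the $\sqrt{d/(nK)}+\epsilon\sqrt{\log(1/\epsilon)/n}$ rate. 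The remaining pieces $\sqrt{\epsilon\frac1n\sqrt{d/K}}$ and $\sqrt{\epsilon d/(nK)}$ are bounded by $\epsilon/\sqrt n+\sqrt{d/(nK)}$, again by AM--GM. This gives the first display.

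\textbf{Local bound.} For $k\in S$, write $\bm{w}=\nabla\hmLk{k}(\btheta)-g(\btheta)$, so that $g^{(k)}=g+\mathrm{ST}(\bm{w},\lambda)$. Since soft-thresholding moves a vector by at most $\lambda$, $g^{(k)}$ is within $\lambda$ of $\nabla\hmLk{k}(\btheta)$. A uniform-in-$\btheta$ covering bound on the single-task deviation, $\sup_{\btheta}\twonorm{\nabla\hmLk{k}-\nabla\mLk{k}}\lesssim\sqrt{(d\log(R_0/\beta)+\log(K/\delta))/n}$ for all $k$ simultaneously (union bound over $K$), gives the second argument of the min.

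For the first argument, use that $\mathrm{ST}$ is the prox of $\lambda\twonorm{\cdot}$ and hence $1$-Lipschitz and nonexpansive. Then
\[
\twonorm{g^{(k)}-\nabla\mLk{k}}\le\twonorm{g-\nabla\mL}+\twonorm{\nabla\mLk{k}-\nabla\mL}+\twonorm{\mathrm{ST}(\bm{w},\lambda)-(\nabla\mLk{k}-\nabla\mL)}.
\]
I would bound the last term by comparing $\mathrm{ST}(\bm{w},\lambda)$ with $\bm{w}^*=\nabla\mLk{k}-\nabla\mL$, where $\twonorm{\bm{w}-\bm{w}^*}\le\alpha+\lambda$ because $\lambda$ dominates the single-task noise. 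The rough bound is $\twonorm{\mathrm{ST}(\bm{w})-\bm{w}^*}\le\twonorm{\bm w^*}+\twonorm{\mathrm{ST}(\bm w)}$, and $\twonorm{\mathrm{ST}(\bm w)}\le(\twonorm{\bm w}-\lambda)_+\le\twonorm{\bm{w}^*}+\alpha$. Combined with \eqref{eq: local heterogeneity}, this gives $\alpha+h^{(k)}$ up to constants.

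\textbf{Main obstacle.} I expect the hardest step to be verifying the hypotheses of Proposition \ref{prop: robust mean est error} uniformly in $\btheta$ while the covariance input $\hSigma_{\btheta}$ is itself data-dependent. The filtering analysis must hold for every $\btheta$ in the ball at once. I would handle this by keeping the proposition deterministic given stability, so that the only randomness is the stability event from Lemma \ref{lem: stability subG gradients} plus the single-task concentration event. Any randomness inside the filtering steps is handled by the $\delta\ge e^{-K\epsilon/18}$ martingale argument already contained in the proposition.
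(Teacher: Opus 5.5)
Your global bound follows the paper's route: uniform $(\eta,\delta_1,\delta_2)$-stability from Lemma~\ref{lem: stability subG gradients}, fed into Proposition~\ref{prop: robust mean est error}. The paper only says this ``follows directly''. Your checks fill in what it leaves implicit, and they are correct:
\begin{itemize}
\item $\eta\le 2\epsilon$ changes only constants;
\item the choice \eqref{eq: lambda_Sigma choice} dominates $\epsilon\twonorm{\bSigma_{\btheta}}+\delta_1^2$, via $\twonorm{\bSigma_{\btheta}}\lesssim 1/n+h^2$;
\item the AM--GM step absorbs $\sqrt{\epsilon\delta_2}$.
\end{itemize}
The second entry of the local min is obtained exactly as in the paper.

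For the first entry you take a different route. The paper splits into cases. If $\alpha+h^{(k)}\le c\lambda$, then $\twonorm{\nabla\hmLk{k}(\btheta)-g(\btheta)}\le\lambda$, so the soft-threshold returns $\bm{0}$ and $g^{(k)}=g$. Otherwise $\lambda\lesssim\alpha+h^{(k)}$, and the crude bound $\lambda+\sqrt{(d\log(R_0/\beta)+\log(K/\delta))/n}\lesssim\lambda$ already suffices.

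You instead use $\twonorm{\mathrm{ST}(\bm{w},\lambda)}=(\twonorm{\bm{w}}-\lambda)_+$ and $\twonorm{\bm{w}-\bm{w}^*}\le\alpha+\lambda$. This gives a single inequality valid in every regime:
\[
\twonorm{g^{(k)}-\nabla\mLk{k}}\le 2\alpha+2h^{(k)}.
\]
Two small points. The middle term in your triangle inequality is superfluous, since $g^{(k)}-\nabla\mLk{k}=(g-\nabla\mL)+(\mathrm{ST}(\bm{w},\lambda)-\bm{w}^*)$ exactly. The nonexpansiveness remark is never used, and using it would only give a bound of order $\lambda$.

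Your version has explicit constants and needs only that $\lambda$ dominates the single-task noise; there is no coordination between a small $c$ and a large $C_1$. The paper's case split buys an extra, interpretable fact: under weak heterogeneity the local update coincides exactly with the pooled robust gradient.

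One caveat on your ``main obstacle'' paragraph: Proposition~\ref{prop: robust mean est error} is not deterministic given stability. The removal step of Algorithm~\ref{algo: robust mean estimation} is randomized, and the stopping guarantee is an Azuma bound for a single run. Such a bound cannot be union-bounded over a continuum of $\btheta$. The paper's proof is equally silent on this.

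Theorem~\ref{thm: federated gradient descent} only uses the bound at the $T(K+1)$ points where $g$ is actually evaluated. Either of the following would close the gap:
\begin{itemize}
\item a union bound over those calls, each with fresh filtering randomness, conditionally on the uniform stability event;
\item a deterministic thresholding filter in place of the randomized removal step.
\end{itemize}
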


\begin{theorem}\label{thm: parameter est error appendix}
    Assume $(\lambda, \lambda_{\bSigma})$ in Algorithms \ref{algo: robust federated gradient descent} and \ref{algo: robust mean estimation} satisfies the conditions in \eqref{eq: lambda choice} and \eqref{eq: lambda_Sigma choice}.  Under Assumptions \ref{asmp: risk function}, \ref{asmp: lipschitz loss}, and \ref{asmp: subG gradient}, for any $\beta > 0$ satisfying $L'\beta \lesssim \frac{d\log(R_0/\beta) +\log (1/\delta)}{nK} \wedge 1 \wedge R_0^2$ and any $\delta \gtrsim \exp\{-CK\epsilon\}$, if $2\eta/L - L^2\eta^2 \coloneqq \kappa  \in (0, 1)$, $nK \gtrsim R_0^{-2}[d\log(R_0/\beta)+\log(1/\delta)]$, $n \gtrsim R_0^{-2}\epsilon^2\log(1/\epsilon)$, $\sqrt{\epsilon}\max_{\btheta \in \Theta}\twonorm{\hSigma_{\btheta} - \bSigma_{\btheta}}^{1/2} \lesssim R_0$, $\sqrt{\epsilon}h \lesssim R_0$, $\max_{k \in [K]}\hk{k} \lesssim R_0$, then with probability at least $1-\delta$, for all subset $S \subseteq [K]$ with $|S^c|/K \leq \epsilon$, all contamination mechanism $M \in \mathcal{M}_S$, we have
    \begin{align}
        \twonorm{\htheta_T - \btheta^*} &\lesssim (1-\kappa/2)^{T/2} \twonorm{\htheta_0 - \bthetas} + \sqrt{\frac{d\log(R_0/\beta) + \log(1/\delta)}{nK}} + \epsilon\sqrt{\frac{\log(1/\epsilon)}{n}}\\
        &\quad + \sqrt{\epsilon}\max_{\btheta \in \Theta}\twonorm{\hSigma_{\btheta} - \bSigma_{\btheta}}^{1/2} + \sqrt{\epsilon}h,
    \end{align}
    and
    \begin{align}
        \twonorm{\hthetak{k}_T - \bthetaks{k}} &\lesssim (1-\kappa/2)^{T/2} \twonorm{\hthetak{k}_0 - \bthetaks{k}} \\
        &\quad + \min \Bigg\{\sqrt{\frac{d\log(R_0/\beta) + \log(1/\delta)}{nK}} + \epsilon\sqrt{\frac{\log(1/\epsilon)}{n}} + \sqrt{\epsilon}\max_{\btheta \in \Theta}\twonorm{\hSigma_{\btheta} - \bSigma_{\btheta}}^{1/2} \\
        &\hspace{2cm}  + \sqrt{\epsilon}h + h^{(k)}, \,\, \sqrt{\frac{d\log(R_0/\beta) + \log(K/\delta)}{n}} + \lambda\Bigg\}, \quad \forall k \in S.
    \end{align}
\end{theorem}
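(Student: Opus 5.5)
The statement is Theorem \ref{thm: parameter est error appendix}. I would deduce it by combining Theorem \ref{thm: gradident est error appendix} with the deterministic optimization result, Theorem \ref{thm: federated gradient descent}. Theorem \ref{thm: gradident est error appendix} already gives, with probability at least $1-\delta$ and uniformly over all $S$, all $M\in\mathcal{M}_S$ and all $\btheta$ with $\twonorm{\btheta-\bthetas}\le R_0$, gradient errors of order $\alpha$ and $\alpha^{(k)}$. Since $\Theta = B(\bthetas;R_0/2)$ lies in this ball, Assumption \ref{asmp: gradient est error} holds with these choices of $\alpha$ and $\alpha^{(k)}$, and $\delta$ unchanged. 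Because the uniformity over $(S,M)$ sits inside a single event, no union bound over contamination mechanisms is needed.

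Theorem \ref{thm: federated gradient descent} then yields
$$\twonorm{\htheta_T-\bthetas}\le (1-\kappa/2)^{T/2}\twonorm{\htheta_0-\bthetas}+\eta\alpha\sqrt{2(2-\kappa)/\kappa^2},$$
and the analogous bound for each $k\in S$. Since $\kappa\in(0,1)$ and $\eta$ is of constant order given $L$, the factor $\eta\sqrt{2(2-\kappa)}/\kappa$ is a constant. It can be absorbed into $\lesssim$, which gives both displayed bounds.

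For the local iterates I would take the step size $\eta^{(k)}$ so that $\kappa^{(k)}$ is of constant order. The statement writes $(1-\kappa/2)^{T/2}$ for the local bound, so I would choose $\eta^{(k)}=\eta$. The $\min\{\cdot,\cdot\}$ form carries over directly from $\alpha^{(k)}$.

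The main obstacle is verifying the initialization and radius conditions of Theorem \ref{thm: federated gradient descent}. These require $\eta\alpha\sqrt{2(2-\kappa)/\kappa^2}\le R_0$ (up to the initialization error), and the same for each $k$, so that the iterates stay in the region where Assumption \ref{asmp: risk function} and the uniform gradient bounds apply. I would check this term by term:
\begin{itemize}
\item $\sqrt{(d\log(R_0/\beta)+\log(1/\delta))/(nK)}\lesssim R_0$ follows from $nK\gtrsim R_0^{-2}[d\log(R_0/\beta)+\log(1/\delta)]$;
\item $\epsilon\sqrt{\log(1/\epsilon)/n}\lesssim R_0$ follows from $n\gtrsim R_0^{-2}\epsilon^2\log(1/\epsilon)$;
\item the covariance and heterogeneity terms $\sqrt{\epsilon}\max_{\btheta\in\Theta}\twonorm{\hSigma_{\btheta}-\bSigma_{\btheta}}^{1/2}$, $\sqrt{\epsilon}h$ and $\hk{k}$ are each $\lesssim R_0$ by assumption;
\item $L'\beta\lesssim R_0^2$ controls the discretization remainder.
\end{itemize}
Choosing the implicit constants small enough makes $\eta\alpha\sqrt{2(2-\kappa)}/\kappa\le R_0/2$. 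Together with an initialization in $B(\bthetas;R_0/2)$, which is implicit in the statement, this verifies the hypothesis.

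One subtle point remains: the iterates must stay in the region where the uniform bound holds at every step. In the proof of Theorem \ref{thm: federated gradient descent} the recursion contracts toward a ball of radius $\eta\alpha\sqrt{2(2-\kappa)}/\kappa\le R_0/2$, so by induction every iterate stays within distance $R_0$ of its target. For the local parameters the target is $\bthetaks{k}$. Since $\twonorm{\bthetaks{k}-\bthetas}\le R_0/2$, the condition $\max_k\hk{k}\lesssim R_0$ keeps these iterates inside the ball where Theorem \ref{thm: gradident est error appendix} applies, after enlarging the radius by a constant if necessary.
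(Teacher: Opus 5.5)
Your proposal is correct and follows the paper's proof, which is exactly this combination: substitute the uniform gradient bounds of Theorem~\ref{thm: gradident est error appendix} into Assumption~\ref{asmp: gradient est error} and invoke Theorem~\ref{thm: federated gradient descent}, absorbing $\eta\sqrt{2(2-\kappa)}/\kappa$ into the implicit constant. Your extra bookkeeping fills in details the paper's one-line proof leaves implicit: you check that the stated conditions force $\eta\alpha\sqrt{2(2-\kappa)/\kappa^2}\le R_0/2$, you take $\eta^{(k)}=\eta$ to match the $(1-\kappa/2)^{T/2}$ factor, and you keep the iterates inside the ball where the uniform bound holds.
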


% --------------------------
\subsubsection{Proof of Lemma \ref{lem: certificate}}
Denote $\tilde S=S_{\mathrm{good}}\cup S_{\mathrm{bad}}$, where $S_{\mathrm{good}}$ and $S_{\mathrm{bad}}$ represent uncontaminated and contaminated samples, respectively. $\bSigma_{\tilde S}$, $\bSigma_{S_{\mathrm{good}}}$ and $\bSigma_{S_{\mathrm{bad}}}$ are the corresponding empirical covariance matrices; $\bmu_{\tilde S}$, $\bmu_{S_{\mathrm{good}}}$ and $\bmu_{S_{\mathrm{bad}}}$ are the empirical means.

\noindent Note that
\begin{align}
\bSigma_{\tilde S}
&=\frac{1}{K}\sum_{\bx\in \tilde S} (\bx-\bmu_{\tilde S})(\bx-\bmu_{\tilde S})^\top \\
&=\frac{1}{K}\sum_{\bx\in S_{\mathrm{good}}}\big[\bx-\bmu_{S_{\mathrm{good}}}+\epsilon(\bmu_{S_{\mathrm{good}}}-\bmu_{S_{\mathrm{bad}}})\big]\big[\bx-\bmu_{S_{\mathrm{good}}}+\epsilon(\bmu_{S_{\mathrm{good}}}-\bmu_{S_{\mathrm{bad}}})\big]^\top \nonumber\\
&\quad +\frac{1}{K}\sum_{\bx\in S_{\mathrm{bad}}}\big[\bx-\bmu_{S_{\mathrm{bad}}}+(1-\epsilon)(\bmu_{S_{\mathrm{bad}}}-\bmu_{S_{\mathrm{good}}})\big]\big[\bx-\bmu_{S_{\mathrm{bad}}}+(1-\epsilon)(\bmu_{S_{\mathrm{bad}}}-\bmu_{S_{\mathrm{good}}})\big]^\top \nonumber\\
&=(1-\epsilon)\bSigma_{S_{\mathrm{good}}}+\epsilon \bSigma_{S_{\mathrm{bad}}}+\epsilon(1-\epsilon)(\bmu_{S_{\mathrm{good}}}-\bmu_{S_{\mathrm{bad}}})(\bmu_{S_{\mathrm{good}}}-\bmu_{S_{\mathrm{bad}}})^\top .
\end{align}
Note that
\begin{align}
    \bSigma_{S_{\mathrm{good}}} &= \frac{1}{|S_{\mathrm{good}}|}\sum_{\bx \in S_{\mathrm{good}}}(\bx - \bmu + \bmu - \bmu_{S_{\mathrm{good}}})(\bx - \bmu + \bmu - \bmu_{S_{\mathrm{good}}})^\top \\
    &= \frac{1}{|S_{\mathrm{good}}|}\sum_{\bx \in S_{\mathrm{good}}}(\bx - \bmu)(\bx - \bmu)^\top - (\bmu - \bmu_{S_{\mathrm{good}}})(\bmu - \bmu_{S_{\mathrm{good}}})^\top,
\end{align}

\noindent Hence for all $\bm{v}\in\mathbb{S}^{d-1}$, by the definition of $\lambda_{\bSigma}$ and $(\epsilon,\delta_1,\delta_2)$-stability we see that
\begin{align}
\bm{v}^\top\bSigma \bm{v}+\lambda_{\bSigma} &\ge \bm{v}^\top\bSigma_{\tilde S}\bm{v} \nonumber\\
&\ge (1-\epsilon)\bm{v}^\top\bSigma_{S_{\mathrm{good}}}\bm{v}+\epsilon(1-\epsilon)\big[\bm{v}^\top(\bmu_{S_{\mathrm{good}}}-\bmu_{S_{\mathrm{bad}}})\big]^2 \\
&\geq (1-\epsilon)(\bm{v}^\top\bSigma \bm{v} - \delta_2 - \twonorm{\bmu - \bmu_{S_{\mathrm{good}}}}^2) + \epsilon(1-\epsilon)\big[\bm{v}^\top(\bmu_{S_{\mathrm{good}}}-\bmu_{S_{\mathrm{bad}}})\big]^2,
\end{align}
entailing that
\begin{equation}
    \epsilon(1-\epsilon)\big[\bm{v}^\top(\bmu_{S_{\mathrm{good}}}-\bmu_{S_{\mathrm{bad}}})\big]^2
\le \epsilon \bm{v}^\top\bSigma \bm{v}+\lambda_{\bSigma}+(1-\epsilon)\delta_2 + (1-\epsilon)\delta_1^2
\le \epsilon\twonorm{\bSigma}+\lambda_{\bSigma}+(1-\epsilon)\delta_2  + (1-\epsilon)\delta_1^2.
\end{equation}

Let $\bm{v}=\dfrac{\bmu_{S_{\mathrm{good}}}-\bmu_{S_{\mathrm{bad}}}}{\twonorm{\bmu_{S_{\mathrm{good}}}-\bmu_{S_{\mathrm{bad}}}}}$. Then
\begin{equation}
\twonorm{\bmu_{S_{\mathrm{good}}}-\bmu_{S_{\mathrm{bad}}}}
\le \sqrt{\frac{\epsilon\twonorm{\bSigma}+\lambda_{\bSigma}+(1-\epsilon)\delta_2 + (1-\epsilon)\delta_1^2}{\epsilon(1-\epsilon)}} .
\end{equation}

\noindent Hence
\begin{align}
\twonorm{\bmu_{\tilde S}-\bmu}
&\le (1-\epsilon)\twonorm{\bmu_{S_{\mathrm{good}}}-\bmu}+\epsilon\twonorm{\bmu_{S_{\mathrm{bad}}}-\bmu} \\
&\le \delta_1+\epsilon\twonorm{\bmu_{S_{\mathrm{good}}}-\bmu_{S_{\mathrm{bad}}}} \\
&\le \delta_1+\sqrt{\frac{\epsilon\big(\epsilon\twonorm{\bSigma}+\lambda_{\bSigma}+(1-\epsilon)\delta_2+ (1-\epsilon)\delta_1^2\big)}{1-\epsilon}}\\
&\leq 2\delta_1+\sqrt{\frac{\epsilon\big(\epsilon\twonorm{\bSigma}+\lambda_{\bSigma}+(1-\epsilon)\delta_2\big)}{1-\epsilon}}.
\end{align}

% -----------------------------
\subsubsection{Proof of Proposition \ref{prop: robust mean est error}}

Let $S_{\textrm{good}}\subseteq[K]$ be the clean index set and $S_{\textrm{bad}}=[K]\setminus S_{\textrm{good}}$ the
contaminated index set, with $|S_{\textrm{bad}}|\le \epsilon K$. Let $S_i$
be the current index set at iteration $i$ of the while loop of Algorithm \ref{algo: robust mean estimation}, and write
$
    S_{\textrm{good}, i}=S_i\cap S_{\textrm{good}},
    S_{\textrm{bad}, i}=S_i\cap S_{\textrm{bad}}.
$
Set $
    L_0 = \frac{3}{2}\epsilon K+\frac{3}{2}\sqrt{2\log(1/\delta)},
    \eta =  \frac{3}{2}\epsilon+\frac{3}{2}\frac{\sqrt{2\log(1/\delta)}}{K} \leq 2\epsilon,
$
when $\delta \geq \exp\{-\frac{\epsilon K}{18}\}$. 
We work on the event that $S_{\textrm{good}}$ is
$(\eta,\delta_1,\delta_2)$-stable with respect to $(\bmu,\bSigma)$.

For simplicity, we denote $\bxk{k} = \frac{1}{n}\sum_{i=1}^n \nabla \ell(\zk{k}_i, \btheta)$ for $k = [K]$. We can then view Algorithm \ref{algo: robust mean estimation} and Proposition \ref{prop: robust mean est error} as the corresponding algorithm and result for mean estimation. Fix an iteration $i\le L_0$ before termination and let $\bv$ denote the corresponding eigenvector of $\bSigma_{S_i}-\widehat{\bSigma}$. 
Let $L_i\subseteq S_i$ contain the largest $\epsilon |S_i|$ values of $\bigl[\bv^\top(\bx^{(k)}-\bmu_{S_i})\bigr]^2$,
and denote $f_i$ as the function $f$ defined in the $i$-th iteration of the while loop of Algorithm \ref{algo: robust mean estimation}, i.e., $f_i(\bxk{k})=[\bv^\top(\bxk{k}-\bmu_{S_i})]^2$ if $k \in L_i$ and $0$ otherwise. Let
\[
    F_i:=\sum_{k\in S_i}f_i(\bx^{(k)}),
    \qquad
    F_i^{\mathrm{good}}:=\sum_{k\in S_{\textrm{good}, i}}f_i(\bx^{(k)}).
\]

Since at most $L_0$ points have been removed by iteration $i$,
\[
    |S_{\textrm{good}, i}|\ge K-\epsilon K-L_0
\]
and since $|L_i|\le \epsilon K$,
\[
    |S_{\textrm{good}, i}\setminus L_i|
    \ge K-\epsilon K-L_0-\epsilon K
    \ge (1-\eta)K.
\]
Thus $(\eta, \delta_1,\delta_2)$-stability property applies to both $S_{\textrm{good}, i}$ and $S_{\textrm{good}, i}\setminus L_i$ as the large subsets.

Moreover, $|S_{\textrm{bad}, i}|\le \epsilon K$ and
\[
    |L_i|=\epsilon |S_i|
    \ge \epsilon(K-L_0).
\]
This implies
\[
    |L_i|\ge \frac{\epsilon K}{2} \ge \frac{1}{2} |S_{\textrm{bad}, i}|.
\]
Furthermore, since $L_i$ contains the largest $\epsilon|S_i|$ values of $\bigl[\bv^\top(\bx^{(k)}-\bmu_{S_i})\bigr]^2$,
\[
    \sum_{k\in L_i}\bigl[\bv^\top(\bx^{(k)}-\bmu_{S_i})\bigr]^2
    \ge
    \frac{|L_i|}{|S_{\textrm{bad}, i}|}\sum_{k\in S_{\textrm{bad}, i}}\bigl[\bv^\top(\bx^{(k)}-\bmu_{S_i})\bigr]^2
    \ge
    \frac12\sum_{k\in S_{\textrm{bad}, i}}\bigl[\bv^\top(\bx^{(k)}-\bmu_{S_i})\bigr]^2.
\]
Therefore,
\begin{align}
    F_i
    &=\sum_{k\in L_i}\bigl[\bv^\top(\bx^{(k)}-\bmu_{S_i})\bigr]^2 \nonumber\\
    &\ge
    \frac12\sum_{k\in S_{\textrm{bad}, i}}\bigl[\bv^\top(\bx^{(k)}-\bmu_{S_i})\bigr]^2 \nonumber\\
    &=
    \frac12
    \left\{
    \sum_{k\in S_i}\bigl[\bv^\top(\bx^{(k)}-\bmu_{S_i})\bigr]^2-\sum_{k\in S_{\textrm{good}, i}}\bigl[\bv^\top(\bx^{(k)}-\bmu_{S_i})\bigr]^2
    \right\}.
\label{eq: Fi-lower-start}
\end{align}

Since $\bv$ is a top eigenvector of
$\bSigma_{S_i}-\widehat{\bSigma}$,
\[
    \bv^\top\bSigma_{S_i}\bv
    =
    \bv^\top\widehat{\bSigma}\bv+\lambda_{\max}(\bSigma_{S_i}-\widehat{\bSigma})
    \ge
    \bv^\top\bSigma\bv+\lambda_{\max}(\bSigma_{S_i}-\widehat{\bSigma})-\|\widehat{\bSigma}-\bSigma\|_{2}.
\]
Hence
\begin{equation}
    \frac{1}{|S_i|}\sum_{k\in S_i}\bigl[\bv^\top(\bx^{(k)}-\bmu_{S_i})\bigr]^2
    =
    \bv^\top\bSigma_{S_i}\bv
    \geq 
    \bv^\top\bSigma\bv+\lambda_{\max}(\bSigma_{S_i}-\widehat{\bSigma})-\|\widehat{\bSigma}-\bSigma\|_{2}.
\label{eq: total-variance-lower}
\end{equation}

Next,
\begin{align}
    \sum_{k\in S_{\textrm{good}, i}}\bigl[\bv^\top(\bx^{(k)}-\bmu_{S_i})\bigr]^2
    &=
    \sum_{k\in S_{\textrm{good}, i}}
    \bigl[\bv^\top(\bx^{(k)}-\bmu_{S_{\textrm{good}, i}})\bigr]^2
    +
    |S_{\textrm{good}, i}|
    \bigl[\bv^\top(\bmu_{S_{\textrm{good}, i}}-\bmu_{S_i})\bigr]^2 .
\end{align}
By definition of $\bmu_{S_{\textrm{good}, i}}$ and stability,
\[
    \sum_{k\in S_{\textrm{good}, i}}
    \bigl[\bv^\top(\bx^{(k)}-\bmu_{S_{\textrm{good}, i}})\bigr]^2
    \le \sum_{k\in S_{\textrm{good}, i}}
    \bigl[\bv^\top(\bx^{(k)}-\bmu)\bigr]^2 \le 
    |S_{\textrm{good}, i}|(\bv^\top\bSigma\bv+\delta_2).
\]
Since
    $\lambda_{\max}(\bSigma_{S_i}-\bSigma)
    \le
    \lambda_{\max}(\bSigma_{S_i}-\widehat{\bSigma})+\|\widehat{\bSigma}-\bSigma\|_{2}$, we can apply Lemma \ref{lem: certificate} to $S_i$ with $(\eta, \delta_1, \delta_2)$-stability to obtain
\begin{align}
     \|\bmu_{S_i}-\bmu\|_2
    &\leq
    2\delta_1
    + \sqrt{\frac{\eta(\lambda_{\max}(\bSigma_{S_i}-\widehat{\bSigma})+\|\widehat{\bSigma}-\bSigma\|_{2} + \eta\|\bSigma\|_{2}+\delta_2)}{1-\eta}} \\
    &\leq 2\delta_1+ \sqrt{\frac{8}{3}\epsilon\lambda_{\max}(\bSigma_{S_i}-\widehat{\bSigma})+\frac{4}{3}\epsilon\|\widehat{\bSigma}-\bSigma\|_{2} + \frac{8}{3}\epsilon\|\bSigma\|_{2}+\frac{4}{3}\epsilon\delta_2}
\end{align}
if $\eta = 2\epsilon \leq 1/4$.
Since stability also gives
\[
    \|\bmu_{S_{\textrm{good}, i}}-\bmu\|_2\le \delta_1,
\]
we have
\begin{align}
    \|\bmu_{S_{\textrm{good}, i}}-\bmu_{S_i}\|_2^2
    &\leq \Big(1+\frac{1}{8}\Big)\|\bmu_{S_i}-\bmu\|_2^2 + (1+8)\|\bmu_{S_{\textrm{good}, i}}-\bmu\|_2^2\\
    &\leq 9\delta_1^2 + \frac{9}{8}\bigg[2(2\delta_1)^2 + 2\Big(\frac{8}{3}\epsilon\lambda_{\max}(\bSigma_{S_i}-\widehat{\bSigma})+\frac{4}{3}\epsilon\|\widehat{\bSigma}-\bSigma\|_{2} + \frac{8}{3}\epsilon\|\bSigma\|_{2}+\frac{4}{3}\epsilon\delta_2\Big)\bigg]\\
    &\leq
    18\delta_1^2
    +
    \epsilon
    \left(
    6\lambda_{\max}(\bSigma_{S_i}-\widehat{\bSigma}) +3\|\widehat{\bSigma}-\bSigma\|_{2}+6\|\bSigma\|_{2}+3\delta_2
    \right).
\end{align}
Therefore,
\begin{align}
    &\sum_{k\in S_{\textrm{good}, i}}\bigl[\bv^\top(\bx^{(k)}-\bmu_{S_i})\bigr]^2
    \\
    &\le
    |S_{\textrm{good}, i}|
    \Big[
    \bv^\top\bSigma\bv
    +
    \delta_2
    +
    18\delta_1^2
    +
    \epsilon
    \left(
    6\lambda_{\max}(\bSigma_{S_i}-\widehat{\bSigma}) +3\|\widehat{\bSigma}-\bSigma\|_{2}+6\|\bSigma\|_{2}+3\delta_2
    \right)
    \Big].
\label{eq: clean-variance-upper}
\end{align}

Combining \eqref{eq: Fi-lower-start}, \eqref{eq: total-variance-lower}, and
\eqref{eq: clean-variance-upper}, and using $|S_{\textrm{good}, i}|\le |S_i|$, yields
\begin{align}
    F_i
    &\ge
    \frac{1}{2}|S_i|
    \bigg\{
    \bv^\top\bSigma\bv+\lambda_{\max}(\bSigma_{S_i}-\widehat{\bSigma})-\|\widehat{\bSigma}-\bSigma\|_{2}- \\
    &\qquad \Big[
    \bv^\top\bSigma\bv
    +
    \delta_2
    +
    18\delta_1^2
    +
    \epsilon
    \left(
    6\lambda_{\max}(\bSigma_{S_i}-\widehat{\bSigma}) +3\|\widehat{\bSigma}-\bSigma\|_{2}+6\|\bSigma\|_{2}+3\delta_2
    \right)
    \Big]
    \bigg\} \\
    &\geq \frac{1}{2}|S_i|
    \left[
    (1-6\epsilon)\lambda_{\max}(\bSigma_{S_i}-\widehat{\bSigma})
    -
    (1+3\epsilon)\|\widehat{\bSigma}-\bSigma\|_{2}
    -
    6\epsilon\|\bSigma\|_{2}
    -
    (1+3\epsilon)\delta_2
    -
    18\delta_1^2
    \right].
\label{eq: Fi-lower}
\end{align}
We next upper-bound the clean score mass. Since
\[
    F_i^{\mathrm{good}}
    =
    \sum_{k\in S_{\textrm{good}, i}\cap L_i}\bigl[\bv^\top(\bx^{(k)}-\bmu_{S_i})\bigr]^2
    =
    \sum_{k\in S_{\textrm{good}, i}}\bigl[\bv^\top(\bx^{(k)}-\bmu_{S_i})\bigr]^2
    -
    \sum_{k\in S_{\textrm{good}, i}\setminus L_i}\bigl[\bv^\top(\bx^{(k)}-\bmu_{S_i})\bigr]^2,
\]
we use \eqref{eq: clean-variance-upper} for the first term. For the second term,
stability applied to $S_{\textrm{good}, i}\setminus L_i$ gives
\begin{align}
    \sum_{k\in S_{\textrm{good}, i}\setminus L_i}\bigl[\bv^\top(\bx^{(k)}-\bmu_{S_i})\bigr]^2 &\geq \sum_{k\in S_{\textrm{good}, i}\setminus L_i}\bigl[\bv^\top(\bx^{(k)}-\bmu_{S_{\textrm{good}, i}\setminus L_i})\bigr]^2\\
    &\geq \sum_{k\in S_{\textrm{good}, i}\setminus L_i}\bigl[\bv^\top(\bx^{(k)}-\bmu)\bigr]^2 -|S_{\textrm{good}, i}\setminus L_i|\twonorm{\bmu_{S_{\textrm{good}, i}\setminus L_i}-\bmu}^2\\
    &\geq |S_{\textrm{good}, i}\setminus L_i|
    (\bv^\top\bSigma\bv-\delta_2-\delta_1^2) 
\end{align}
Therefore,
\begin{align}
    F_i^{\mathrm{good}}
    &\leq |S_{\textrm{good}, i}|
    \Big[
    \bv^\top\bSigma\bv
    +
    \delta_2
    +
    18\delta_1^2
    +
    \epsilon
    \left(
    6\lambda_{\max}(\bSigma_{S_i}-\widehat{\bSigma}) +3\|\widehat{\bSigma}-\bSigma\|_{2}+6\|\bSigma\|_{2}+3\delta_2
    \right)
    \Big] \\
    &\quad - |S_{\textrm{good}, i}\setminus L_i|
    (\bv^\top\bSigma\bv-\delta_2-\delta_1^2) \\
    &\le
    |S_{\textrm{good}, i}\cap L_i|\|\bSigma\|_{2}
    \\
    &\quad +
    |S_i|
    \Big[
    \bv^\top\bSigma\bv
    +
    2\delta_2
    +
    19\delta_1^2
    +
    \epsilon
    \left(
    6\lambda_{\max}(\bSigma_{S_i}-\widehat{\bSigma}) +3\|\widehat{\bSigma}-\bSigma\|_{2}+6\|\bSigma\|_{2}+3\delta_2
    \right)
    \Big].
\end{align}
Since $|S_{\textrm{good}, i}\cap L_i|\le |L_i|=\epsilon |S_i|$, this gives
\begin{equation}
    F_i^{\mathrm{good}}
    \le
    |S_i|
    \left[
    6\epsilon\lambda_{\max}(\bSigma_{S_i}-\widehat{\bSigma})
    +
    3\epsilon\|\widehat{\bSigma}-\bSigma\|_{2}
    +
    7\epsilon\|\bSigma\|_{2}
    +
    3\delta_2
    +
    19\delta_1^2
    \right].
\label{eq: Fi-good-upper}
\end{equation}
 Since
 \begin{equation}
    \lambda_{\max}(\bSigma_{S_i}-\widehat{\bSigma})>\lambda_{\bSigma} - \|\widehat{\bSigma}-\bSigma\|_{2} \geq \frac{(\frac{1}{2}+\frac{21}{2}\epsilon)\|\widehat{\bSigma}-\bSigma\|_{2} + 24\epsilon\twonorm{\bSigma} + (\frac{1}{2}+\frac{21}{2}\epsilon)\delta_2 + 66\delta_1^2}{\frac{1}{2}-21\epsilon},
 \end{equation}
 the lower bound
\eqref{eq: Fi-lower} and the upper bound \eqref{eq: Fi-good-upper} imply
\begin{align}
    F_i - 3F_i^{\mathrm{good}} &\geq |S_i|\bigg\{\Big(\frac{1}{2}-21\epsilon\Big)\lambda_{\max}(\bSigma_{S_i}-\widehat{\bSigma}) - \frac{1}{2}\Big[(1+3\epsilon)\|\widehat{\bSigma}-\bSigma\|_{2}
    -
    6\epsilon\|\bSigma\|_{2}
    -
    (1+3\epsilon)\delta_2
    -
    18\delta_1^2\Big]\\ 
    &\quad - 3(3\epsilon\|\widehat{\bSigma}-\bSigma\|_{2}
    +
    7\epsilon\|\bSigma\|_{2}
    +
    3\delta_2
    +
    19\delta_1^2)\bigg\} \\
    &\geq |S_i|\bigg[\Big(\frac{1}{2}-21\epsilon\Big)\lambda_{\max}(\bSigma_{S_i}-\widehat{\bSigma}) - \Big(\frac{1}{2}+\frac{21}{2}\epsilon\Big)\|\widehat{\bSigma}-\bSigma\|_{2} - 24\epsilon\twonorm{\bSigma} - \Big(\frac{1}{2}+\frac{21}{2}\epsilon\Big)\delta_2 - 66\delta_1^2\bigg] \\
    &\geq 0.
\end{align}
Hence
\[
    \sum_{k\in B_i}f_i(\bx^{(k)})
    =
    F_i-F_i^{\mathrm{good}}
    >
    \frac23F_i.
\]
Define $\mathcal{F}_i$ as the filtration of events until iteration $i$ (exclusive). Since the algorithm removes one index with probability proportional to $f_i$,
\[
    \mathbb P(\textup{the removed index is contaminated}\mid\mathcal F_{i-1})
    \ge
    \frac23
\]
whenever the algorithm has not stopped.

Now, let
\[
    \tau
    :=
    \min\left\{
    i:
    \lambda_{\max}(\bSigma_{S_i}-\widehat{\bSigma})
    \le
    \lambda_{\bSigma}
    \right\}.
\]
For $i<\tau$, define
$
    X_i:=\mathds{1}(\textup{a contaminated index is removed at iteration }i).
$
Then
\[
    \mathbb E[X_i\mid\mathcal F_{i-1}]\ge \frac23.
\]
If $\tau>L_0$, then the algorithm did not stop during the first $L_0$
iterations. Since at most $|B|\le \epsilon K$ contaminated points can be removed,
\[
    \sum_{i=1}^{L_0}X_i\le \epsilon K.
\]
Therefore,
\begin{align}
    \mathbb P(\tau>L_0)
    &\le
    \mathbb P\left(
    \sum_{i=1}^{L_0}X_i\le \epsilon K
    \right) \le
    \mathbb P\left(
    \sum_{i=1}^{L_0}
    \{X_i-\mathbb E[X_i\mid\mathcal F_{i-1}]\}
    \le
    \epsilon K-\frac23L_0
    \right).
\end{align}
By Azuma-Hoeffding's inequality,
\[
    \mathbb P(\tau>L_0)
    \le
    \exp\left\{
    -\frac{(\frac23L_0-\epsilon K)^2}{2L_0}
    \right\}.
\]
Because
\[
    L_0 = \frac{3}{2}\epsilon K+\frac{3}{2}\sqrt{2\log(1/\delta)}
\]
the right-hand side is at most $\delta$. Hence the algorithm stops after at
most $L_0$ iterations with probability at least $1-\delta$.

Finally, let $\tilde S=S_\tau$ be the output set. At termination,
\[
    \lambda_{\max}(\bSigma_{\tilde S}-\widehat{\bSigma})
    \le
    \lambda_{\bSigma}.
\]
Hence
\[
    \lambda_{\max}(\bSigma_{\tilde S}-\bSigma)
    \le
    \lambda_{\bSigma}+\|\widehat{\bSigma}-\bSigma\|_{2}.
\]
Applying the certificate lemma once more gives
\begin{align}
    \|\bmu_{\tilde S}-\bmu\|_2 &\leq
    2\delta_1
    + \sqrt{\frac{\eta(\lambda_{\max}(\bSigma_{S_i}-\widehat{\bSigma})+\|\widehat{\bSigma}-\bSigma\|_{2} + \eta\|\bSigma\|_{2}+\delta_2)}{1-\eta}} \\
    &\leq 2\delta_1+ \sqrt{\frac{8}{3}\epsilon(\lambda_{\bSigma}+\|\widehat{\bSigma}-\bSigma\|_{2})+\frac{4}{3}\epsilon\|\widehat{\bSigma}-\bSigma\|_{2} + \frac{8}{3}\epsilon\|\bSigma\|_{2}+\frac{4}{3}\epsilon\delta_2} \\
    &\leq 2\delta_1+ \sqrt{\frac{8}{3}\epsilon\lambda_{\bSigma}+4\epsilon\|\widehat{\bSigma}-\bSigma\|_{2} + \frac{8}{3}\epsilon\|\bSigma\|_{2}+\frac{4}{3}\epsilon\delta_2}
\end{align}
In particular, if
\[
    \lambda_{\bSigma}
    \asymp
    \|\widehat{\bSigma}-\bSigma\|_{2}+\epsilon\|\bSigma\|_{2}+\delta_2+\delta_1^2,
\]
then
\[
    \|\bmu_{\tilde S}-\bmu\|_2
    \lesssim
    \delta_1
    +
    \sqrt{\epsilon\|\widehat{\bSigma}-\bSigma\|_{2}}
    +
    \epsilon\|\bSigma\|_{2}^{1/2}
    +
    \sqrt{\epsilon\delta_2}.
\]
This completes the proof.

\subsubsection{Proof of Lemma \ref{lem: stability subG gradients}}
We verify the stability Definition \ref{def: stability} for the gradients $\{\bar{\bg}_k(\btheta)\}$.

\noindent (i) For the first-order term.
Fix $\btheta\in\mathcal{B}_{R_0}(\bthetas)$, $S\subseteq[K]$ with
$|S|\ge (1-\epsilon)K$, and $\bv\in\mathbb{S}^{d-1}$. Note that $q= |S^c|/K\le \epsilon$. Let $\mathcal N_\beta$ be a $\beta$-cover of $\Theta$ (under $\ell_2$-norm) with
$|\mathcal N_\beta|\lesssim (R_0/\beta)^d$ and $\btheta'\in\mathcal{N}_{\beta}$.

By  Assumption \ref{asmp: lipschitz loss},
\[
\left\|
\{\bar{\bg}_k(\btheta)-\bmu_{\btheta}\}
-
\{\bar{\bg}_k(\btheta')-\bmu_{\btheta'}\}
\right\|_2
\le 2L'\beta ,
\qquad k\in[K].
\]
Therefore,
\begin{align}
&\left|
\frac1{|S|}\sum_{k\in S}
\bv^\top(\bar{\bg}_k(\btheta)-\bmu_{\btheta})
\right| \le
\left|
\frac1{|S|}\sum_{k\in S}
\bv^\top(\bar{\bg}_k(\btheta')-\bmu_{\btheta'})
\right|
+
2L'\beta .
\label{eq: first_order_discretisation}
\end{align}
Next, since $|S|=(1-q)K$, we have
\begin{align}
\frac1{|I|}\sum_{k\in I}
\bv^\top(\bar{\bg}_k(\btheta')-\bmu_{\btheta'})
&=
\frac1{1-q}\frac1K\sum_{k=1}^K
\bv^\top(\bar{\bg}_k(\btheta')-\bmu_{\btheta'})
-
\frac1{1-q}\frac1K\sum_{k\in J}
\bv^\top(\bar{\bg}_k(\btheta')-\bmu_{\btheta'}).
\end{align}
Since
\[
\frac1K\sum_{k=1}^K
(\bmuk{k}_{\btheta'}-\bmu_{\btheta'})
=0,
\]
the full-sample term satisfies
\[
\frac1K\sum_{k=1}^K
\bv^\top(\bar{\bg}_k(\btheta')-\bmu_{\btheta'})
=
\frac1K\sum_{k=1}^K
\bv^\top(\bar{\bg}_k(\btheta')-\bmuk{k}_{\btheta'}).
\]
For the other term,
\begin{align}
&\left|
\frac1K\sum_{k\in S^c}
\bv^\top(\bar{\bg}_k(\btheta')-\bmu_{\btheta'})
\right|  \le
\left|
\frac1K\sum_{k\in S^c}
\bv^\top(\bar{\bg}_k(\btheta')-\bmuk{k}_{\btheta'})
\right|
+
\left|
\frac1K\sum_{k\in S^c}
\bv^\top(\bmuk{k}_{\btheta'}-\bmu_{\btheta'})
\right|.
\end{align}
The heterogeneity term can be bounded by Cauchy--Schwarz:
\begin{align}
\left|
\frac1K\sum_{k\in S^c}
\bv^\top(\bmuk{k}_{\btheta'}-\bmu_{\btheta'})
\right|
&\le
\frac{\sqrt{|S^c|}}{K}
\left(
\sum_{k\in S^c}
\|\bmuk{k}_{\btheta'}-\bmu_{\btheta'}\|_2^2
\right)^{1/2} \nonumber\\
&\le
\frac{\sqrt{|S^c|}}{K}
\left(
\sum_{k=1}^K
\|\bmuk{k}_{\btheta'}-\bmu_{\btheta'}\|_2^2
\right)^{1/2} \nonumber\\
&\le
\sqrt{q}\,h \\
&\le
\sqrt{\epsilon}\,h.
\label{eq: first_order_heterogeneity}
\end{align}
Hence, using $q\le \epsilon < 1/2$,
\begin{align}
&\left|
\frac{1}{|S|}\sum_{k\in S}
\bv^\top(\bar{\bg}_k(\btheta)-\bmu_{\btheta})
\right| \nonumber\\
&\lesssim
\left|
\frac{1}{K}\sum_{k=1}^K
\bv^\top(\bar{\bg}_k(\btheta')-\bmuk{k}_{\btheta'})
\right|
+
\left|
\frac1K\sum_{k\in S^c}
\bv^\top(\bar{\bg}_k(\btheta')-\bmuk{k}_{\btheta'})
\right|
+
\sqrt{\epsilon}h
+
L'\beta .
\label{eq: first_order_reduce_to_net}
\end{align}

It remains to control the two stochastic terms on the right-hand side of
\eqref{eq: first_order_reduce_to_net}. For fixed $\btheta'$, $\bv$, and $S^c$,
the random variables
\[
\bv^\top(\bar{\bg}_k(\btheta')-\bmuk{k}_{\btheta'})
\]
are independent, mean-zero, and sub-Gaussian with variance proxy of order $1/n$.
By sub-Gaussian concentration, a standard $1/2$-net argument over
$\mathbb{S}^{d-1}$, and a union bound over
$\btheta'\in\mathcal{N}_{\beta}$, with probability at least $1-\delta$,
\begin{align}
&\sup_{\btheta'\in\mathcal{N}_{\beta}}
\sup_{\bv\in\mathbb{S}^{d-1}}
\left|
\frac1K\sum_{k=1}^K
\bv^\top(\bar{\bg}_k(\btheta')-\bmuk{k}_{\btheta'})
\right| \lesssim
\sqrt{
\frac{d\log(R_0/\beta)+\log(1/\delta)}{nK}
}.
\label{eq: first_order_full_concentration}
\end{align}
Similarly, by also union bounding over all $S^c\subseteq[K]$ with $|S^c|\le \epsilon K$ (whose cardinality is at most  $\sum_{j=0}^{\lfloor \epsilon K\rfloor}\binom{K}{j} \le (e/\epsilon)^{\epsilon K}$,), with probability at least $1-\delta$, we have
%{\color{red}[M: the term $\epsilon\sqrt{\frac{\log(1/\epsilon)}{n}}$ is not that obvious to me. Also, why do we want to drop $\epsilon$ in the last inequality?]} \yt{$\epsilon\sqrt{\frac{\log(1/\epsilon)}{n}}$ comes from the union bound for all $S^c$ with $|S^c|\le \epsilon K$. I added some details in the bracket above.}
\begin{align}
&\sup_{\btheta'\in\mathcal{N}_{\beta}}
\sup_{\bv\in\mathbb{S}^{d-1}}
\sup_{S^c:|S^c|\le \epsilon K}
\left|
\frac{1}{K}\sum_{k\in S^c}
\bv^\top(\bar{\bg}_k(\btheta')-\bmuk{k}_{\btheta'})
\right| \nonumber\\
&\leq \sup_{\btheta'\in\mathcal{N}_{\beta}}
\sup_{\bv\in\mathbb{S}^{d-1}}
\sup_{S^c:|S^c|\le \epsilon K}
\frac{|S^c|}{K}\cdot \left|
\frac{1}{|S^c|}\sum_{k\in S^c}
\bv^\top(\bar{\bg}_k(\btheta')-\bmuk{k}_{\btheta'})
\right| \nonumber\\
&\lesssim
\sqrt{
\frac{d\log(R_0/\beta)+\log(1/\delta)}{nK}
}
+
\epsilon\sqrt{\frac{\log(1/\epsilon)}{n}} .
\label{eq: first_order_deleted_concentration}
\end{align}
Combining \eqref{eq: first_order_reduce_to_net},
\eqref{eq: first_order_full_concentration}, and
\eqref{eq: first_order_deleted_concentration}, we obtain
\begin{align}
&\sup_{\btheta\in\mathcal{B}_{R_0}(\bthetas)}
\sup_{S:|S|\ge(1-\epsilon)K}
\sup_{\bv\in\mathbb{S}^{d-1}}
\left|
\frac1{|S|}\sum_{k\in S}
\bv^\top(\bar{\bg}_k(\btheta)-\bmu_{\btheta})
\right| \nonumber\\
&\lesssim
\sqrt{
\frac{d\log(R_0/\beta)+\log(1/\delta)}{nK}
}
+
\epsilon\sqrt{\frac{\log(1/\epsilon)}{n}}
+
\sqrt{\epsilon}h
+
L'\beta .
\end{align}
This proves the claimed first-order stability bound.

\noindent (ii) For the second-order term.
Fix $\btheta\in\mathcal{B}_{R_0}(\bthetas)$, $S\subseteq[K]$ with
$|S|\ge (1-\epsilon)K$, and $\bv\in\mathbb{S}^{d-1}$. Let $q=|S^c|/K\le \epsilon$. Let $\btheta'\in\mathcal{N}_{\beta}$
satisfy $\|\btheta-\btheta'\|_2\le \beta$.

By Assumption \ref{asmp: lipschitz loss},
\[
\left\|
\{\bar{\bg}_k(\btheta)-\bmu_{\btheta}\}
-
\{\bar{\bg}_k(\btheta')-\bmu_{\btheta'}\}
\right\|_2
\le 2L'\beta ,
\qquad k\in[K].
\]
Hence, using $a^2-b^2=(a-b)(a+b)$ and Cauchy-Schwarz inequality, 
\begin{align}
&\left|
\frac{1}{|S|}\sum_{k\in S}
\left[
\{\bv^\top(\bar{\bg}_k(\btheta)-\bmu_{\btheta})\}^2
-
\{\bv^\top(\bar{\bg}_k(\btheta')-\bmu_{\btheta'})\}^2
\right]
\right| \nonumber\\
&\leq \bigg|
\frac{1}{|S|}\sum_{k\in S}
\left\{
[\bv^\top(\bar{\bg}_k(\btheta)-\bmu_{\btheta})]
-
[\bv^\top(\bar{\bg}_k(\btheta')-\bmu_{\btheta'})]
\right\}^2\bigg| \\
&\quad+ 2\bigg|\frac{1}{|S|}\sum_{k\in S}\left\{
[\bv^\top(\bar{\bg}_k(\btheta)-\bmu_{\btheta})]
-
[\bv^\top(\bar{\bg}_k(\btheta')-\bmu_{\btheta'})]
\right\}
[\bv^\top(\bar{\bg}_k(\btheta')-\bmu_{\btheta'})]
\bigg| \nonumber\\
&\lesssim 
L'^2\beta^2 + \frac{1}{|S|}\sqrt{\sum_{k \in S}\left\{
[\bv^\top(\bar{\bg}_k(\btheta)-\bmu_{\btheta})]
-
[\bv^\top(\bar{\bg}_k(\btheta')-\bmu_{\btheta'})]
\right\}^2}\cdot \sqrt{\sum_{k \in S}[\bv^\top(\bar{\bg}_k(\btheta')-\bmu_{\btheta'})]^2} \\
&\lesssim L'^2\beta^2 + L'\beta
\sqrt{\frac{1}{|S|}\sum_{k\in S}
\{\bv^\top(\bar{\bg}_k(\btheta')-\bmu_{\btheta'})\}^2}
 \nonumber\\
&\lesssim L'^2\beta^2 + L'\beta
\sqrt{\frac{1}{|S|}\sum_{k\in S}
\{\bv^\top(\bar{\bg}_k(\btheta')-\bmu_{\btheta'})\}^2 - \bv^\top\bSigma_{\btheta'}\bv} + L'\beta\sqrt{\bv^\top\bSigma_{\btheta'}\bv}
 \nonumber\\
&\lesssim L'^2\beta^2 + 
\left|
\frac{1}{|S|}\sum_{k\in S}
\{\bv^\top(\bar{\bg}_k(\btheta')-\bmu_{\btheta'})\}^2
-
\bv^\top\bSigma_{\btheta'}\bv
\right|
+
L'\beta\left(h+\frac1{\sqrt n}\right),
\label{eq: second_order_empirical_discretisation}
\end{align}
where in the last inequality, we used $\|\bSigma_{\btheta'}\|_{2}\lesssim h^2+\frac{1}{n}$. 

Similarly,
\begin{align}
\left|
\bv^\top(\bSigma_{\btheta}-\bSigma_{\btheta'})\bv
\right|
&=
\left|
\frac1K\sum_{k=1}^K
\mathbb{E}
\left[
\{\bv^\top(\bar{\bg}_k(\btheta)-\bmu_{\btheta})\}^2
-
\{\bv^\top(\bar{\bg}_k(\btheta')-\bmu_{\btheta'})\}^2
\right]
\right| \nonumber\\
&\le
C L'\beta\left(h+\frac1{\sqrt n}\right)
+
C L'^2\beta^2 .
\label{eq: second_order_population_discretisation}
\end{align}
Combining \eqref{eq: second_order_empirical_discretisation} and
\eqref{eq: second_order_population_discretisation}, we obtain
\begin{align}
&\left|
\frac{1}{|S|}\sum_{k\in S}
\{\bv^\top(\bar{\bg}_k(\btheta)-\bmu_{\btheta})\}^2
-
\bv^\top\bSigma_{\btheta}\bv
\right| \nonumber\\
&\le
C
\left|
\frac{1}{|S|}\sum_{k\in S}
\{\bv^\top(\bar{\bg}_k(\btheta')-\bmu_{\btheta'})\}^2
-
\bv^\top\bSigma_{\btheta'}\bv
\right|
+
C L'\beta\left(h+\frac1{\sqrt n}\right)
+
C L'^2\beta^2 .
\label{eq: second_order_reduce_to_net}
\end{align}

It remains to control the term at the net point $\btheta'$. Decompose
\[
\bar{\bg}_k(\btheta')-\bmu_{\btheta'}
=
\{\bar{\bg}_k(\btheta')-\bmuk{k}_{\btheta'}\}
+
\{\bmuk{k}_{\btheta'}-\bmu_{\btheta'}\}.
\]
Then 
\begin{align}
&
\frac{1}{|S|}\sum_{k\in S}
\{\bv^\top(\bar{\bg}_k(\btheta')-\bmu_{\btheta'})\}^2
-
\bv^\top\bSigma_{\btheta'}\bv
\nonumber\\
&=
\frac{1}{|S|}\sum_{k\in S}
\left[
\{\bv^\top(\bar{\bg}_k(\btheta')-\bmuk{k}_{\btheta'})\}^2
-
\mathbb{E}\{\bv^\top(\bar{\bg}_k(\btheta')-\bmuk{k}_{\btheta'})\}^2
\right]
\nonumber\\
&\quad+
2
\frac1{|S|}\sum_{k\in S}
\bv^\top(\bar{\bg}_k(\btheta')-\bmuk{k}_{\btheta'})
\,
\bv^\top(\bmuk{k}_{\btheta'}-\bmu_{\btheta'})
 \nonumber\\
&\quad+
\frac1{|S|}\sum_{k\in S}
\left[
\mathbb{E}\{\bv^\top(\bar{\bg}_k(\btheta')-\bmuk{k}_{\btheta'})\}^2
+
\{\bv^\top(\bmuk{k}_{\btheta'}-\bmu_{\btheta'})\}^2
\right]
 \nonumber\\
&\qquad\qquad
-
\frac{1}{K}\sum_{k=1}^K
\left[
\mathbb{E}\{\bv^\top(\bar{\bg}_k(\btheta')-\bmuk{k}_{\btheta'})\}^2
+
\{\bv^\top(\bmuk{k}_{\btheta'}-\bmu_{\btheta'})\}^2
\right]
\label{eq: second_order_net_decomposition}
\end{align}
For the first term in \eqref{eq: second_order_net_decomposition}, the summands are
independent, mean-zero, and sub-exponential with scale of order $1/n$. By Bernstein's
inequality, a standard $1/4$-net argument over $\mathbb{S}^{d-1}$, and a union bound
over $\btheta'\in\mathcal{N}_{\beta}$ and all $S^c\subseteq[K]$ with
$|S^c|\le \epsilon K$, with probability at least $1-\delta$, 
\begin{align}
&\sup_{\btheta'\in\mathcal{N}_{\beta}}
\sup_{\bv\in\mathbb{S}^{d-1}}
\sup_{S:|S|\ge(1-\epsilon)K}
\left|
\frac1{|S|}\sum_{k\in S}
\left[
\{\bv^\top(\bar{\bg}_k(\btheta')-\bmuk{k}_{\btheta'})\}^2
-
\mathbb{E}\{\bv^\top(\bar{\bg}_k(\btheta')-\bmuk{k}_{\btheta'})\}^2
\right]
\right| \nonumber\\
&\leq
C\sup_{\btheta'\in\mathcal{N}_{\beta}}
\sup_{\bv\in\mathbb{S}^{d-1}}
\left|
\frac{1}{K}\sum_{k=1}^K
\left[
\{\bv^\top(\bar{\bg}_k(\btheta')-\bmuk{k}_{\btheta'})\}^2
-
\mathbb{E}\{\bv^\top(\bar{\bg}_k(\btheta')-\bmuk{k}_{\btheta'})\}^2
\right]
\right| \nonumber\\
&\quad+
C\sup_{\btheta'\in\mathcal{N}_{\beta}}
\sup_{\bv\in\mathbb{S}^{d-1}}
\sup_{S^c:|S^c|\leq \epsilon K}
\left|
\frac{1}{K}\sum_{k\in S^c}
\left[
\{\bv^\top(\bar{\bg}_k(\btheta')-\bmuk{k}_{\btheta'})\}^2
-
\mathbb{E}\{\bv^\top(\bar{\bg}_k(\btheta')-\bmuk{k}_{\btheta'})\}^2
\right]
\right| \nonumber\\
&\lesssim
\frac{1}{n}
\left[
\sqrt{\frac{d\log(R_0/\beta)+\log(1/\delta)}{K}}
+
\frac{d\log(R_0/\beta)+\log(1/\delta)}{K}
\right]
+
\epsilon\frac{\log(1/\epsilon)}{n}.
\label{eq: second_order_quadratic_concentration}
\end{align}
For the second term in \eqref{eq: second_order_net_decomposition}, conditional on
$\{\bmuk{k}_{\btheta'}-\bmu_{\btheta'}\}_{k=1}^K$, the summands are independent,
mean-zero, and sub-Gaussian with variance proxy bounded by
\[
\frac1n\{\bv^\top(\bmuk{k}_{\btheta'}-\bmu_{\btheta'})\}^2 .
\]
Therefore, by the same union-bound argument and the heterogeneity condition,
with probability at least $1-\delta$,
\begin{align}
&\sup_{\btheta'\in\mathcal{N}_{\beta}}
\sup_{\bv\in\mathbb{S}^{d-1}}
\sup_{S:|S|\ge(1-\epsilon)K}
\left|
\frac{1}{|S|}\sum_{k\in S}
\bv^\top(\bar{\bg}_k(\btheta')-\bmuk{k}_{\btheta'})
\,
\bv^\top(\bmuk{k}_{\btheta'}-\bmu_{\btheta'})
\right| \nonumber\\
&\lesssim
h
\sqrt{
\frac{d\log(R_0/\beta)+\log(1/\delta)}{nK}
+
\frac{\epsilon\log(1/\epsilon)}{n}
}.
\label{eq: second_order_cross_concentration}
\end{align}
For the third term in \eqref{eq: second_order_net_decomposition}, since $q=|S^c|/K$, %{\color{red}[M: the first inequality is in fact an identity if we remove the absolute values and add a minus sign in front of the last term? I would write it as an identity first and then just state the last inequality. Also, should it be $\epsilon h^2?$]} \yt{I just rewrote it as an identity. And it is indeed $h^2$ because we do not have a better control on $\frac{1}{K}\sum_{k\in S^c}\|\bmuk{k}_{\btheta'}-\bmu_{\btheta'}\|_2^2 \leq h^2$ given the current assumption.}
\begin{align}
&
\frac{1}{|S|}\sum_{k\in S}
\left[
\mathbb{E}\{\bv^\top(\bar{\bg}_k(\btheta')-\bmuk{k}_{\btheta'})\}^2
+
\{\bv^\top(\bmuk{k}_{\btheta'}-\bmu_{\btheta'})\}^2
\right]
 \nonumber\\
&\qquad\qquad
-
\frac{1}{K}\sum_{k=1}^K
\left[
\mathbb{E}\{\bv^\top(\bar{\bg}_k(\btheta')-\bmuk{k}_{\btheta'})\}^2
+
\{\bv^\top(\bmuk{k}_{\btheta'}-\bmu_{\btheta'})\}^2
\right]
 \nonumber\\
&=
\frac{q}{1-q}\frac1K\sum_{k=1}^K
\left[
\mathbb{E}\{\bv^\top(\bar{\bg}_k(\btheta')-\bmuk{k}_{\btheta'})\}^2
+
\{\bv^\top(\bmuk{k}_{\btheta'}-\bmu_{\btheta'})\}^2
\right] \nonumber\\
&\quad+
\frac1{K(1-q)}\sum_{k\in S^c}
\left[
\mathbb{E}\{\bv^\top(\bar{\bg}_k(\btheta')-\bmuk{k}_{\btheta'})\}^2
+
\{\bv^\top(\bmuk{k}_{\btheta'}-\bmu_{\btheta'})\}^2
\right].
\end{align}
Assumption \ref{asmp: subG gradient} entails
\[
\sup_{k,\btheta',\bv}
\mathbb{E}\{\bv^\top(\bar{\bg}_k(\btheta')-\bmuk{k}_{\btheta'})\}^2
\lesssim \frac1n
\]
and Assumption \ref{asmp: task heterogeneity} gives the heterogeneity bound
\[
\frac{1}{K}\sum_{k\in S^c}
\|\bmuk{k}_{\btheta'}-\bmu_{\btheta'}\|_2^2
\lesssim h^2,
\qquad |S^c|\le \epsilon K.
\]
Therefore,
\begin{align}
&\left|
\frac{1}{|S|}\sum_{k\in S}
\left[
\mathbb{E}\{\bv^\top(\bar{\bg}_k(\btheta')-\bmuk{k}_{\btheta'})\}^2
+
\{\bv^\top(\bmuk{k}_{\btheta'}-\bmu_{\btheta'})\}^2
\right]
\right.\nonumber\\
&\qquad\qquad \left.
-
\frac{1}{K}\sum_{k=1}^K
\left[
\mathbb{E}\{\bv^\top(\bar{\bg}_k(\btheta')-\bmuk{k}_{\btheta'})\}^2
+
\{\bv^\top(\bmuk{k}_{\btheta'}-\bmu_{\btheta'})\}^2
\right]\right|
 \lesssim
\frac{\epsilon}{n}
+ h^2.
\label{eq: second_order_population_bias}
\end{align}

Combining \eqref{eq: second_order_reduce_to_net},
\eqref{eq: second_order_net_decomposition},
\eqref{eq: second_order_quadratic_concentration},
\eqref{eq: second_order_cross_concentration}, and
\eqref{eq: second_order_population_bias}, we obtain
\begin{align}
&\sup_{\btheta\in\mathcal{B}_{R_0}(\bthetas)}
\sup_{S:|S|\ge(1-\epsilon)K}
\sup_{\bv\in\mathbb{S}^{d-1}}
\left|
\frac{1}{|S|}\sum_{k\in S}
\{\bv^\top(\bar{\bg}_k(\btheta)-\bmu_{\btheta})\}^2
-
\bv^\top\bSigma_{\btheta}\bv
\right| \nonumber\\
&\lesssim
\frac1n
\left[
\sqrt{\frac{d\log(R_0/\beta)+\log(1/\delta)}{K}}
+
\frac{d\log(R_0/\beta)+\log(1/\delta)}{K}
\right]
+
\epsilon\frac{\log(1/\epsilon)}{n}
+
L'\beta\left(h+\frac1{\sqrt n}\right)
+
L'^2\beta^2 \nonumber\\
&\quad+
h\sqrt{
\frac{d\log(R_0/\beta)+\log(1/\delta)}{nK}
+
\frac{\epsilon\log(1/\epsilon)}{n}
}
+
 h^2 .
\end{align}
This proves the claimed second-order stability bound.

\subsubsection{Proof of Theorem \ref{thm: gradident est error appendix}}

The first bound follows directly from Proposition \ref{prop: robust mean est error}
and Lemma \ref{lem: stability subG gradients}, using the choice of 
$\lambda_{\bSigma}$. In particular, with probability at least $1-\delta$,
uniformly over all admissible contamination mechanisms and all
$\btheta\in\mathcal{B}_{R_0}(\btheta^*)$,
\begin{align}
\twonorm{
g(\btheta)
-
\frac1K\sum_{k=1}^K \tE \nabla \ell(\zk{k},\btheta)
}
&\lesssim
\sqrt{\frac{d\log(R_0/\beta)+\log(1/\delta)}{nK}}
+
\epsilon\sqrt{\frac{\log(1/\epsilon)}{n}}
\nonumber\\
&\quad+
\sqrt{\epsilon\twonorm{\widehat{\bSigma}_{\btheta}-\bSigma_{\btheta}}}
+
\sqrt{\epsilon}h .
\label{eq: global_gradient_bound}
\end{align}
We now prove the bound for $g^{(k)}(\btheta)$. Recall that
\[
    \bar{\bg}_k(\btheta)
    :=
    \frac1n\sum_{i=1}^n \nabla\ell(z_i^{(k)},\btheta).
\]
By the definition of the shrinkage step,
\[
    g^{(k)}(\btheta)
    =
    \begin{cases}
        g(\btheta),
        &\quad \text{if } \|\bar{\bg}_k(\btheta)-g(\btheta)\|_2\le \lambda,\\[0.3em]
        g(\btheta)
        +
        \left(
        1-\dfrac{\lambda}{\|\bar{\bg}_k(\btheta)-g(\btheta)\|_2}
        \right)
        \{\bar{\bg}_k(\btheta)-g(\btheta)\},
        &\quad \text{otherwise}.
    \end{cases}
\]
Therefore, in both cases,
\begin{equation}\label{eq: shrinkage_to_local}
    \|g^{(k)}(\btheta)-\bar{\bg}_k(\btheta)\|_2
    \le \lambda .
\end{equation}
Indeed, if $\|\bar{\bg}_k(\btheta)-g(\btheta)\|_2\le\lambda$, then
$g^{(k)}(\btheta)=g(\btheta)$, and the claim follows. If
$\|\bar{\bg}_k(\btheta)-g(\btheta)\|_2>\lambda$, then
\[
    g^{(k)}(\btheta)-\bar{\bg}_k(\btheta)
    =
    -\frac{\lambda}{\|\bar{\bg}_k(\btheta)-g(\btheta)\|_2}
    \{\bar{\bg}_k(\btheta)-g(\btheta)\},
\]
so again $\|g^{(k)}(\btheta)-\bar{\bg}_k(\btheta)\|_2=\lambda$.

By the uniform sub-Gaussian concentration bound, with probability at least
$1-\delta$, 
\begin{equation}\label{eq: local_gradient_concentration}
    \max_{k\in S}
    \sup_{\btheta\in\mathcal{B}_{R_0}(\btheta^*)}
    \left\|
    \bar{\bg}_k(\btheta)
    -
    \tE\nabla\ell(\zk{k},\btheta)
    \right\|_2
    \lesssim
    \sqrt{\frac{d\log(R_0/\beta)+\log(K/\delta)}{n}}.
\end{equation}
Combining \eqref{eq: shrinkage_to_local} and
\eqref{eq: local_gradient_concentration}, we obtain, for all $k\in S$,
\begin{align}
\left\|
g^{(k)}(\btheta)
-
\tE\nabla\ell(\zk{k},\btheta)
\right\|_2
&\le
\left\|
g^{(k)}(\btheta)-\bar{\bg}_k(\btheta)
\right\|_2
+
\left\|
\bar{\bg}_k(\btheta)
-
\tE\nabla\ell(\zk{k},\btheta)
\right\|_2
\nonumber\\
&\lesssim
\lambda
+
\sqrt{\frac{d\log(R_0/\beta)+\log(K/\delta)}{n}} .
\label{eq: local_bound_gk}
\end{align}
By the triangle inequality,
\begin{align}
\|\bar{\bg}_k(\btheta)-g(\btheta)\|_2
&\le
\left\|
\bar{\bg}_k(\btheta)
-
\tE\nabla\ell(\zk{k},\btheta)
\right\|_2
+
\left\|
\tE\nabla\ell(\zk{k},\btheta)
-
\frac1K\sum_{k=1}^K\tE\nabla\ell(\zk{k},\btheta)
\right\|_2
\nonumber\\
&\quad+
\left\|
\frac1K\sum_{k=1}^K\tE\nabla\ell(\zk{k},\btheta)
-
g(\btheta)
\right\|_2 .
\end{align}
Using \eqref{eq: global_gradient_bound}, \eqref{eq: local_gradient_concentration},
and the definition of $h^{(k)}$, this gives
\begin{align}
\|\bar{\bg}_k(\btheta)-g(\btheta)\|_2
&\lesssim
\sqrt{\frac{d\log(R_0/\beta)+\log(K/\delta)}{n}}
+
\sqrt{\frac{d\log(R_0/\beta)+\log(1/\delta)}{nK}}
\nonumber\\
&\quad+
\epsilon\sqrt{\frac{\log(1/\epsilon)}{n}}
+
\sqrt{\epsilon\twonorm{\widehat{\bSigma}_{\btheta}-\bSigma_{\btheta}}}
+
\sqrt{\epsilon}h
+
h^{(k)} .
\label{eq: local_to_global_distance}
\end{align}
Now consider two cases. First, suppose
\begin{align}
&\sqrt{\frac{d\log(R_0/\beta)+\log(1/\delta)}{nK}}
+
\epsilon\sqrt{\frac{\log(1/\epsilon)}{n}}
+
\sqrt{\epsilon\twonorm{\widehat{\bSigma}_{\btheta}-\bSigma_{\btheta}}}
+
\sqrt{\epsilon}h
+
h^{(k)}
\nonumber\\
&\qquad\le c\lambda
\label{eq: small_global_case}
\end{align}
for a sufficiently small constant $c>0$. Since
\[
\lambda
\ge
C_1\sqrt{\frac{d\log(R_0/\beta)+\log(K/\delta)}{n}},
\]
with $C_1$ sufficiently large, \eqref{eq: local_to_global_distance} implies 
\[
    \|\bar{\bg}_k(\btheta)-g(\btheta)\|_2\le \lambda.
\]
Hence $g^{(k)}(\btheta)=g(\btheta)$, and therefore with probability at least $1-\delta$,
\begin{align}
\left\|
g^{(k)}(\btheta)
-
\tE\nabla\ell(\zk{k},\btheta)
\right\|_2
&=
\left\|
g(\btheta)
-
\tE\nabla\ell(\zk{k},\btheta)
\right\|_2
\nonumber\\
&\le
\left\|
g(\btheta)
-
\frac1K\sum_{j=1}^K\tE\nabla\ell(\zk{j},\btheta)
\right\|_2
+
h^{(k)}
\nonumber\\
&\lesssim
\sqrt{\frac{d\log(R_0/\beta)+\log(1/\delta)}{nK}}
+
\epsilon\sqrt{\frac{\log(1/\epsilon)}{n}}
\nonumber\\
&\quad+
\sqrt{\epsilon\twonorm{\widehat{\bSigma}_{\btheta}-\bSigma_{\btheta}}}
+
\sqrt{\epsilon}h
+
h^{(k)} .
\label{eq: global_bound_gk}
\end{align}

Second, suppose \eqref{eq: small_global_case} does not hold. Then
\[
\lambda
\lesssim
\sqrt{\frac{d\log(R_0/\beta)+\log(1/\delta)}{nK}}
+
\epsilon\sqrt{\frac{\log(1/\epsilon)}{n}}
+
\sqrt{\epsilon\twonorm{\widehat{\bSigma}_{\btheta}-\bSigma_{\btheta}}}
+
\sqrt{\epsilon}h
+
h^{(k)}.
\]
 Therefore, in this case,
\eqref{eq: local_bound_gk} gives the desired minimum bound.

Combining the two cases with \eqref{eq: local_bound_gk}, we conclude that, for
all $k\in S$, with probability at least $1-\delta$,
\begin{align}
\left\|
g^{(k)}(\btheta)
-
\tE\nabla\ell(\zk{k},\btheta)
\right\|_2
&\lesssim
\min\Bigg\{
\sqrt{\frac{d\log(R_0/\beta)+\log(1/\delta)}{nK}}
+
\epsilon\sqrt{\frac{\log(1/\epsilon)}{n}}
\nonumber\\
&\quad+
\sqrt{\epsilon\twonorm{\widehat{\bSigma}_{\btheta}-\bSigma_{\btheta}}}
+
\sqrt{\epsilon}h
+
h^{(k)},
\quad
\sqrt{\frac{d\log(R_0/\beta)+\log(K/\delta)}{n}}
+
\lambda
\Bigg\}.
\end{align}

\subsubsection{Proof of Theorem \ref{thm: parameter est error appendix}}
The results directly follow from Theorems \ref{thm: federated gradient descent} and \ref{thm: gradident est error appendix}.

% ------------------------------------------------
\subsection{Proofs of results in Section \ref{subsec: cov estimation}}
Theorem \ref{thm: est error cov appendix} is the more explicit version of Theorem \ref{thm: est error cov} in the main text, which provides a uniform bound on the estimation error of $\bSigma_{\btheta}$ for all $\btheta \in \Theta$. Corollary \ref{cor: gradient est error appendix} characterizes the gradient estimation error bounds for $g(\btheta)$ and $g^{(k)}(\btheta)$ by applying the bound in Theorem \ref{thm: est error cov appendix} to Theorem \ref{thm: gradident est error appendix}. Corollary \ref{cor: parameter alg error} in the main text follows directly from Corollary \ref{cor: parameter alg error appendix}. \Cref{thm: gradident est error appendix} is an important intermediate result that establishes the gradient estimation errors satisfy \Cref{asmp: gradient est error} with the corresponding rates.

\begin{theorem}\label{thm: est error cov appendix}
Under Assumptions \ref{asmp: subG gradient} and \ref{asmp: lipschitz loss}, for any $\beta \in (0 , R_0]$, with probability at least $1-\delta$, the output from Algorithm \ref{algo: naive est of Sigma heterogeneous} satisfies
\begin{align}
    \sup_{\btheta \in \Theta}\twonorm{\hSigma_{\btheta} - \bSigma_{\btheta}} &\lesssim \frac{1}{n} \bigg[\sqrt{\frac{d\log(R_0/\beta) + \log(1/\delta)}{K}} + \frac{d\log(R_0/\beta) + \log(1/\delta)}{K}\bigg]  + \epsilon \frac{\log(1/\epsilon)}{n} + L'^2\beta^2  \\
    &\quad + \frac{L'\beta}{n} + h^2 + \frac{\epsilon}{n}\Bigg(\sqrt{\frac{d\log(R_0/\beta) +\log(K/\delta)}{n}} + \frac{d\log(R_0/\beta) +\log(K/\delta)}{n}\Bigg).
\end{align}
\end{theorem}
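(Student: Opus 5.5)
Fix $\btheta$ first and make it uniform at the end with a $\beta$-net $\mathcal N_\beta$ of $\Theta$ ($|\mathcal N_\beta|\lesssim (R_0/\beta)^d$). By Assumption \ref{asmp: lipschitz loss}, every single-task covariance $\hSigmak{k}_{\btheta}$, and also $\bSigma_{\btheta}$, moves by at most $O(L'\beta/n + L'^2\beta^2)$ when $\btheta$ is replaced by its net point. This accounts for the $L'\beta$ terms in the bound.

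At a net point, write the population target as
\[
\bSigma_{\btheta}=\bar{\bSigma}_{\btheta}+\frac1K\sum_{k}(\bmuk{k}_{\btheta}-\bmu_{\btheta})(\bmuk{k}_{\btheta}-\bmu_{\btheta})^\top, \qquad \bar{\bSigma}_{\btheta}=\frac1K\sum_k\bSigmak{k}_{\btheta,\mathrm{pop}},
\]
where $\bSigmak{k}_{\btheta,\mathrm{pop}}=\frac1n\cov(\nabla\ell(\zk{k},\btheta))$. The second term has spectral norm at most $h^2$ by Assumption \ref{asmp: task heterogeneity}. It therefore suffices to show that $\hSigma_{\btheta}$ is close to $\bar{\bSigma}_{\btheta}$.

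\textbf{Clean tasks.} For each clean task, sub-Gaussian covariance concentration (Assumption \ref{asmp: subG gradient}, an $\epsilon$-net on the sphere, and a union bound over $k\in[K]$ and the net) gives
\[
\twonorm{\hSigmak{k}_{\btheta}-\bSigmak{k}_{\btheta,\mathrm{pop}}}\lesssim r:=\frac1n\Big(\sqrt{\tfrac{d\log(R_0/\beta)+\log(K/\delta)}{n}}+\tfrac{d\log(R_0/\beta)+\log(K/\delta)}{n}\Big).
\]
The average over any clean subset $S'$ with $|S'|\ge$ a constant fraction of $K$ is sharper. Using a Bernstein-type bound for the average of the $nK$ outer products (or equivalently $n|S'|$ sub-exponential terms), together with a union bound over subsets $S'$ differing from $[K]$ by at most $O(\epsilon K)$ indices, I expect
\[
\Big\|\frac{1}{|S'|}\sum_{k\in S'}\big(\hSigmak{k}_{\btheta}-\bSigmak{k}_{\btheta,\mathrm{pop}}\big)\Big\|_2\lesssim \frac1n\Big[\sqrt{\tfrac{d\log(R_0/\beta)+\log(1/\delta)}{K}}+\tfrac{d\log(R_0/\beta)+\log(1/\delta)}{K}\Big]+\frac{\epsilon\log(1/\epsilon)}{n}.
\]
This is the same computation as \eqref{eq: second_order_quadratic_concentration}. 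The population per-task covariances differ from $\bar{\bSigma}_{\btheta}$ by heterogeneity, which I will bound crudely by the $h^2$ term; if only $\twonorm{\bSigmak{k}_{\btheta,\mathrm{pop}}}\lesssim 1/n$ is usable, I will instead accept an $\epsilon/n$ bias from dropping tasks.

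\textbf{Selection step.} Let $q$ be the pairwise-distance quantile in Algorithm \ref{algo: naive est of Sigma heterogeneous}. Since there are at least $\binom{K(1-\epsilon)}{2}$ clean–clean pairs, each at distance at most $2r+O(1/n)$ (the bias between clean population covariances), I will show $q\lesssim r+$ bias. Each clean $k$ is within $q$ of at least $K(1-\epsilon)-1\ge K/4$ tasks, so every clean task enters $\widehat S_{\mathrm{safe}}$. A contaminated task $k\in\widehat S_{\mathrm{safe}}$ is within $q$ of at least $K/4$ indices; since $K/4>\epsilon K$, at least one of these is clean. Hence $\twonorm{\hSigmak{k}_{\btheta}-\bar{\bSigma}_{\btheta}}\lesssim q+r+$ bias.

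\textbf{Combining.} The average over $\widehat S_{\mathrm{safe}}$ equals the clean average plus at most an $\epsilon$-weighted contribution of contaminated matrices, each within $O(q+r)$ of the truth. This gives the error
\[
\text{clean-average bound}+\epsilon(q+r)+h^2,
\]
where $\epsilon(q+r)$ produces the $\frac{\epsilon}{n}\big(\sqrt{d/n}+d/n\big)$ term, and the remaining terms match the statement.

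The main obstacle is controlling the quantile $q$ sharply. The clean–clean pairs make up only a $(1-\epsilon)^2$ fraction of all pairs, so the quantile might be attained by a contaminated pair. This is fine, because the quantile level then lies at or below the maximum over clean pairs. A related difficulty is obtaining the $\epsilon/n$ rather than the $\epsilon\cdot(\text{bias})$ dependence when per-task population covariances are heterogeneous. Here I will use $\twonorm{\bSigmak{k}_{\btheta,\mathrm{pop}}}\lesssim 1/n$, which follows from sub-Gaussianity, so all per-task biases are $O(1/n)$ and contribute only $\epsilon/n$ after $\epsilon$-weighting.
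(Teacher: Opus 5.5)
Your plan largely coincides with the paper's proof. It separates off the $h^2$ mean-heterogeneity term, uses uniform subset concentration for the clean average, uses the per-task bound $r$, and controls each selected contaminated task through a clean neighbor within $q$, with $q$ at most the largest clean--clean distance. The gap is the step ``every clean task enters $\widehat S_{\mathrm{safe}}$'', which does not follow from what you have shown and cannot be guaranteed. You established an \emph{upper} bound, $q\le\max_{k,k'\in S}\twonorm{\hSigmak{k}_{\btheta}-\hSigmak{k'}_{\btheta}}$. Concluding that a clean $k$ is within $q$ of $K(1-\epsilon)-1$ tasks needs the reverse inequality, and this data-driven quantile has no such lower bound.

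For example, the adversary can place all contaminated matrices near the center of the clean cloud. Then the roughly $(2\epsilon-\epsilon^2)\binom{K}{2}$ pairs involving a contaminated task are all short. The quantile at level $\binom{K(1-\epsilon)}{2}/\binom{K}{2}$ therefore drops below about the largest $2\epsilon$ fraction of clean--clean distances. Clean tasks with atypically large fluctuations may then have more than $3K/4$ partners beyond $q$ and be discarded. This is not cosmetic. Your union bound over subsets differing from $[K]$ by $O(\epsilon K)$ indices, and the $O(\epsilon/n)$ bias from reweighting and dropping clean tasks, both require $|\widehat S_{\mathrm{safe}}\cap S|\ge K(1-O(\epsilon))$. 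Knowing only a constant fraction would leave you at the trivial $O(1/n)$ level.

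The missing ingredient is the counting argument of Lemma \ref{lem: S hat properties}(i).
\begin{itemize}
\item At most $\binom{K}{2}-\binom{K(1-\epsilon)}{2}\le\frac{K^2}{2}(2\epsilon-\epsilon^2)$ pairs lie above the quantile.
\item If $\Delta$ tasks are excluded, each has at least $3K/4$ partners above the quantile, so there are at least $\frac{3K}{4}\Delta-\binom{\Delta}{2}$ distinct such pairs.
\item Comparing the two forces $\Delta\le 2\epsilon K$ (for, e.g., $\epsilon\le 4/15$), hence $|\widehat S_{\mathrm{safe}}\cap S|\ge K(1-3\epsilon)$.
\end{itemize}
With this in hand, the stated bound follows by combining four pieces: your subset concentration on $S\cap\widehat S_{\mathrm{safe}}$; the at most $\epsilon K$ selected contaminated tasks, each of norm $\lesssim r+1/n$ by your neighbor argument; the $O(\epsilon/n)$ reweighting bias; and $h^2$.

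Two smaller fixes are also needed.
\begin{itemize}
\item Do not discretize $\hSigma_{\btheta}$ itself. Contaminated data need not satisfy Assumption \ref{asmp: lipschitz loss}, and the selected set changes with $\btheta$. Use the net only to make the clean-task bounds uniform in $\btheta$, then run the selection argument deterministically at each $\btheta$.
\item Differences among clean population covariances are not controlled by $h^2$, since Assumption \ref{asmp: task heterogeneity} constrains only gradient means. Your fallback $\twonorm{\bSigmak{k}_{\btheta}}\lesssim 1/n$ is the right tool.
\end{itemize}
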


The next two results follow by applying the bound in Theorem \ref{thm: est error cov appendix} to Theorems \ref{thm: gradident est error appendix} and \ref{thm: parameter est error appendix} respectively. We will assume the following tuniong parameter conditions hold:
\begin{align}
    \lambda &= C\sqrt{\frac{d\log(R_0/\beta) + \log(K/\delta)}{n}}, \label{eq: lambda choice 2}\\
    \lambda_{\bSigma} &= C'\bigg\{ \frac{\epsilon}{n}\bigg(\sqrt{\frac{d\log(R_0/\beta) +\log(K/\delta)}{n}} + \frac{d\log(R_0/\beta) +\log(K/\delta)}{n}\bigg) \\
    &\qquad + \frac{1}{n} \bigg[\sqrt{\frac{d\log(R_0/\beta) + \log(1/\delta)}{K}} + \frac{d\log(R_0/\beta) + \log(1/\delta)}{K}\bigg]  + \epsilon \frac{\log(1/\epsilon)}{n} + L'^2\beta^2 + \frac{L'\beta}{n} \\ 
    &\qquad + h\sqrt{\frac{d\log(R_0/\beta)+\log(1/\delta)}{nK} + \frac{\epsilon\log(1/\epsilon)}{n}}  + h^2\bigg\}, \label{eq: lambda_Sigma choice 2}
\end{align}
where $C, C' > 0$ are some sufficiently large constants.

\begin{corollary}\label{cor: gradient est error appendix}
    Assume $(\lambda, \lambda_{\bSigma})$ in Algorithms \ref{algo: robust federated gradient descent} and \ref{algo: robust mean estimation} satisfies the conditions in \eqref{eq: lambda choice 2} and \eqref{eq: lambda_Sigma choice 2}. Under Assumptions \ref{asmp: lipschitz loss} and \ref{asmp: subG gradient}, for any $\beta > 0$ satisfying $L'\beta \lesssim \frac{d\log(R_0/\beta) +\log (1/\delta)}{nK}  \wedge 1$ and any $\delta \gtrsim \exp\{-CK\epsilon\}$, with probability at least $1-\delta$, for all subset $S \subseteq [K]$ with $|S^c|/K \leq \epsilon$, all contamination mechanism $M \in \mathcal{M}_S$, we have
    \begin{align}
        \sup_{\btheta \in \Theta}\twonorm{g(\btheta) - \tE \nabla \ell (z, \btheta)} &\lesssim \sqrt{\frac{d\log(R_0/\beta) + \log(1/\delta)}{nK}} + \epsilon\sqrt{\frac{\log(1/\epsilon)}{n}} + \sqrt{\epsilon}h\\
        \quad + \frac{\epsilon}{\sqrt{n}}& \Bigg[\bigg(\frac{d\log(R_0/\beta) +\log(K/\delta)}{n}\bigg)^{1/4} \vee \bigg(\frac{d\log(R_0/\beta) +\log(K/\delta)}{n}\bigg)^{1/2}\Bigg], \\
        \sup_{\btheta \in \Theta}\twonorm{g^{(k)}(\btheta) - \tE \nabla \ell (\zk{k}, \btheta)} &\lesssim \min \Bigg\{\sqrt{\frac{d\log(R_0/\beta) + \log(1/\delta)}{nK}} + \epsilon\sqrt{\frac{\log(1/\epsilon)}{n}} + L'\beta + \sqrt{\epsilon}h + h^{(k)} \\
        &\hspace{-1.2cm} +\frac{\epsilon}{\sqrt{n}} \Bigg[\bigg(\frac{d\log(R_0/\beta) +\log(K/\delta)}{n}\bigg)^{1/4} \vee \bigg(\frac{d\log(R_0/\beta) +\log(K/\delta)}{n}\bigg)^{1/2}\Bigg], \\ &\sqrt{\frac{d\log(R_0/\beta) + \log(K/\delta)}{n}}\Bigg\}, \quad \forall k \in S.
    \end{align}
\end{corollary}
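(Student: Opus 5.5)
The plan is to substitute the covariance error bound of Theorem \ref{thm: est error cov appendix} into the black-box gradient bounds of Theorem \ref{thm: gradident est error appendix}, and then simplify. First I will check that the choice of $\lambda_{\bSigma}$ in \eqref{eq: lambda_Sigma choice 2} satisfies the requirement \eqref{eq: lambda_Sigma choice}. The bound on $\sup_{\btheta}\twonorm{\hSigma_{\btheta}-\bSigma_{\btheta}}$ from Theorem \ref{thm: est error cov appendix} holds with probability $1-\delta$. Each term of that bound appears, up to constants, in \eqref{eq: lambda_Sigma choice 2}; the extra $\frac{L'\beta}{\sqrt n}$ and $L'\beta h$ terms of \eqref{eq: lambda_Sigma choice} are dominated because $L'\beta\lesssim \frac{d\log(R_0/\beta)+\log(1/\delta)}{nK}\wedge 1$. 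So on the intersection of the two high-probability events (after rescaling $\delta$), \eqref{eq: lambda_Sigma choice} holds. Likewise \eqref{eq: lambda choice 2} satisfies \eqref{eq: lambda choice}, given the implicit sample-size condition that $\lambda\lesssim R_0$. Theorem \ref{thm: gradident est error appendix} therefore applies.

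Next I bound the term $\sqrt{\epsilon\twonorm{\hSigma_{\btheta}-\bSigma_{\btheta}}}$ using $\sqrt{a+b}\le\sqrt a+\sqrt b$, taking each summand of Theorem \ref{thm: est error cov appendix} in turn. Write $D=d\log(R_0/\beta)+\log(1/\delta)$.
\begin{itemize}
\item $\sqrt{\epsilon\cdot\frac1n\sqrt{D/K}}$. I plan to use AM--GM: $\sqrt{\epsilon (D/K)^{1/2}/n}\le \sqrt{D/(nK)}+\epsilon/\sqrt n$, since $\sqrt{xy}\le x+y$ with $x=\sqrt{D/(nK)}$ and $y=\epsilon/\sqrt n$.
\item $\sqrt{\epsilon D/(nK)}\le\sqrt{D/(nK)}$.
\item $\sqrt{\epsilon^2\log(1/\epsilon)/n}=\epsilon\sqrt{\log(1/\epsilon)/n}$.
\item $\sqrt{\epsilon}(L'\beta+\sqrt{L'\beta/n})$, which is absorbed by $\sqrt{D/(nK)}$ using the assumption on $\beta$.
\item $\sqrt{\epsilon}h$.
\item $\sqrt{\epsilon\cdot\frac{\epsilon}{n}(\sqrt{D'/n}+D'/n)}=\frac{\epsilon}{\sqrt n}[(D'/n)^{1/4}+(D'/n)^{1/2}]$, where $D'=d\log(R_0/\beta)+\log(K/\delta)$.
\end{itemize}
Summing these gives exactly the global bound claimed.

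For the local bound I take the minimum from Theorem \ref{thm: gradident est error appendix} and make two adjustments. In the second branch, with $\lambda$ chosen as in \eqref{eq: lambda choice 2}, $\sqrt{D'/n}+\lambda\asymp\sqrt{D'/n}$. In the first branch, I insert the same simplified expression for the covariance term. The $L'\beta$ term can be kept harmlessly in the statement. Uniformity over $S$, $M$, and $\btheta\in\Theta\subseteq\mathcal B_{R_0}(\bthetas)$ is inherited from both theorems, since the events there are already uniform.

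The main obstacle I expect is the AM--GM cross term $\sqrt{\epsilon\sqrt{D/K}/n}$: I must confirm that it is dominated by the displayed rate and does not create a new term such as $\epsilon^{1/2}(D/K)^{1/4}/\sqrt n$. The inequality $\sqrt{xy}\le x+y$ settles this. A second, bookkeeping point is checking that the $h\sqrt{\cdot}$ term in $\lambda_{\bSigma}$ contributes nothing beyond $\sqrt{\epsilon}h$. Since $\epsilon\lambda_{\bSigma}$ enters under the root, $\sqrt{\epsilon h\, r}\le \sqrt{\epsilon}h+r$, where $r$ is already among the displayed terms.
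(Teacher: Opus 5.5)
Your proposal is correct and takes the same route as the paper, which justifies this corollary only by saying it follows from plugging Theorem~\ref{thm: est error cov appendix} into Theorem~\ref{thm: gradident est error appendix}; your term-by-term simplification correctly fills in that step, once you note that the stray $\epsilon/\sqrt{n}$ from AM--GM is absorbed into $\epsilon\sqrt{\log(1/\epsilon)/n}$ because $\epsilon$ is bounded away from $1$. One bookkeeping fix: $r=\sqrt{\frac{d\log(R_0/\beta)+\log(1/\delta)}{nK}+\frac{\epsilon\log(1/\epsilon)}{n}}$ is not itself among the displayed terms, since it contains $\sqrt{\epsilon\log(1/\epsilon)/n}$, so split the root symmetrically as $\sqrt{\epsilon h r}\le\sqrt{\epsilon}\,h+\sqrt{\epsilon}\,r$ and then use $\sqrt{\epsilon}\,r\le\sqrt{\frac{d\log(R_0/\beta)+\log(1/\delta)}{nK}}+\epsilon\sqrt{\frac{\log(1/\epsilon)}{n}}$.
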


\begin{remark}
    Note that the gradient estimation error in Corollary \ref{cor: gradient est error appendix} only requires local smoothness and does not rely on strong convexity. This property allows Algorithms \ref{algo: robust federated gradient descent} and \ref{algo: robust mean estimation} to be extended to other convex and even nonconvex loss functions. 
    % Since the primary goal of this work is to highlight limitations of certain existing algorithms and to propose new directions, and given the broad scope of the current manuscript, we do not pursue this line of investigation further.
\end{remark}

\begin{corollary}\label{cor: parameter alg error appendix}
    Assume $(\lambda, \lambda_{\bSigma})$ in Algorithms \ref{algo: robust federated gradient descent} and \ref{algo: robust mean estimation} satisfies the conditions in \eqref{eq: lambda choice 2} and \eqref{eq: lambda_Sigma choice 2}. Under Assumptions \ref{asmp: risk function}, \ref{asmp: lipschitz loss} and \ref{asmp: subG gradient}, for any $\beta > 0$ satisfying $L\beta \lesssim \frac{d\log(R_0/\beta) +\log (1/\delta)}{nK}  \wedge 1 \wedge R_0^2$ and any $\delta \gtrsim \exp\{-CK\epsilon\}$, $nK \gtrsim R_0^{-2}[d\log(R_0/\beta)+\log(1/\delta)]$, $n \gtrsim R_0^{-2}\epsilon^2\log(1/\epsilon)$, $\frac{\epsilon}{\sqrt{n}}[(\frac{d\log(R_0/\beta) +\log(K/\delta)}{n})^{1/4} \vee (\frac{d\log(R_0/\beta) +\log(K/\delta)}{n})^{1/2}] \lesssim R_0$, $\sqrt{\epsilon} h \lesssim R_0$, $\max_{k \in [K]}\hk{k} \lesssim R_0$, with probability at least $1-\delta$, for all subset $S \subseteq [K]$ with $|S^c|/K \leq \epsilon$, all contamination mechanism $M \in \mathcal{M}_S$, we have
    \begin{align}
        \twonorm{\htheta_T - \btheta^*} &\lesssim (1-\kappa/2)^{T/2} \twonorm{\htheta_0 - \bthetas} + \sqrt{\frac{d\log(R_0/\beta) + \log(1/\delta)}{nK}} + \epsilon\sqrt{\frac{\log(1/\epsilon)}{n}} + \sqrt{\epsilon}h\\
        &\quad + \frac{\epsilon}{\sqrt{n}}\Bigg[\bigg(\frac{d\log(R_0/\beta) +\log(K/\delta)}{n}\bigg)^{1/4} \vee \bigg(\frac{d\log(R_0/\beta) +\log(K/\delta)}{n}\bigg)^{1/2}\Bigg].
    \end{align}
    \begin{align}
        \twonorm{\hthetak{k}_T - \bthetaks{k}} &\lesssim (1-\kappa/2)^{T/2} \twonorm{\hthetak{k}_0 - \bthetaks{k}} \\
        &\quad + \min \Bigg\{\sqrt{\frac{d\log(R_0/\beta) + \log(1/\delta)}{nK}} + \epsilon\sqrt{\frac{\log(1/\epsilon)}{n}}  + \sqrt{\epsilon}h + h^{(k)} \\
        &\hspace{2cm} +\frac{\epsilon}{\sqrt{n}}\Bigg[\bigg(\frac{d\log(R_0/\beta) +\log(K/\delta)}{n}\bigg)^{1/4} \vee \bigg(\frac{d\log(R_0/\beta) +\log(K/\delta)}{n}\bigg)^{1/2}\Bigg]\\
        &\hspace{1.8cm}, \sqrt{\frac{d\log(R_0/\beta) + \log(K/\delta)}{n}}\Bigg\}, \quad \forall k \in S.
    \end{align}
\end{corollary}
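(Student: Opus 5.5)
The plan is to obtain Corollary \ref{cor: parameter alg error appendix} by plugging the gradient bounds of Corollary \ref{cor: gradient est error appendix} into the deterministic contraction result, Theorem \ref{thm: federated gradient descent}. This mirrors how Theorem \ref{thm: parameter est error appendix} follows from Theorems \ref{thm: federated gradient descent} and \ref{thm: gradident est error appendix}.

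\textbf{Step 1: identify $\alpha$ and $\alpha^{(k)}$.} Work on the event of probability at least $1-\delta$ on which Corollary \ref{cor: gradient est error appendix} holds uniformly over $\btheta\in\Theta$, over all admissible $S$, and over all $M\in\mathcal{M}_S$. On this event, Assumption \ref{asmp: gradient est error} holds with $\alpha$ equal to the right-hand side of the first display of that corollary, and with $\alpha^{(k)}$ equal to the minimum expression in the second display. The $L'\beta$ term appearing there is absorbed by the condition on $\beta$: $L'\beta \lesssim \frac{d\log(R_0/\beta)+\log(1/\delta)}{nK}$, which is dominated by the square-root term.

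To obtain Corollary \ref{cor: gradient est error appendix} itself, one inserts the covariance bound of Theorem \ref{thm: est error cov appendix} into the term $\sqrt{\epsilon\|\hSigma_{\btheta}-\bSigma_{\btheta}\|_2}$ of Theorem \ref{thm: gradident est error appendix}. The contributions then reduce as follows:
\begin{itemize}
\item $\sqrt{\epsilon h^2}=\sqrt{\epsilon}h$;
\item $\sqrt{\epsilon\cdot\epsilon\log(1/\epsilon)/n}=\epsilon\sqrt{\log(1/\epsilon)/n}$;
\item $\sqrt{\frac{\epsilon}{n}\bigl[\sqrt{\tfrac{d\cdots}{K}}+\tfrac{d\cdots}{K}\bigr]}\lesssim \sqrt{\tfrac{d\cdots}{nK}}+\epsilon/\sqrt{n}$, by AM--GM;
\item $\sqrt{\epsilon\cdot\frac{\epsilon}{n}\bigl(\sqrt{d/n}+d/n\bigr)}$ gives exactly the $\frac{\epsilon}{\sqrt n}\bigl[(d/n)^{1/4}\vee(d/n)^{1/2}\bigr]$ term.
\end{itemize}
The choice of $\lambda_{\bSigma}$ in \eqref{eq: lambda_Sigma choice 2} is \eqref{eq: lambda_Sigma choice} with this covariance bound substituted, so the hypotheses of Theorem \ref{thm: gradident est error appendix} are met.

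\textbf{Step 2: verify the initialization and radius conditions of Theorem \ref{thm: federated gradient descent}.} These require $\|\htheta_0-\bthetas\|_2+\eta\alpha\sqrt{2(2-\kappa)/\kappa^2}\le R_0$, and the analogous condition for each task $k$. With $\kappa\in(0,1)$ fixed, this reduces to $\alpha\lesssim R_0$ and $\alpha^{(k)}\lesssim R_0$, which follow from the stated hypotheses:
\begin{itemize}
\item $\sqrt{\frac{d\cdots}{nK}}\lesssim R_0$ follows from $nK\gtrsim R_0^{-2}[\cdots]$;
\item $\epsilon\sqrt{\log(1/\epsilon)/n}\lesssim R_0$ follows from $n\gtrsim R_0^{-2}\epsilon^2\log(1/\epsilon)$;
\item $\sqrt{\epsilon}h\lesssim R_0$ and the $\frac{\epsilon}{\sqrt n}[\cdot]$ term $\lesssim R_0$ are assumed directly;
\item $h^{(k)}\lesssim R_0$ is assumed, and $\alpha^{(k)}$ is anyway bounded by its first argument.
\end{itemize}
The initialization radius is absorbed by the same argument as in Theorem \ref{thm: parameter est error appendix}, by shrinking constants if needed.

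\textbf{Step 3: apply Theorem \ref{thm: federated gradient descent}.} This gives the contraction $(1-\kappa/2)^{T/2}$ plus $\eta\alpha\sqrt{2(2-\kappa)/\kappa^2}\lesssim\alpha$, and likewise for each $k\in S$ with $\alpha^{(k)}$. These are exactly the displayed bounds. The statement uses $(1-\kappa/2)^{T/2}$ rather than $(1-\kappa^{(k)}/2)^{T/2}$ for the local iterates, which is consistent if one takes $\eta^{(k)}=\eta$.

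\textbf{Main obstacle.} The main difficulty is bookkeeping rather than any new idea. The constraints on $\beta$ must simultaneously make $L'\beta$ and $L'^2\beta^2$ negligible, both in $\lambda_{\bSigma}$ and in the covariance bound. The statement writes $L\beta$ where $L'\beta$ is presumably intended; I would read it as $L'\beta$. In addition, the term $h\sqrt{\cdots}$ in $\lambda_{\bSigma}$ must only contribute $\sqrt{\epsilon}\cdot\sqrt{h\,\cdot}$, which AM--GM bounds by $\sqrt{\epsilon}h+\sqrt{\frac{d\cdots}{nK}}+\epsilon\sqrt{\log(1/\epsilon)/n}$.
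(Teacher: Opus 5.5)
Your argument is correct and is essentially the paper's. The paper obtains this corollary by substituting the covariance bound of Theorem \ref{thm: est error cov appendix} into Theorem \ref{thm: parameter est error appendix}, which is itself Theorem \ref{thm: federated gradient descent} composed with Theorem \ref{thm: gradident est error appendix}; you perform the same composition in the order Theorem \ref{thm: est error cov appendix} $\to$ Corollary \ref{cor: gradient est error appendix} $\to$ Theorem \ref{thm: federated gradient descent}, and your reading of $L\beta$ as $L'\beta$ is right.

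The one inaccurate remark concerns the initialization term, which cannot be ``absorbed by shrinking constants.'' As the paper does implicitly, you must assume $\twonorm{\htheta_0-\bthetas}\le R_0/2$ and $\twonorm{\hthetak{k}_0-\bthetaks{k}}\le R_0/2$. Your stated conditions then serve only to make $\eta\alpha\sqrt{2(2-\kappa)/\kappa^2}$ and its local analogue at most $R_0/2$.
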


\begin{remark}
    When $T \gtrsim \log(nK)$, $\delta \asymp e^{-d} + e^{-K\epsilon}$, $\twonorm{\htheta_0 - \bthetas} \lesssim 1$, we have
    \begin{align}
        \twonorm{\htheta_T - \btheta^*} 
        &\lesssim \sqrt{\frac{d\log(R_0/\beta)}{nK}} + \epsilon\sqrt{\frac{\log(1/\epsilon)}{n}}  + \sqrt{\epsilon}h \\
        &\quad + \frac{\epsilon}{\sqrt{n}}\Bigg[\bigg(\frac{d\log(R_0/\beta) + \log K}{n}\bigg)^{1/4}\vee \bigg(\frac{d\log(R_0/\beta) + \log K}{n}\bigg)^{1/2}\Bigg],
    \end{align}
    with probability at least $1-e^{-d}-e^{-K\epsilon}$, where the second inequality is due to Cauchy-Schwarz applied to the last term. Comparing with the lower bound $\sqrt{\frac{d}{nK}} + \frac{\epsilon}{\sqrt{n}} + \sqrt{\epsilon}h$, it is clear that when $n \gtrsim d$ or $\epsilon^2(1\vee \frac{d}{n}) \lesssim \frac{n}{K}$, the upper bound of $\twonorm{\htheta_T - \btheta^*}$ is minimax optimal up to logarithmic factors.
\end{remark}

% -----------------------------
\subsubsection{Proof of Theorem \ref{thm: est error cov appendix}}

Our argument relies on the following two intermediate lemmas which we will prove later.

\begin{lemma}\label{lem: cov uniform concentration}
    Under Assumptions \ref{asmp: subG gradient} and \ref{asmp: lipschitz loss}, for all $\beta \in (0,R_0]$ such that $L'\beta < c$ with $c$ a sufficiently small constant, with probability at least $1-\delta$,
    \begin{equation}
        \begin{aligned}
        \sup_{\btheta \in \Theta}\max_{k \in S}\twonorm{\hSigmak{k}_{\btheta} - \bSigmak{k}_{\btheta}}
        &\lesssim \frac{1}{n}\Bigg(\sqrt{\frac{d\log(R_0/\beta) + \log(K/\delta)}{n}} \,\vee\, \frac{d\log(R_0/\beta) + \log(K/\delta)}{n}\Bigg)(1+L'\beta) \\
        &\quad + L'^2\beta^2 + \frac{1}{n}L'\beta.
        \end{aligned}
    \end{equation}
\end{lemma}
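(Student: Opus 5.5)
The plan is to prove the bound for a fixed $\btheta$ on a $\beta$-net of $\Theta$ and then extend to all of $\Theta$ by a Lipschitz discretisation based on Assumption \ref{asmp: lipschitz loss}. A union bound over the net of size $(R_0/\beta)^d$ and over the $K$ tasks produces the logarithmic factor $d\log(R_0/\beta)+\log(K/\delta)$. Throughout, $\bSigmak{k}_{\btheta} = \frac{1}{n}\cov(\nabla \ell(\zk{k},\btheta))$ is the target of the rescaled single-task covariance $\hSigmak{k}_{\btheta}$.

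\emph{Step 1 (fixed $\btheta$, fixed $k$).} Write $\bg_i = \nabla\ell(\zk{k}_i,\btheta) - \tE\nabla\ell(\zk{k},\btheta)$. These are i.i.d., mean zero and sub-Gaussian with $\|\bg_i\|_{\psi_2}\lesssim 1$ by Assumption \ref{asmp: subG gradient}. Then
\[
n\hSigmak{k}_{\btheta} - n\bSigmak{k}_{\btheta} = \Big(\frac1n\sum_{i}\bg_i\bg_i^\top - \tE \bg\bg^\top\Big) - \bar{\bg}\bar{\bg}^\top .
\]
The first term is handled by standard sample-covariance concentration for sub-Gaussian vectors: a $1/4$-net on $\mathbb{S}^{d-1}$ combined with Bernstein's inequality for the sub-exponential variables $(\bv^\top\bg_i)^2$. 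This gives operator-norm error $\lesssim \sqrt{t/n}\vee t/n$ with probability $1-e^{-t}$. The rank-one mean term is $\lesssim t/n$ by sub-Gaussian concentration of $\bar{\bg}$. I will take $t \asymp d\log(R_0/\beta)+\log(K/\delta)$ so that the union bound covers the net and all $k\in S$. Dividing by $n$ then gives the leading term.

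\emph{Step 2 (discretisation).} For $\btheta\in\Theta$, pick a net point $\btheta'$ with $\|\btheta-\btheta'\|_2\le\beta$. On the event of Assumption \ref{asmp: lipschitz loss}, every centred per-sample gradient moves by at most $2L'\beta$. I will expand the difference of outer products as
\[
\ba\ba^\top-\bb\bb^\top=(\ba-\bb)(\ba-\bb)^\top+(\ba-\bb)\bb^\top+\bb(\ba-\bb)^\top .
\]
Combined with Cauchy--Schwarz, this bounds the empirical perturbation by $L'^2\beta^2$ plus $L'\beta$ times the square root of the empirical covariance at $\btheta'$. That square root is at most $(\|n\bSigmak{k}_{\btheta'}\|_2 + \text{Step 1 error})^{1/2}\lesssim 1$. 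The population $\bSigmak{k}$ moves by the same order, since the Lipschitz event holds with probability at least $1-(nK)^{-Cd}$ and I truncate on its complement.

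\emph{Main obstacle.} I expect the obstacle to be the bookkeeping of the $1/n$ scaling in Step 2. The raw perturbation $L'^2\beta^2 + L'\beta$ of the unnormalised covariance becomes $(L'^2\beta^2+L'\beta)/n$ after rescaling. This is smaller than the stated $L'^2\beta^2+\frac1n L'\beta$, so the target bound holds with room to spare. The $(1+L'\beta)$ factor multiplying the statistical term comes from the cross term at scale $\beta$. The condition $L'\beta<c$ keeps everything $O(1)$.

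Finally, the only delicate probabilistic point is that Assumption \ref{asmp: lipschitz loss} holds only with high probability, rather than deterministically. I will intersect its event with the concentration events and absorb its failure probability into $\delta$.
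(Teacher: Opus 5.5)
Your proposal is correct and follows essentially the same route as the paper: fixed-$\btheta$ sub-Gaussian covariance concentration, a union bound over a $\beta$-net of $\Theta$ and over the $K$ tasks, and a Lipschitz discretisation based on Assumption~\ref{asmp: lipschitz loss}. The one minor difference is that you expand the outer-product difference around the net point $\btheta'$, so the Cauchy--Schwarz cross term involves only the already-controlled empirical covariance at $\btheta'$. The paper instead bounds the increment by $L'\beta(\twonorm{\hSigmak{k}_{\btheta}}+\twonorm{\hSigmak{k}_{\btheta'}})$ and then absorbs a $CL'\beta M$ term into the left-hand side, which is where it uses $L'\beta<c$; your version skips that self-bounding step and gives the slightly sharper discretisation error $(L'^2\beta^2+L'\beta)/n$.
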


\begin{lemma}\label{lem: S hat properties}
    Under Assumptions \ref{asmp: subG gradient} and \ref{asmp: lipschitz loss}, for all $\beta \in (0,R_0]$ such that $L'\beta < c$ with $c$ a sufficiently small constant, we have the following two properties of $\widehat{S}_{\textup{safe}}$ hold:
    \begin{enumerate}[(i)]
        \item There exists a subset $S_0 \subseteq S \cap \widehat{S}_{\textup{safe}}$ with $|S_0| \geq K(1-3\epsilon)$, when $\epsilon < 1/3$;
        \item With probability at least $1-\delta$, for all $k \in \widehat{S}_{\textup{safe}}$, $\sup_{\btheta}\twonorm{\hSigmak{k}_{\btheta}} \lesssim \frac{1}{n}\Bigg(\sqrt{\frac{d\log(R_0/\beta) +\log(K/\delta)}{n}} \vee \frac{d\log(R_0/\beta) +\log(K/\delta)}{n}\Bigg)(1+L'\beta) + L'^2\beta^2 + \frac{1}{n}L'\beta + \frac{1}{n}$.
    \end{enumerate}
\end{lemma}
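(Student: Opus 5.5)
The statement is Lemma~\ref{lem: S hat properties}. It has two parts, and I would prove them separately.

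\emph{Part (i): a combinatorial counting argument.} Let $q$ denote the quantile threshold in Algorithm~\ref{algo: naive est of Sigma heterogeneous}, that is, the $\binom{K(1-\epsilon)}{2}/\binom{K}{2}$ quantile of the pairwise distances $\{\twonorm{\hSigmak{k_1}_{\btheta}-\hSigmak{k_2}_{\btheta}}\}_{k_1\neq k_2}$.
\begin{enumerate}[(1)]
\item By the definition of a quantile, at most $\binom{K}{2}-\binom{K(1-\epsilon)}{2}$ unordered pairs have distance exceeding $q$. Call these pairs ``bad.''
\item The number of bad pairs is at most $\epsilon K^2$ up to lower-order terms. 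This holds because $\binom{K}{2}-\binom{(1-\epsilon)K}{2}\approx \frac{K^2}{2}(1-(1-\epsilon)^2)\le \epsilon K^2$.
\item Take $S_0$ to be the set of $k\in S$ that belong to at most $K/3$ bad pairs, counted with multiplicity over partners.
\item Each such $k$ is close (distance at most $q$) to at least $K-1-K/3\ge K/4$ indices. So $S_0\subseteq \widehat{S}_{\textup{safe}}$, provided $K$ is not tiny.
\item Each index outside this set carries more than $K/3$ bad pairs. Since each bad pair has two endpoints, the number of indices with more than $K/3$ bad pairs is at most $2\epsilon K^2/(K/3)=6\epsilon K$.
\item This gives $|S_0|\ge K(1-\epsilon)-6\epsilon K$. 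That is weaker than the claimed $1-3\epsilon$, so the constants need sharpening. I would count more carefully using $\binom{K}{2}-\binom{(1-\epsilon)K}{2}\le \epsilon K^2-\epsilon^2K^2/2$, together with the ``at least $K/4$'' threshold: an index fails only if it has more than $3K/4$ bad partners, giving at most $\frac{2\epsilon K^2}{3K/4}<3\epsilon K$ failing indices. Then $|S_0|\ge |S|-\epsilon K\cdot$(adjust) $\ge K(1-3\epsilon)$ for $\epsilon<1/3$.
\end{enumerate}
Tightening these constants is the only delicate point in part (i). Note that this part is deterministic.

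\emph{Part (ii): propagating the bound from clean to safe indices.} Take any $k\in\widehat{S}_{\textup{safe}}$. It is within distance $q$ of at least $K/4$ indices $k'$. Since $|S^c|\le\epsilon K<K/4$ and $|S_0|\ge K(1-3\epsilon)$, I would first show that at least one of these partners $k'$ lies in $S$; for example, $K/4-3\epsilon K>0$ when $\epsilon<1/12$, and otherwise I would appeal to the partners that are clean. This gives
\[
\twonorm{\hSigmak{k}_{\btheta}}\le \twonorm{\hSigmak{k'}_{\btheta}}+q.
\]

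Both terms on the right are then controlled through clean-task covariances.
\begin{itemize}
\item \emph{The term $\twonorm{\hSigmak{k'}_{\btheta}}$.} Lemma~\ref{lem: cov uniform concentration} gives $\twonorm{\hSigmak{k'}_{\btheta}-\bSigmak{k'}_{\btheta}}\lesssim$ the stated error term. Assumption~\ref{asmp: subG gradient} gives $\twonorm{\bSigmak{k'}_{\btheta}}\lesssim 1/n$.
\item \emph{The quantile $q$.} The number of pairs inside $S$ is at least $\binom{K(1-\epsilon)}{2}$. So the $\binom{K(1-\epsilon)}{2}/\binom K2$ quantile is at most the largest distance among clean pairs. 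For clean $k_1,k_2$, each $\hSigmak{k_j}_{\btheta}$ has norm at most $1/n$ plus the concentration error. Hence their distance is at most twice the bound from Lemma~\ref{lem: cov uniform concentration} plus $2\cdot O(1/n)$.
\end{itemize}
Combining the two bounds yields the claim, uniformly in $\btheta$, on the event of Lemma~\ref{lem: cov uniform concentration}.

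The main obstacle is part (i): the counting constants, and ensuring the quantile threshold interacts correctly with the ``$K/4$ indices'' rule. The probabilistic part (ii) reduces directly to Lemma~\ref{lem: cov uniform concentration}.
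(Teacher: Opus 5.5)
Part (i) has a genuine gap: your counting cannot give the stated constant $1-3\epsilon$. Each bad pair has two endpoints, so the bad-degrees sum to at most $2\bigl[\binom{K}{2}-\binom{K(1-\epsilon)}{2}\bigr]\le K^2(2\epsilon-\epsilon^2)$. Every index of $D:=[K]\setminus\widehat{S}_{\textup{safe}}$ has more than $3K/4$ bad partners. Handshake counting therefore gives only $|D|<\tfrac43(2\epsilon-\epsilon^2)K\approx\tfrac83\epsilon K$. Since $|S|$ may be as small as $K(1-\epsilon)$, this yields
$|S\cap\widehat{S}_{\textup{safe}}|\ge |S|-|D|> K\bigl(1-\tfrac{11}{3}\epsilon+\tfrac43\epsilon^2\bigr)$, which is below $K(1-3\epsilon)$ for every $\epsilon<1/2$. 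The step ``$|S_0|\ge|S|-\epsilon K\cdot(\text{adjust})\ge K(1-3\epsilon)$'' is thus unsupported; you need $|D|\le 2\epsilon K$.

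The missing idea is that halving every pair is wasteful. A bad pair is counted twice only if both its endpoints lie in $D$, so the number of bad pairs is at least $\tfrac{3K}{4}|D|-\binom{|D|}{2}$. The map $\Delta\mapsto\tfrac{3K}{4}\Delta-\binom{\Delta}{2}$ is increasing for $\Delta\le 3K/4$; larger $\Delta$ is excluded, for small $\epsilon$, by your handshake bound. At $\Delta=2\epsilon K$ this map is at least $\tfrac32\epsilon K^2-2\epsilon^2K^2$, which is $\ge\tfrac{K^2}{2}(2\epsilon-\epsilon^2)$ exactly when $\epsilon\le 1/3$. The number of bad pairs is strictly below $\tfrac{K^2}{2}(2\epsilon-\epsilon^2)$, so $|D|\le 2\epsilon K$ and $|S\cap\widehat{S}_{\textup{safe}}|\ge K-\epsilon K-2\epsilon K$. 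This is the paper's argument. Your weaker bound $K(1-\tfrac{11}{3}\epsilon)$ would still suffice where the lemma is used, in the proof of Theorem~\ref{thm: est error cov appendix}, since only $|S_0|\ge K(1-C\epsilon)$ matters there. It does not, however, prove the lemma as stated.

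Part (ii) is correct and more explicit than the paper, which only cites Lemma~\ref{lem: cov uniform concentration} and the definition of $\widehat{S}_{\textup{safe}}$. Bounding the quantile by the largest clean--clean distance is the right move. A clean close partner exists directly by pigeonhole from $|S^c|\le\epsilon K<K/4$, so the detour through $S_0$ and the case split at $\epsilon<1/12$ are unnecessary. Conversely, some condition like $\epsilon<1/4$ is genuinely needed for (ii). Otherwise a block of at least $K/4$ contaminated tasks with identical, arbitrarily large $\hSigmak{k}_{\btheta}$ would all be declared safe.
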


Recall our previous notations $\bar{\bg}_k(\btheta)
    := \frac{1}{n}\sum_{i=1}^n\nabla\ell(z_i^{(k)},\btheta)$, $\bmuk{k}_{\btheta} = \tE\nabla\ell(\zk{k}, \btheta) = \tE\bar{\bg}_k(\btheta)$, and $\bmu_{\btheta} = \frac{1}{K}\sum_{k=1}^K \bmuk{k}_{\btheta}$. Also, we write $\widehat{S}_{\textup{safe}}$ as $\widehat{S}$. Define $\widetilde{\bSigma}_{\btheta} = \frac{1}{K}\sum_{k=1}^K \bSigmak{k}_{\btheta} = \frac{1}{K}\sum_{k=1}^K\tE(\bar{\bg}_k(\btheta) - \bmuk{k}_{\btheta})(\bar{\bg}_k(\btheta) - \bmuk{k}_{\btheta})^\top$. We also denote RHS of the inequality in Lemma \ref{lem: S hat properties}.(\rom{2}) as $\mathcal{T}$.

Notice that $\bSigma_{\btheta} = \frac{1}{K}\sum_{k=1}^K\tE(\bar{\bg}_k(\btheta) - \bmu_{\btheta})(\bar{\bg}_k(\btheta) - \bmu_{\btheta})^\top = \widetilde{\bSigma}_{\btheta} + \frac{1}{K}\sum_{k=1}^K (\bmuk{k}_{\btheta} - \bmu_{\btheta})(\bmuk{k}_{\btheta} - \bmu_{\btheta})^\top$. Then by Lemma \ref{lem: S hat properties}.(\rom{1}), we have there exists a subset $S_0 \subseteq S \cap \widehat{S}$ with $|S_0| \geq K(1-3\epsilon)$, such that
\begin{align}
    \hSigma_{\btheta} - \bSigma_{\btheta} &= \frac{1}{|\hS|}\sum_{k \in \hS}\hSigmak{k}_{\btheta} - \bigg[\widetilde{\bSigma}_{\btheta} + \frac{1}{K}\sum_{k=1}^K (\bmuk{k}_{\btheta} - \bmu_{\btheta})(\bmuk{k}_{\btheta} - \bmu_{\btheta})^\top\bigg] \\
    &= \frac{1}{|\hS|}(\sum_{k \in S_0}\hSigmak{k}_{\btheta} + \sum_{k \in \hS \backslash S_0}\hSigmak{k}_{\btheta}) - \widetilde{\bSigma}_{\btheta} - \frac{1}{K}\sum_{k=1}^K (\bmuk{k}_{\btheta} - \bmu_{\btheta})(\bmuk{k}_{\btheta} - \bmu_{\btheta})^\top \\
    &= \frac{1}{|\hS|}\sum_{k \in S_0}(\hSigmak{k}_{\btheta} - \bSigmak{k}_{\btheta}) + \frac{1}{|\hS|}\sum_{k \in \hS\backslash S_0}\hSigmak{k}_{\btheta} + (\frac{1}{|\hS|} - \frac{1}{K})\sum_{k \in S_0}\bSigmak{k}_{\btheta} - \frac{1}{K}\sum_{k \in [K] \backslash S_0}\bSigmak{k}_{\btheta} \\
    &\quad - \frac{1}{K}\sum_{k=1}^K (\bmuk{k}_{\btheta} - \bmu_{\btheta})(\bmuk{k}_{\btheta} - \bmu_{\btheta})^\top,
\end{align}
By \eqref{eq: second_order_quadratic_concentration}, with probability at least $1-\delta$, we have
\begin{align}
    \frac{1}{|\hS|}\sup_{\btheta \in \Theta}\bigg\|\sum_{k \in S_0}(\hSigmak{k}_{\btheta} - \bSigmak{k}_{\btheta})\bigg\|_2 &\lesssim \sup_{S':|S'| \geq K(1-3\epsilon)} \sup_{\btheta \in \Theta}\twonorm{\frac{1}{|S'|}\sum_{k \in S'}(\hSigmak{k}_{\btheta} - \bSigmak{k}_{\btheta})} \\
    &\lesssim \frac{1}{n}\left[\sqrt{\frac{d\log(R_0/\beta)+\log(1/\delta)}{K}}+\frac{d\log(R_0/\beta)+\log(1/\delta)}{K}\right]+\epsilon\frac{\log(1/\epsilon)}{n}.
\end{align}
By Lemma \ref{lem: S hat properties}.(\rom{2}), with probability at least $1-\delta$, we have
\begin{equation}
    \frac{1}{|\hS|}\sup_{\btheta \in \Theta}\bigg\|\sum_{k \in \hS\backslash S_0}\hSigmak{k}_{\btheta}\bigg\|_2 \lesssim \epsilon \sup_{\btheta \in \Theta}\max_{k \in \hS}\twonorm{\hSigmak{k}_{\btheta}} \lesssim \epsilon \mathcal{T}.
\end{equation}
Note that $\bSigmak{k}_{\btheta} = \cov(\bar{g}_k(\btheta)) = \frac{1}{n}\cov(\nabla \ell(\zk{k}, \btheta))$, therefore $\twonorm{\bSigmak{k}_{\btheta}} \lesssim 1/n$ for all $k = 1:K$. And by Lemma \ref{lem: S hat properties}.(\rom{1}), we have $|\hS| \geq K(1-3\epsilon)$, which implies
\begin{align}
    (\frac{1}{|\hS|} - \frac{1}{K})\bigg\|\sum_{k \in S_0}\bSigmak{k}_{\btheta}\bigg\|_2 &\lesssim \epsilon \sup_{\btheta \in \Theta}\max_{k}\twonorm{\bSigmak{k}_{\btheta}} \lesssim \frac{\epsilon}{n},\\
    \frac{1}{K}\bigg\|\sum_{k \in [K] \backslash S_0}\bSigmak{k}_{\btheta}\bigg\|_2 &\lesssim \epsilon \sup_{\btheta \in \Theta}\max_{k}\twonorm{\bSigmak{k}_{\btheta}} \lesssim \frac{\epsilon}{n}.
\end{align}
And by the definition of $h^2$, we have
\begin{equation}
    \frac{1}{K}\sup_{\btheta \in \Theta}\bigg\|\sum_{k=1}^K (\bmuk{k}_{\btheta} - \bmu_{\btheta})(\bmuk{k}_{\btheta} - \bmu_{\btheta})^\top\bigg\|_2 \lesssim h^2.
\end{equation}
Putting everything together, we have
\begin{align}
    \twonorm{\hSigma - \bSigma}
    &\lesssim \frac{1}{n} \bigg[\sqrt{\frac{d\log(R_0/\beta) + \log(1/\delta)}{K}} + \frac{d\log(R_0/\beta) + \log(1/\delta)}{K}\bigg]  + \epsilon \frac{\log(1/\epsilon)}{n} + L'^2\beta^2 + \frac{L'\beta}{n} + h^2 \\
    &\quad + \frac{\epsilon}{n}\Bigg(\sqrt{\frac{d\log(R_0/\beta) +\log(K/\delta)}{n}} \vee \frac{d\log(R_0/\beta) +\log(K/\delta)}{n}\Bigg)(1+L'\beta).
\end{align}
This completes the proof.

% -------------------------------
\subsubsection{Proof of Lemma \ref{lem: cov uniform concentration}}
Recall that $\bSigma_{\btheta} \coloneqq \cov(\frac{1}{n}\sum_{i=1}^n\nabla \ell(\zk{k}_i, \btheta)) = \frac{1}{n}\cov(\nabla \ell(\zk{k}, \btheta))$, which satisfies $\twonorm{\bSigma_{\btheta}}  = \frac{1}{n} \twonorm{\cov(\nabla \ell(\zk{k}, \btheta))} \lesssim \frac{1}{n}$ by sub-Gaussianity of $\nabla \ell(\zk{k}, \btheta)$.

By standard concentration arguments, for any $\btheta \in \Theta$,
\begin{equation}
    \tP(\twonorm{\hSigmak{k}_{\btheta} - \bSigmak{k}_{\btheta}} > t) \lesssim 9^d \exp\{-Cn[(t/\twonorm{\bSigmak{k}_{\btheta}})^2 \wedge (t/\twonorm{\bSigmak{k}_{\btheta}})]\} \lesssim 9^d \exp\{-Cn[(nt)^2 \wedge (nt)]\}.
\end{equation}
Therefore, by union bounds, with probability at least $1-\delta$,
\begin{equation}
    \max_{k \in S}\twonorm{\hSigmak{k}_{\btheta} - \bSigmak{k}_{\btheta}} \lesssim \frac{1}{n}\times \Bigg(\sqrt{\frac{d+\log(K/\delta)}{n}} \vee \frac{d+\log(K/\delta)}{n}\Bigg).
\end{equation}
By Lipschitzness of $\nabla \ell(z, \btheta)$ w.r.t. $\btheta$ and Cauchy-Schwarz inequality, we have
\begin{align}
n \twonorm{\bSigma_{\btheta'} - \bSigma_{\btheta}}
&= \|\tE\big[(\nabla \ell(z,\btheta') - \tE \nabla \ell(z,\btheta'))(\nabla \ell(z,\btheta') - \tE \nabla \ell(z,\btheta'))^\top\big] \\
&\qquad - \tE\big[(\nabla \ell(z,\btheta) - \tE \nabla \ell(z,\btheta))(\nabla \ell(z,\btheta) - \tE \nabla \ell(z,\btheta))^\top\big]\|_2 \\
&= \sup_{\bv \in S^{d-1}} \Big| \bv^\top \tE\Big[\big(\nabla \ell(z,\btheta) - \tE \nabla \ell(z,\btheta) + \nabla \ell(z,\btheta') - \nabla \ell(z,\btheta) + \tE \nabla \ell(z,\btheta) - \tE \nabla \ell(z,\btheta')\big) \\
&\qquad\qquad \big(\nabla \ell(z,\btheta) - \tE \nabla \ell(z,\btheta) + \nabla \ell(z,\btheta') - \nabla \ell(z,\btheta) + \tE \nabla \ell(z,\btheta) - \tE \nabla \ell(z,\btheta')\big)^\top\Big] \bv \\
&\qquad - \bv^\top \tE\Big[(\nabla \ell(z,\btheta) - \tE \nabla \ell(z,\btheta))(\nabla \ell(z,\btheta) - \tE \nabla \ell(z,\btheta))^\top\Big] \bv \Big| \\
&\leq \sup_{\bv \in S^{d-1}} \tE[\bv^\top(\nabla \ell(z,\btheta) - \tE\nabla \ell(z,\btheta))(\nabla \ell(z,\btheta') - \nabla \ell(z,\btheta))^\top\bv] \\
&\quad + \sup_{\bv \in S^{d-1}} \tE[\bv^\top(\nabla \ell(z,\btheta') - \nabla \ell(z,\btheta))(\nabla \ell(z,\btheta) - \tE\nabla \ell(z,\btheta))^\top\bv] \\
&\quad + \sup_{\bv \in S^{d-1}} \tE[\bv^\top(\nabla \ell(z,\btheta') - \nabla \ell(z,\btheta))(\nabla \ell(z,\btheta') - \nabla \ell(z,\btheta))^\top\bv]\\
&\quad + \sup_{\bv \in S^{d-1}} \tE[\bv^\top(\nabla \ell(z,\btheta') - \nabla \ell(z,\btheta))(\tE \nabla \ell(z,\btheta) - \tE \nabla \ell(z,\btheta'))^\top\bv] \\
&\quad + \sup_{\bv \in S^{d-1}} \tE[\bv^\top(\nabla \ell(z,\btheta') - \nabla \ell(z,\btheta))(\nabla \ell(z,\btheta') - \nabla \ell(z,\btheta))^\top\bv]\\
&\quad + \sup_{\bv \in S^{d-1}} \tE[\bv^\top(\tE \nabla \ell(z,\btheta) - \tE \nabla \ell(z,\btheta'))(\tE \nabla \ell(z,\btheta) - \tE \nabla \ell(z,\btheta'))^\top\bv] \\
&\leq \twonorm{\bSigma_{\btheta}}\cdot L'\twonorm{\btheta - \btheta'} +  L'\twonorm{\btheta - \btheta'}\cdot \twonorm{\bSigma_{\btheta}} + L'^2 \twonorm{\btheta - \btheta'}^2 \\
&\quad + L'^2 \twonorm{\btheta - \btheta'}^2 + L'^2 \twonorm{\btheta - \btheta'}^2 + L'^2 \twonorm{\btheta - \btheta'}^2\\
&\le 2L' \twonorm{\bSigma_{\btheta}} \twonorm{\btheta - \btheta'} + 4 L'^2 \twonorm{\btheta - \btheta'}^2
\end{align}
%{\color{red}[M: are you sure about the last inequality? I don't see it.]} \yt{I have unpacked it and I hope it is clearer now.}

%
Similarly, for any $\btheta,\btheta'\in\Theta$ with
$\twonorm{\btheta-\btheta'}\le \beta$, the Lipschitz condition gives
\begin{align}
    \twonorm{\hSigmak{k}_{\btheta'}-\hSigmak{k}_{\btheta}}
    &\lesssim
    L'^2\beta^2
    +
    L'\beta
    \left(
        \twonorm{\hSigmak{k}_{\btheta'}}
        +
        \twonorm{\hSigmak{k}_{\btheta}}
    \right),
    \label{eq: empirical_cov_lipschitz_increment}
\end{align}
and by sub-Gaussianity
\begin{align}
    \twonorm{\bSigmak{k}_{\btheta'}-\bSigmak{k}_{\btheta}}
    \lesssim
    L'^2\beta^2+\frac{L'\beta}{n}.
    \label{eq: population_cov_lipschitz_increment}
\end{align}

Let $\mathcal N_\beta$ be a $\beta$-cover of $\Theta$ with
$|\mathcal N_\beta|\lesssim (R_0/\beta)^d$. Define
\[
    M
    :=
    \sup_{\btheta\in\Theta}
    \max_{k\in S}
    \twonorm{\hSigmak{k}_{\btheta}-\bSigmak{k}_{\btheta}},
\]
and
\[
    M_{\mathcal N}
    :=
    \max_{\btheta\in\mathcal N_\beta}
    \max_{k\in S}
    \twonorm{\hSigmak{k}_{\btheta}-\bSigmak{k}_{\btheta}}.
\]
By the fixed-$\btheta$ concentration inequality and a union bound over
$k\in S$ and $\btheta\in\mathcal N_\beta$, with probability at least
$1-\delta$,
\[
    M_{\mathcal N}
    \lesssim
    \frac1n
    \left(
    \sqrt{
    \frac{d\log(R_0/\beta)+\log(K/\delta)}{n}
    }
    \vee
    \frac{d\log(R_0/\beta)+\log(K/\delta)}{n}
    \right).
\]

Now fix any $\btheta\in\Theta$ and choose
$\btheta'\in\mathcal N_\beta$ such that
$\twonorm{\btheta-\btheta'}\le\beta$. For every $k\in S$,
\begin{align}
    \twonorm{\hSigmak{k}_{\btheta}-\bSigmak{k}_{\btheta}}
    &\le
    \twonorm{\hSigmak{k}_{\btheta'}-\bSigmak{k}_{\btheta'}}
    +
    \twonorm{\hSigmak{k}_{\btheta}-\hSigmak{k}_{\btheta'}}
    +
    \twonorm{\bSigmak{k}_{\btheta}-\bSigmak{k}_{\btheta'}}.
\end{align}
Using \eqref{eq: empirical_cov_lipschitz_increment} and
\eqref{eq: population_cov_lipschitz_increment}, together with
\[
    \twonorm{\hSigmak{k}_{\btheta}}
    \le
    \twonorm{\hSigmak{k}_{\btheta}-\bSigmak{k}_{\btheta}}
    +
    \twonorm{\bSigmak{k}_{\btheta}}
    \le
    M+\frac{C}{n},
\]
and
\[
    \twonorm{\hSigmak{k}_{\btheta'}}
    \le
    \twonorm{\hSigmak{k}_{\btheta'}-\bSigmak{k}_{\btheta'}}
    +
    \twonorm{\bSigmak{k}_{\btheta'}}
    \le
    M_{\mathcal N}+\frac{C}{n},
\]
we obtain
\begin{align}
    \twonorm{\hSigmak{k}_{\btheta}-\bSigmak{k}_{\btheta}}
    &\le
    M_{\mathcal N}
    +
    C L'\beta M
    +
    C L'\beta M_{\mathcal N}
    +
    C\frac{L'\beta}{n}
    +
    C L'^2\beta^2 .
\end{align}
Taking the supremum over $\btheta\in\Theta$ and $k\in S$ yields
\[
    M
    \le
    (1+C L'\beta)M_{\mathcal N}
    +
    C L'\beta M
    +
    C\frac{L'\beta}{n}
    +
    C L'^2\beta^2 .
\]
For $L'\beta$ sufficiently small, the term $C L'\beta M$ can be absorbed into
the left-hand side. Hence
\[
    M
    \lesssim
    (1+L'\beta)M_{\mathcal N}
    +
    \frac{L'\beta}{n}
    +
    L'^2\beta^2 .
\]
Consequently, with probability at least $1-\delta$,
\begin{align}
    &\sup_{\btheta\in\Theta}
    \max_{k\in S}
    \twonorm{\hSigmak{k}_{\btheta}-\bSigmak{k}_{\btheta}}
    \nonumber\\
    &\lesssim
    \frac1n
    \left(
    \sqrt{
    \frac{d\log(R_0/\beta)+\log(K/\delta)}{n}
    }
    \vee
    \frac{d\log(R_0/\beta)+\log(K/\delta)}{n}
    \right)
    (1+L'\beta)
    +
    L'^2\beta^2
    +
    \frac{L'\beta}{n}.
\end{align}

% \begin{align}
%     n\twonorm{\hSigmak{k}_{\btheta'} - \hSigmak{k}_{\btheta}}
% &\le 8 L'^2 \twonorm{\btheta - \btheta'}^2 + 2L' \twonorm{\hSigmak{k}_{\btheta'}} \twonorm{\btheta - \btheta'} + 2L' \twonorm{\hSigmak{k}_{\btheta}} \twonorm{\btheta - \btheta'} \\
% &\le 8 L'^2 \twonorm{\btheta - \btheta'}^2 + 2 \MC{\sup_{\theta}?}\sup_{k \in [K]} \twonorm{\hSigmak{k}_{\btheta} - \bSigma_{\btheta}}\cdot 2L' \twonorm{\btheta - \btheta'} + 4 \sup_{\btheta} \twonorm{\bSigmak{k}_{\btheta}}\cdot  L' \twonorm{\btheta - \btheta'}.
% \end{align}
% Hence, for a $\beta$-cover $\mathcal{N}_\beta$ of $\Theta$, we have $|\mathcal{N}_\beta| \lesssim (R_0/\beta)^d$ and
% \begin{align}
%     &\sup_{\btheta \in \Theta}\max_{k \in S}\twonorm{\hSigmak{k}_{\btheta} - \bSigmak{k}_{\btheta}} \\
%     &\lesssim \sup_{\btheta \in \Theta}\twonorm{\bSigmak{k}_{\btheta}}\cdot \Bigg(\sqrt{\frac{d\log(R_0/\beta)+\log(K/\delta)}{n}} \vee \frac{d\log(R_0/\beta)+\log(K/\delta)}{n}\Bigg)(1+L'\beta) \\
%     &\quad + L'^2\beta^2 + \sup_{\btheta \in \Theta}\twonorm{\bSigmak{k}_{\btheta}}L'\beta \\
%     &\lesssim \frac{1}{n}\Bigg(\sqrt{\frac{d\log(R_0/\beta) +\log(K/\delta)}{n}} \vee \frac{d\log(R_0/\beta) +\log(K/\delta)}{n}\Bigg)(1+L'\beta) + L'^2\beta^2 + \frac{1}{n}L'\beta,
% \end{align}
% with probability at least $1-\delta$.

% --------------------------------
\subsubsection{Proof of Lemma \ref{lem: S hat properties}}
For simplicity, we write $\widehat{S}_{\textup{safe}}$ as $\widehat{S}$ here.

\noindent (i) The number of $(k, k')$ pairs above the quantile in the definition of $\hS$ is at most $\binom{K}{2} - \binom{K(1-\epsilon)}{2} = \frac{K^2(2\epsilon-\epsilon^2) - K\epsilon}{2}$. Note that for each task $k \notin \widehat{S}$, there are at least $3K/4$ different values of $k' \in [K]$ such that $\twonorm{\hSigmak{k}_{\btheta} - \hSigmak{k}_{\btheta}}$ is bigger than the quantile. If we assume there are $\Delta$ tasks not in $\widehat{S}$, then we must have %
\begin{equation}\label{eq: a random eq}
    \frac{3K}{4}\Delta - \binom{\Delta}{2} \leq \frac{K^2(2\epsilon-\epsilon^2) - K\epsilon}{2} \leq \frac{K^2}{2}(2\epsilon - \epsilon^2),
\end{equation}
where the LHS is a lower bound of the number of different pairs $(k, k')$ above the quantile, by the definition of $\widehat{S}$. Note that the $\textup{LHS} \geq \frac{(\frac{3}{2}K - \Delta)\Delta}{2}$. When $\epsilon \leq \frac{4}{15}$, it is easy to verify that $\frac{(\frac{3}{2}K - 2K\epsilon)\times 2K\epsilon}{2} \geq \frac{K^2}{2}(2\epsilon - \epsilon^2)$. Because of the monotonicity of the LHS of \eqref{eq: a random eq} as a function of $\Delta \leq \frac{3K}{4}$, we must have $\Delta \leq 2K\epsilon$. Therefore $|\hS \cap S| \geq K - 2K\epsilon - K\epsilon = K(1-3\epsilon)$.

\noindent (ii) This is by Lemma \ref{lem: cov uniform concentration} and the definition of $\widehat{S}$.

% ------------------------------------------------
\subsection{Proofs of results in Section \ref{subsec: examples}}

% --------------------------------
\subsubsection{Proof of Lemma \ref{lem: glm verify asmp}}

Note that 
\begin{equation}
    \nabla \ell(\btheta,z) = \bx(\varphi'(\langle \bx,\btheta\rangle)-y), \quad \nabla^2 \ell(\btheta,z) = \varphi''(\langle \bx,\btheta\rangle)\bx\bx^\top.
\end{equation}
(\rom{1}) For Assumption \ref{asmp: risk function}, it suffices to verify that
\begin{equation}
    C_1 \leq \lambdamin(\tE[\varphi''(\langle \bxk{k},\btheta\rangle)\bxk{k}(\bxk{k})^\top]) \leq \lambdamax(\tE[\varphi''(\langle \bxk{k},\btheta\rangle)\bxk{k}(\bxk{k})^\top]) \leq C_2,
\end{equation}
for some constants $C_1, C_2 > 0$. Consider $\|\bxk{k}_i\|_{\psi_2} \leq C_{\psi}$.

Since $\sup_{u}\varphi''(u) \lesssim 1$, we have $\lambdamax(\tE[\varphi''(\langle \bxk{k},\btheta\rangle)\bxk{k}(\bxk{k})^\top]) \lesssim \lambdamax(\tE[\bxk{k}(\bxk{k})^\top]) \lesssim 1$. On the other hand, for any $\tilde{C} > 0$ and $\bm{u}$ with $\twonorm{\bm{u}} = 1$, we have
\begin{align}
    \tE[\varphi''(\langle \bxk{k},\btheta\rangle)(\bu^\top\bxk{k})^2] &\geq \tE[\varphi''(\langle \bxk{k},\btheta\rangle)(\bu^\top\bxk{k})^2\mathds{1}(|\langle \bxk{k},\btheta\rangle| \leq \tilde{C})] \\
    &\geq \inf_{u \in [-\tilde{C}, \tilde{C}]}\varphi''(u)\cdot \Big(\tE[(\bu^\top\bxk{k})^2] - \tE[(\bu^\top\bxk{k})^2\mathds{1}(|\langle \bxk{k},\btheta\rangle| > \tilde{C})]\Big) \\
    &\geq \inf_{u \in [-\tilde{C}, \tilde{C}]}\varphi''(u)\cdot \Big(\underline{\lambda} - \sqrt{\tE[(\bu^\top\bxk{k})^4]}\sqrt{\tP(|\langle \bxk{k},\btheta\rangle| > \tilde{C})} \Big) \\
    &\geq \inf_{u \in [-\tilde{C}, \tilde{C}]}\varphi''(u)\cdot \Big(\underline{\lambda} - CC_{\psi}\exp\{-C'\tilde{C}^2\} \Big).
\end{align}
By choosing a sufficiently large constant $\tilde{C}$, we can get the $\text{RHS} \gtrsim 1$.

For Assumption \ref{asmp: lipschitz loss}, it suffices to show that $\max_{i, k}\lambdamax(\varphi''(\langle \bxk{k},\btheta\rangle)\bxk{k}_i(\bxk{k}_i)^\top) \lesssim d + \log(nK)$ with probability at least $1-(nK)^{-Cd}$ for some constant $C > 0$. Since $\sup_{u}\varphi''(u) \lesssim 1$, we have $\max_{i, k}\lambdamax(\varphi''(\langle \bxk{k}_i,\btheta\rangle)\bxk{k}_i(\bxk{k}_i)^\top) \lesssim  \max_{i, k}\twonorm{\bxk{k}_i}^2$. Then the bound $\max_{i, k}\twonorm{\bxk{k}_i}^2 \lesssim d + \log(nK)$ with probability at least $1-(nK)^{-Cd}$ holds immediately by the sub-Gaussianity of $\bxk{k}_i$'s and the union bound.

\noindent (\rom{2}) Note that
\begin{align}
    \nabla \ell(\btheta,\zk{k}_i) &= \bxk{k}_i(\varphi'(\langle \bxk{k}_i,\btheta\rangle)-\yk{k}_i) \\
    &= \bxk{k}_i[\varphi'(\langle \bxk{k}_i,\btheta\rangle) - \varphi'(\langle \bxk{k}_i,\bthetaks{k}\rangle)] + \bxk{k}_i[\varphi'(\langle \bxk{k}_i,\bthetaks{k}\rangle) - \yk{k}_i].
\end{align}
Then both terms above are sub-Gaussian with constant variance proxy by either the first or the second condition.

% --------------------------------
\subsubsection{Proof of Lemma \ref{lem: glm verify asmp heterogeneity}}

Since $\bxk{k}$'s share the same distribution, we write them as $\bx$ in some cases. Note that
    \begin{equation}
        \nabla \mLk{k}(\btheta) = \tE[\bxk{k}(\varphi'(\langle \bxk{k}, \btheta \rangle) - \yk{k})] = \tE[\bxk{k}(\varphi'(\langle \bxk{k}, \btheta \rangle) - \varphi'(\langle \bxk{k}, \bthetaks{k} \rangle))], 
    \end{equation}
    where the second inequality is due to the fact that $\tE[\yk{k}|\bxk{k}=\bx] = \varphi'(\langle \bx, \bthetaks{k} \rangle)$. Therefore, we have
    \begin{align}
        \twonorm{\nabla \mLk{k}(\btheta) - \nabla \mL(\btheta)} &= \twonorm{\frac{1}{K}\sum_{k'=1}^K\tE[\bx(\varphi'(\langle \bx, \bthetaks{k} \rangle) - \varphi'(\langle \bx, \bthetaks{k'} \rangle))]} \\
        &\leq \twonorm{\frac{1}{K}\sum_{k'=1}^K\tE[\bx\bx^\top (\bthetaks{k}-\bthetaks{k'})\varphi''(\langle \bx, t_k\bthetaks{k} + (1-t_k)\bthetaks{k'}\rangle)]} \\
        &\lesssim \frac{1}{K}\sum_{k'=1}^K \twonorm{\bthetaks{k}-\bthetaks{k'}} \\
        &\lesssim \max_{k=1:K}\min_{\otheta}\twonorm{\bthetaks{k}-\otheta}.
    \end{align}
    This implies that
    \begin{align}
        h^2 &\leq \frac{1}{K}\sum_{k=1}^K \twonorm{\nabla \mLk{k}(\btheta) - \nabla \mL(\btheta)}^2 \\
        &\leq \frac{1}{K}\sum_{k=1}^K \bigg[\frac{1}{K}\sum_{k'=1}^K \twonorm{\bthetaks{k}-\bthetaks{k'}}\bigg]^2 \\
        &\lesssim \frac{1}{K^2}\sum_{k, k'}\twonorm{\bthetaks{k}-\bthetaks{k'}}^2 \\
        &\lesssim \max_{k=1:K}\min_{\otheta}\twonorm{\bthetaks{k}-\otheta}^2,
    \end{align}
    where the second last inequality is due to Jensen's inequality.

% ------------------------------------------------------
% Appendix: Additional numerical results
% ------------------------------------------------------

\section{Additional numerical results for Section \ref{sec: experiment}}\label{sec: experiment supp}

This appendix collects the simulation and real-data results omitted from the main text. The data generation, contamination mechanism, evaluation metrics, and benchmark implementations are the same as those described in Section \ref{sec: experiment}.

% ------------------------------------------------------
\subsection{Additional simulation results}\label{subsec: simulation supp}

% ---------------------------
\subsubsection{Linear regression with increasing number of tasks}

The first setting uses $n=d=50$, contamination proportion $\epsilon=0.2$, and varying numbers of tasks $K \in \{10,20,30,40,50,60\}$. Table \ref{tab:sim-linear-K} reports the linear-regression global and local estimation errors.

The global error of our estimator decreases from $0.724$ to $0.291$ as $K$ grows, and it is uniformly the smallest among the reported methods. The local error of our estimator also improves from $1.081$ to $0.980$ and is the smallest across all reported methods and all values of $K$. For each setting, we perform $100$ replications and report the average error in the tables. In most cases, the advantage of our method is substantial, in the sense that the gap between our method and the benchmarks is larger than twice the standard deviation of the error across replications.

\begin{table}[!ht]
\centering
\caption{Linear regression with $n=d=50$, $\epsilon=0.2$, and varying $K$}
\label{tab:sim-linear-K}
\setlength{\tabcolsep}{5pt}
\begin{adjustbox}{width=\textwidth}
\begin{tabular}{lcccccccccccc}
\toprule
& \multicolumn{6}{c}{Global error $\twonorm{\htheta - \bthetas}$} & \multicolumn{6}{c}{Local error $|S|^{-1}\sum_{k \in S}\twonorm{\hthetak{k} - \bthetaks{k}}$} \\
\cmidrule(lr){2-7} \cmidrule(lr){8-13}
Method$\backslash K$ & 10 & 20 & 30 & 40 & 50 & 60 & 10 & 20 & 30 & 40 & 50 & 60 \\
\midrule
Ours & \textbf{0.724} & \textbf{0.499} & \textbf{0.400} & \textbf{0.352} & \textbf{0.324} & \textbf{0.291} & \textbf{1.081} & \textbf{1.017} & \textbf{0.990} & \textbf{0.987} & \textbf{0.985} & \textbf{0.980} \\
Average & 13.357 & 13.165 & 13.108 & 13.064 & 13.041 & 13.027 & 13.406 & 13.218 & 13.163 & 13.110 & 13.095 & 13.080 \\
Single-task & -- & -- & -- & -- & -- & -- & 1.899 & 1.903 & 1.907 & 1.890 & 1.901 & 1.907 \\
Median & 1.152 & 0.941 & 0.868 & 0.835 & 0.794 & 0.777 & 1.564 & 1.476 & 1.455 & 1.445 & 1.428 & 1.422 \\
Trimmed mean & 1.202 & 1.064 & 1.033 & 1.027 & 0.989 & 0.989 & 1.602 & 1.557 & 1.560 & 1.562 & 1.544 & 1.547 \\
Krum & 1.964 & 1.901 & 1.882 & 1.821 & 1.799 & 1.687 & 2.231 & 2.222 & 2.214 & 2.170 & 2.160 & 2.071 \\
Bulyan & 1.070 & 0.774 & \textit{0.645} & 0.567 & 0.514 & 0.470 & 1.499 & 1.376 & 1.333 & 1.313 & 1.291 & 1.283 \\
Filtering & \textit{0.864} & \textit{0.624} & \textit{0.530} & \textit{0.469} & \textit{0.439} & \textit{0.402} & 1.357 & \textit{1.293} & \textit{1.279} & 1.273 & 1.263 & 1.258 \\
MoM-Filtering & \textit{0.864} & \textit{0.624} & \textit{0.530} & \textit{0.416} & \textit{0.380} & \textit{0.345} & 1.357 & \textit{1.293} & \textit{1.279} & \textit{1.255} & \textit{1.246} & \textit{1.242} \\
MoM-Krum & 1.247 & 1.210 & 1.100 & 1.091 & 1.068 & 1.042 & 1.624 & 1.655 & 1.605 & 1.610 & 1.594 & 1.582 \\
ARMUL & 1.233 & 1.132 & 1.097 & 1.094 & 1.067 & 1.071 & \textit{1.306} & \textit{1.227} & \textit{1.190} & \textit{1.184} & \textit{1.173} & \textit{1.174} \\
History & \textit{0.907} & \textit{0.729} & 0.678 & 0.656 & 0.621 & 0.616 & 1.393 & 1.351 & 1.350 & 1.351 & 1.339 & 1.342 \\
Bucketing & 1.436 & 1.239 & 1.206 & 1.197 & 1.155 & 1.148 & 1.786 & 1.683 & 1.679 & 1.678 & 1.655 & 1.654 \\
Mean-reg & 3.127 & 3.117 & 3.119 & 3.102 & 3.146 & 3.194 & 1.775 & 1.784 & 1.814 & 1.824 & 1.857 & 1.903 \\
Dirty & 2.930 & 2.650 & 2.614 & 2.982 & 3.089 & 3.076 & 1.743 & 1.766 & 1.755 & 1.767 & 1.789 & 1.784 \\
RMTFL & 3.074 & 2.974 & 2.975 & 2.980 & 2.992 & 2.983 & 1.743 & 1.718 & 1.713 & 1.710 & 1.713 & 1.730 \\
RLRMTL & 2.731 & 3.017 & 3.146 & 3.285 & 3.350 & 3.334 & \textit{1.273} & 1.471 & 1.568 & 1.686 & 1.745 & 1.720 \\
\bottomrule
\end{tabular}
\end{adjustbox}
\end{table}

% ---------------------------
\subsubsection{Logistic regression with increasing number of tasks}

We also repeat the varying-$K$ study under binary logistic regression with $n=d=50$. Clean tasks use the same coefficient heterogeneity model as in the linear model but with Bernoulli responses generated with the logistic link. In particular $
     \bthetaks{k} \sim N(\bthetas, \sigma^2 \bm{I}_d/d)$ and $ \bx_i^{(k)} \sim N(0, \bm{I}_d)$
where $\bthetas = 3d^{-1/2}\bm{1}_d$.  Contaminated tasks use shifted covariates $\bx_i^{(k)} \sim N(2\times \bm{1}_d,\bm{I}_d)$, a sign-reversed coefficient vector $-3\bthetas$. The index set $S^c$ of contaminated tasks is randomly selected from $[K]$ with size $\epsilon K$. Table \ref{tab:sim-logistic-K} reports the corresponding estimation errors.

For both global and local estimation, our estimator has the smallest error for every reported value of $K$.

\begin{table}[!ht]
\centering
\caption{Logistic regression with $n=d=50$, $\epsilon=0.2$, and varying $K$}
\label{tab:sim-logistic-K}
\setlength{\tabcolsep}{4pt}
\begin{adjustbox}{width=\textwidth}
\begin{tabular}{lcccccccccccc}
\toprule
& \multicolumn{6}{c}{Global error $\twonorm{\htheta - \bthetas}$} & \multicolumn{6}{c}{Local error $|S|^{-1}\sum_{k \in S}\twonorm{\hthetak{k} - \bthetaks{k}}$} \\
\cmidrule(lr){2-7} \cmidrule(lr){8-13}
Method$\backslash K$ & 10 & 20 & 30 & 40 & 50 & 60 & 10 & 20 & 30 & 40 & 50 & 60 \\
\midrule
Ours & \textbf{0.992} & \textbf{0.726} & \textbf{0.640} & \textbf{0.586} & \textbf{0.556} & \textbf{0.539} & \textbf{1.683} & \textbf{1.571} & \textbf{1.548} & \textbf{1.526} & \textbf{1.519} & \textbf{1.521} \\
Average & 2.774 & 2.675 & 2.639 & 2.628 & 2.619 & 2.612 & 3.469 & 3.421 & 3.406 & 3.398 & 3.392 & 3.394 \\
Single-task & -- & -- & -- & -- & -- & -- & 2.747 & 2.759 & 2.755 & 2.752 & 2.754 & 2.759 \\
Median & 1.808 & 1.679 & 1.609 & 1.597 & 1.589 & 1.580 & 2.534 & 2.488 & 2.454 & 2.449 & 2.447 & 2.450 \\
Trimmed mean & 1.945 & 1.886 & 1.850 & 1.851 & 1.852 & 1.849 & 2.680 & 2.686 & 2.677 & 2.681 & 2.685 & 2.693 \\
Krum & 2.414 & 2.397 & 2.383 & 2.362 & 2.325 & 2.312 & 2.828 & 2.827 & 2.817 & 2.800 & 2.758 & 2.747 \\
Bulyan & \textit{1.411} & \textit{1.026} & \textit{0.862} & 0.761 & 0.694 & 0.644 & \textit{1.957} & \textit{1.740} & \textit{1.671} & 1.624 & 1.604 & \textit{1.588} \\
Filtering & \textit{1.169} & \textit{0.874} & \textit{0.746} & \textit{0.656} & \textit{0.608} & \textit{0.561} & \textit{1.815} & \textit{1.666} & \textit{1.616} & \textit{1.577} & \textit{1.558} & \textit{1.550} \\
MoM-Filtering & \textit{1.169} & \textit{0.874} & \textit{0.746} & \textit{0.658} & \textit{0.603} & \textit{0.579} & \textit{1.815} & \textit{1.666} & \textit{1.616} & \textit{1.611} & \textit{1.591} & 1.591 \\
MoM-Krum & 2.701 & 2.361 & 2.212 & 2.111 & 1.938 & 1.818 & 3.284 & 2.960 & 2.845 & 2.731 & 2.568 & 2.470 \\
ARMUL & 2.181 & 2.144 & 2.128 & 2.128 & 2.122 & 2.116 & 2.695 & 2.782 & 2.879 & 2.907 & 2.903 & 2.905 \\
History & 2.441 & 2.272 & 2.209 & 2.193 & 2.179 & 2.168 & 3.151 & 3.043 & 3.005 & 2.993 & 2.984 & 2.983 \\
Bucketing & 2.774 & 2.672 & 2.634 & 2.624 & 2.615 & 2.607 & 3.468 & 3.418 & 3.402 & 3.394 & 3.388 & 3.389 \\
Mean-reg & 1.769 & 1.736 & 1.739 & 1.757 & 1.767 & 1.772 & 2.322 & 2.344 & 2.369 & 2.388 & 2.401 & 2.417 \\
Dirty & 2.196 & 2.322 & 2.386 & 2.399 & 2.387 & 2.374 & 2.848 & 3.040 & 3.094 & 3.088 & 3.072 & 3.054 \\
RMTFL & 2.169 & 1.978 & 2.452 & 2.452 & 2.447 & 2.442 & 2.762 & 2.658 & 2.621 & 2.663 & 2.701 & 2.740 \\
RLRMTL & 2.506 & 2.469 & 2.452 & 2.452 & 2.447 & 2.442 & 3.216 & 3.237 & 3.238 & 3.237 & 3.235 & 3.239 \\
\bottomrule
\end{tabular}
\end{adjustbox}
\end{table}

% ---------------------------
\subsubsection{Linear regression with varying contamination level}

We next consider the setting $n=d=50$, the number of tasks $K=40$, and varying contamination proportions $\epsilon \in \{0,0.05,0.10,0.15,0.20,0.25\}$. The data generation process and contamination mechanism follow the updated linear simulation script described above. Table \ref{tab:sim-epsilon} reports the global and local estimation errors.

Table \ref{tab:sim-epsilon} shows that our method remains stable as contamination increases: the global error rises only from $0.317$ to $0.365$, and the local error stays essentially flat from $0.981$ to $0.989$. Average becomes unstable under contamination, Bucketing degrades at higher contamination levels, and Bulyan is unavailable at $\epsilon=0.25$.

\begin{table}[!ht]
\centering
\caption{Linear regression with $n=d=50$, $K=40$, and varying contamination level $\epsilon$}
\label{tab:sim-epsilon}
\setlength{\tabcolsep}{4pt}
\begin{adjustbox}{width=\textwidth}
\begin{tabular}{lcccccccccccc}
\toprule
& \multicolumn{6}{c}{Global error $\twonorm{\htheta - \bthetas}$} & \multicolumn{6}{c}{Local error $|S|^{-1}\sum_{k \in S}\twonorm{\hthetak{k} - \bthetaks{k}}$} \\
\cmidrule(lr){2-7} \cmidrule(lr){8-13}
Method$\backslash \epsilon$ & 0.00 & 0.05 & 0.10 & 0.15 & 0.20 & 0.25 & 0.00 & 0.05 & 0.10 & 0.15 & 0.20 & 0.25 \\
\midrule
Ours & 0.317 & \textbf{0.326} & \textbf{0.331} & \textbf{0.342} & \textbf{0.352} & \textbf{0.365} & \textit{0.981} & \textbf{0.985} & \textit{0.985} & \textbf{0.988} & \textbf{0.989} & \textbf{0.989} \\
Average & 0.317 & 11.045 & 11.556 & 11.732 & 13.068 & 13.106 & 1.231 & 11.108 & 11.617 & 11.787 & 13.118 & 13.157 \\
Single-task & -- & -- & -- & -- & -- & -- & 1.901 & 1.909 & 1.911 & 1.903 & 1.902 & 1.899 \\
Median & 0.385 & 0.420 & 0.502 & 0.635 & 0.834 & 1.106 & 1.250 & 1.255 & 1.285 & 1.337 & 1.443 & 1.613 \\
Trimmed mean & 0.317 & 0.395 & 0.548 & 0.751 & 1.021 & 1.377 & 1.231 & 1.247 & 1.303 & 1.396 & 1.557 & 1.808 \\
Krum & 1.769 & 1.798 & 1.771 & 1.787 & 1.817 & 1.869 & 2.131 & 2.157 & 2.131 & 2.141 & 2.165 & 2.211 \\
Bulyan & 0.317 & 0.409 & 0.488 & 0.545 & 0.567 & -- & 1.231 & 1.251 & 1.278 & 1.297 & 1.309 & -- \\
Filtering & 0.317 & \textit{0.341} & \textit{0.379} & \textit{0.425} & \textit{0.478} & \textit{0.537} & 1.231 & 1.231 & 1.241 & 1.251 & 1.270 & \textit{1.292} \\
MoM-Filtering & 0.317 & \textit{0.333} & \textit{0.351} & \textit{0.383} & \textit{0.417} & \textit{0.537} & 1.231 & \textit{1.229} & \textit{1.233} & \textit{1.239} & \textit{1.251} & \textit{1.292} \\
MoM-Krum & 0.985 & 0.993 & 1.026 & 1.067 & 1.072 & 1.090 & 1.544 & 1.547 & 1.567 & 1.589 & 1.597 & 1.606 \\
ARMUL & 0.485 & 0.422 & 0.492 & 0.704 & 1.101 & 1.501 & \textit{1.014} & \textit{0.991} & \textbf{0.984} & \textit{1.025} & \textit{1.197} & 1.479 \\
History & \textit{0.316} & \textit{0.341} & 0.401 & 0.504 & 0.653 & \textit{0.852} & 1.230 & 1.231 & 1.249 & 1.281 & 1.346 & \textit{1.451} \\
Bucketing & \textit{0.315} & 0.357 & 0.480 & 0.712 & 1.186 & 4.573 & 1.230 & 1.236 & 1.276 & 1.375 & 1.670 & 4.728 \\
Mean-reg & \textbf{0.296} & 1.251 & 1.972 & 2.644 & 3.187 & 3.668 & \textbf{0.948} & 1.323 & 1.772 & 1.849 & 1.847 & 1.789 \\
Dirty & 0.885 & 1.382 & 1.547 & 1.883 & 2.991 & 3.657 & 1.573 & 1.669 & 1.646 & 1.669 & 1.780 & 1.802 \\
RMTFL & 3.000 & 2.936 & 2.964 & 2.969 & 2.982 & 3.006 & 1.714 & 1.726 & 1.725 & 1.723 & 1.722 & 1.717 \\
RLRMTL & 1.641 & 1.842 & 2.261 & 2.756 & 3.248 & 3.801 & 2.036 & 1.795 & 1.716 & 1.685 & 1.651 & 1.671 \\
\bottomrule
\end{tabular}
\end{adjustbox}
\end{table}

% ---------------------------
\subsubsection{Linear regression with varying per-task sample size}

We also consider the setting $d=50$, $K=20$, $\epsilon=0.2$, and varying per-task sample sizes $n \in \{20,40,60,80,100,120\}$. All other aspects of the linear-regression data generation and contamination mechanism are the same as above. Table \ref{tab:sim-n} reports the global and local estimation errors.

As expected, the errors decrease as the per-task sample size grows. The global error of our estimator decreases from $0.737$ to $0.399$, and the local error decreases from $1.315$ to $0.688$; both are uniformly the smallest across all reported methods.

%{\color{red}[M: this is a small comment that might not need to be addressed. The table sort of suggests that the local errors are vanishing but we can expect an asymptotic bias, which might also be different among different methods?]} \yt{Yes I think that's why the performance of some methods stops improving when $n$ increases from 100 to 120. This bias is observable only when $n$ is large enough so that the term not dependent of $n$ dominates the error. But most terms in local error do shrink with $n \rightarrow \infty$.}
\begin{table}[!ht]
\centering
\caption{Linear regression with $d=50$, $K=20$, $\epsilon=0.2$, and varying per-task sample size $n$}
\label{tab:sim-n}
\setlength{\tabcolsep}{4pt}
\begin{adjustbox}{width=\textwidth}
\begin{tabular}{lcccccccccccc}
\toprule
& \multicolumn{6}{c}{Global error $\twonorm{\htheta - \bthetas}$} & \multicolumn{6}{c}{Local error $|S|^{-1}\sum_{k \in S}\twonorm{\hthetak{k} - \bthetaks{k}}$} \\
\cmidrule(lr){2-7} \cmidrule(lr){8-13}
Method$\backslash n$ & 20 & 40 & 60 & 80 & 100 & 120 & 20 & 40 & 60 & 80 & 100 & 120 \\
\midrule
Ours & \textbf{0.737} & \textbf{0.547} & \textbf{0.482} & \textbf{0.442} & \textbf{0.419} & \textbf{0.399} & \textbf{1.315} & \textbf{1.101} & \textbf{0.944} & \textbf{0.835} & \textbf{0.749} & \textbf{0.688} \\
Average & 13.549 & 13.226 & 13.132 & 13.088 & 13.062 & 13.043 & 13.591 & 13.281 & 13.180 & 13.139 & 13.108 & 13.096 \\
Single-task & -- & -- & -- & -- & -- & -- & 2.616 & 2.170 & 1.629 & 1.228 & \textit{0.988} & \textit{0.840} \\
Median & 1.602 & 1.033 & 0.887 & 0.801 & 0.763 & 0.708 & 1.963 & 1.543 & 1.445 & 1.390 & 1.367 & 1.343 \\
Trimmed mean & 1.993 & 1.181 & 0.993 & 0.886 & 0.840 & 0.788 & 2.292 & 1.645 & 1.511 & 1.441 & 1.411 & 1.387 \\
Krum & 2.566 & 2.161 & 1.772 & 1.502 & 1.380 & 1.324 & 2.817 & 2.456 & 2.108 & 1.871 & 1.775 & 1.734 \\
Bulyan & 1.208 & 0.834 & 0.728 & 0.672 & 0.632 & 0.590 & 1.658 & 1.412 & 1.355 & 1.316 & 1.299 & 1.285 \\
Filtering & \textit{0.933} & \textit{0.669} & \textit{0.595} & \textit{0.554} & \textit{0.518} & \textit{0.494} & \textit{1.469} & \textit{1.321} & \textit{1.286} & 1.260 & 1.248 & 1.243 \\
MoM-Filtering & \textit{0.933} & \textit{0.669} & \textit{0.595} & \textit{0.554} & \textit{0.518} & \textit{0.494} & \textit{1.469} & \textit{1.321} & \textit{1.286} & 1.260 & 1.248 & 1.243 \\
MoM-Krum & 2.172 & 1.335 & 1.124 & 0.992 & 0.912 & 0.871 & 2.453 & 1.755 & 1.599 & 1.501 & 1.456 & 1.430 \\
ARMUL & 1.432 & 1.134 & 1.141 & 1.138 & 1.113 & 1.081 & 1.658 & \textit{1.275} & \textit{1.204} & 1.182 & 1.123 & 1.091 \\
History & \textit{1.147} & \textit{0.796} & \textit{0.698} & \textit{0.631} & \textit{0.604} & \textit{0.566} & 1.612 & 1.393 & 1.337 & 1.300 & 1.286 & 1.275 \\
Bucketing & 2.702 & 1.364 & 1.114 & 0.994 & 0.939 & 0.888 & 2.930 & 1.782 & 1.594 & 1.510 & 1.472 & 1.447 \\
Mean-reg & 3.870 & 3.352 & 2.916 & 2.694 & 2.568 & 2.483 & 2.643 & 2.078 & 1.525 & 1.194 & 0.993 & 0.864 \\
Dirty & 3.884 & 3.164 & 2.300 & 2.395 & 2.476 & 2.461 & 2.624 & 2.049 & 1.526 & \textit{1.180} & \textit{0.988} & 0.872 \\
RMTFL & 3.351 & 2.975 & 2.986 & 2.992 & 2.998 & 2.890 & 2.618 & 1.988 & 1.475 & \textit{1.145} & \textit{0.950} & \textit{0.815} \\
RLRMTL & 3.059 & 3.024 & 3.016 & 3.004 & 3.002 & 2.988 & \textit{1.520} & 1.483 & 1.469 & 1.453 & 1.446 & 1.438 \\
\bottomrule
\end{tabular}
\end{adjustbox}
\end{table}

\subsubsection{Logistic regression with varying per-task sample size}

We finally repeat the varying-$n$ study under binary logistic regression with $d=50$, $K=20$, and $\epsilon=0.2$. The clean and contaminated task constructions are the same as in the logistic varying-$K$ study. Table \ref{tab:sim-logistic-n} reports the global and local estimation errors.

The logistic errors decrease as the per-task sample size grows. The global error of our estimator decreases from $1.063$ to $0.568$, and the local error decreases from $1.799$ to $1.437$; both are uniformly the smallest across all reported methods.

\begin{table}[!ht]
\centering
\caption{Logistic regression with $d=50$, $K=20$, $\epsilon=0.2$, and varying per-task sample size $n$}
\label{tab:sim-logistic-n}
\setlength{\tabcolsep}{4pt}
\begin{adjustbox}{width=\textwidth}
\begin{tabular}{lcccccccccccc}
\toprule
& \multicolumn{6}{c}{Global error $\twonorm{\htheta - \bthetas}$} & \multicolumn{6}{c}{Local error $|S|^{-1}\sum_{k \in S}\twonorm{\hthetak{k} - \bthetaks{k}}$} \\
\cmidrule(lr){2-7} \cmidrule(lr){8-13}
Method$\backslash n$ & 20 & 40 & 60 & 80 & 100 & 120 & 20 & 40 & 60 & 80 & 100 & 120 \\
\midrule
Ours & \textbf{1.063} & \textbf{0.769} & \textbf{0.688} & \textbf{0.625} & \textbf{0.599} & \textbf{0.568} & \textbf{1.799} & \textbf{1.602} & \textbf{1.543} & \textbf{1.488} & \textbf{1.464} & \textbf{1.437} \\
Average & 2.786 & 2.687 & 2.664 & 2.647 & 2.642 & 2.637 & 3.506 & 3.434 & 3.412 & 3.396 & 3.390 & 3.390 \\
Single-task & -- & -- & -- & -- & -- & -- & 3.199 & 2.899 & 2.606 & 2.332 & 2.125 & 1.953 \\
Median & 2.463 & 1.794 & 1.579 & 1.454 & 1.366 & 1.297 & 3.168 & 2.588 & 2.403 & 2.296 & 2.222 & 2.170 \\
Trimmed mean & 2.773 & 2.010 & 1.786 & 1.631 & 1.541 & 1.466 & 3.483 & 2.800 & 2.598 & 2.460 & 2.382 & 2.321 \\
Krum & 2.780 & 2.535 & 2.251 & 1.941 & 1.707 & 1.614 & 3.273 & 2.970 & 2.685 & 2.398 & 2.202 & 2.122 \\
Bulyan & \textit{1.695} & \textit{1.155} & \textit{0.939} & \textit{0.833} & \textit{0.749} & \textit{0.711} & \textit{2.173} & \textit{1.819} & \textit{1.710} & \textit{1.654} & \textit{1.614} & \textit{1.602} \\
Filtering & \textit{1.400} & \textit{0.973} & \textit{0.802} & \textit{0.722} & \textit{0.655} & \textit{0.616} & \textit{1.980} & \textit{1.715} & \textit{1.640} & \textit{1.600} & \textit{1.574} & \textit{1.564} \\
MoM-Filtering & \textit{1.400} & \textit{0.973} & \textit{0.802} & \textit{0.722} & \textit{0.655} & \textit{0.616} & \textit{1.980} & \textit{1.715} & \textit{1.640} & \textit{1.600} & \textit{1.574} & \textit{1.564} \\
MoM-Krum & 2.978 & 2.559 & 2.373 & 2.034 & 1.801 & 1.892 & 3.472 & 3.132 & 3.010 & 2.708 & 2.506 & 2.621 \\
ARMUL & 2.266 & 2.161 & 2.130 & 2.099 & 2.086 & 2.073 & 2.887 & 2.782 & 2.800 & 2.797 & 2.757 & 2.726 \\
History & 2.495 & 2.292 & 2.256 & 2.219 & 2.208 & 2.194 & 3.226 & 3.063 & 3.030 & 2.996 & 2.985 & 2.977 \\
Bucketing & 2.785 & 2.685 & 2.662 & 2.644 & 2.639 & 2.633 & 3.505 & 3.432 & 3.410 & 3.392 & 3.387 & 3.386 \\
Mean-reg & 2.121 & 1.815 & 1.685 & 1.602 & 1.534 & 1.455 & 2.777 & 2.435 & 2.282 & 2.176 & 2.087 & 1.991 \\
Dirty & 2.415 & 2.337 & 2.244 & 1.693 & 1.281 & 1.217 & 3.141 & 3.071 & 2.969 & 2.483 & 2.038 & 1.905 \\
RMTFL & 2.388 & 2.107 & 1.874 & 1.674 & 1.525 & 1.399 & 3.042 & 2.770 & 2.550 & 2.342 & 2.181 & 2.038 \\
RLRMTL & 2.466 & 2.466 & 2.471 & 2.467 & 2.471 & 2.470 & 3.229 & 3.238 & 3.238 & 3.233 & 3.234 & 3.237 \\
\bottomrule
\end{tabular}
\end{adjustbox}
\end{table}

% ------------------------------------------------------
\subsection{Additional real-data analysis results}\label{subsec: real data supp}
We retain the $20\%$ training split in the main text and report the remaining training proportions here. The same qualitative pattern persists across these splits: several regularization-based baselines are competitive when $\epsilon=0$, while under contamination our estimator is consistently the best or tied for best.

\begin{table}[!ht]
\centering
\caption{HAR local prediction error with $30\%$ training data.}
\label{tab:har-local-03}
\scriptsize
\setlength{\tabcolsep}{4pt}
\begin{adjustbox}{width=0.6\textwidth}
\begin{tabular}{lcccccc}
\toprule
Method & 0.00 & 0.05 & 0.10 & 0.15 & 0.20 & 0.25 \\
\midrule
Ours & \textit{0.018} & \textbf{0.017} & \textbf{0.018} & \textbf{0.018} & \textbf{0.019} & \textbf{0.019} \\
Average & 0.036 & 0.046 & 0.075 & 0.095 & 0.136 & 0.151 \\
Single-task & 0.043 & 0.043 & 0.042 & 0.042 & 0.042 & 0.042 \\
Median & 0.040 & 0.041 & 0.043 & 0.043 & 0.045 & 0.046 \\
Trimmed mean & 0.036 & \textit{0.038} & \textit{0.039} & \textit{0.040} & 0.043 & 0.044 \\
Krum & 0.288 & 0.289 & 0.290 & 0.291 & 0.291 & 0.289 \\
Bulyan & 0.036 & 0.039 & 0.051 & 0.054 & 0.057 & 0.059 \\
Filtering & 0.036 & \textit{0.037} & 0.041 & 0.045 & 0.055 & 0.061 \\
MoM-Filtering & 0.036 & \textit{0.037} & 0.041 & 0.045 & 0.055 & 0.061 \\
MoM-Krum & 0.174 & 0.175 & 0.180 & 0.181 & 0.189 & 0.185 \\
ARMUL & \textit{0.025} & 0.059 & 0.147 & 0.162 & 0.171 & 0.172 \\
History & 0.036 & 0.040 & 0.047 & 0.051 & 0.066 & 0.078 \\
Bucketing & 0.036 & 0.046 & 0.066 & 0.081 & 0.119 & 0.138 \\
Mean-reg & \textbf{0.014} & 0.040 & 0.042 & 0.042 & 0.042 & 0.042 \\
Dirty & 0.033 & 0.041 & 0.040 & \textit{0.040} & \textit{0.040} & \textit{0.040} \\
RMTFL & 0.037 & \textit{0.038} & \textit{0.038} & \textit{0.038} & \textit{0.038} & \textit{0.038} \\
RLRMTL & 0.172 & 0.172 & 0.171 & 0.171 & 0.172 & 0.172 \\
\bottomrule
\end{tabular}
\end{adjustbox}
\end{table}

\begin{table}[!ht]
\centering
\caption{HAR local prediction error with $40\%$ training data.}
\label{tab:har-local-04}
\scriptsize
\setlength{\tabcolsep}{4pt}
\begin{adjustbox}{width=0.6\textwidth}
\begin{tabular}{lcccccc}
\toprule
Method & 0.00 & 0.05 & 0.10 & 0.15 & 0.20 & 0.25 \\
\midrule
Ours & \textit{0.016} & \textbf{0.016} & \textbf{0.016} & \textbf{0.016} & \textbf{0.017} & \textbf{0.017} \\
Average & 0.036 & 0.045 & 0.072 & 0.093 & 0.135 & 0.149 \\
Single-task & 0.034 & 0.034 & \textit{0.033} & 0.033 & 0.034 & 0.034 \\
Median & 0.039 & 0.040 & 0.042 & 0.043 & 0.044 & 0.046 \\
Trimmed mean & 0.036 & 0.037 & 0.039 & 0.040 & 0.042 & 0.043 \\
Krum & 0.281 & 0.279 & 0.278 & 0.278 & 0.276 & 0.276 \\
Bulyan & 0.036 & 0.039 & 0.050 & 0.053 & 0.057 & 0.058 \\
Filtering & 0.036 & 0.036 & 0.041 & 0.045 & 0.054 & 0.060 \\
MoM-Filtering & 0.036 & 0.036 & 0.041 & 0.045 & 0.054 & 0.060 \\
MoM-Krum & 0.165 & 0.168 & 0.167 & 0.169 & 0.174 & 0.175 \\
ARMUL & \textit{0.022} & 0.051 & 0.139 & 0.158 & 0.169 & 0.171 \\
History & 0.036 & 0.040 & 0.046 & 0.051 & 0.067 & 0.078 \\
Bucketing & 0.036 & 0.045 & 0.063 & 0.078 & 0.120 & 0.140 \\
Mean-reg & \textbf{0.012} & \textit{0.033} & \textit{0.033} & 0.033 & 0.034 & 0.034 \\
Dirty & 0.026 & \textit{0.033} & \textit{0.033} & \textit{0.032} & \textit{0.032} & \textit{0.032} \\
RMTFL & 0.028 & \textit{0.030} & \textit{0.030} & \textit{0.030} & \textit{0.030} & \textit{0.030} \\
RLRMTL & 0.171 & 0.171 & 0.171 & 0.171 & 0.171 & 0.172 \\
\bottomrule
\end{tabular}
\end{adjustbox}
\end{table}

\begin{table}[!ht]
\centering
\caption{HAR local prediction error with $50\%$ training data.}
\label{tab:har-local-05}
\scriptsize
\setlength{\tabcolsep}{4pt}
\begin{adjustbox}{width=0.6\textwidth}
\begin{tabular}{lcccccc}
\toprule
Method & 0.00 & 0.05 & 0.10 & 0.15 & 0.20 & 0.25 \\
\midrule
Ours & \textit{0.014} & \textbf{0.014} & \textbf{0.014} & \textbf{0.015} & \textbf{0.015} & \textbf{0.016} \\
Average & 0.035 & 0.043 & 0.071 & 0.093 & 0.136 & 0.150 \\
Single-task & 0.028 & 0.028 & 0.028 & 0.028 & 0.028 & 0.028 \\
Median & 0.038 & 0.040 & 0.041 & 0.042 & 0.043 & 0.044 \\
Trimmed mean & 0.035 & 0.036 & 0.038 & 0.039 & 0.041 & 0.042 \\
Krum & 0.268 & 0.268 & 0.266 & 0.266 & 0.267 & 0.264 \\
Bulyan & 0.035 & 0.038 & 0.049 & 0.051 & 0.055 & 0.055 \\
Filtering & 0.035 & 0.035 & 0.041 & 0.044 & 0.054 & 0.059 \\
MoM-Filtering & 0.035 & 0.035 & 0.041 & 0.044 & 0.054 & 0.059 \\
MoM-Krum & 0.158 & 0.159 & 0.163 & 0.164 & 0.169 & 0.166 \\
ARMUL & \textit{0.020} & 0.047 & 0.134 & 0.155 & 0.169 & 0.171 \\
History & 0.035 & 0.039 & 0.046 & 0.050 & 0.067 & 0.081 \\
Bucketing & 0.035 & 0.043 & 0.063 & 0.078 & 0.121 & 0.142 \\
Mean-reg & \textbf{0.010} & \textit{0.027} & \textit{0.027} & 0.028 & 0.028 & 0.028 \\
Dirty & 0.023 & 0.028 & \textit{0.027} & \textit{0.027} & \textit{0.027} & \textit{0.027} \\
RMTFL & 0.023 & \textit{0.025} & \textit{0.025} & \textit{0.025} & \textit{0.025} & \textit{0.026} \\
RLRMTL & 0.172 & 0.171 & 0.171 & 0.171 & 0.172 & 0.172 \\
\bottomrule
\end{tabular}
\end{adjustbox}
\end{table}

\begin{table}[!ht]
\centering
\caption{HAR local prediction error with $60\%$ training data.}
\label{tab:har-local-06}
\scriptsize
\setlength{\tabcolsep}{4pt}
\begin{adjustbox}{width=0.6\textwidth}
\begin{tabular}{lcccccc}
\toprule
Method & 0.00 & 0.05 & 0.10 & 0.15 & 0.20 & 0.25 \\
\midrule
Ours & \textit{0.013} & \textbf{0.013} & \textbf{0.014} & \textbf{0.014} & \textbf{0.014} & \textbf{0.014} \\
Average & 0.035 & 0.043 & 0.071 & 0.094 & 0.138 & 0.151 \\
Single-task & 0.024 & 0.024 & 0.024 & 0.024 & 0.024 & 0.024 \\
Median & 0.038 & 0.039 & 0.041 & 0.041 & 0.043 & 0.044 \\
Trimmed mean & 0.035 & 0.036 & 0.037 & 0.038 & 0.040 & 0.041 \\
Krum & 0.260 & 0.259 & 0.258 & 0.259 & 0.255 & 0.252 \\
Bulyan & 0.035 & 0.038 & 0.050 & 0.051 & 0.055 & 0.056 \\
Filtering & 0.035 & 0.035 & 0.040 & 0.044 & 0.053 & 0.059 \\
MoM-Filtering & 0.035 & 0.035 & 0.040 & 0.044 & 0.053 & 0.059 \\
MoM-Krum & 0.153 & 0.154 & 0.156 & 0.159 & 0.165 & 0.164 \\
ARMUL & 0.019 & 0.043 & 0.130 & 0.153 & 0.169 & 0.170 \\
History & 0.035 & 0.039 & 0.045 & 0.050 & 0.068 & 0.083 \\
Bucketing & 0.035 & 0.043 & 0.063 & 0.080 & 0.125 & 0.145 \\
Mean-reg & \textbf{0.008} & \textit{0.023} & 0.024 & 0.024 & 0.024 & 0.024 \\
Dirty & 0.019 & 0.024 & \textit{0.023} & \textit{0.023} & \textit{0.023} & \textit{0.023} \\
RMTFL & \textit{0.018} & \textit{0.022} & \textit{0.022} & \textit{0.022} & \textit{0.022} & \textit{0.022} \\
RLRMTL & 0.171 & 0.171 & 0.171 & 0.171 & 0.171 & 0.172 \\
\bottomrule
\end{tabular}
\end{adjustbox}
\end{table}

% ------------------------------------------------------
\subsection{Additional details of implementation and parameter tuning}\label{subsec: implementation tuning}

All gradient-based methods are initialized at zero. In the linear-regression simulations, the models are fit without an intercept. In the HAR analysis, we use a 100-dimensional PCA representation, standardize the transformed features using the pooled training data for each split, and fit logistic models with an intercept.

In all reported experiments we use the same stepsizes $\eta=\eta^{(k)}=0.05$ for all tasks and all gradient descent-based methods. To speed up the computation, for our method, we first run $T = 500$ global gradient descent iterations without any local updates, to obtain the global estimator $\hotheta$. Then we use it to initialize the local estimators and run another $T_{\textup{local}} = 100$ local gradient descent iterations to obtain the local estimators $\hthetak{k}$. In the filtering algorithm (Algorithm \ref{algo: robust mean estimation}), we update the gradient covariance $\widehat{\bSigma}$ every 10 iterations of Algorithm \ref{algo: robust federated gradient descent}, for both global and local gradient descent, to reduce computational cost. We tune the filtering threshold over $\{0.05,0.1,0.25,0.5,1,2,3,5,10\}$ by 5-fold cross-validation. The local soft-thresholding parameter is tuned task by task over $\{0.05,0.2,0.5,1,2,5,10\}$. The same iteration counts are used during tuning.

For other gradient descent-based benchmark methods such as  Median, Trimmed mean, Krum, Bulyan, Filtering, MoM-Filtering, MoM-Krum, History, Bucketing, Average, and Single-task, the step size is $0.05$ and the number of iterations is $500$. For global aggregation methods, the task-level empirical gradients are aggregated at each iteration and a single global parameter is updated. Single-task uses the same gradient-descent update separately on each task. Bulyan uses Krum as its selection subroutine. History and Bucketing use clipping parameter $\tau=1$, and Bucketing uses $\lceil K/2\rceil$ buckets. 

ARMUL is run with its vanilla model and 5-fold cross-validation. We use the recommended step size $0.01$, $500$ global training iterations, and a grid of $10$ constants $C_i=2i/10$, $i=1,\ldots,10$, corresponding to task-specific penalties $C_i\sqrt{p/n_k}$, where $p$ is the feature dimension and $n_k$ is the sample size of task $k$.

Mean-reg, Dirty, RMTFL, and RLRMTL are tuned by 5-fold cross-validation. We use the estimator forms implemented in \texttt{MALSAR} \cite{zhou2011malsar}, and the parameter-tuning procedure and parameter grids follow those used in the original papers \cite{evgeniou2004regularized,jalali2010dirty,jalali2013dirty,gong2012robust,chen2011integrating,chen2012learning}. %{\color{red}[M: are these available in publicly available code or the claim is that your implementation corresponds to these methods with a tuning stratefy described in those papers?]}. \yt{Now it should be clearer. MALSAR (a matlab package) is public, and we migarated the code to python by following their code. But they have no tuning procedure. We added the CV-based tuning procedure based on the description and grid used in the original papers. I revised this paragraph and I hope it is clearer now.}
Let $\bm{\Theta}=(\bthetak{1},\ldots,\bthetak{K})$ be the task-parameter matrix and let $\mathcal{L}_k(\bthetak{k})$ denote the empirical loss of task $k$. For HAR the loss includes an unpenalized task intercept, while in the linear simulations the intercept is omitted. Mean-reg solves
\[
    \min_W \sum_{k=1}^K \mathcal{L}_k(\bthetak{k})
    + \rho_1\sum_{k=1}^K \|\bthetak{k}-\otheta\|_2^2
    + \rho_2\|\bm{\Theta}\|_F^2,\qquad
    \otheta=K^{-1}\sum_{k=1}^K \bthetak{k}.
\]
Dirty writes $\bm{\Theta}=\bm{G}+\bm{Q}$ and solves
\[
    \min_{\bm{G},\bm{Q}}\sum_{k=1}^K \mathcal{L}_k(\bm{g}^{(k)}+\bm{q}^{(k)})
    + \rho_1\|\bm{G}\|_{1,\infty}
    + \rho_2\|\bm{Q}\|_{1,1}.
\]
RMTFL writes $\bm{\Theta}=\bm{G}+\bm{Q}$ and solves
\[
    \min_{\bm{G},\bm{Q}}\sum_{k=1}^K \mathcal{L}_k(\bm{g}^{(k)}+\bm{q}^{(k)})
    + \rho_1\|\bm{G}\|_{2,1}
    + \rho_2\|\bm{Q}^\top\|_{2,1},
\]
where the first penalty promotes shared feature sparsity and the second penalty allows task-wise outliers. RLRMTL writes $\bm{\Theta}=\bm{G}+\bm{Q}$ and solves
\[
    \min_{\bm{G},\bm{Q}}\sum_{k=1}^K \mathcal{L}_k(\bm{g}^{(k)}+\bm{q}^{(k)})
    + \rho_1\|\bm{G}\|_*
    + \rho_2\|\bm{Q}\|_{2,1},
\]
where $\bm{G}$ is the low-rank component and $\bm{Q}$ is the task-sparse component.

For Mean-reg, following the Evgeniou-Pontil parameterization used in \texttt{MALSAR}, we tune $\mu\in\{0.1,0.5,1,2,10,1000\}$ and $C\in\{0.1,1\}$ and form the grids $\rho_1\in\{K/(aCn(K+\mu))\}$ and $\rho_2\in\{\mu/(aCn(K+\mu))\}$, with $a=2$ in the varying-$K$ and varying-$n$ linear-regression scripts and $a=1$ in the remaining reported experiments. The parameter $\mu$ controls how strongly the task estimators shrink toward the shared mean, while $C$ controls the overall regularization level. For the structured-sparsity estimators, the grids follow the scale choices used in the \texttt{MALSAR} numerical examples and the corresponding original simulations, with multiplicative constants $\{0.01,0.1,1,10,100\}$ to search around the nominal regularization level. Dirty uses both penalty grids $\sqrt{K\log(d)/n}\{0.01,0.1,1,10,100\}$. RMTFL uses $\rho_1=90(2n)^{-1}\{0.01,0.1,1,10,100\}$ and $\rho_2=90(2n)^{-1}\sqrt{K\log(d)/n}\{0.01,0.1,1,10,100\}$. RLRMTL uses centers $50K/2$ and $10K/2$, each multiplied by $\{0.01,0.1,1,10,100\}$. In HAR, $n$ in these grids is replaced by the mean training sample size across tasks and $d$ by the feature dimension after preprocessing.

\end{document}